\documentclass{article}
\usepackage[margin=0.97in]{geometry}
%\PassOptionsToPackage{linesnumbered,ruled,vlined}{algorithm2e}
%\documentclass[final,12pt]{colt2021} % Include author names
%\input{commands}
\usepackage{xspace}
\usepackage{amsmath}
\usepackage{graphicx}
\usepackage{framed}
\usepackage{bbm}
\usepackage{algorithm}
\usepackage{amsthm}
\usepackage[algo2e, vlined, ruled, linesnumbered]{algorithm2e}
\usepackage{algorithmic}
\usepackage{natbib}
\usepackage{mathrsfs}
\usepackage{amssymb}
\usepackage{bm}
\usepackage[colorlinks=true, linkcolor=blue, citecolor=blue, breaklinks=true]{hyperref}
\usepackage{makecell}
\usepackage{tabulary}
\usepackage{xcolor}
\usepackage{prettyref}
\usepackage{mathtools}
\usepackage{amsmath}
\usepackage{multirow}
\usepackage{nicefrac}
\usepackage{amssymb}% http://ctan.org/pkg/amssymb
\usepackage{pifont}% http://ctan.org/pkg/pifont
\definecolor{Green}{rgb}{0.13, 0.65, 0.3}
\usepackage{colortbl}
\usepackage[]{color-edits}
\addauthor{HL}{red}
\newcommand{\fillcell}{\cellcolor{blue!25}}
\newcommand{\thline}{\Xhline{\arrayrulewidth}}
\DeclareMathOperator*{\argmin}{argmin} % no space, limits underneath in displays
\DeclareMathOperator*{\argmax}{argmax} % no space, limits underneath in displays

\newcommand{\const}{c}

\newcommand{\barpi}{\overline{\pi}}

\newcommand{\hattheta}{\widehat{\theta}}
\newcommand{\Reg}{\text{\rm Reg}}
\newcommand{\hatReg}{\widehat{\text{\rm Reg}}}
\newcommand{\hatphi}{\widehat{\phi}}
\newcommand{\Regret}{\textsc{D-Reg}\xspace}

\newcommand{\RegL}{\Reg_L^\star\xspace}
\newcommand{\RegDev}{\Reg_\dev^\star\xspace}

\newcommand{\masterucrl}{{\small\textsf{\textup{MASTER-UCRL}}}\xspace}

\newcommand{\avgreg}{\rho}

\newcommand{\calV}{\mathcal{V}}
\newcommand{\calA}{\mathcal{A}}

\newcommand{\calE}{\mathcal{E}}

\newcommand{\calS}{\mathcal{S}}

\newcommand{\calI}{\mathcal{I}}

\newcommand{\calJ}{\mathcal{J}}
\newcommand{\calP}{\mathcal{P}}
\newcommand{\calD}{\mathcal{D}}

\newcommand{\conf}{\textit{conf}}
\newcommand{\nmax}{\widehat{n}\xspace}
\newcommand{\tildepi}{\widetilde{\pi}}
\newcommand{\hatp}{\widehat{p}}

\newcommand{\tilder}{\widetilde{r}}
\newcommand{\tildep}{\widetilde{p}}
\newcommand{\tildeJ}{\widetilde{J}}

\newcommand{\tildeh}{\widetilde{h}}

\newcommand{\barr}{\overline{r}}
\newcommand{\barp}{\overline{p}}
\newcommand{\spn}{\text{sp}}

\newcommand{\disc}{\textit{disc}}
\newcommand{\calR}{\mathcal{R}}
\newcommand{\hatcalR}{\widehat{\calR}}

\newcommand{\myComment}[1]{\null\hfill\scalebox{0.9}{\text{\color{black}$\triangleleft$ \textsf{#1}}}}
\newcommand{\dupp}{\overline{D}}

\newcommand{\otil}{\widetilde{\order}}

\newcommand{\acw}{{\small\textsf{\textup{UCRL-ACW}}}\xspace}
\newcommand{\ucrlcw}{{\small\textsf{\textup{UCRL-CW}}}\xspace}
\newcommand{\mucrl}{{\small\textsf{\textup{MUCRL}}}\xspace}

\newcommand{\E}{\mathbb{E}}
\newcommand{\order}{\mathcal{O}}

\newcommand{\hatr}{\widehat{r}}

\newcommand{\norm}[1]{\left\|#1\right\|}

\newcommand{\dev}{\Delta}
\newcommand{\hatdev}{\widehat{\Delta}}

\newcommand{\calX}{\mathcal{X}}

\newcommand{\one}{\mathbbm{1}}
\newcommand{\inner}[1]{\langle#1\rangle}

\newcommand{\tildef}{\widetilde{f}}
\newcommand{\tildeg}{\widetilde{g}}

\newcommand{\alg}{{\small\textsf{\textup{ALG}}}\xspace}
\newcommand{\inst}{\textit{alg}\xspace} 

\newcommand{\multialg}{{\small\textsf{\textup{MALG}}}\xspace}
\newcommand{\regbound}{C}
\newcommand{\multireg}{\widehat{C}}
\newcommand{\multiavg}{\widehat{\rho}}
\newcommand{\gap}{\alpha}
\newcommand{\multigap}{\widehat{\alpha}}
\newcommand{\masteralg}{{\small\textsf{\textup{MASTER}}}\xspace}
\newcommand{\term}{\textbf{term}}
\newcommand{\ucrl}{{\small\textsf{\textup{UCRL}}}\xspace}

\newtheorem{theorem}{Theorem}
\newtheorem{assumption}{Assumption}
\newcommand{\testone}{\textbf{Test~1}\xspace}
\newcommand{\testtwo}{\textbf{Test~2}\xspace}
\newcommand{\highlight}[1]{{\color{blue}#1}}
\newtheorem{lemma}[theorem]{Lemma}

\newtheorem{definition}[theorem]{Definition}

\usepackage{tikz}
\newcommand*\circled[1]{\tikz[baseline=(char.base)]{
    \node[shape=circle,draw,inner sep=2pt] (char) {#1};}}

%\newenvironment{example}[1][htb]
%  {
%  \renewcommand{\algorithmcfname}{Example} 
%   \begin{algorithm2e}[#1]%
%   }{\end{algorithm2e}}

%\newcounter{examcounter}[algorithm]
%\setcounter{examcounter}{0}
%\setcounter{algorithm2e}{0}
\newenvironment{exampl}[1][htb]
  {%\refstepcounter{examcounter}\par\medskip
   
   \begin{algorithm2e}[#1]%
   }{\end{algorithm2e}}

\newenvironment{procedures}[1][htb]
  {% Update algorithm name
   \begin{algorithm2e}[#1]%
   }{\end{algorithm2e}}

\newcommand{\nonl}{\renewcommand{\nl}{\let\nl}}

\usepackage{prettyref}
\newcommand{\pref}[1]{\prettyref{#1}}

\newcommand{\savehyperref}[2]{\texorpdfstring{\hyperref[#1]{#2}}{#2}}
\newrefformat{eq}{\savehyperref{#1}{Eq.~\textup{(\ref*{#1})}}}
\newrefformat{eqn}{\savehyperref{#1}{Equation~\ref*{#1}}}
\newrefformat{lem}{\savehyperref{#1}{Lemma~\ref*{#1}}}
\newrefformat{lemma}{\savehyperref{#1}{Lemma~\ref*{#1}}}
\newrefformat{def}{\savehyperref{#1}{Definition~\ref*{#1}}}
\newrefformat{line}{\savehyperref{#1}{Line~\ref*{#1}}}
\newrefformat{thm}{\savehyperref{#1}{Theorem~\ref*{#1}}}
\newrefformat{corr}{\savehyperref{#1}{Corollary~\ref*{#1}}}
\newrefformat{cor}{\savehyperref{#1}{Corollary~\ref*{#1}}}
\newrefformat{sec}{\savehyperref{#1}{Section~\ref*{#1}}}
\newrefformat{subsec}{\savehyperref{#1}{Section~\ref*{#1}}}
\newrefformat{app}{\savehyperref{#1}{Appendix~\ref*{#1}}}
\newrefformat{assum}{\savehyperref{#1}{Assumption~\ref*{#1}}}
\newrefformat{ex}{\savehyperref{#1}{Algorithm~\ref*{#1}}}
\newrefformat{fig}{\savehyperref{#1}{Figure~\ref*{#1}}}
\newrefformat{alg}{\savehyperref{#1}{Algorithm~\ref*{#1}}}
\newrefformat{rem}{\savehyperref{#1}{Remark~\ref*{#1}}}
\newrefformat{conj}{\savehyperref{#1}{Conjecture~\ref*{#1}}}
\newrefformat{prop}{\savehyperref{#1}{Proposition~\ref*{#1}}}
\newrefformat{proto}{\savehyperref{#1}{Protocol~\ref*{#1}}}
\newrefformat{prob}{\savehyperref{#1}{Problem~\ref*{#1}}}
\newrefformat{claim}{\savehyperref{#1}{Claim~\ref*{#1}}}
\newrefformat{que}{\savehyperref{#1}{Question~\ref*{#1}}}
\newrefformat{op}{\savehyperref{#1}{Open Problem~\ref*{#1}}}
\newrefformat{fn}{\savehyperref{#1}{Footnote~\ref*{#1}}}
\newrefformat{tab}{\savehyperref{#1}{Table~\ref*{#1}}}
\newrefformat{fig}{\savehyperref{#1}{Figure~\ref*{#1}}}
\newrefformat{proc}{\savehyperref{#1}{Procedure~\ref*{#1}}}

\usepackage{caption}

\DeclareCaptionFormat{myformat}{#3}
\captionsetup[algorithm]{format=myformat}
%\usepackage{listings}
%\usepackage{blindtext}
% The following packages will be automatically loaded:
% amsmath, amssymb, natbib, graphicx, url, algorithm2e

\title{Non-stationary Reinforcement Learning without Prior Knowledge: \\
An Optimal Black-box Approach}  
\usepackage{times} 
% Use \Name{Author Name} to specify the name.
% If the surname contains spaces, enclose the surname
% in braces, e.g. \Name{John {Smith Jones}} similarly
% if the name has a "von" part, e.g \Name{Jane {de Winter}}.
% If the first letter in the forenames is a diacritic
% enclose the diacritic in braces, e.g. \Name{{\'E}louise Smith}

% Two authors with the same address
% \coltauthor{\Name{Author Name1} \Email{abc@sample.com}\and
%  \Name{Author Name2} \Email{xyz@sample.com}\\
%  \addr Address}

% Three or more authors with the same address:
% \coltauthor{\Name{Chen-Yu Wei} \Email{chenyu.wei@usc.edu}\\
%   \addr University of Southern California\\
%  \Name{Haipeng Luo} \Email{haipengl@usc.edu}\\
%  \addr University of Southern California}

% Authors with different addresses:
%\coltauthor{%
% \Name{Chen-Yu Wei} \Email{chenyu.wei@usc.edu}\\
% \Name{Haipeng Luo} \Email{haipengl@usc.edu}\\
% \addr University of Southern California
%}

\begin{document}
\DontPrintSemicolon 
\allowdisplaybreaks 

\author{%
	Chen-Yu Wei\\
	University of Southern California \\
	\texttt{chenyu.wei@usc.edu}\\
	\and
	Haipeng Luo\\
	University of Southern California \\
	\texttt{haipengl@usc.edu}
}
\date{}
\maketitle
\sloppy
\begin{abstract}%
We propose a black-box reduction that turns a certain reinforcement learning algorithm with optimal regret in a (near-)stationary environment into another algorithm with optimal dynamic regret in a non-stationary environment, importantly without any prior knowledge on the degree of non-stationarity.
By plugging different algorithms into our black-box,
we provide a list of examples showing that our approach not only recovers recent results for (contextual) multi-armed bandits achieved by very specialized algorithms, but also significantly improves the state of the art for (generalized) linear bandits, episodic MDPs, and infinite-horizon MDPs in various ways.
Specifically, in most cases our algorithm achieves the optimal dynamic regret $\otil(\min\{\sqrt{LT}, \dev^{\nicefrac{1}{3}}T^{\nicefrac{2}{3}}\})$ where $T$ is the number of rounds and $L$ and $\dev$ are the number and amount of changes of the world respectively, while previous works only obtain suboptimal bounds and/or require the knowledge of $L$ and $\dev$.
\end{abstract}

%\begin{keywords}%
%  List of keywords%
%\end{keywords}

%\setlength\parindent{0pt}

%!TEX root=main.tex

\section{Introduction}\label{sec: intro}
Most existing works on reinforcement learning consider a stationary environment and aim to find or be comparable to an optimal policy (known as having low \emph{static regret}).
In many applications, however, the environment is far from being stationary.
In these cases, it is much more meaningful to minimize \emph{dynamic regret}, the gap between the total reward of the optimal {\it sequence} of policies and that of the learner.
Indeed, there is a surge of studies on this topic recently~\citep{jaksch2010near, gajane2018sliding, li2019online, ortner2020variational, cheung2020reinforcement, fei2020dynamic, domingues2020kernel, mao2020nearoptimal, zhou2020nonstationary, touati2020efficient}.

One common issue of all these works, however, is that their algorithms crucially rely on having some prior knowledge on the \emph{degree of non-stationarity} of the world, such as how much or how many times the distribution changes, which is often unavailable in practice. \citet{cheung2020reinforcement} develop a Bandit-over-Reinforcement-Learning (BoRL) framework to relax this assumption, but it introduces extra overhead and leads to suboptimal regret.  Indeed, as discussed in their work, there are multiple aspects (which they call \emph{endogeneity}, \emph{exogeneity}, \emph{uncertainty}, and \emph{bandit feedback}) combined in non-stationary reinforcement learning that make the problem highly challenging.

For bandit problems, the special case of reinforcement learning, the works of~\citet{auer2019adaptively} and~\citet{chen2019new} are the first to achieve near-optimal dynamic regret without any prior knowledge on the degree of non-stationarity. %in the cases of multi-armed bandit and contextual bandit respectively. 
The same technique has later been adopted by \cite{chen2020combinatorial} for the case of combinatorial semi-bandits. 
Their algorithms maintain a distribution over arms (or policies/super-arms in the contextual/combinatorial case~\citep{chen2019new, chen2020combinatorial}) with properly controlled variance for all reward estimators. This approach is generally incompatible with standard reinforcement learning algorithms, which are usually built upon the \emph{optimism in the face of uncertainty} principle and do not maintain a distribution over policies (see also \citep{lykouris2019corruption, wang2020long} for related discussions). Another drawback is that their algorithms are very specialized to their problems, and it is highly unclear whether the ideas can be extended to other problems.  

In this work, we address all these issues and make significant progress in this direction. 
Specifically, we propose a {\it general} approach that is applicable to various reinforcement learning settings (including bandits, episodic MDPs, infinite-horizon MDPs, etc.) and achieves {\it optimal dynamic regret without any prior knowledge} on the degree of non-stationarity.
Our approach, called \masteralg, is a {\it black-box reduction} that turns any algorithm with optimal performance in a (near-)stationary environment and additionally some mild requirements into another algorithm with optimal dynamic regret in a non-stationary environment,
again, without the need of any prior knowledge.
For example, all existing UCB-based algorithms satisfy the conditions of our reduction and are readily to be plugged into our black-box.

\renewcommand{\arraystretch}{1.15}
\begin{table}[t]
    \centering
    \caption{A summary of our results and comparisons with the state-of-the-art. 
    Our algorithms are named in the form of ``\masteralg + $X$'' where $X$ is the base algorithm used in our reduction.
    Here, $\RegL = \sqrt{LT}$ and $\RegDev = \dev^{\nicefrac{1}{3}}T^{\nicefrac{2}{3}} + \sqrt{T}$, where $T$ is the number of rounds and $L$ and $\dev$ are the number and amount of changes of the world respectively (dependence on other parameters is omitted).
    $D_{\max}$ is the maximum diameters of the MDPs. 
    }
    \label{tab:compare}
     \vspace*{5pt}
    \begin{tabular}{|c|c|c|c|}
        \thline
        Setting  & Algorithm & Regret in $\otil(\cdot)$  &  \makecell{Required \\ knowledge}  \\
       \thline
        \multirow{2}{*}{Multi-armed bandits} & \makecell{\citep{auer2019adaptively}}  & $\RegL$  &    \\
        \cline{2-4}
         & \fillcell\text{\masteralg + UCB1}  & $\min\{\RegL, \RegDev\}$  &     \\
        \thline
        \multirow{3}{*}{Contextual bandits} &  \makecell{\citep{chen2019new}}   &   \multirow{3}{*}{$\min\{\RegL, \RegDev\}$}   &   \\
        \cline{2-2}%\cline{4-4} 
         & \fillcell\text{\masteralg + ILTCB}   &   &     \\
         \cline{2-2}  %\cline{4-4}
        & \fillcell\text{\masteralg + FALCON}  &  &    \\
        \thline
        \multirow{3}{*}{Linear bandits}  & \makecell{\citep{cheung2018hedging}}   &  $\RegDev$  &  $\dev$   \\  
        \cline{2-4}
          &  \makecell{\citep{cheung2018hedging}}   &  $\RegDev  + T^{\nicefrac{3}{4}}$  &      \\
        \cline{2-4}
         & \fillcell \text{\masteralg + OFUL}  &     $\min\{\RegL, \RegDev\}$   &   \\
        \thline
        \multirow{3}{*}{Generalized linear bandits}  & \makecell{\citep{russac2020algorithms}}   &  $L^{\nicefrac{1}{3}}T^{\nicefrac{2}{3}}$  &  $L$   \\  
        \cline{2-4}
          &  \makecell{\citep{faury2021regret}}   &  $\dev^{\nicefrac{1}{5}}T^{\nicefrac{4}{5}}  + T^{\nicefrac{3}{4}}$  &      \\
        \cline{2-4}
         & \fillcell \text{\masteralg + GLM-UCB}  &     $\min\{\RegL, \RegDev\}$   &   \\
        \thline
        
        \multirow{2}{*}{\makecell{Episodic MDPs\\ (tabular case)}}  &  \makecell{\citep{mao2020nearoptimal}}  &  $\RegDev$   &  $\dev$    \\
        \cline{2-4}
        & \fillcell\text{\masteralg + Q-UCB}  & $\min\{\RegL, \RegDev\}$ &  \\
        \thline
        \multirow{2}{*}{\makecell{Episodic MDPs\\ (linear case)}}  & \makecell{\citep{touati2020efficient}} &  $\dev^{\nicefrac{1}{4}}T^{\nicefrac{3}{4}} + \sqrt{T}$   &  $\dev$   \\
        \cline{2-4}
        & \fillcell\text{\masteralg + LSVI-UCB}  & $\min\{\RegL, \RegDev\}$ &  \\
        \thline
        \multirow{6}{*}{\makecell{Infinite-horizon \\ communicating MDPs \\ (tabular case)}}  &  \makecell{ \citep{gajane2018sliding}}   &  $L^{\nicefrac{1}{3}}T^{\nicefrac{2}{3}}$    &  $L$   \\
        \cline{2-4}
        & \makecell{\citep{cheung2020reinforcement}}  & $\dev^{\nicefrac{1}{4}}T^{\nicefrac{3}{4}} + \sqrt{T}$   &  $\dev$   \\
        \cline{2-4}
        & \makecell{\citep{cheung2020reinforcement}}  & $\dev^{\nicefrac{1}{4}}T^{\nicefrac{3}{4}} + T^{\nicefrac{3}{4}}$   &     \\
        \cline{2-4}
        &\fillcell \multirow{1}{*}{\text{\masteralg + UCRL}} &  \multirow{1}{*}{ $\RegL$ or $\RegDev$} & $L$ or $\dev$   \\
        \cline{2-4}
        & \fillcell\text{\masteralg + UCRL} & $\min\{\RegL, \RegDev\}$ &  \makecell{$D_{\max}$} \\
        \cline{2-4}
        &\fillcell\text{\masteralg + UCRL + BoRL} & \makecell{$\min\{\RegL, \RegDev\} +T^{\nicefrac{3}{4}}$} &  \\
        \thline
    \end{tabular}
\end{table}

\paragraph{Applications and comparisons}
To showcase the versatility of our approach, we provide a list of examples by considering different settings and applying our reduction with different base algorithms.
These examples, summarized in \pref{tab:compare}, recover the results of \cite{auer2019adaptively} and \cite{chen2019new} for (contextual) multi-armed bandits, and more importantly, improve the best known results for (generalized) linear bandits, episodic MDPs, and infinite-horizon MDPs in various ways.
More specifically, let $L$ and $\dev$ be the number and amount of changes of the environment respectively (see \pref{sec:setting} for formal definition).
For all settings except infinite-horizon MDPs, ignoring other parameters, our algorithms achieve dynamic regret $\min\{\RegL, \RegDev\}$ without knowing $L$ and $\dev$, where $\RegL = \sqrt{LT}$, $\RegDev = \dev^{\nicefrac{1}{3}}T^{\nicefrac{2}{3}} + \sqrt{T}$, and $T$ is the number of rounds.
These bounds are known to be optimal {\it even when $L$ and $\dev$ are known},
and they improve over~\citep{cheung2018hedging, cheung2019learning, russac2019weighted, kim2019randomized, pmlr-v108-zhao20a, zhao2021nonstationary} for linear bandits, \citep{russac2020algorithms, faury2021regret} for generalized linear bandits, \citep{mao2020nearoptimal} for episodic tabular MDPs,
and \citep{touati2020efficient, zhou2020nonstationary} for episodic linear MDPs.
For infinite-horizon MDPs, we achieve the same optimal regret when the maximum diameter of the MDPs is known, or when $L$ and $\dev$ are known, improving over the best existing results by~\citep{gajane2018sliding} and~\citep{cheung2020reinforcement}.
When none of them is known, we can still adopt the BoRL technique~\citep{cheung2020reinforcement} with the price of paying extra $T^{3/4}$ regret,
which is suboptimal but still outperforms best known results. 
  
%Previous works on dynamic regret use two notions to describe the non-stationarity of the world: $L$ counts the number of times the world changes, and $\dev$ measures the total amount of gradual variation over time. We summarize in Table~\ref{tab:compare} previous results and ours along with these two notions in multiple settings. Our results are all written in the form of ``Ours + X'' where ``X'' is some existing algorithm designed for the stationary setting, showing the reduction nature of our algorithm. One can see that we recover the results of \cite{auer2019adaptively} and \cite{chen2019new}, and improve the best known results for linear bandit, episodic MDP, and average-reward MDP in various ways. For all settings except for average-reward MDP, our bounds are tight and our algorithms require least knowledge. For average-reward MDP, our bounds are tight either when the degree of non-stationarity (i.e., $L$ or $\dev$) is known, or when the \emph{maximum diameter} of the MDP (i.e., $D_{\max}$) is known. When all of them are unknown, we can still adopt the BoRL technique developed by \cite{cheung2020reinforcement} and get a slightly sub-optimal bound. 

In particular, we emphasize that achieving dynamic regret $\RegL$ beyond (contextual) multi-armed bandits is one notable breakthrough we make.
Indeed, \emph{even when $L$ is known}, previous approaches based on restarting after a fixed period, a sliding window with a fixed size, or discounting with a fixed discount factor, all lead to a suboptimal bound of $\otil(L^{\nicefrac{1}{3}}T^{\nicefrac{2}{3}})$ at best~\citep{gajane2018sliding}.
Since this bound is subsumed by $\RegDev$, related discussions are also often omitted in previous works.

For non-stationary linear bandits, although several existing works \citep{russac2019weighted, kim2019randomized, pmlr-v108-zhao20a}
claim that their algorithms achieve the bound $\RegDev$ (when $\Delta$ is known), there is in fact a technical flaw in all of them, as explained and corrected recently in~\citep{zhao2021nonstationary, touati2020efficient}. After the correction, their bounds all deteriorate  to $\dev^{\nicefrac{1}{4}}T^{\nicefrac{3}{4}}+\sqrt{T}$, which is no longer near-optimal. Recently, \cite{cheung2018hedging} sidesteps this difficulty by leveraging adversarial linear bandit algorithms, and achieves the tight bound of $\RegDev$ when $\dev$ is known. %We also emphasize that when dealing with problems with linear structures (including linear bandits, generalized linear bandits, and linear MDPs), our bounds $\RegDev$ is also new \emph{even when $\Delta$ is known}. Indeed, although 
On the other hand, our approach is based on stochastic linear bandit algorithms; however, it not only sidesteps the difficulty met in previous works, but also avoids the requirement of knowing $\dev$. When dealing with other linear-structured problems including generalized linear bandits and linear MDPs, our bounds $\RegDev$ is new \emph{even when $\Delta$ is known}. Previous results \citep{russac2020algorithms, faury2021regret, touati2020efficient, zhou2020nonstationary} cannot achieve the optimal bound due to the same technical difficulty mentioned above.   

\paragraph{High-level ideas}
The high-level idea of our reduction is to schedule multiple instances of the base algorithm with different durations in a carefully-designed randomized scheme, which facilitates non-stationarity detection with little overhead. 
A related and well-known approach for non-stationary environments is to maintain multiple instances of a base algorithm with different parameter tunings or different starting points and to learn the best of them via another ``expert'' algorithm,
which can be very successful when learning with full information~\citep{hazan2007adaptive, luo2015achieving, daniely2015strongly, jun2017improved} but is suboptimal and has many limitations when learning with partial information~\citep{luo2018efficient,cheung2019learning,cheung2020reinforcement}.
Our approach is different as we do not try to learn the best instance; instead, we always follow the decision suggested by the instance with the currently shortest scheduled duration, and also only update this instance after receiving feedback from the environment.
The is because base algorithms with shorter duration are responsible for detecting larger distribution changes,
and always following the shortest one ensure that it is not blocked by the longer ones and thus every scale of distribution change is detected in a timely manner.

Another related approach is \emph{regret balancing}, developed recently  for model selection in bandit problems~\citep{abbasi2020regret, pacchiano2020regret}. The idea is also to run multiple base algorithms in parallel, each with a putative regret upper bound. The learner executes one of them in each round which incurs the least regret so far, and also constantly compares the performance among base algorithms, eliminating those whose putative regret bounds are violated. While our algorithm resembles regret balancing in some aspects, the way it chooses the base algorithm in each round is clearly quite different, which is also crucial for our problem. 

\paragraph{Other related work}
There are also a series of works on learning MDPs with adversarial rewards and a {\it fixed} transition \citep{even2009online, neu2010online,  arora2012deterministic, neu2012adversarial, dekel2013better, neu2013online, zimin2013online, dick2014online, rosenberg2019online, cai2020provably, jin2020learning, shani2020optimistic, rosenberg2020adversarial, lee2020bias, jin2020simultaneously, chen2020minimax, lancewicki2020learning}. 
These models can potentially handle non-stationarity in the reward function but not the transition kernel (in fact, most of these works also only consider static regret).
\citet{lykouris2019corruption} investigate an episodic MDP setting where an adversary can corrupt both the reward and the transition for up to $L'$ episodes, and achieve dynamic regret $\otil(\min\{L'\sqrt{T}, L'/\textit{gap}\})$ without knowing $L'$, where $\textit{gap}$ is the minimal suboptimality gap and could be arbitrarily small. Since corruption of up to $L'$ episodes implies that the world changes at most $L=2L'$ times, our result improves theirs from $\otil(L'\sqrt{T})$ to $\otil(\sqrt{L'T})$ when $1/\textit{gap} > \sqrt{T}$. 
On the other hand, it is possible that $L$ is much smaller than $L'$ (e.g. $L=\Theta(1)$ while $L' = \Theta(T)$), in which case our results are also significantly better.

%!TEX root=main.tex

\section{Problem Setting, Main Results, and High-level Ideas}

%Define $\dev_{[t,t+1]}=\max_{\pi\in\Pi}|f_t(\pi) - f_{t+1}(\pi)|$. For any interval $\calI=[s,e]$, we define $\dev_{\calI}=\sum_{\tau=s}^{e-1} \dev_{[\tau,\tau+1]}$. Also, we define $\dev_{[t,t]}=0$ and $L_\calI=1+\sum_{\tau=s}^{e-1} \one[\dev_{[\tau, \tau+1]}\neq 0]$. 

%\paragraph{Notations}  We write $h_1(x)=\order(h_2(x))$ or $h_2(x)=\Omega(h_1(x))$ if there exists a universal constant $0<c<\infty$ such that $h_1(x)\leq ch_2(x)$. Throughout the paper, we assume that the time index never exceeds $T$, whose value is known by the learner. Also, we fix a probability parameter $\delta$ which is assumed to be of order $1/\text{poly}(T)$.  We write $h_1(x)=\otil(h_2(x))$ or $h_2(x)=\widetilde{\Omega}(h_1(x))$ if $h_1(x)=\order\left(\text{poly}(\log(T/\delta)) h_2(x)\right)$. We say ``with high probability, $h_1=\otil(h_2(x))$'' if ``with probability $1-\delta$, $h_1=\otil(h_2(x))$''. 

Throughout the paper, we fix a probability parameter $\delta$ of order $1/\text{poly}(T)$, and write $h_1(x)=\otil(h_2(x))$ or $h_2(x)=\widetilde{\Omega}(h_1(x))$ if $h_1(x)=\order\left(\text{poly}(\log(T/\delta)) h_2(x)\right)$. We say ``with high probability, $h_1=\otil(h_2(x))$'' if ``with probability $1-\delta$, $h_1=\otil(h_2(x))$''. 
For an integer $n$, we denote the set $\{1,2,\ldots, n\}$ by $[n]$;
and for integers $s$ and $e$, we denote the set $\{s,s+1,\ldots, e\}$ by $[s,e]$. % (when $s > e$, $[s,e] = \emptyset$).

\subsection{Problem setting}\label{sec:setting}
We consider the following general reinforcement learning (RL) framework that covers a wide range of problems.
Ahead of time, the learner is given a policy set $\Pi$,
and the environment decides $T$ reward functions $f_1, \ldots, f_T: \Pi \to [0,1]$ unknown to the learner.
Then, in each round $t = 1, \ldots, T$, the learner chooses a policy $\pi_t\in\Pi$ and receives a noisy reward $R_t \in [0,1]$ whose mean is $f_t(\pi_t)$.\footnote{%
The range $[0,1]$ is only for simplicity. Our results can be directly extended to the case with sub-Gaussian noise.
}
The dynamic regret of the learner is defined as 
$
    \Regret = \sum_{t=1}^T \left(f^\star_t - R_t\right)
$,
where $f^\star_t = \max_{\pi\in\Pi} f_t(\pi)$ is the expected reward of the optimal policy for round $t$.

Many heavily-studied problems fall into this framework.
For example, in the classic multi-armed bandit problem~\citep{lai1985asymptotically},
it suffices to treat each arm as a policy;
for finite-horizon episodic RL (e.g.~\citep{jin2018q}),
each state-to-action mapping is considered as a policy, and $f_t(\pi)$ is the expected reward of executing $\pi$ in the $t$-th episode's MDP with some transition kernel and some reward function.
See more examples in \pref{app: verify example}.
Note that our framework ignores many details of the actual problem we are trying to solve (e.g. not even mentioning the MDPs for RL).
This is because our results only rely on certain guarantees provided by a base algorithm, making these details irrelevant to our presentation.
There is also some technicality to fit the infinite-horizon RL problem into our framework, which we will discuss in detail in \pref{sec: average-reward}.

%Our framework covers a wide range of reinforcement learning problems that involve sequential interactions between the learner and an non-stationary environment (sometimes called an oblivious adversary). The protocol of the learner and the environment can be described as Protocol~\ref{proto: main protocol}. The learner is given a policy set $\Pi$. Before the game starts, the environment first chooses $T$ functions $f_1, \ldots, f_T$, each of which is a mapping from $\Pi$ to $[0, 1]$. These mappings are not revealed to the learner. Then in each round of $1, 2, \ldots, T$, the learner chooses a policy $\pi_t\in\Pi$ and receives a noisy reward $R_t$ whose mean is $f_t(\pi_t)$. We also assume $R_t\in[0, 1]$. By defining $f^\star_t = \max_{\pi\in\Pi} f_t(\pi)$, the dynamic regret of the learner is defined as 
%\begin{align*}
%    \Regret = \sum_{t=1}^T \left(f^\star_t - R_t\right). 
%\end{align*}

%\begin{protocol}[t]
%    \caption{A general non-stationary bandit framework}
%    \label{proto: main protocol}
%    \textbf{input}: policy set $\Pi$ \\
%    The environment chooses reward mappings $f_t: \Pi\rightarrow \mathbb{R}$ for all  $t=1,\ldots, T$. \\
%    ($f_1, f_2, \ldots, f_T$ are not revealed to the learner) \\
%    \For{$t=1, \ldots, T$}{
%       The learner chooses a policy $\pi_t\in\Pi$ \\
%       The learner receives and observes a noisy reward $R_t$, where $\E_t[R_t]=f_t(\pi_t)$. 
%    }
%\end{protocol}

%We define $[s,e]=\{s,s+1,\ldots, e\}$ for $s\leq e$, and $[s,e]=\emptyset$ if $s>e$, and $[n]=\{1,2,\ldots, n\}$. 

\paragraph{Non-stationarity measure}
A natural way to measure the distribution drift between rounds $t$ and $t+1$ is to see how much the expected reward of any policy could change, that is, $\max_{\pi\in\Pi} |f_{t}(\pi) - f_{t+1}(\pi)|$.
However, to make our results more general, we take a sligtly more abstract way to define non-stationarity whose exact form  eventually depends on what guarantees the base algorithm can provide for a concrete problem.
To this end, we define the following.
\begin{definition}\label{def: deviation}
$\dev: [T]\to \mathbb{R}$ is a non-stationarity measure if it satisfies $\dev(t) \geq \max_{\pi\in\Pi} |f_{t}(\pi) - f_{t+1}(\pi)|$ for all $t$.
Define for any interval $\calI = [s,e]$, $\dev_{\calI}=\sum_{\tau=s}^{e-1} \dev(\tau)$ (note $\dev_{[s,s]} = 0$) and $L_\calI=1+\sum_{\tau=s}^{e-1}\one[\dev(\tau)\neq 0]$.
With slight abuse of notation, we write $\dev=\dev_{[1,T]}$ and $L=L_{[1,T]}$. 
%    Let $\dev_{[t,t+1]}\geq \max_{\pi\in\Pi} |f_{t}(\pi) - f_{t+1}(\pi)|$ be a measure of distribution drift between round $t$ and $t+1$. Define $\dev_{[t,t]}=0$, and for any interval $\calI=[s,e]$, define $\dev_{\calI}=\sum_{\tau=s}^{e-1} \dev_{[\tau,\tau+1]}$ and $L_\calI=1+\sum_{\tau=s}^{e-1}\one[\dev_{[\tau, \tau+1]}\neq 0]$. Furthermore, we write $\dev=\dev_{[1,T]}$ and $L=L_{[1,T]}$. 
\end{definition}

\paragraph{Base algorithm and requirements}
As mentioned, our approach takes a base algorithm that tackles the problem when the environment is (near-)stationary, and turns it into another algorithm that can deal with non-stationary environments.
Throughout the paper, we denote the base algorithm by \alg and assumes that it satisfies the following mild requirements when run alone.

%Our algorithm is built upon a \emph{base algorithm}, which we call \alg, that tackles Protocol~\ref{proto: main protocol}. We assume that, when running alone, \alg satisfies the following \pref{assum:assump2}: 
\begin{assumption}
\label{assum:assump2}
 \alg outputs an auxiliary quantity $\tildef_t\in[0, 1]$ at the beginning of each round $t$.  %With probability at least $1-\delta$, for all $t=1,2,\ldots$, \alg guarantees  that
There exist a non-stationarity measure $\dev$ and a non-increasing function $\avgreg: [T] \to \mathbb{R}$ such that running \alg satisfies the following: for all $t\in[T]$, as long as $\dev_{[1,t]}\leq \avgreg(t)$, without knowing $\dev_{[1,t]}$ \alg ensures with probability at least $1-\frac{\delta}{T}$:  
    \begin{align}
        &\tildef_t \geq \min_{\tau\in [1,t]} f_\tau^\star - \dev_{[1,t]} %\label{eq: general condition 1}  
        \qquad\text{and}\qquad
        \frac{1}{t}\sum_{\tau=1}^t \left(\tildef_\tau - R_\tau\right) \leq \avgreg(t) + \dev_{[1,t]}.   \label{eq: general condition 2}
    \end{align}
%We assume that the function $\avgreg(t)$ is known by the learner. 
    Furthermore, we assume that $\avgreg(t)\geq \frac{1}{\sqrt{t}}$ and $\regbound(t)= t\avgreg(t)$ is a non-decreasing function. 
\end{assumption}

We unpack the meaning of this assumption and explain why this is a mild requirement via a few remarks below, followed by examples of existing algorithms that do satisfy our assumption.

First, consider choosing $\dev(t) = \max_{\pi\in\Pi} |f_{t}(\pi) - f_{t+1}(\pi)|$ and see what the assumption means for a stationary environment with $f_t= f$ and $\dev(t) = 0$ for all $t$.
In this case, \pref{eq: general condition 2} simply becomes
$\tildef_t \geq \max_{\pi \in \Pi} f(\pi)$ and $\sum_{\tau=1}^t \left(\tildef_\tau - R_\tau\right) \leq \regbound(t)$,
which are standard properties of \emph{Upper-Confidence-Bound (UCB)-based} algorithms, where $\tildef_t$ is an \emph{optimistic} estimator of the optimal reward and $\regbound(t)$ is the regret bound usually of order $\sqrt{t}$.
In fact, even for non-UCB-based algorithms that do not explicitly maintain optimistic estimators,
by looking into their analysis, it is still possible to extract a quantity $\tildef_t$ satisfying these two properties (see our example for contextual bandits in \pref{app: verify example}).
We also note that this requirement for the special stationary case is in fact all we need to achieve our claimed regret bound $\RegL$. %if one takes a close look into our analysis.

Second, to simultaneously achieve the regret bound $\RegDev$ as well, we require {\it slightly more} from the base algorithm: in a {\it near-stationary} environment with $\dev_{[1,t]}\leq \avgreg(t)$, the two aforementioned properties still hold approximately with degradation $\dev_{[1,t]}$ (that is, \pref{eq: general condition 2}).\footnote{%
We use $\min_{\tau\in [1,t]} f_\tau^\star$ instead of the more natural one $f_t^\star$ since the former is weaker and the difference between these two is at most $\dev_{[1,t]}$ anyway.
} 
We call this a near-stationary environment because $\dev_{[1,t]}$ can be of order $\Theta(t)$ in a highly non-stationary environment, while here we restrict it to be at most
$\avgreg(t)$, which is non-increasing in $t$ (and in fact of order $1/\sqrt{t}$ in all our examples).
To the best of our knowledge, all UCB-based algorithms satisfy \pref{assum:assump2} with some suitable choice of $\dev$. The fact that we only require \pref{eq: general condition 2} to hold for near-stationary environments is the key to bypassing the technical difficulty of getting the optimal bound $\RegDev$ met in \citep{russac2019weighted, pmlr-v108-zhao20a, russac2020algorithms, faury2021regret, touati2020efficient, zhou2020nonstationary} for linear bandits, generalized linear bandits, and linear MDPs, as mentioned in \pref{sec: intro}.

%In fact, most of them satisfy  \pref{eq: general condition 2} even without the condition $\dev_{[1,t]}\leq \avgreg(t)$, as we show in our examples.

Finally, noting that $\avgreg(t)$ and $\regbound(t)$ represent an average and an cumulative regret bound respectively, the monotonicity requirement on them is more than natural.
The requirement $\avgreg(t)\geq \frac{1}{\sqrt{t}}$ is also usually unavoidable without further structures in the problem.
Note that while we write $\avgreg$ and $\regbound$ as a function of $t$ only, they can depend on $\log(1/\delta)$, $\log T$,  the complexity of $\Pi$, and other problem-dependent parameters such as the number of states/actions of an MDP.

Following the order in \pref{tab:compare}, we now give a list of existing algorithms that satisfy \pref{assum:assump2} in different problem settings with proper non-stationarity measure $\dev$ and regret bound $\regbound$.
We defer the concrete form of $\tildef_t$ (which requires introducing other notations) and all the proofs to \pref{app: verify example}. 
\begin{itemize}
\item UCB1~\citep{auer2002finite}: $\regbound(t)=\otil(\sqrt{At} + A)$ and $\dev(t)=\Theta(\|r_t - r_{t+1}\|_\infty)$, where $A$ is the number of arms, and $r_t$ is the expected reward vector at time $t$.

\item  ILTCB~\citep[short for ILOVETOCONBANDITS]{agarwal2014taming}: %\footnote{This is short for the original name ILOVETOCONBANDITS.} 
$\regbound(t)=\otil(\sqrt{At\log|\Pi|}+A\log|\Pi|)$ and $\dev(t)=\Theta\left( \int_r \int_x  |\calD_t(x,r) - \calD_{t+1}(x,r)| \mathrm{d}x\mathrm{d}r \right)$, where $A$ is the number of actions and $\calD_t$ is the joint distribution of the context-reward pair $(x,r)$ at time $t$.

\item FALCON~\citep{simchi2020bypassing}: $\regbound(t)=\otil(\sqrt{At\log|\Phi|} + A\log|\Phi|)$ and $\dev(t)=\Theta(\sqrt{A}\max_{x,a} |\phi_t^\star(x,a) - \phi_{t+1}^\star(x,a)| + \int_x |\calD_t(x) - \calD_{t+1}(x)|\mathrm{d}x)$, where $A$ is the number of actions, $\Phi$ is the set of regressors (each of which maps a context-action pair to a predicted reward), $\phi^\star_t \in \Phi$ is the true regressor at time $t$, and $\calD_t$ is the distribution of contexts at time $t$. 

\item OFUL~\citep{abbasi2011improved}: $\regbound(t)=\otil\left(d\sqrt{t}\right)$ and $\dev(t)=\widetilde{\Theta}(d\|\theta_t-\theta_{t+1}\|_2)$, where $d$ is the feature dimension and $\theta_t \in\mathbb{R}^d$ parameterizes the linear reward function at time $t$.

\item GLM-UCB~\citep{filippi2010parametric}: $\regbound(t)=\otil\left(\frac{k_\mu d}{c_\mu}\sqrt{t}\right)$ and $\dev(t)=\widetilde{\Theta}\left(\frac{k_\mu^2 d}{c_\mu}\|\theta_t-\theta_{t+1}\|_2\right)$, where $d$ is the feature dimension, $\theta_t \in\mathbb{R}^d$ parameterizes the linear reward function at time $t$, and $k_\mu, c_\mu$ are the upper and lower bounds of the gradient of the link function. 

\item Q-UCB~\citep[short for Q-learning UCB-H]{jin2018q}:\footnote{%
For ease of comparison, here, the reward range is changed from $[0,1]$ to the more common $[0,H]$. \label{fn:scaling1}
}  %In the standard episodic MDP formulation, the range of the reward in one \emph{round} (i.e., episode) is $[0, H]$. To apply Theorem~\ref{thm: regret bound}, we first scale the reward range down to $[0, 1]$, then apply Theorem~\ref{thm: regret bound}, and then scale up the bound by $H$ again. We adopt the convention of using $k$ to denote the episode index and $K$ the total number of episodes. In this case, 
$\regbound(t)=\otil(\sqrt{H^5SAt} + H^3SA)$ and $\dev(t)=\Theta(H\sum_{h=1}^H \max_{s,a}|r_h^{t}(s,a)-r_{h}^{t+1}(s,a)| + H^2\sum_{h=1}^H \max_{s,a}\|p_h^{t}(\cdot|s,a)-p_{h}^{t+1}(\cdot|s,a)\|_1)$, where $H$, $S$ and $A$ are the numbers of layers, states, and actions of the MDP respectively, and
$p_h^{t}$ and $r_h^{t}$ are the transition and reward functions for layer $h$ of episode $t$.

\item LSVI-UCB~\citep{jin2020provably}:\footnote{%
Same as \pref{fn:scaling1}.
}  $\regbound(t)=\otil(\sqrt{d^3 H^4 t})$ and $\dev(t)=\widetilde{\Theta}(dH\sum_{h=1}^H \|\theta_h^{t}-\theta_{h}^{t+1}\|_2 + dH^2\sum_{h=1}^H \|\mu_h^{t}-\mu_{h}^{t+1}\|_{F})$, where $d$ is the feature dimension, $H$ is the number of layers, and $\theta_h^{t}$ and $\mu_h^{t}$ are the parameters of the linear MDP for layer $h$ of episode $t$. 
\end{itemize}

\subsection{Main results}\label{sec:main_results}

Our main result is that, with an algorithm satisfying \pref{assum:assump2} at hand, our proposed black-box reduction, \masteralg (\pref{alg: final adaptive alg}), ensures the following dynamic regret bound.

\begin{theorem}\label{thm: regret bound}
     If \pref{assum:assump2} holds with $\regbound(t)= c_1t^p + c_2$ for some $p\in[\frac{1}{2}, 1)$ and $c_1, c_2 >0$, then \masteralg (\pref{alg: final adaptive alg}), without knowing $L$ and $\dev$, guarantees with high probability:
     \begin{align*}
      \Regret = \otil\left( \min\left\{\left(c_1 + \frac{c_2}{c_1}\right)\sqrt{LT}, \;  \left(c_1^{\nicefrac{2}{3}} + c_2c_1^{-\nicefrac{4}{3}}\right)\dev^{\nicefrac{1}{3}}T^{\nicefrac{2}{3}} +  \left(c_1 + \frac{c_2}{c_1}\right)\sqrt{T} \right\} \right)
    \end{align*}
    when $p=\frac{1}{2}$, and 
$
         \Regret=  \otil\left(\min\left\{c_1 L^{1-p} T^{p}, \;\left(c_1\dev^{1-p}T\right)^{\frac{1}{2-p}} + c_1T^{p}\right\}\right)
$
    when $p>\frac{1}{2}$ (omitting some lower-order terms). 
\end{theorem}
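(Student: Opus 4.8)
The plan is to analyze \masteralg (\pref{alg: final adaptive alg}) on a single high-probability ``good event'' and then run two separate regret accountings --- one charging cost to the $L$ breakpoints and one charging cost to the total variation $\dev$ --- and take the minimum. First I would fix the good event as the intersection of: (i) for \emph{every} instance of \alg that \masteralg ever spawns, the two inequalities of \pref{eq: general condition 2} hold throughout that instance's lifespan (each with failure probability $\frac{\delta}{T}$ by \pref{assum:assump2}); and (ii) the Azuma--Hoeffding bounds controlling $\sum_\tau\bigl(R_\tau - f_\tau(\pi_\tau)\bigr)$ over all the (random) sub-intervals inspected by the tests. A standard union bound over all instances and all martingale terms keeps the overall failure probability $\otil(\delta)$, and all subsequent arguments are deterministic given this event.

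Next I would recall the structure of \masteralg: it proceeds in \emph{epochs}, and inside an epoch it schedules instances of \alg at geometric scales $2^0,2^1,\dots$ up to roughly the epoch length, starting a fresh scale-$m$ instance at each round independently with a carefully tuned, scale-dependent probability, and in every round it plays --- and only feeds fresh feedback to --- the active instance of the smallest scale. An epoch ends exactly when some running instance fails \testone (its realized average reward drops too far below its optimistic prediction $\tildef$) or \testtwo (its realized regret exceeds a constant multiple of $\regbound$). The three core claims are then: \emph{(Key Step 1: epoch count.)} On the good event, any instance whose entire lifespan lies inside a stationary stretch passes both tests; hence every epoch termination is chargeable to a distinct round $\tau$ with $\dev(\tau)\neq 0$, so the number of epochs overlapping any interval $\calI$ is $\order(L_\calI)$. \emph{(Key Step 2: within-epoch regret.)} Since the played instance is always the smallest-scale active one, $\sum_t(\tildef_t - R_t)$ over an epoch can be regrouped scale by scale and collapsed via a geometric-sum argument, using the monotonicity of $\regbound$ and $\avgreg$ and the scheduling probabilities, into $\otil\bigl(\regbound(\text{epoch length})\bigr)$; this is where the leading constant ($c_1+\frac{c_2}{c_1}$ when $p=\tfrac{1}{2}$, resp.\ $c_1$) emerges, from balancing the smallest scale against the additive $c_2$. \emph{(Key Step 3: from $\tildef$ to $f^\star$.)} Substituting $\tildef_t \ge \min_{\tau\in[1,t]} f_\tau^\star - \dev_{[1,t]}$ from \pref{eq: general condition 2} turns the $\tildef$-regret into $\sum_t(f_t^\star - R_t)$ at the price of an extra $\otil(\dev_\calI\,|\calI|)$ per locally-stationary sub-piece.

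Given these, the two halves of the $\min$ follow from two different partitions of $[1,T]$. For the $\sqrt{LT}$ half: cut $[1,T]$ at its $\le L$ breakpoints; by Key Steps 2--3 a piece of length $\ell_i$ costs $\otil\bigl(c_1\ell_i^{p} + c_2 + \dev_{\calI_i}|\calI_i|\bigr)$ and the last term vanishes on a stationary piece, so concavity of $x\mapsto x^p$ together with $\sum_i\ell_i = T$ and $\sum_i 1\le L$ gives $\otil(c_1 L^{1-p}T^{p})$, after which re-tuning the smallest scale absorbs the residual $c_2 L$ into the stated form. For the $\dev^{\nicefrac{1}{3}}T^{\nicefrac{2}{3}}$ half: greedily build maximal intervals $\calI_1,\calI_2,\dots$ with $\dev_{\calI_j}\le\avgreg(|\calI_j|)$ --- the widest regime in which \pref{assum:assump2} still bites --- so that a localized version of Key Steps 1--3 bounds the algorithm's regret on each $\calI_j$ by $\otil\bigl(\regbound(|\calI_j|) + \dev_{\calI_j}|\calI_j|\bigr) = \otil\bigl(\regbound(|\calI_j|)\bigr)$; summing and optimizing the profile $\{|\calI_j|\}$ against the maximality constraints (which force $\sum_j\avgreg(|\calI_j|)\lesssim\dev$) and $\sum_j|\calI_j| = T$ yields $\otil\bigl(c_1^{\nicefrac{2}{3}}\dev^{\nicefrac{1}{3}}T^{\nicefrac{2}{3}} + c_1\sqrt{T}\bigr)$ when $p=\tfrac{1}{2}$, and the analogous expression when $p>\tfrac{1}{2}$.

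The main obstacle is Key Step 2 together with the fact that the whole schedule is committed to \emph{before} $L$ or $\dev$ is known: one must show the randomized multi-scale schedule behaves like the optimal fixed-scale restart schedule --- i.e.\ that right after a change some active instance is running at a scale comparable to the correct local block length, that it triggers a test within $\otil(1)$ blocks of the ideal moment, and, crucially, that the amortized cost of all the \emph{wrongly}-scaled instances simultaneously active collapses geometrically instead of accumulating. This is exactly where the per-round start probabilities and the \testone/\testtwo thresholds must be set just right, and where the $c_1$-versus-$c_2$ balancing that produces the extra $+\sqrt{T}$ (resp.\ $+c_1T^{p}$) term has to be handled with care. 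I expect the passage from $p=\tfrac{1}{2}$ to $p\in(\tfrac{1}{2},1)$ to be purely mechanical --- replace $\sqrt{\cdot}$ by $(\cdot)^p$ throughout and redo the two optimizations --- with no new ideas.
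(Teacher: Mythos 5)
There is a genuine gap, and it sits exactly at the heart of the paper's argument: your Key Step 3. You propose to control $\term_1=\sum_\tau\bigl(f_\tau^\star-\tildeg_\tau\bigr)$ by simply substituting the optimism inequality $\tildef_t\geq\min_{\tau\in[1,t]}f_\tau^\star-\dev_{[1,t]}$ at a cost of $\otil(\dev_{\calI}|\calI|)$ per locally-stationary piece. This does not work: the prediction $\tildeg_\tau$ played at time $\tau$ may come from a long instance (e.g.\ the order-$n$ instance covering the whole block) whose start predates several changes, so its optimism guarantee in \pref{lemma: multi-scale reg} references $\min$ over \emph{its own} lifespan and is only valid when $\dev_{[\inst.s,\tau]}\leq\avgreg(\tau')$ — a condition that fails precisely in the regime you need it. When $f^\star$ jumps up, $f_\tau^\star-\tildeg_\tau$ can stay as large as a constant for the rest of the block, and no per-piece $\dev_{\calI}|\calI|$ accounting recovers this; this is the very reason restarting-based detection is needed at all (\pref{subsec:ideas}, \pref{fig: detection}). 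Relatedly, you have the direction of \testone backwards: it fires when the realized average reward of a completed order-$m$ instance \emph{exceeds} $U_t+9\multiavg(2^m)$, i.e.\ when a short, freshly started instance outperforms the old optimistic ceiling, which is the only observable signature of $\term_1$ growing; read as "reward drops below the prediction" it would duplicate \testtwo and detect nothing.

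The paper's proof closes exactly this hole with a probabilistic detection argument that your plan (beyond a gesture in the "main obstacle" paragraph) does not supply: defining $\tau_i(m)$ as the first time in a near-stationary piece where $f_\tau^\star-\tildeg_\tau\geq 12\multiavg(2^m)$, \pref{lemma: key term reg bound} shows that after $\tau_i(m)$ an order-$m$ instance is scheduled within about $\frac{\avgreg(2^m)}{\avgreg(2^n)}\log(T/\delta)$ aligned slots, runs entirely inside the piece, earns reward close to the new $f^\star$, and forces \testone to end the block; hence the time during which the gap exceeds $12\multiavg(2^m)$ is bounded, giving the extra $\sum_{m}\frac{\avgreg(2^m)}{\avgreg(2^n)}\regbound(2^m)$ term in the single-block bound (\pref{lemma: bound term1}). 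That term is not an artifact: it is the source of the $\frac{c_2}{c_1}$, $c_2c_1^{-\nicefrac{4}{3}}$ and additive $c_1T^{p}$ contributions in \pref{thm: regret bound}, which your accounting attributes vaguely to "balancing the smallest scale against $c_2$." Your Key Step 1 (stationary epochs never restart, so epochs are chargeable to changes) and your greedy $\dev$-partition are in line with \pref{lem: infreq restart}, \pref{lemma: epoch upper} and \pref{lemma: interval divide}, and Key Step 2 for $\term_2$ matches \pref{lemma: multi-scale reg}; but without the \testone-based mechanism for $\term_1$ the proof as proposed would fail.
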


For ease of presentation, in this theorem we assume that $\regbound(\cdot)$ takes a certain form that is common in the literature and holds for all our examples with $p=\frac{1}{2}$.
Applying this theorem to all the examples discussed earlier, we achieve all the optimal $\min\{\RegL, \RegDev\}$ bounds listed in \pref{tab:compare} (except for infinite-horizon MDPs which will be discussed in \pref{sec: average-reward}).
Our definitions of $L$ are the same as in previous works,
and our definitions of $\dev$ are sometimes larger by some problem-dependent factors (such as $d$ and $H$) in order to fit \pref{assum:assump2}.
More specifically, for (contextual) bandits, our \masteralg combined with UCB1 and ILTCB recovers the same optimal bounds (in terms of all parameters) achieved by~\citep{auer2019adaptively, chen2019new}. 
\masteralg with FALCON obtains a similar bound as in \citep{chen2019new} but with a different definition of $\dev$ specific to the regressor setting. 
For other settings, we present our results in terms of the common definition of the non-stationarity measure (denoted by $\hatdev$) and compare them with the state of the art:

%Note that while our definitions of $L$ are the same as those in previous works, for some settings, the definition of $\dev$ is sometimes different in order to fit it in \pref{assum:assump2}. For example, in previous works of non-stationary linear bandits \citep{cheung2019learning, russac2019weighted, pmlr-v108-zhao20a}, $\dev$ is defined as $\sum_{t}\|\theta_t-\theta_{t+1}\|_2$, but in \pref{sec:setting} we define it as $\sum_t d\|\theta_t-\theta_{t+1}\|_2$. For ease of comparison, below we use $\hatdev$ to denote the common definition of the drift measure used in previous works (for example, $\hatdev=\sum_{t}\|\theta_t-\theta_{t+1}\|_2$ for linear bandits), and represent all bounds with $\hatdev$.   

% and as mentioned, our results for linear bandits and episodic MDPs strictly improve over existing works~\citep{cheung2019learning, mao2020nearoptimal, touati2020efficient} as they achieve suboptimal bounds and/or require the knowledge of $L$ and $\dev$.

%\paragraph{Examples}
\begin{itemize}
%\item \masteralg + UCB1: $\Regret=\otil( \min\{\sqrt{KLT}, (K\dev)^{\nicefrac{1}{3}}T^{\nicefrac{2}{3}}+\sqrt{KT}\} )$
%
%\item  \masteralg + ILTCB: $\Regret=\otil( \min\{\sqrt{K\log|\Pi|LT}, (K\log|\Pi|\dev)^{\nicefrac{1}{3}}T^{\nicefrac{2}{3}}+\sqrt{K\log|\Pi|T}\} )$
%
%\item \masteralg + FALCON: $\Regret=\otil( \min\{\sqrt{K\log|\Phi|LT}, (K\log|\Phi|\dev)^{\nicefrac{1}{3}}T^{\nicefrac{2}{3}}+\sqrt{K\log|\Phi|T}\} )$
%
\item \masteralg + OFUL: $\Regret=\otil( \min\{d\sqrt{LT}, d\hatdev^{\nicefrac{1}{3}}T^{\nicefrac{2}{3}}+d\sqrt{T}\})$, where $\hatdev = \sum_t \|\theta_t-\theta_{t+1}\|_2$.
This improves~\citep{cheung2019learning, russac2019weighted, kim2019randomized, pmlr-v108-zhao20a, zhao2021nonstationary} which get $\otil(d^{\nicefrac{7}{8}}\hatdev^{\nicefrac{1}{4}}T^{\nicefrac{3}{4}} + d\sqrt{T})$ when $\hatdev$ is known.
\item \masteralg + GLM-UCB:  $\Regret=\otil\Big( \min\Big\{\frac{k_\mu}{c_\mu}d\sqrt{LT}, \frac{k_\mu^{\nicefrac{4}{3}}}{c_\mu}d\hatdev^{\nicefrac{1}{3}}T^{\nicefrac{2}{3}}+\frac{k_\mu}{c_\mu}d\sqrt{T}\Big\}\Big)$, where $\hatdev = \sum_t \|\theta_t-\theta_{t+1}\|_2$.
This improves~\citep{russac2020algorithms} which gets $\otil\big(\frac{k_\mu}{c_\mu}d^{\nicefrac{2}{3}}L^{\nicefrac{1}{3}}T^{\nicefrac{2}{3}}\big)$ when $L$ is known, and \citep{faury2021regret} which gets $\otil\big(\frac{k_\mu}{c_\mu}d^{\nicefrac{9}{10}}\hatdev^{\nicefrac{1}{5}}T^{\nicefrac{4}{5}}\big)$. 
\item \masteralg + Q-UCB:
$\Regret=\otil(\min\{\sqrt{H^5SALT}, (H^{7}SA\hatdev)^{\nicefrac{1}{3}} T^{\nicefrac{2}{3}} + \sqrt{H^5SAT}\})$, where $\hatdev = \sum_{t,h}\max_{s,a}(|r_h^{t}(s,a)-r_{h}^{t+1}(s,a)| + \|p_h^{t}(\cdot|s,a)-p_{h}^{t+1}(\cdot|s,a)\|_1)$.\footnote{%
Due to the scaling mentioned in \pref{fn:scaling1},
here, we first scale down $\regbound(\cdot)$ and $\dev$ by an $H$ factor, then apply \pref{thm: regret bound}, and finally scale up the final bound by an $H$ factor. \label{fn:scaling2}
} 
\citep[Theorem 3]{mao2020nearoptimal} gets $\otil((H^5SA\hatdev)^{\nicefrac{1}{3}}T^{\nicefrac{2}{3}} + \sqrt{H^3SAT})$ when $\hatdev$ is known.\footnote{The bound reported in \citep{mao2020nearoptimal} is $(H^3SA\hatdev)^{\nicefrac{1}{3}}T^{\nicefrac{2}{3}} + \sqrt{H^2SAT}$; however, their $T$ is the total number of timesteps while our $T$ is the number of episodes, and we have performed a proper translation between notations here. Their bound has a better $H$ dependency thanks to the use of Freedman-style confidence bounds.
The same idea unfortunately does not improve our bound due to the lower-order term $c_2$ in the definition of $C(t)$. \label{fn:translation1} }
\item \masteralg + LSVI-UCB:
$\Regret=\otil(\min\{\sqrt{d^3H^4 LT}, (d^4 H^{6}\hatdev)^{\nicefrac{1}{3}}T^{\nicefrac{2}{3}} + \sqrt{d^3 H^4 T}\})$, where $\hatdev = \sum_{t, h} (\|\theta_h^{t}-\theta_{h}^{t+1}\|_2 + \|\mu_h^{t}-\mu_{h}^{t+1}\|_{F})$.
This improves~\citep{zhou2020nonstationary, touati2020efficient} which get $\otil((d^5 H^8 \hatdev)^{\nicefrac{1}{4}}T^{\nicefrac{3}{4}}+ \sqrt{d^3H^4T})$ when $\hatdev$ is known.\footnote{The same scaling as in \pref{fn:scaling2} and \pref{fn:translation1} has been performed here.}
\end{itemize}

\subsection{High-level ideas}\label{subsec:ideas}

To get a high-level idea of our approach, first consider what could go wrong when running \alg alone in a non-stationary environment and how to fix that intuitively.
Decompose the dynamic regret as follows:
\begin{align}
    \underbrace{\sum_{\tau=1}^t \left(f_\tau^\star - \tildef_\tau\right)}_{\term_1} + \underbrace{\sum_{\tau=1}^t \left(\tildef_\tau - R_\tau\right)}_{\term_2}.    \label{eq: initial decomposition}
\end{align}
As mentioned, in a stationary environment, \alg ensures that $\term_1$ is simply non-positive and $\term_2$ is bounded by $C(t)$ directly.
In a non-stationary environment, however, both terms can be substantially larger.
If we can detect the event that either of them is abnormally large, we know that the environment has changed substantially, and should just restart \alg.
This detection can be easily done for $\term_2$ since both $\tildef_\tau$ and $R_\tau$ are observable, but not for $\term_1$ since $f_\tau^\star$ is of course unknown.
Note that, a large $\term_1$ implies that a policy, possibly suboptimal in the past, now becomes the optimal one with a much larger reward.
A single instance of \alg run from the beginning thus cannot detect this because suboptimal polices are naturally selected very infrequently.
\begin{figure}[t]
    \centering
    \includegraphics[width=0.9\textwidth, trim={0.5cm 4.3cm 0cm 3cm}, clip]{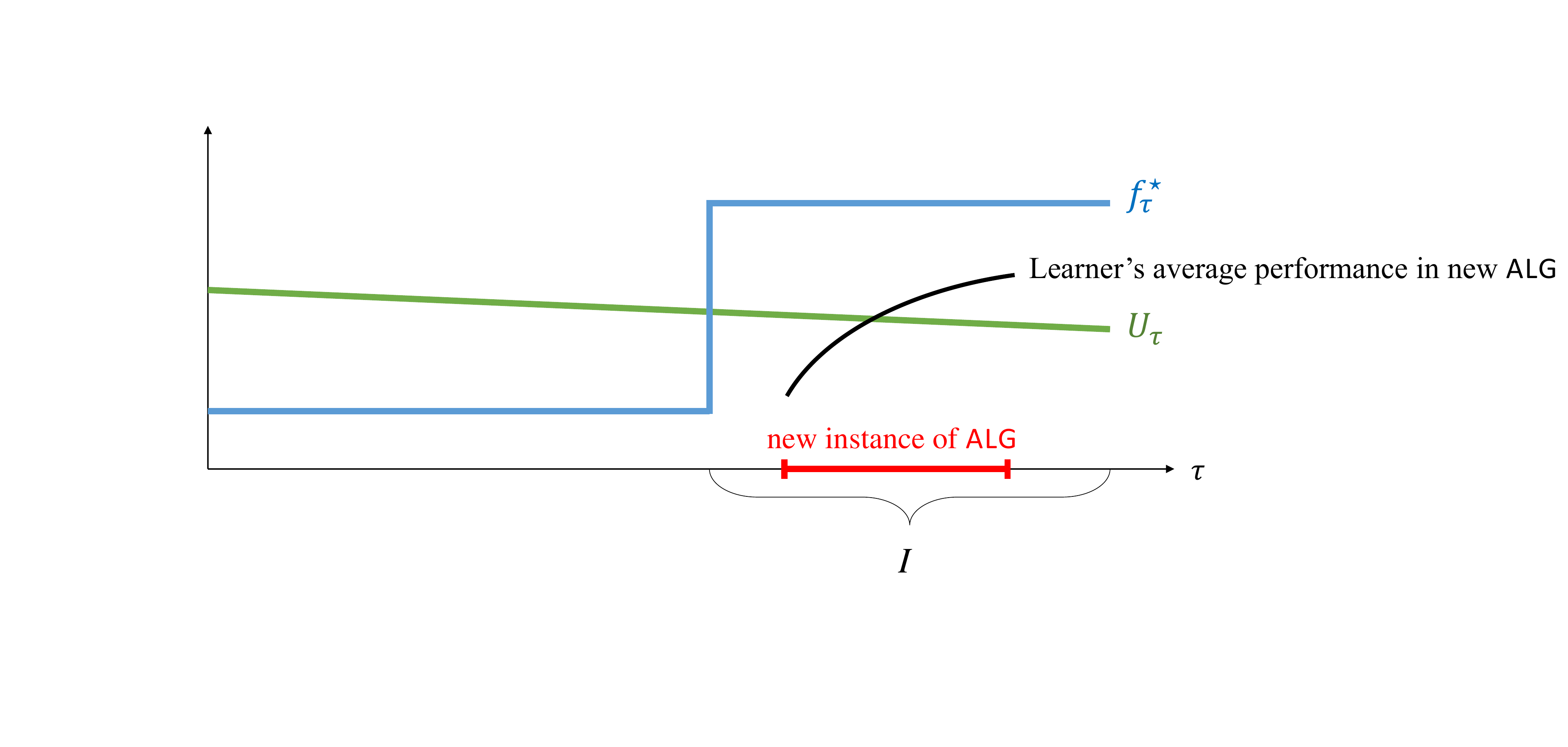}
    \caption{An illustration of how we detect non-stationarity via multiple instances of \alg}
    \label{fig: detection}
\end{figure}
To address this issue, our main idea is to maintain different instances of \alg to facilitates non-stationarity detection, illustrated via an example in \pref{fig: detection}.
Here, there is one distribution change that happens in interval $\calI$, making the value of $f_\tau^\star$ (the blue curve) drastically increase.
If within this interval, we start running another instance of \alg (the red interval),
then its performance (the black curve) will gradually approach $f_\tau^\star$ due to its regret guarantee in a stationary environment.
Hypothetically, if another instance of \alg run from the beginning could coexist with this new instance, we would see that the latter significantly outperforms the former and infer that the environment has changed.
The issue is that we cannot have multiple instances running and making decisions simultaneously, and here is where the optimistic estimators $\tildef_\tau$'s can help.
Specifically, since the quantity $U_\tau=\min_{s\leq \tau}\tildef_s$ (the green non-increasing curve) should always be an upper bound of the learner's performance in a stationary environment, if we find that the new instance of \alg significantly outperforms this quantity at some point (as shown in \pref{fig: detection}), we can also infer that the environment has changed, and prevent $\term_1 \leq \sum_{\tau=1}^t (f_\tau^\star - U_\tau)$ from growing too large by restarting.

To formally implement the ideas above, we need to decide when to start a new instance, how long it should last, which instance should be active if multiple exist, and others.
In \pref{sec: multi-scale}, we propose a randomized multi-scale scheme to do so, which is reminiscent of the ideas of {sampling obligation} in~\citep{auer2019adaptively} and {replay phase} in~\citep{chen2019new},
although their mechanisms are highly specific to their algorithms and problems.

\section{Algorithm}
\label{sec: multi-scale}
In this section, we first introduce \multialg, an algorithm that schedules and runs multiple instances of the base algorithm \alg in a multi-scale manner (\pref{subsec: multiscale}).
Then, equipping \multialg with non-stationarity detection, we introduce our final black-box reduction \masteralg (\pref{sec: final algorithm sec}).

\subsection{{\normalsize\textsf{MALG}}: ~Running the Base Algorithm with Multiple Scales}
\label{subsec: multiscale}
%In this subsection, we construct a multi-scale algorithm using the base algorithm \alg defined in \pref{assum:assump2}. We initiate multiple instances of \alg, running them for different lengths. 

We always run \multialg for an interval of length $2^n$, which we call a \emph{block}, for some integer $n$ (unless it is terminated by the non-stationarity detection mechanism).
During initialization, \multialg uses \pref{proc: alg profile} to schedule multiple instances of \alg within the block in the following way:
for every $m = n, n-1, \ldots, 0$, partition the block equally into $2^{n-m}$ sub-intervals of length $2^m$, and for each of these sub-intervals, with probability $\frac{\avgreg(2^n)}{\avgreg(2^m)} \leq 1$ schedule an instance of \alg (otherwise skip this sub-interval).
We call these instances of length $2^m$ \emph{order-$m$} instances.

%The length of instances of \alg are all from $\{1, 2, 2^2, \ldots\}$. We call an instance that has length $2^m$ an \emph{order-$m$} instance. For every $m$, we essentially divide the $2^n$ rounds into $2^{n-m}$ sub-intervals of length $2^m$, and in each sub-interval we instantiate an order-$m$ \alg with probability $\frac{\gap_n}{\gap_m}\leq 1$. %Each \alg starts from scratch, i.e., without using any data prior to the starting point. 
Note that by definition there is always an order-$n$ instance covering the entire block.
We use $\inst$ to denote a particular instance of \alg, and use $\inst.s$ and $\inst.e$ to denote its start and end time.

After the initialization, \multialg starts interacting with the environment as follows.
In each time $t$, the unique instance covering this time step with the shortest length is considered as being \emph{active}, while all others are \emph{inactive}.
\multialg follows the decision of the active instance, and update it after receiving feedback from the environment.
All inactive instances do not make any decisions or updates, that is, they are paused (they might be resumed at some point though).
We use $\tildeg_t$ to denote the scalar $\tildef_t$ output by the active instance.
See \pref{alg: multialg} for the pseudocode.

\begin{procedures}[t]
    \caption{A procedure that randomly
    schedules \alg of different lengths within $2^n$ rounds}
    \label{proc: alg profile}
    {\nonl \textbf{input}: $n$, $\avgreg(\cdot)$} \\
    %{\nonl \textbf{define}: $\gap_m=\avgreg(2^m)$} \\
    \For{$\tau=0, \ldots, 2^{n}-1$}{
       \For{$m=n, n-1, \ldots, 0$}{ 
          If $\tau$ is a multiple of $2^m$, with probability $\frac{\avgreg(2^n)}{\avgreg(2^m)}$, schedule a new instance  \inst of \alg \\
          {\nonl that starts at $\inst.s=\tau+1$ and ends at $\inst.e=\tau+2^m$.}  
       }
    }
\end{procedures}

\begin{algorithm2e}[t]
\caption{\multialg (Multi-scale \alg)}
\label{alg: multialg}
{\nonl \textbf{input}: $n$, $\avgreg(\cdot)$} \\
\textbf{Initialization}: run \pref{proc: alg profile} with inputs $n$ and $\avgreg(\cdot)$. 

At each time $t$, let the unique active instance be $\inst$, output $\tildeg_t$ (which is the $\tildef_t$ output by $\inst$), follow $\inst$'s decision $\pi_t$, and update $\inst$ after receiving feedback from the environment.
\end{algorithm2e}
\begin{figure}[H]
    \includegraphics[width=\textwidth, trim={1cm 1cm 0.3cm 5.2cm}, clip]{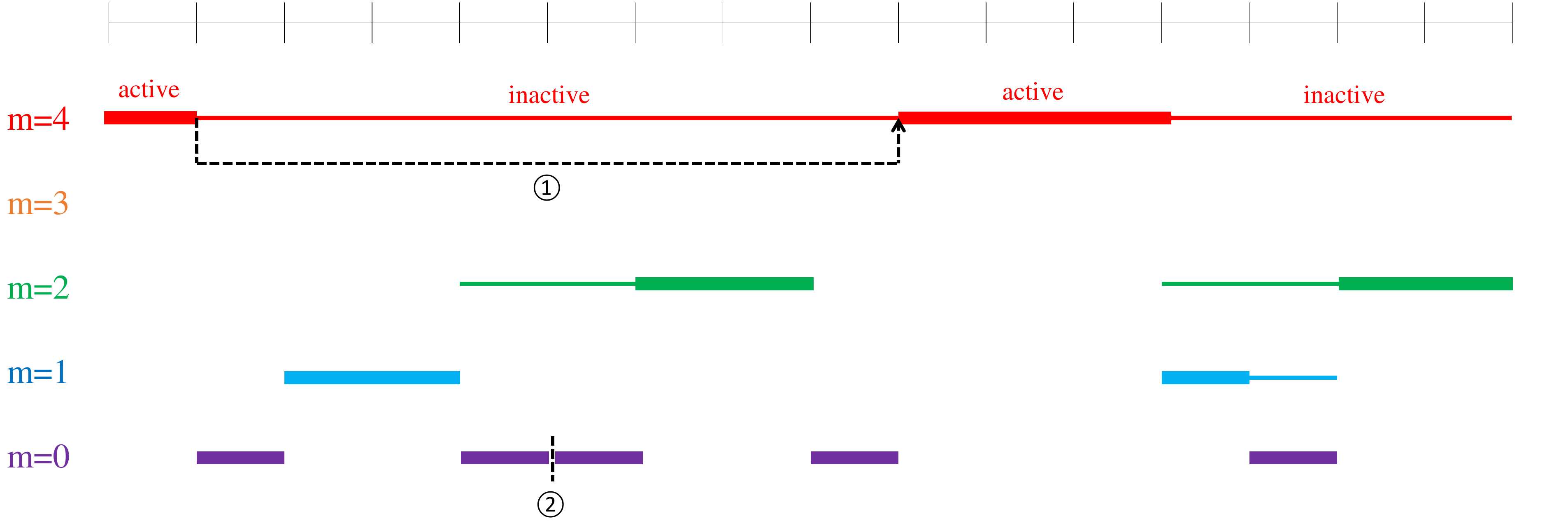}
    \caption{An illustration of \multialg with $n=4$ (see detailed explanation in Section~\ref{subsec: multiscale})}
    \label{fig: multiple schedule}
\end{figure}

For better illustration, we give an example with $n=4$ in \pref{fig: multiple schedule}. 
Suppose that the realization of the random scheduling by \pref{proc: alg profile} is: 
one order-$4$ instance (red), zero order-$3$ instance, two order-$2$ instances (green), two order-$1$ instances (blue), and five order-$0$ instances (purple).
The bolder part of the segment indicates the period of time when the instances are active, while the thinner part indicates the inactive period.
For example, the red order-$4$ instance is active for the first round, then paused for the next $8$ rounds, and then resumed (from the frozen internal states) for another $3$ rounds before becoming inactive again.
The dashed black arrow marked with \scalebox{0.8}{\circled{1}} indicates that \alg is executed as if the two sides of the arrow are concatenated. On the other hand,
as another example, the two purple instances on the two sides of the dashed line marked with \scalebox{0.8}{\circled{2}} are two \emph{different} order-$0$ instances, so the second one should start from scratch even though they are consecutive. 
One can see that at any point of time, the active instance is always the one with the shortest length.

\paragraph{Regret analysis of \multialg}
%\label{subsec: regret of multialg}
%In this subsection, we derive regret guarantees for \multialg. The results will be useful in Section~\ref{sec: final algorithm sec} where we construct our final algorithm. 

The multi-scale nature of \multialg allows the learner's regret to also enjoy a multi-scale structure, as shown in the next lemma (proof deferred to \pref{app:multi-scale}).

\begin{lemma}
     \label{lemma: multi-scale reg}
     Let $\nmax=\log_2 T+1$ and $\multiavg(t) = 6\nmax\log(T/\delta)\avgreg(t)$. 
     \multialg with input $n \leq \log_2 T$ guarantees the following: 
     for any instance \inst that \multialg maintains and any $t \in [\inst.s, \inst.e]$,
     as long as $\dev_{[\inst.s, t]}\leq \avgreg(t')$ where $t' =  t-\inst.s+1$,
     we have with probability at least $1-\frac{\delta}{T}$: 
    \begin{align}
        &\tildeg_t \geq \min_{\tau\in [\inst.s, t]} f_\tau^\star - \dev_{[\inst.s,t]}, %\label{eq: easy bound}  
        \qquad
        \frac{1}{t'}\sum_{\tau=\inst.s}^{t} \left(\tildeg_\tau - R_\tau\right) \leq \multiavg(t') + \nmax\dev_{[\inst.s,t]}, \label{eq: complicated bound}
    \end{align}
    and the number of instances started within $[\inst.s, t]$ is upper bounded by $6\nmax\log(T/\delta)\frac{\regbound(t')}{\regbound(1)}$. 
\end{lemma}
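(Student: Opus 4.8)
The plan is to prove the three claims by tracing through the sequence of active instances inside $[\inst.s, t]$ and stitching together the single-instance guarantees from \pref{assum:assump2}. First I would fix the instance $\inst$ and the time $t$, and observe that because the scheduling in \pref{proc: alg profile} puts an order-$m$ instance only at multiples of $2^m$, the set of instances that are ever active during $[\inst.s, t]$ forms a laminar/nested family, and on each round the active one is the shortest covering instance. I would enumerate the maximal active sub-intervals $[\inst.s, t] = \bigcup_j [a_j, b_j]$ where on $[a_j,b_j]$ a single instance $\inst_j$ is active (contiguously, possibly after being resumed from a frozen state — but from the instance's own point of view its update sequence is uninterrupted, which is exactly what \pref{assum:assump2} needs). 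Crucially each $\inst_j$ has $\inst_j.s \ge \inst.s$, so $\dev_{[\inst_j.s, \cdot]} \le \dev_{[\inst.s, \cdot]} \le \avgreg(t')$, and the length of $\inst_j$ is at most $t'$, so the near-stationarity precondition of \pref{assum:assump2} is met for each $\inst_j$ on its active window.

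For the first inequality ($\tildeg_t \ge \min_{\tau\in[\inst.s,t]} f_\tau^\star - \dev_{[\inst.s,t]}$): at time $t$ the active instance is some $\inst_j$ with $\inst_j.s \ge \inst.s$; applying the first part of \pref{eq: general condition 2} (shifted to start at $\inst_j.s$) gives $\tildeg_t \ge \min_{\tau\in[\inst_j.s,t]} f_\tau^\star - \dev_{[\inst_j.s,t]} \ge \min_{\tau\in[\inst.s,t]} f_\tau^\star - \dev_{[\inst.s,t]}$, using monotonicity of the min over a larger interval and $\dev_{[\inst_j.s,t]} \le \dev_{[\inst.s,t]}$. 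For the second inequality, I would sum the per-instance cumulative bounds: on each active window $[a_j,b_j]$, the instance $\inst_j$ has seen $n_j = b_j - \inst_j.s + 1$ of its own rounds, so $\sum_{\tau=a_j}^{b_j}(\tildeg_\tau - R_\tau) \le \sum_{\tau=\inst_j.s}^{b_j}(\tildef^{(j)}_\tau - R_\tau) \le \regbound(n_j) + n_j \dev_{[\inst_j.s,b_j]}$ (the leading telescoped sum only grows when more rounds are added, using that $\regbound$ is non-decreasing; and dropping the earlier inactive rounds' contributions requires care — here I would instead directly bound the active-window sum by noting $\tildef^{(j)}$ is the same on active rounds). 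Summing over $j$, the $\regbound(n_j)$ terms are controlled because on each scale $m$ the active windows of order-$m$ instances are disjoint and there are at most $\log_2 T + 1 = \nmax$ scales, so $\sum_j \regbound(n_j) \le \nmax \cdot \regbound(t') = \nmax t' \avgreg(t')$ after using sub-additivity/concavity-type bounds on $\regbound$; the $\dev$ terms give $\sum_j n_j \dev_{[\inst_j.s,b_j]} \le t' \cdot \nmax \cdot \dev_{[\inst.s,t]}$ since each round is counted once per scale. Dividing by $t'$ and absorbing the $6\log(T/\delta)$ factor (which comes from a union bound over the $\order(\nmax \log(T/\delta) \regbound(t')/\regbound(1))$ instances, each failing with probability $\delta/T$) yields the stated $\multiavg(t') + \nmax \dev_{[\inst.s,t]}$.

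For the count of instances started within $[\inst.s,t]$: on scale $m$, \pref{proc: alg profile} attempts an instance at each of the $\le t'/2^m + 1$ multiples of $2^m$ in the window, each succeeding with probability $\avgreg(2^n)/\avgreg(2^m)$; summing the expectations over $m = 0,\dots,n$ and using $\avgreg(2^m) \ge 1/\sqrt{2^m}$ together with $\regbound(t') = t'\avgreg(t')$ non-decreasing, I would bound the expected count by $\order(\nmax \regbound(t')/\regbound(1))$, then upgrade to a high-probability bound with a multiplicative Chernoff bound, which is where the extra $\log(T/\delta)$ and the constant $6$ enter. The main obstacle I anticipate is the bookkeeping in the second inequality: carefully arguing that summing the per-instance regret bounds over the active windows really does telescope correctly despite instances being paused and resumed, and that the total overcounting is only a factor $\nmax$ (one per scale) rather than something larger — i.e., making precise the claim that each real round contributes to the regret accounting of exactly the instances that cover it, of which there is at most one per order. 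The first inequality and the instance count are comparatively routine given the assumption and a Chernoff bound.
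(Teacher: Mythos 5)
Your argument for the first inequality and for the count of started instances matches the paper's proof and is fine. The genuine gap is in the second inequality. You claim that $\sum_j \regbound(n_j)\le \nmax\,\regbound(t')$ because order-$m$ active windows are disjoint and by ``sub-additivity/concavity-type bounds on $\regbound$''. This is false: for the canonical case $\regbound(t)=c\sqrt{t}$ concavity gives the \emph{reverse} inequality, and within $[\inst.s,t]$ there can be as many as $t'/2^m$ order-$m$ instances, so a deterministic per-scale bound would be of order $\frac{t'}{2^m}\regbound(2^m)=c\,t'/\sqrt{2^m}$; summed over scales this is $\Omega(t')$, a vacuous (linear) regret bound. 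The disjointness of the windows buys you nothing here. What actually controls this sum is the random thinning in \pref{proc: alg profile}: an order-$m$ instance is scheduled only with probability $\frac{\avgreg(2^n)}{\avgreg(2^m)}$, so the expected number of order-$m$ instances started in $[\inst.s,t]$ is at most $\frac{\avgreg(2^n)}{\avgreg(2^m)}\frac{t'}{2^m}+1$, making the per-scale expected contribution about $\avgreg(2^n)\,t'\le \regbound(t')$; a Bernstein bound on this count (giving $|S_m|\le 2\E|S_m|+2\log(T/\delta)$) then yields the high-probability statement, and this concentration step — not a union bound over failure events, as you suggest — is exactly where the $6\nmax\log(T/\delta)$ factor in $\multiavg$ originates. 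You invoke the scheduling probability only for the third claim, but it is equally indispensable for the second; without it the key step of your proof does not go through.

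A secondary, fixable issue is the bookkeeping you yourself flag: grouping by maximal contiguous active windows forces you to bound a \emph{suffix} of an instance's cumulative sum after it is paused and resumed, which \pref{assum:assump2} does not directly provide (it bounds cumulative sums from the instance's start), and ``noting $\tildef^{(j)}$ is the same on active rounds'' does not resolve this. The clean route (and the paper's) is to group by instance rather than by window: every instance active within $[\inst.s,t]$ starts no earlier than $\inst.s$, so its entire active history up to $t$ lies inside $[\inst.s,t]$, and one application of \pref{assum:assump2} per instance over all of its active rounds gives $\regbound(|\calI_i|)+|\calI_i|\dev_{\calI_i}$, with the deviation terms summing per scale to at most $t'\dev_{[\inst.s,t]}$ by disjointness of the $\calI_i$ (this part of your accounting is correct).
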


Note that \pref{eq: complicated bound} is essentially the analogue of \pref{eq: general condition 2} (up to logarithmic terms) with the starting time changed from $1$ to $\inst.s$.
It shows that even if we have multiple instances interleaving in a complicated way, 
the regret for a specific interval is still almost the same as running \alg alone on this interval, thanks to the carefully chosen probability in~\pref{proc: alg profile}.
Recall that there is always an order-$n$ instance starting from the beginning of the block, so \multialg is always providing a stronger multi-scale guarantee compared to running \alg alone.
This richer guarantee facilitates non-stationarity detection as we show next.

\subsection{{\normalsize\textsf{MASTER}}: ~Equipping {\normalsize\textsf{MALG}} with Stationarity Tests}
\label{sec: final algorithm sec}

We are now ready to present our final algorithm \masteralg, short for {\small\textbf{\textsf{MA}}\textsf{LG}} with \textbf{S}tationarity \textbf{TE}sts and \textbf{R}estarts (see \pref{alg: final adaptive alg}).
\masteralg runs \multialg in a sequence of blocks with doubling lengths ($2^0, 2^1, \ldots$).
Within each block of length $2^n$ (with $t_n$ being the starting time),
\masteralg simply runs a new instance of \multialg and records the minimum optimistic predictor thus far  for this block $U_t = \min_{\tau \in [t_n, t]} \tildeg_\tau$.
At the end of each time, \masteralg performs two tests (\testone and \testtwo),
and if either of them returns {\it fail},
\masteralg restarts from scratch.

The two tests exactly follow the ideas described in \pref{subsec:ideas} (recall \pref{fig: detection}).
Following \pref{eq: initial decomposition}, we decompose the regret on $[t_n, t]$ as $\term_1 + \term_2$ where $\term_1 = \sum_{\tau=t_n}^t \left(f_\tau^\star - \tildeg_\tau\right)$ and $\term_2 = \sum_{\tau=t_n}^t \left(\tildeg_\tau - R_\tau\right)$.
\testone prevents $\term_1 \leq \sum_{\tau=t_n}^t \left(f_\tau^\star - U_\tau\right)$ from growing too large by testing if there is some order-$m$ instance's interval during which the learner's average performance $\frac{1}{2^m}\sum_{\tau=\inst.s}^{\inst.e} R_{\tau}$ is larger than the promised performance upper bound $U_t$ by an amount of $9\multiavg(2^m)$.
On the other hand, \testtwo presents $\term_2$ from growing too large by directly testing if its average is large than something close to the promised regret bound $3\multiavg(t-t_n+1)$.

\begin{algorithm2e}[t]
    \caption{{\small\textbf{\textsf{MA}}\textsf{LG}} with \textbf{S}tationarity \textbf{TE}sts and \textbf{R}estarts (\masteralg)} 
    \label{alg: final adaptive alg}
    {\nonl \textbf{input}: $\multiavg(\cdot)$ (defined in \pref{lemma: multi-scale reg})}\\
    \textbf{Initialize}: $t\leftarrow1$ \\
%    \While{$t\leq T$}{   
         \For{$n=0, 1, \ldots$}{ \label{line: restart line}
              Set $t_n\leftarrow t$
              and initialize an \multialg (\pref{alg: multialg}) for the block $[t_n, t_n+2^n-1]$. \\
              \While{$t< t_n + 2^n$}{    \label{line: start of block}
                  Receive $\tildeg_t$ and $\pi_t$ from \multialg, execute $\pi_t$, and receive reward $R_t$. \\
%                  Execute $\pi_t$ suggested by the \multialg algorithm and receive $R_t$. \\
                   Update \multialg with any feedback from the environment, and set $U_t = \min_{\tau \in [t_n, t]} \tildeg_\tau$.
                   %$U_{t}\leftarrow \begin{cases}
                  %% \tildeg_t &\text{if\ } t=t_n\\
                  % \min\left\{U_{t-1}, \tildeg_t\right\} &\text{else}  
                  %\end{cases}$.   
                  \label{line: tracking repeated} \\ 
                  Perform \testone and \testtwo (see below). 
                  Increment $t\leftarrow t+1$. \\
                  \lIf{either test returns \textit{fail}}{
%                      $t\leftarrow t+1$ \\
                      restart from \pref{line: restart line}.    %\label{line: restart line}
                  }
%                  $t\leftarrow t+1$\label{line: end of block}
              }
         }
%    }
\nonl

\testone: \ \ If $t=\inst.e$ for some order-$m$ $\inst$ and
$\frac{1}{2^m}\sum_{\tau=\inst.s}^{\inst.e} R_{\tau} \geq  U_t  + 9\multiavg(2^m)$, 
return \textit{fail}.  

\testtwo: \ \  If 
$\frac{1}{t-t_n+1}\sum_{\tau=t_n}^{t} \left(\tildeg_{\tau} - R_{\tau}\right) \geq 3\multiavg(t-t_n+1)$, 
return \textit{fail}. 

\end{algorithm2e}

%\begin{remark}[Remark on Definition~\ref{def: standard}]

%\end{remark}

It is now clear that \masteralg indeed does not require the knowledge of $L$ or $\dev$ at all.
To analyze \masteralg, we prove the following key lemma that bounds the regret on a single block $[t_n, E_n]$ where $E_n$ is either $t_n+2^n - 1$ or something smaller in the case where a restart is triggered.

%The following key lemma bounds the dynamic regret within a single block that starts from $t_n$ and ends at $E_n$ (it ends either by finishing all $2^n$ rounds or by failing either of the stationarity tests). %Let $E_n$ be the index of the last round in block $n$. Since the block might end earlier than planned, we have $E_n\leq t_n+2^n B-1$.
%Actually, all technicalities are in bounding $\term_1$. The proof is defered to Appendix~\ref{app: key proofs}. 

\begin{lemma}   \label{lemma: bound term1}
With high probability, the dynamic regret of \masteralg on any block $\calJ=[t_n, E_n]$ where $E_n \leq t_n+2^n -1$ is bounded as
%    Consider a block of index $n$ that starts from $t_n$ and ends at $E_n$ ($E_n$ could be equal to or smaller than $t_n+2^n -1$). Let $\calJ=[t_n, E_n]$. Then we have with high probability,
    \begin{align}
        \sum_{\tau \in \calJ} \left(f_\tau^\star - R_\tau \right) \leq \otil\left(\sum_{i=1}^{\ell} \regbound(|\calI_i'|) + \sum_{m=0}^n \frac{\avgreg(2^m)}{\avgreg(2^n)}\regbound(2^m)\right).   \label{eq: block regret bound}
    \end{align}
    where $\{\calI_1', \ldots, \calI_\ell'\}$ is any partition of $\calJ$ such that $\dev_{\calI_i'}\leq \avgreg(|\calI_i'|)$ for all $i$.  %It can be shown that $\ell\leq L_{\calJ}$. If $\regbound(\cdot)$ is in the form of Definition~\ref{def: standard} with $c_3=1$, we further have $\ell\leq 1+2\left(c_1^{-1}\dev_{\calJ}|\calJ|^{1-p}\right)^{\frac{1}{2-p}} + \dev_\calJ$. 
\end{lemma}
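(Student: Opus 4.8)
The plan is to work from the decomposition in \pref{eq: initial decomposition} applied to the block, writing $\sum_{\tau\in\calJ}(f_\tau^\star - R_\tau) = \term_1 + \term_2$ with $\term_1 = \sum_{\tau\in\calJ}(f_\tau^\star - \tildeg_\tau)$ and $\term_2 = \sum_{\tau\in\calJ}(\tildeg_\tau - R_\tau)$, and to bound each piece separately, crucially using that neither \testone nor \testtwo fired strictly before time $E_n$. Bounding $\term_2$ is the easy part: since \testtwo did not fire at $E_n-1$, we get $\sum_{\tau=t_n}^{E_n-1}(\tildeg_\tau - R_\tau) < 3(E_n-t_n)\multiavg(E_n-t_n) = \otil(\regbound(E_n-t_n))$, and adding the single remaining round $E_n$ (value at most $1$) gives $\term_2 = \otil(\regbound(2^n))$, which is absorbed into the $m=n$ summand of the second sum on the right-hand side of \pref{eq: block regret bound}.

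For $\term_1$ I would first pass to $\term_1 \le \sum_{\tau\in\calJ}(f_\tau^\star - U_\tau)$ via $\tildeg_\tau \ge U_\tau$, and then, using the given partition $\{\calI_1',\dots,\calI_\ell'\}$, split the rounds of each piece $\calI_i'$ according to whether the active \alg-instance at round $\tau$ started inside $\calI_i'$ or strictly before it. For the former rounds, \pref{lemma: multi-scale reg} applies directly on $[\inst.s,\tau]\subseteq\calI_i'$ (the hypothesis $\dev_{[\inst.s,\tau]}\le\avgreg(\tau-\inst.s+1)$ holding because $\dev_{\calI_i'}\le\avgreg(|\calI_i'|)$ and $\avgreg$ is non-increasing), yielding $\tildeg_\tau \ge f_\tau^\star - 2\dev_{\calI_i'} \ge f_\tau^\star - 2\avgreg(|\calI_i'|)$, so these rounds contribute at most $\otil(\sum_i \regbound(|\calI_i'|))$. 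The ``straddling'' rounds — where the active instance reaches back into an earlier segment and its stationary guarantee is useless — are the heart of the argument, and here one must instead argue through \testone. Concretely, for a dyadic order-$m$ instance $\inst$ scheduled entirely inside $\calI_i'$ with $2^m$ chosen as large as possible (hence $\Theta(|\calI_i'|)$), \pref{lemma: multi-scale reg} applied on $[\inst.s,\inst.e]$ together with its $\tildeg$-lower bound gives $\frac{1}{2^m}\sum_{\tau=\inst.s}^{\inst.e} R_\tau \ge \min_{r\in\calI_i'} f_r^\star - \otil(\avgreg(2^m))$; since \testone did not fire at $\inst.e$, this forces $U_{\inst.e} \ge \min_{r\in\calI_i'} f_r^\star - \otil(\avgreg(2^m))$, and then monotonicity of $U$ controls $f_\tau^\star - U_\tau = \otil(\avgreg(2^m)) = \otil(\avgreg(|\calI_i'|))$ for every $\tau \le \inst.e$ in that piece.

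The main obstacle is making this last step robust to the randomness of \pref{proc: alg profile}: a suitable short dyadic instance inside $\calI_i'$ is scheduled only with probability $\avgreg(2^n)/\avgreg(2^m)$, so its existence cannot be guaranteed deterministically, and a short piece need not provide one even with high probability. I would handle this by walking down the scales inside $\calI_i'$ (from order $\approx\log_2|\calI_i'|$ downward) and charging the trivial per-round bound of $1$ to those rounds for which no scheduled usable instance is yet available; a geometric-waiting-time estimate should show that the total such charge, summed over all pieces and scales, is $\otil\big(\sum_{m=0}^n \tfrac{\avgreg(2^m)}{\avgreg(2^n)}\regbound(2^m)\big)$, matching the second sum in \pref{eq: block regret bound}. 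The instance-count bound of \pref{lemma: multi-scale reg} keeps the number of test checkpoints (and hence the accumulated overhead) logarithmic. Along the way I also need to dispose of several lower-order nuisances: the short tail of each piece beyond its last usable instance, the at-most-one instance whose \testone actually fires at $E_n$ (whose rounds are instead covered by shorter instances nested inside it, whose tests did pass), and rounds near the end of a truncated block whose active instance never completed (covered by \testtwo together with the ``started-inside-the-last-piece'' case) — none of which should affect the stated order.
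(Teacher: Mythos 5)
Your decomposition and the easy pieces are fine: bounding $\term_2$ by \testtwo plus one leftover round is exactly what the paper does, and handling rounds whose active instance started inside the current piece $\calI_i'$ via \pref{lemma: multi-scale reg} and near-stationarity is valid. The observation that a completed order-$m$ instance inside a near-stationary piece whose \testone passed forces $U_{\inst.e} \geq \min_{r\in\calI_i'} f_r^\star - \otil(\avgreg(2^m))$ is also correct, and it is indeed the engine of the paper's argument. The gap is in how you deploy it. You use it as a \emph{certificate} that must actually be scheduled and completed, and you charge the trivial bound $1$ to every round not yet covered by such a certificate, asserting the total charge is $\otil\bigl(\sum_m \frac{\avgreg(2^m)}{\avgreg(2^n)}\regbound(2^m)\bigr)$. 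That accounting does not close. Waiting for a scheduled order-$m$ instance costs, with high probability, about $\frac{\avgreg(2^m)}{\avgreg(2^n)}$ slots of length $2^m$, i.e.\ about $\frac{2^m\avgreg(2^m)}{\avgreg(2^n)} = \frac{\regbound(2^m)}{\avgreg(2^n)}$ rounds; charged at $1$ per round this exceeds the budget term $\frac{\avgreg(2^m)}{\avgreg(2^n)}\regbound(2^m)$ by a factor $1/\avgreg(2^m) \approx \sqrt{2^m}$. Worse, these waiting/tail costs are incurred in \emph{every} piece, regardless of whether $f^\star_\tau - U_\tau$ is actually large there (a short piece typically contains no scheduled instance of the scale you want at all, since the scheduling probability $\avgreg(2^n)/\avgreg(2^m)$ is tiny), so they sum to roughly $\ell$ times the second term of \pref{eq: block regret bound}. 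The lemma's second term does not scale with $\ell$, and this matters downstream: an extra $\ell$ factor there would degrade the final $\sqrt{LT}$ rate to $L\sqrt{T}$. The "short tail beyond the last usable instance" that you dismiss as a lower-order nuisance is in fact where the whole difficulty sits.

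The paper's proof avoids needing any realized certificate by arguing in contrapositive, via a level-set decomposition of the excess. For each scale $m$ it looks at the rounds of each piece where $f_\tau^\star - \tildeg_\tau$ exceeds a threshold of order $\multiavg(2^m)$, charges each such round only that threshold (not $1$), and matches the detection scale to the \emph{level of the excess}, not to the piece length: if the excess stayed above level $\multiavg(2^m)$ over a stretch much longer than $2^m$ anywhere in the block, then with probability $\avgreg(2^n)/\avgreg(2^m)$ per aligned slot an order-$m$ instance would have been scheduled inside that stretch, completed before $E_n$, and made \testone fail — contradicting that the block ran to $E_n$. Hence the total number of such "excess" rounds beyond $O(2^m)$ per piece, \emph{summed over all pieces}, is stochastically dominated by a single geometric variable, costing $\otil\bigl(\frac{\avgreg(2^m)}{\avgreg(2^n)}\regbound(2^m)\bigr)$ once per scale for the whole block (this is Lemma 12 in the paper), while the $O(2^m)$-per-piece remainders are absorbed into $\otil(\sum_i \regbound(|\calI_i'|))$. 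To repair your argument you would need both missing ingredients: calibrate the per-round charge to the actual excess level (so large excess is handled at small scales with short waiting times), and make the geometric waiting argument global across pieces by exploiting that a success would have terminated the block, rather than paying a coverage cost in every piece.
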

%To get a sense for Lemma~\ref{lemma: bound term1}, let's use $\regbound(t)\approx \sqrt{t}$ to do a rough calculation (omitting other dependencies). The right-hand side of \eqref{eq: block regret bound} would be roughly $\sum_{i=1}^\ell \sqrt{|\calI_i'|} + \sum_{m=0}^n \frac{\sqrt{2^n}}{\sqrt{2^m}}\sqrt{2^m} \approx \sqrt{\ell |\calJ|} + \sqrt{2^n} \approx \sqrt{\ell|\calJ|}$. Then using $\ell\lesssim \min\{L_\calJ, 1+\dev_\calJ^{\nicefrac{2}{3}}|\calJ|^{\nicefrac{1}{3}} + \dev_\calJ\}$, we get $\sqrt{\ell|\calJ|} \lesssim \min\{\sqrt{L_\calJ|\calJ|}, \dev_{\calJ}^{\nicefrac{1}{3}}|\calJ|^{\nicefrac{2}{3}} + \sqrt{|\calJ|}\}$, which basically achieves the tight dynamic bound within a block. 

%To derive more concrete regret bounds, in some lemmas below, we make the assumption that $\regbound(t)$ is in a certain form defined below (satisfied by most examples): 

See \pref{app: key proofs} for the proof.
When $\rho(t) = \Theta(1/\sqrt{t})$ (as in all our examples), the first term is $\otil(\sum_{i=1}^\ell \sqrt{|\calI_i'|})=\otil(\sqrt{\ell|\calJ|})$ by Cauchy-Schwarz; the second term is of order $\otil(\sqrt{2^n})$. 
To derive a bound in terms of $L$, we can simply choose the partition $\{\calI_1', \ldots, \calI_\ell'\}$  in a way such that $\dev_{\calI_i'} = 0$ and $\ell = L_{\calJ}$,
while to derive a bound in terms of $\dev$, the partition needs to be chosen more carefully depending on the value of $\dev_{\calJ}$.
Noting that the number of blocks between two restarts is always at most $\log_2 T$,
to finally prove \pref{thm: regret bound}, it remains to bound the number of restarts, which intuitively should scale with $L$ or $\dev$ because by design a restart will not be triggered when the environment is stationary.
The complete proof is deferred to \pref{app: epoch regret}--\pref{app: omitted proof}.

\section{Extension to Reinforcement Learning in Infinite-horizon Communicating MDPs}
\label{sec: average-reward}

As mentioned, applying our results to infinite-horizon RL~\citep{jaksch2010near} requires some extra care and extensions.
We refer the reader to \citep{cheung2020reinforcement} for a thorough introduction on the problem setup of infinite-horizon RL in time-varying communicating MDPs.
Here, we only highlight its difference compared to episode RL and explain how to fit it into our framework.
Specifically, in episodic RL, we have treated each episode (consisting of multiple steps in an MDP) as one round of our framework, each state-to-action mapping as a policy $\pi$, and the expected reward of executing $\pi$ in the MDP for round $t$ as $f_t(\pi)$.
In infinite-horizon RL, while the meaning of $\pi$ and $f_t$ remains the same, there is no episode any more and the learner interacts with the changing MDP from the start to the end without any reset on her state.
In this case,  we treat each step (that is, each state transition) in the MDP as one round in our framework, and the meaning of the reward feedback $R_t$ has now changed from a noisy observation of the policy's reward $f_t(\pi_t)$ to just the reward of $\pi_t$ for this single step.
With this change, the dynamic regret definition remains the same.

Due to the black-box nature of our approach, if one has a base algorithm that satisfies something close to \pref{assum:assump2} within this setup, then it is not hard to imagine that the same idea of \masteralg can be applied.
In \pref{subsec:ACW}, we provide such a base algorithm,
and in \pref{subsec:MUCRL}, we combine it with appropriate multi-scale scheduling and detection to obtain our final results.

\subsection{{\normalsize\textsf{UCRL}} with Adaptive Confidence Widening}\label{subsec:ACW}
Our base algorithm, \acw, is an improvement of the standard \ucrl algorithm~\citep{jaksch2010near} and its variant \ucrlcw~\citep{cheung2020reinforcement}.
The pseudocode is shown in \pref{alg: base alg UCRL} (\pref{app:ucrl_details}), where we highlight the differences compared to \ucrl and \ucrlcw in blue.

The first difference is the explicit mention that the next state of the learner might sometime be {\it arbitrarily} assigned instead of following the transition of the current MDP (\pref{line: reassignment}).
This is necessary because of the multi-scale scheduling of \multialg.
Indeed, recall that in \multialg, an instance of the base algorithm can sometimes be paused and then resumed later.
In the infinite-horizon RL setup, this means that the instance can be resumed from an arbitrary state.
Other than making this detail explicit, however, nothing really needs to be changed in the algorithm, since this happens infrequently and only incurs small additional regret due to the communicating property of the MDPs.

The second key difference is an adaptive version of the {\it Confidence Widening} technique of~\citep{cheung2020reinforcement} (see \pref{line: adaptive start}--\pref{line: active end}).
As pointed out in~\citep{cheung2020reinforcement}, in non-stationary environments, the Extended Value Iteration (EVI) subroutine of \ucrl might return a bias vector ($\tildeh_k$) with span much larger than $D_{\max}$, the maximum diameters of all the MDPs.
To address this issue, their confidence widening technique adds a constant $\eta$,  tuned based on $\dev$, to the confidence level of the confidence set $\calP_k$,
which eventually leads to sub-optimal regret $\dev^{\nicefrac{1}{4}}T^{\nicefrac{3}{4}}$.
Our \emph{adaptive confidence widening}, on the other hand, adaptively selects the value of $\eta$ in a doubling manner, so that in a relatively stationary environment we only widen the confidence set slightly, while in a more non-stationary environment the widening is more significant.
To avoid incurring too much additional regret in the latter case,
we also monitor the cumulative widening amount and terminate the algorithm if it exceeds a certain threshold (\pref{line: accumulate}--\pref{line: return terminate}), because this implies that the environment is highly non-stationary.
(This termination will also be a restart signal for \masteralg.)

Finally, notice that our black-box approach requires knowing the regret bound $\avgreg(\cdot)$ of the base algorithm, which in this case depends on $D_{\max}$, a potentially unknown quantity.
To deal with this issue, \acw takes a guess $\dupp$ on the value of $D_{\max}$ as an additional input.
In the next subsection, we discuss how \masteralg decides the value of $\dupp$ when $D_{\max}$ is unknown.

With all these modifications, our base algorithm \acw indeed provides a guarantee similar to \pref{eq: general condition 2} of \pref{assum:assump2}; see \pref{lem: modified UCRL}.

\subsection{Multi-scale {\normalsize\textsf{UCRL-ACW}} and Its Combination with {\normalsize\textsf{MASTER}}}
\label{subsec:MUCRL}
Now, we use the same idea as in \pref{subsec: multiscale} to create a multi-scale version of \acw, under a fixed input $\dupp$. The resulted algorithm is called Multi-scale \acw or \mucrl for short (see \pref{alg: multialg-ucrl}).  \mucrl is basically identical to \multialg with \acw as the base algorithm, except that we let \mucrl terminate whenever the currently active \acw instance makes a restart signal (due to having an abnormally large cumulative widening amount).  %Every time when an instance of \acw undergoes a transition from active to inactive and then to active again, it simply regards it as a state re-assignment in Line~\ref{line: reassignment} of Algorithm~\ref{alg: base alg UCRL}. 
%The output $\tildeg_t$ of \mucrl at every round is set to $\tildeJ_{k(t)}$ of the active \acw at time $t$, where $k(t)$ is the episode which time $t$ belongs to. 
The guarantee for \mucrl is provided in \pref{lemma: ucrl aggregated regret}, which parallels \pref{lemma: multi-scale reg}.  %We see in this lemma that the state re-assignment cost ($\dupp \disc_{[1,t]}$ in Lemma~\ref{lem: modified UCRL}) basically does not affect the regret bound. 

Next, as in \pref{sec: final algorithm sec}, we further combine \mucrl with non-stationarity tests, leading to \masterucrl (see \pref{alg: master ucrl}). 
The only difference compared to \masteralg is an additional condition to restart (highlighted in blue) --- when \mucrl terminates due to a restart signal from an \acw instance. We provide a single-block regret bound guarantee for \masterucrl under a fixed $\dupp$ in \pref{lemma: block lemma for RL}, which parallels \pref{lemma: bound term1}.   
Finally, we discuss three different cases with knowledge of different parameters (if any), leading to the three results listed in \pref{tab:compare}.

\paragraph{Known $D_{\max}$}
When $D_{\max}$ is known, we simply set $\dupp=D_{\max}$. 
In this case, all restarts of \masterucrl are due to non-stationarity, and we can bound their number in terms of $L$ or $\dev$.
Together with the single-block regret guarantee from \pref{lemma: block lemma for RL},
we prove that \masterucrl's dynamic regret is $\otil(\min\{\RegL, \RegDev\})$;
see \pref{thm: known Dmax case} for the dependence on other parameters.

%In this case, as discussed before, $\acw$ only returns terminate when there is significant non-stationarity. Therefore, the number of epochs of $\masterucrl$ could be controlled as before.  We formalize this in Lemma~\ref{lemma: RL number epoch}. Based on the result of Lemma~\ref{lemma: block lemma for RL} that bounds the dynamic regret in a single block, Appendix~\ref{app: epoch regret},~\ref{app: epoch regret seperate}, and~\ref{app: omit proof for RL avg} further build up the dynamic regret bound in the whole horizon (the main theorem is Theorem~\ref{thm: known Dmax case}). 

\paragraph{Unknown $D_{\max}$ and Known $L$ or $\dev$}  When $D_{\max}$ is unknown, we unfortunately require the knowledge of $L$ to get $\RegL$ and the knowledge of $\dev$ to get $\RegDev$.
However, as shown in \pref{tab:compare}, 
this still significantly improves over the best existing bounds $\otil(L^{\nicefrac{1}{3}}T^{\nicefrac{2}{3}})$ and $\otil(\dev^{\nicefrac{1}{4}}T^{\nicefrac{3}{4}})$ when $L$ and $\dev$ are known.
Specifically, we apply a doubling trick to set the value of $\dupp$ following the 
strategy below, where we call the interval between two restarts an {\it epoch}:
\begin{enumerate}
    \item Initialize $\dupp\leftarrow 1$. 
    \item Run \masterucrl with $\dupp$. If the number of epochs exceeds $\overline{N}$, then double $\dupp$ and repeat this step. 
    Here, $\overline{N}$ is set to $L$ if $L$ is known or $1+3(S^{-2}A^{-1}\dev^2T)^{\nicefrac{1}{3}}$ if $\dev$ is known.
\end{enumerate}

The rationale behind monitoring the number of epochs is that, when $\dupp$ is too small, \acw might have an abnormally large cumulative widening amount and signal a restart even in a fairly stationary environment.
In \pref{lemma: RL number epoch}, we show that if $\dupp\geq D_{\max}$, the number of epochs produced by \masterucrl is upper bounded by the value of $\overline{N}$ set above. Therefore, if it exceeds this number, we can infer $\dupp < D_{\max}$ and double its value. %On the other hand, during the time when $\masterucrl$ is run with $\dupp < D_{\max}$ with number of epochs bounded by $\overline{N}$, the regret is also well-bounded. The analysis for this case is presented in Theorem~\ref{thm: doubling trick algo for RL}. 
This allows us to prove the regret bound $\RegL$ or $\RegDev$ again; see \pref{thm: doubling trick algo for RL} for the details.

\paragraph{No prior knowledge at all}  
When nothing is known, we apply the Bandit-over-Reinforcment-Learning (BoRL) framework of~\citep{cheung2019learning, cheung2020reinforcement} to get a suboptimal bound of order $\otil(\min\{\RegL, \RegDev\} +T^{\nicefrac{3}{4}})$.
BoRL also serves as a black-box reduction to obtain parameter-free algorithms (albeit suboptimal), so applying it to our algorithm is straightforward.
We omit the details and only give the concrete bound in \pref{app: borl discuss}. 
We leave the question of whether the optimal bound is achievable when $L$, $\dev$, and $D_{\max}$ are all unknown as a future direction.

%\begin{lemma}
%    Following the strategy above, the dynamic regret is upper bounded by XXX 
%\end{lemma}
%\begin{proof}
%    When $\dupp \leq 2D_{\max}$, there are at most $\log(2D_{\max})$ restarts of $\masteralg$. In each of the epoch when $\dupp\leq 2D_{\max}$, the dynamic regret is upper bounded by $\dupp S\sqrt{L_{\calJ} A|\calJ|}$. Thus, the total dynamic regret bound before reaching $N_E=L$ is 
%    \begin{align*}
%          \sum_{i=1}^{N_E} \dupp S\sqrt{L_{\calJ}A|\calJ|} \leq D_{\max}S\sqrt{A}\sqrt{\sum_{e=1}^{N_E} L_{\calJ}}\sqrt{\sum_{e=1}^{N_E} |\calJ|} = \otil\left(D_{\max}S\sqrt{LA|\calE|}\right).  
%    \end{align*}
%    Summing this over the $\log(2D_{\max})$ restarts the total dynamic regret is still bounded by $\otil\left(D_{\max}S\sqrt{LAT}\right)$. 
%    After $\dupp\geq D_{\max}$, \masteralg would not restart. Using the lemma above, the dynamic regret is also upper bounded by $\otil\left(D_{\max}S\sqrt{LAT}\right)$. 
%\end{proof}

% Acknowledgments---Will not appear in anonymized version
%\acks{We thank a bunch of people.}

\section{Conclusion and Future Directions}
In this work, we study reinforcement learning in non-stationary environments. We propose a general black-box approach that can convert an algorithm with near-optimal regret in a (near-)stationary environment to another algorithm with near-optimal dynamic regret in a non-stationary environment. Prior to our work, the bound of $\otil(\Delta^{\nicefrac{1}{3}}T^{\nicefrac{2}{3}})$ is only achievable with the knowledge of $\Delta$, and no algorithm achieves the bound of $\otil(\sqrt{LT})$ even with the knowledge of $L$. Our algorithm achieves both bounds simultaneously without any prior knowledge. 

It would be interesting to see whether algorithms with data-dependent bounds work with our black-box approach. Previous work in this direction \citep{wei2016tracking} achieves an improved dynamic regret bound for multi-armed bandits when the cumulative variance of the loss is small; however, their approach crucially relies on the knowledge on the degree of non-stationarity as well as the cumulative variance. On the other hand, there are some immediate difficulties in applying our black-box approach to data-dependent algorithms. For example, the monotonicity of the the average regret $\avgreg(\cdot)$ may not hold anymore, and it is unclear how to set the probability of initiating a new base algorithm. Therefore, the task of achieving data-dependent dynamic bounds without prior knowledge seems to be challenging and requires other innovations. 

Another future direction is to study a class of contextual bandit problems where the context is adversarially generated \citep{abbasi2011improved, cheung2019learning, foster2020beyond}. In this case, the expected reward of the optimal policy changes over time even if the environment is stationary, so our current algorithm cannot be directly applied. For linear contextual bandits with adversarial contexts \citep{abbasi2011improved, cheung2019learning}, the fix is straightforward though: instead of requiring the base algorithm to generate a scalar $\tildef_t$ in each round, we let it generate a \emph{confidence set} for the hidden parameter, and check the inconsistency of the confidence set over time. However, for general contextual bandits with adversarial contexts, where algorithms do not necessarily maintain a confident set for the hidden parameter~\citep{foster2020beyond}, the extension is less clear and is left for future investigation. 

Finally, we are not aware of any near-optimal \emph{convex bandit} algorithm satisfying our \pref{assum:assump2}, so achieving near-optimal dynamic regret bound in general convex bandits is also left open.  

\paragraph{Acknowledgments}
We thank Peng Zhao for pointing out the technical flaw made in previous works on non-stationary linear bandits as well as a fix in~\citep{zhao2021nonstationary}, and thank Ruihao Zhu for informing us their non-stationary linear bandit algorithm with the $\RegDev$ bound \citep{cheung2018hedging}.  
We also thank anonymous reviewers for pointing out the relation between our algorithm and regret balancing \citep{abbasi2020regret, pacchiano2020regret}. 
This work is supported by NSF
Award IIS-1943607 and a Google Faculty Research Award.

\bibliographystyle{plainnat}
\bibliography{ref}

\begin{thebibliography}{58}
\providecommand{\natexlab}[1]{#1}
\providecommand{\url}[1]{\texttt{#1}}
\expandafter\ifx\csname urlstyle\endcsname\relax
  \providecommand{\doi}[1]{doi: #1}\else
  \providecommand{\doi}{doi: \begingroup \urlstyle{rm}\Url}\fi

\bibitem[Abbasi-Yadkori et~al.(2011)Abbasi-Yadkori, P{\'a}l, and
  Szepesv{\'a}ri]{abbasi2011improved}
Yasin Abbasi-Yadkori, D{\'a}vid P{\'a}l, and Csaba Szepesv{\'a}ri.
\newblock Improved algorithms for linear stochastic bandits.
\newblock \emph{Advances in neural information processing systems},
  24:\penalty0 2312--2320, 2011.

\bibitem[Abbasi-Yadkori et~al.(2020)Abbasi-Yadkori, Pacchiano, and
  Phan]{abbasi2020regret}
Yasin Abbasi-Yadkori, Aldo Pacchiano, and My~Phan.
\newblock Regret balancing for bandit and rl model selection.
\newblock \emph{arXiv preprint arXiv:2006.05491}, 2020.

\bibitem[Agarwal et~al.(2014)Agarwal, Hsu, Kale, Langford, Li, and
  Schapire]{agarwal2014taming}
Alekh Agarwal, Daniel Hsu, Satyen Kale, John Langford, Lihong Li, and Robert
  Schapire.
\newblock Taming the monster: A fast and simple algorithm for contextual
  bandits.
\newblock In \emph{International Conference on Machine Learning}, pages
  1638--1646, 2014.

\bibitem[Arora et~al.(2012)Arora, Dekel, and Tewari]{arora2012deterministic}
Raman Arora, Ofer Dekel, and Ambuj Tewari.
\newblock Deterministic mdps with adversarial rewards and bandit feedback.
\newblock In \emph{Proceedings of the Twenty-Eighth Conference on Uncertainty
  in Artificial Intelligence}, pages 93--101, 2012.

\bibitem[Auer et~al.(2002{\natexlab{a}})Auer, Cesa-Bianchi, and
  Fischer]{auer2002finite}
Peter Auer, Nicolo Cesa-Bianchi, and Paul Fischer.
\newblock Finite-time analysis of the multiarmed bandit problem.
\newblock \emph{Machine learning}, 47\penalty0 (2-3):\penalty0 235--256,
  2002{\natexlab{a}}.

\bibitem[Auer et~al.(2002{\natexlab{b}})Auer, Cesa-Bianchi, Freund, and
  Schapire]{auer2002nonstochastic}
Peter Auer, Nicolo Cesa-Bianchi, Yoav Freund, and Robert~E Schapire.
\newblock The nonstochastic multiarmed bandit problem.
\newblock \emph{SIAM journal on computing}, 32\penalty0 (1):\penalty0 48--77,
  2002{\natexlab{b}}.

\bibitem[Auer et~al.(2019)Auer, Gajane, and Ortner]{auer2019adaptively}
Peter Auer, Pratik Gajane, and Ronald Ortner.
\newblock Adaptively tracking the best bandit arm with an unknown number of
  distribution changes.
\newblock In \emph{Conference on Learning Theory}, pages 138--158, 2019.

\bibitem[Cai et~al.(2020)Cai, Yang, Jin, and Wang]{cai2020provably}
Qi~Cai, Zhuoran Yang, Chi Jin, and Zhaoran Wang.
\newblock Provably efficient exploration in policy optimization.
\newblock In \emph{International Conference on Machine Learning}, pages
  1283--1294. PMLR, 2020.

\bibitem[Chen et~al.(2021)Chen, Luo, and Wei]{chen2020minimax}
Liyu Chen, Haipeng Luo, and Chen-Yu Wei.
\newblock Minimax regret for stochastic shortest path with adversarial costs
  and known transition.
\newblock In \emph{Conference on Learning Theory}, 2021.

\bibitem[Chen et~al.(2020)Chen, Wang, Zhao, and Zheng]{chen2020combinatorial}
Wei Chen, Liwei Wang, Haoyu Zhao, and Kai Zheng.
\newblock Combinatorial semi-bandit in the non-stationary environment.
\newblock \emph{arXiv preprint arXiv:2002.03580}, 2020.

\bibitem[Chen et~al.(2019)Chen, Lee, Luo, and Wei]{chen2019new}
Yifang Chen, Chung-Wei Lee, Haipeng Luo, and Chen-Yu Wei.
\newblock A new algorithm for non-stationary contextual bandits: Efficient,
  optimal and parameter-free.
\newblock In \emph{Conference on Learning Theory}, pages 696--726, 2019.

\bibitem[Cheung et~al.(2018)Cheung, Simchi-Levi, and Zhu]{cheung2018hedging}
Wang~Chi Cheung, David Simchi-Levi, and Ruihao Zhu.
\newblock Hedging the drift: Learning to optimize under non-stationarity.
\newblock \emph{Available at SSRN 3261050}, 2018.

\bibitem[Cheung et~al.(2019)Cheung, Simchi-Levi, and Zhu]{cheung2019learning}
Wang~Chi Cheung, David Simchi-Levi, and Ruihao Zhu.
\newblock Learning to optimize under non-stationarity.
\newblock In \emph{The 22nd International Conference on Artificial Intelligence
  and Statistics}, pages 1079--1087. PMLR, 2019.

\bibitem[Cheung et~al.(2020)Cheung, Simchi-Levi, and
  Zhu]{cheung2020reinforcement}
Wang~Chi Cheung, David Simchi-Levi, and Ruihao Zhu.
\newblock Reinforcement learning for non-stationary markov decision processes:
  The blessing of (more) optimism.
\newblock In \emph{International Conference on Machine Learning}, pages
  1843--1854. PMLR, 2020.

\bibitem[Daniely et~al.(2015)Daniely, Gonen, and
  Shalev-Shwartz]{daniely2015strongly}
Amit Daniely, Alon Gonen, and Shai Shalev-Shwartz.
\newblock Strongly adaptive online learning.
\newblock In \emph{International Conference on Machine Learning}, pages
  1405--1411, 2015.

\bibitem[Dekel and Hazan(2013)]{dekel2013better}
Ofer Dekel and Elad Hazan.
\newblock Better rates for any adversarial deterministic mdp.
\newblock In \emph{International Conference on Machine Learning}, pages
  675--683, 2013.

\bibitem[Dick et~al.(2014)Dick, Gyorgy, and Szepesvari]{dick2014online}
Travis Dick, Andras Gyorgy, and Csaba Szepesvari.
\newblock Online learning in markov decision processes with changing cost
  sequences.
\newblock In \emph{International Conference on Machine Learning}, pages
  512--520, 2014.

\bibitem[Domingues et~al.(2021)Domingues, M{\'e}nard, Pirotta, Kaufmann, and
  Valko]{domingues2020kernel}
Omar~Darwiche Domingues, Pierre M{\'e}nard, Matteo Pirotta, Emilie Kaufmann,
  and Michal Valko.
\newblock A kernel-based approach to non-stationary reinforcement learning in
  metric spaces.
\newblock In \emph{International Conference on Artificial Intelligence and
  Statistics}, pages 3538--3546. PMLR, 2021.

\bibitem[Even-Dar et~al.(2009)Even-Dar, Kakade, and Mansour]{even2009online}
Eyal Even-Dar, Sham~M Kakade, and Yishay Mansour.
\newblock Online markov decision processes.
\newblock \emph{Mathematics of Operations Research}, 34\penalty0 (3):\penalty0
  726--736, 2009.

\bibitem[Faury et~al.(2021)Faury, Russac, Abeille, and
  Calauzènes]{faury2021regret}
Louis Faury, Yoan Russac, Marc Abeille, and Clément Calauzènes.
\newblock Regret bounds for generalized linear bandits under parameter drift.
\newblock \emph{arXiv preprint arXiv:2103.05750}, 2021.

\bibitem[Fei et~al.(2020)Fei, Yang, Wang, and Xie]{fei2020dynamic}
Yingjie Fei, Zhuoran Yang, Zhaoran Wang, and Qiaomin Xie.
\newblock Dynamic regret of policy optimization in non-stationary environments.
\newblock \emph{Advances in Neural Information Processing Systems}, 33, 2020.

\bibitem[Filippi et~al.(2010)Filippi, Capp{\'e}, Garivier, and
  Szepesv{\'a}ri]{filippi2010parametric}
Sarah Filippi, Olivier Capp{\'e}, Aur{\'e}lien Garivier, and Csaba
  Szepesv{\'a}ri.
\newblock Parametric bandits: the generalized linear case.
\newblock In \emph{Proceedings of the 23rd International Conference on Neural
  Information Processing Systems-Volume 1}, pages 586--594, 2010.

\bibitem[Foster and Rakhlin(2020)]{foster2020beyond}
Dylan Foster and Alexander Rakhlin.
\newblock Beyond ucb: Optimal and efficient contextual bandits with regression
  oracles.
\newblock In \emph{International Conference on Machine Learning}, pages
  3199--3210. PMLR, 2020.

\bibitem[Gajane et~al.(2018)Gajane, Ortner, and Auer]{gajane2018sliding}
Pratik Gajane, Ronald Ortner, and Peter Auer.
\newblock A sliding-window algorithm for markov decision processes with
  arbitrarily changing rewards and transitions.
\newblock \emph{arXiv preprint arXiv:1805.10066}, 2018.

\bibitem[Hazan and Seshadhri(2007)]{hazan2007adaptive}
Elad Hazan and Comandur Seshadhri.
\newblock Adaptive algorithms for online decision problems.
\newblock In \emph{Electronic colloquium on computational complexity (ECCC)},
  volume~14, 2007.

\bibitem[Jaksch et~al.(2010)Jaksch, Ortner, and Auer]{jaksch2010near}
Thomas Jaksch, Ronald Ortner, and Peter Auer.
\newblock Near-optimal regret bounds for reinforcement learning.
\newblock \emph{Journal of Machine Learning Research}, 11\penalty0 (4), 2010.

\bibitem[Jin et~al.(2018)Jin, Allen-Zhu, Bubeck, and Jordan]{jin2018q}
Chi Jin, Zeyuan Allen-Zhu, Sebastien Bubeck, and Michael~I Jordan.
\newblock Is q-learning provably efficient?
\newblock In \emph{Advances in neural information processing systems}, pages
  4863--4873, 2018.

\bibitem[Jin et~al.(2020{\natexlab{a}})Jin, Jin, Luo, Sra, and
  Yu]{jin2020learning}
Chi Jin, Tiancheng Jin, Haipeng Luo, Suvrit Sra, and Tiancheng Yu.
\newblock Learning adversarial markov decision processes with bandit feedback
  and unknown transition.
\newblock In \emph{International Conference on Machine Learning}, pages
  4860--4869. PMLR, 2020{\natexlab{a}}.

\bibitem[Jin et~al.(2020{\natexlab{b}})Jin, Yang, Wang, and
  Jordan]{jin2020provably}
Chi Jin, Zhuoran Yang, Zhaoran Wang, and Michael~I Jordan.
\newblock Provably efficient reinforcement learning with linear function
  approximation.
\newblock In \emph{Conference on Learning Theory}, pages 2137--2143. PMLR,
  2020{\natexlab{b}}.

\bibitem[Jin and Luo(2020)]{jin2020simultaneously}
Tiancheng Jin and Haipeng Luo.
\newblock Simultaneously learning stochastic and adversarial episodic mdps with
  known transition.
\newblock \emph{Advances in Neural Information Processing Systems}, 33, 2020.

\bibitem[Jun et~al.(2017)Jun, Orabona, Wright, and Willett]{jun2017improved}
Kwang-Sung Jun, Francesco Orabona, Stephen Wright, and Rebecca Willett.
\newblock Improved strongly adaptive online learning using coin betting.
\newblock In \emph{Artificial Intelligence and Statistics}, pages 943--951,
  2017.

\bibitem[Kim and Tewari(2020)]{kim2019randomized}
Baekjin Kim and Ambuj Tewari.
\newblock Randomized exploration for non-stationary stochastic linear bandits.
\newblock In \emph{Uncertainty in Artificial Intelligence}, 2020.

\bibitem[Lai and Robbins(1985)]{lai1985asymptotically}
Tze~Leung Lai and Herbert Robbins.
\newblock Asymptotically efficient adaptive allocation rules.
\newblock \emph{Advances in applied mathematics}, 6\penalty0 (1):\penalty0
  4--22, 1985.

\bibitem[Lancewicki et~al.(2020)Lancewicki, Rosenberg, and
  Mansour]{lancewicki2020learning}
Tal Lancewicki, Aviv Rosenberg, and Yishay Mansour.
\newblock Learning adversarial markov decision processes with delayed feedback.
\newblock \emph{arXiv preprint arXiv:2012.14843}, 2020.

\bibitem[Lee et~al.(2020)Lee, Luo, Wei, and Zhang]{lee2020bias}
Chung-Wei Lee, Haipeng Luo, Chen-Yu Wei, and Mengxiao Zhang.
\newblock Bias no more: high-probability data-dependent regret bounds for
  adversarial bandits and mdps.
\newblock \emph{Advances in Neural Information Processing Systems}, 33, 2020.

\bibitem[Li and Li(2019)]{li2019online}
Yingying Li and Na~Li.
\newblock Online learning for markov decision processes in nonstationary
  environments: A dynamic regret analysis.
\newblock In \emph{2019 American Control Conference (ACC)}, pages 1232--1237.
  IEEE, 2019.

\bibitem[Luo and Schapire(2015)]{luo2015achieving}
Haipeng Luo and Robert~E Schapire.
\newblock Achieving all with no parameters: Adanormalhedge.
\newblock In \emph{Conference on Learning Theory}, pages 1286--1304, 2015.

\bibitem[Luo et~al.(2018)Luo, Wei, Agarwal, and Langford]{luo2018efficient}
Haipeng Luo, Chen-Yu Wei, Alekh Agarwal, and John Langford.
\newblock Efficient contextual bandits in non-stationary worlds.
\newblock In \emph{Conference On Learning Theory}, pages 1739--1776. PMLR,
  2018.

\bibitem[Lykouris et~al.(2021)Lykouris, Simchowitz, Slivkins, and
  Sun]{lykouris2019corruption}
Thodoris Lykouris, Max Simchowitz, Aleksandrs Slivkins, and Wen Sun.
\newblock Corruption robust exploration in episodic reinforcement learning.
\newblock In \emph{Conference on Learning Theory}, 2021.

\bibitem[Mao et~al.(2021)Mao, Zhang, Zhu, Simchi-Levi, and
  Ba{\c{s}}ar]{mao2020nearoptimal}
Weichao Mao, Kaiqing Zhang, Ruihao Zhu, David Simchi-Levi, and Tamer
  Ba{\c{s}}ar.
\newblock Is model-free learning nearly optimal for non-stationary rl?
\newblock In \emph{International Conference on Machine Learning}, 2021.

\bibitem[Neu et~al.(2010)Neu, Gy{\"o}rgy, and Szepesv{\'a}ri]{neu2010online}
Gergely Neu, Andr{\'a}s Gy{\"o}rgy, and Csaba Szepesv{\'a}ri.
\newblock The online loop-free stochastic shortest-path problem.
\newblock In \emph{COLT}, volume 2010, pages 231--243. Citeseer, 2010.

\bibitem[Neu et~al.(2012)Neu, Gyorgy, and Szepesv{\'a}ri]{neu2012adversarial}
Gergely Neu, Andras Gyorgy, and Csaba Szepesv{\'a}ri.
\newblock The adversarial stochastic shortest path problem with unknown
  transition probabilities.
\newblock In \emph{Artificial Intelligence and Statistics}, pages 805--813,
  2012.

\bibitem[Neu et~al.(2013)Neu, Gy{\"o}rgy, Szepesv{\'a}ri, and
  Antos]{neu2013online}
Gergely Neu, Andr{\'a}s Gy{\"o}rgy, Csaba Szepesv{\'a}ri, and Andr{\'a}s Antos.
\newblock Online markov decision processes under bandit feedback.
\newblock \emph{IEEE Transactions on Automatic Control}, 59\penalty0
  (3):\penalty0 676--691, 2013.

\bibitem[Ortner et~al.(2020)Ortner, Gajane, and Auer]{ortner2020variational}
Ronald Ortner, Pratik Gajane, and Peter Auer.
\newblock Variational regret bounds for reinforcement learning.
\newblock In \emph{Uncertainty in Artificial Intelligence}, pages 81--90. PMLR,
  2020.

\bibitem[Pacchiano et~al.(2020)Pacchiano, Dann, Gentile, and
  Bartlett]{pacchiano2020regret}
Aldo Pacchiano, Christoph Dann, Claudio Gentile, and Peter Bartlett.
\newblock Regret bound balancing and elimination for model selection in bandits
  and rl.
\newblock \emph{arXiv preprint arXiv:2012.13045}, 2020.

\bibitem[Rosenberg and Mansour(2019)]{rosenberg2019online}
Aviv Rosenberg and Yishay Mansour.
\newblock Online convex optimization in adversarial markov decision processes.
\newblock In \emph{International Conference on Machine Learning}, pages
  5478--5486, 2019.

\bibitem[Rosenberg and Mansour(2020)]{rosenberg2020adversarial}
Aviv Rosenberg and Yishay Mansour.
\newblock Stochastic shortest path with adversarially changing costs.
\newblock \emph{arXiv preprint arXiv:2006.11561}, 2020.

\bibitem[Russac et~al.(2019)Russac, Vernade, and Capp{\'e}]{russac2019weighted}
Yoan Russac, Claire Vernade, and Olivier Capp{\'e}.
\newblock Weighted linear bandits for non-stationary environments.
\newblock \emph{Advances in Neural Information Processing Systems}, 2019.

\bibitem[Russac et~al.(2020)Russac, Capp{\'e}, and
  Garivier]{russac2020algorithms}
Yoan Russac, Olivier Capp{\'e}, and Aur{\'e}lien Garivier.
\newblock Algorithms for non-stationary generalized linear bandits.
\newblock \emph{arXiv preprint arXiv:2003.10113}, 2020.

\bibitem[Shani et~al.(2020)Shani, Efroni, Rosenberg, and
  Mannor]{shani2020optimistic}
Lior Shani, Yonathan Efroni, Aviv Rosenberg, and Shie Mannor.
\newblock Optimistic policy optimization with bandit feedback.
\newblock In \emph{International Conference on Machine Learning}, pages
  8604--8613. PMLR, 2020.

\bibitem[Simchi-Levi and Xu(2020)]{simchi2020bypassing}
David Simchi-Levi and Yunzong Xu.
\newblock Bypassing the monster: A faster and simpler optimal algorithm for
  contextual bandits under realizability.
\newblock \emph{Available at SSRN}, 2020.

\bibitem[Touati and Vincent(2020)]{touati2020efficient}
Ahmed Touati and Pascal Vincent.
\newblock Efficient learning in non-stationary linear markov decision
  processes.
\newblock \emph{arXiv preprint arXiv:2010.12870}, 2020.

\bibitem[Wang et~al.(2020)Wang, Du, Yang, and Kakade]{wang2020long}
Ruosong Wang, Simon~S Du, Lin~F Yang, and Sham~M Kakade.
\newblock Is long horizon reinforcement learning more difficult than short
  horizon reinforcement learning?
\newblock \emph{Advances in Neural Information Processing Systems}, 2020.

\bibitem[Wei et~al.(2016)Wei, Hong, and Lu]{wei2016tracking}
Chen-Yu Wei, Yi-Te Hong, and Chi-Jen Lu.
\newblock Tracking the best expert in non-stationary stochastic environments.
\newblock \emph{Advances in neural information processing systems},
  29:\penalty0 3972--3980, 2016.

\bibitem[Zhao and Zhang(2021)]{zhao2021nonstationary}
Peng Zhao and Lijun Zhang.
\newblock Non-stationary linear bandits revisited.
\newblock \emph{arXiv preprint arXiv:2103.05324}, 2021.

\bibitem[Zhao et~al.(2020)Zhao, Zhang, Jiang, and Zhou]{pmlr-v108-zhao20a}
Peng Zhao, Lijun Zhang, Yuan Jiang, and Zhi-Hua Zhou.
\newblock A simple approach for non-stationary linear bandits.
\newblock In Silvia Chiappa and Roberto Calandra, editors, \emph{Proceedings of
  the Twenty Third International Conference on Artificial Intelligence and
  Statistics}, volume 108 of \emph{Proceedings of Machine Learning Research},
  pages 746--755. PMLR, 26--28 Aug 2020.

\bibitem[Zhou et~al.(2020)Zhou, Chen, Varshney, and
  Jagmohan]{zhou2020nonstationary}
Huozhi Zhou, Jinglin Chen, Lav~R Varshney, and Ashish Jagmohan.
\newblock Nonstationary reinforcement learning with linear function
  approximation.
\newblock \emph{arXiv preprint arXiv:2010.04244}, 2020.

\bibitem[Zimin and Neu(2013)]{zimin2013online}
Alexander Zimin and Gergely Neu.
\newblock Online learning in episodic markovian decision processes by relative
  entropy policy search.
\newblock \emph{Advances in neural information processing systems},
  26:\penalty0 1583--1591, 2013.

\end{thebibliography}
\newpage
%\bibliography{yourbibfile}

%
\appendix
%!TEX root=main.tex

%\section*{Mappings} 
%\begin{table}[t]
%    \centering
%    \begin{tabular}{|c|c|}
 %         \hline
  %        Repeated Game  &  Average-reward MDP  \\
 %         \hline 
 %         \pref{assum:assump2}   &  Lemma~\ref{lem: modified UCRL}  \\
 %         \hline
 %         Lemma~\ref{lemma: multi-scale reg}   &  Lemma~\ref{lemma: ucrl aggregated regret} \\ 
%          \hline
   %       Lemma~\ref{lemma: bound term1}    &  Lemma~\ref{lemma: block lemma for RL}  \\
    %      \hline
    %      Lemma~\ref{lem: infreq restart}    &  Lemma~\ref{lemma: infrequent RL restart}   \\
   %       \hline
      %    Lemma~\ref{lemma: epoch upper}    & Lemma~\ref{lemma: RL number epoch}  \\
    %      \hline
       %   Theorem~\ref{thm: regret bound}     &   
  %  \end{tabular}
  %  \caption{A summary of previous results and our results (only start-of-the-art results are listed). $T$ is the number of rounds the learner interacts with the environment; the dependencies other than $T, L, \dev$ are omitted; ILTCB is an abbreviation of ILOVETOCONBANDITS. }
  %  \label{tab:compare}
%\end{table}
%\noindent {\LARGE \textbf{Appendix}}
\section{Omitted Algorithms and Main Results in \pref{sec: average-reward}}\label{app:ucrl_details}

\begin{algorithm2e}[H]
    \caption{\ucrl with Adaptive Confidence Widening (\acw)}
    \label{alg: base alg UCRL}
    \textbf{input}: $\dupp\geq 1$ \\
    $t\leftarrow 1$. \ \ 
    $N_1(s,a)\leftarrow 0$ for all $s,a$. \  \highlight{$\Gamma\leftarrow 0$} \\
    \For{episode $k=1,\ldots, $}{
    Set $t_k=t$, $\nu_k(s,a)=0$ for all $s, a$. \\       
    Define for all $s,a$:
        \begin{align*}
            \hatp_{k}(s'|s,a) &= \frac{\sum_{\tau=1}^{t-1}\one[(s_\tau,a_\tau,s_{\tau+1}')=(s,a,s')]}{N_k^+(s,a)}, \\
            \hatr_{k}(s,a) &= \frac{\sum_{\tau=1}^{t-1}R_\tau\one[(s_\tau,a_\tau)=(s,a)]}{N_k^+(s,a)}.
        \end{align*}
        \\
        {\nonl and for any $\eta$: }
        \begin{align*}
            \calP_{k}^\eta(s,a) &= \left\{\tildep(\cdot|s,a)\in\Delta_S: \|\tildep(\cdot|s,a)-\hatp_{k}(\cdot|s,a)\|_1\leq  \sqrt{S}\cdot\conf_{k}(s,a) + \eta \right\}\\ 
            \calR_{k}(s,a) &= \left\{\tilder(s,a)\in[0,1]: |\tilder(s,a)-\hatr_{k}(s,a)|\leq \conf_{k}(s,a)\right\}
        \end{align*}
        {\nonl where $\conf_{k}(s,a)\triangleq 8\sqrt{\frac{\log(SAT/\delta)}{N_{k}^+(s,a)}}$ and $N_{k}^+(s,a) = \max\{1, N_k(s,a)\}$.}\\
        {\nonl \ \ \\}
        
        \highlight{
        $\eta \leftarrow \frac{1}{T}$    \label{line: adaptive start}\\
        \While{\textit{true}}{
            Perform EVI on $(\calP_{k}^\eta, \calR_{k})$ with error parameter $\epsilon_{k}=\sqrt{\frac{1}{t}}$, and obtain $\tildepi$, $\tildeh$, $\tildeJ$. \\
             \lIf{$\spn(\tildeh) \leq 2\dupp$}{\textbf{break}}    \label{line: enough widening}
             $\eta\leftarrow 2\eta$
        }   \label{line: active end}
        $\pi_{k}\leftarrow \tildepi, \ \ \tildeh_k\leftarrow \tildeh, \ \ \tildeJ_k\leftarrow \tildeJ, \ \ \eta_k\leftarrow \eta$   \myComment{Adaptive confidence widening}   \\
        {\nonl \ \ \\}
        }
        
        \While{$\nu_k(s,a)<N_k^+(s,a)$ for all $s,a$}{
            Choose action $a_t\sim \pi_k(s_t)$. \\
            $\nu_k(s_t,a_t)\leftarrow \nu_k(s_t,a_t) +1$ \\
            \highlight{
                \label{line: accumulate}$\Gamma \leftarrow \Gamma + \eta_k$   \label{line: accumulate}  \\
               \lIf{$\Gamma > 4S\sqrt{At\log(SAT/\delta)} $}{   \label{line: terminate condition}
                   \textit{terminate and signal restart}   \myComment{Early termination}    \label{line: return terminate}
               }
            }
            Observe the reward $R_t$ with $\E[R_t]= r_t(s_t,a_t)$ \\
            \highlight{Observe $s_{t+1}'\sim p_t(\cdot|s_t,a_t)$. \\
            The next state $s_{t+1}$ is either equal to $s_{t+1}'$, or re-assigned as an arbitrary state} \\
            \ \ \myComment{The next state might be re-assigned}  \label{line: reassignment}    \\
            $t\leftarrow t+1$
        }
        $N_{k+1}(s,a)\leftarrow N_k(s,a) + \nu_k(s,a)$ for all $s,a$. 
    }
\end{algorithm2e}

\begin{algorithm2e}[h]
\caption{Multi-scale \acw (\mucrl)}
\label{alg: multialg-ucrl}
{\nonl \textbf{input}: $n$, $\avgreg_{\ucrl}(\cdot~; \dupp)$, $\dupp$} \\
\textbf{Initialization}: run \pref{proc: alg profile} with base algorithm \acw and inputs $n$ and $\avgreg_{\ucrl}$. \\
%Execute the scheduled instances with the rules described in Section~\ref{subsec: multiscale}. \\

At each time $t$, let the unique active instance be $\inst$, output $\tildeg_t$ (which is the quantity $\tildeJ_{k(t)}$ of $\inst$), follow $\inst$'s decision, and update $\inst$ after receiving feedback from the environment.
Additionally, \highlight{terminate if the $\inst$ signals restart}. 

%At every time $t$: 
%\begin{itemize}
%     \item Output $\tildeg_t$, which is defined as the $\tildeJ_{k(t)}$ of the currently active instance. 
%     \item \highlight{Terminate if the current active instance terminates.} 
%\end{itemize} 
\end{algorithm2e}

\newpage
\begin{algorithm2e}
    \caption{\masterucrl} 
    \label{alg: master ucrl}
    \textbf{input}:  $\avgreg_{\ucrl}(\cdot~; \dupp)$, $\dupp$\\
    \textbf{Initialize}: $t\leftarrow1$ \\
    %\While{$t\leq T$}{   
         \For{$n=0, 1, \ldots$}{ \label{line: restart line2}
              Set $t_n\leftarrow t$
              and initialize an \mucrl (\pref{alg: multialg-ucrl})  for the block $[t_n, t_n+2^n-1]$. \\
              \While{$t< t_n + 2^n$}{  
              Receive $\tildeg_t$ and $\pi_t$ from \mucrl, execute $\pi_t$, and receive reward $R_t$. \\
              Update \mucrl with any feedback from the environment, and set $U_t = \min_{\tau \in [t_n, t]} \tildeg_\tau$.
                    \label{line: tracking}\\
                  Perform \testone and \testtwo (see below). 
                  Increment $t\leftarrow t+1$. \\
                  \lIf{either test returns \textit{fail} \highlight{or \mucrl terminates} }{
                     % $t\leftarrow t+1$ \\
                      restart from \pref{line: restart line2}. 
                  }
                  %$t\leftarrow t+1$
              }
         }
    %}
\testone: \ \ If $t=\inst.e$ for some order-$m$ $\inst$ and
$\frac{1}{2^m}\sum_{\tau=\inst.s}^{\inst.e} R_{\tau} \geq  U_t  + 9\multiavg_{\ucrl}(2^m; \dupp)$, 
return \textit{fail}.  

\testtwo: \ \  If 
$\frac{1}{t-t_n+1}\sum_{\tau=t_n}^{t} \left(\tildeg_{\tau} - R_{\tau}\right) \geq 3\multiavg_{\ucrl}(t-t_n+1; \dupp)$, 
return \textit{fail}. 

\end{algorithm2e}
The following is the main result for the infinite-horizon MDP case. Its proof requires several lemmas in the rest of this section, in addition to those from \pref{app:multi-scale}--\pref{app: epoch regret seperate} whose ideas are mostly aligned with the standard setting. The final analysis is done in \pref{app: omit proof for RL avg} and \pref{app: borl discuss} (see \pref{thm: known Dmax case}, \pref{thm: doubling trick algo for RL}, and the discussions in  \pref{app: borl discuss}). 
Note that to be consistent with prior works in this setting, we adopt the notation $J_t(\pi)$, which is the expected average reward of executing $\pi$ under the MDP for time $t$, and corresponds to the notation $f_t(\pi)$ we use in our general framework.
Similarly, define $J^\star_t  = \max_\pi J_t(\pi)$.

\begin{theorem}
     \label{thm: main theorem for RL}
     Define non-stationarity measures 
     \begin{align*}
          \dev &= \sum_{t=1}^{T-1} \left(\max_{s,a}|r_t(s,a)-r_{t+1}(s,a)| + \max_{s,a}\|p_t(\cdot|s,a)-p_{t+1}(\cdot|s,a)\|_1\right), \\
          L &= 1 + \sum_{t=1}^{T-1} \one\left\{\max_{s,a}|r_t(s,a)-r_{t+1}(s,a)| + \max_{s,a}\|p_t(\cdot|s,a)-p_{t+1}(\cdot|s,a)\|_1 \neq 0\right\}.
     \end{align*}
     %\[\Delta(t) = \max_{s,a}|r_t(s,a)-r_{t+1}(s,a)| + \max_{s,a}\|p_t(\cdot|s,a)-p_{t+1}(\cdot|s,a)\|_1\] and correspondingly $L$ and $\Delta$ as in \pref{def: deviation}.
     There exists an algorithm that takes $D_{\max}$ as input and achieves
     \begin{align*}
          \sum_{t=1}^T \left(J^\star_t - R_t \right) = \otil\left(\min\left\{D_{\max}S\sqrt{ALT}, \ \ D_{\max}S^{\frac{2}{3}}A^{\frac{1}{3}}\dev^{\frac{1}{3}}T^{\frac{2}{3}} + D_{\max}S\sqrt{AT}\right\}\right)
     \end{align*}
     without knowing $L$ or $\dev$. %where $\dev\triangleq \sum_{t=1}^T \max_{s,a}|r_t(s,a)-r_{t+1}(s,a)| + \sum_{t=1}^T \max_{s,a}\|p_t(\cdot|s,a)-p_{t+1}(\cdot|s,a)\|_1$. 
     There is also an algorithm that takes $L$ or $\dev$ as input and achieves 
     \begin{align*}
          \sum_{t=1}^T \left(J^\star_t - R_t \right) = \otil\left(D_{\max}S\sqrt{ALT}\right) \quad \text{or}\quad 
          \sum_{t=1}^T \left(J^\star_t - R_t \right) = \otil\left(D_{\max}S^{\frac{2}{3}}A^{\frac{1}{3}}\dev^{\frac{1}{3}}T^{\frac{2}{3}} + D_{\max}S\sqrt{AT}\right)
     \end{align*}
     respectively, without knowing $D_{\max}$. Finally, there is an algorithm that achieves 
     \begin{align*}
    \sum_{t=1}^T \left(J^\star_t - R_t \right) = \otil\left( D_{\max}(S^2A)^{\nicefrac{1}{4}}T^{\nicefrac{3}{4}}  + \min\left\{ D_{\max}S\sqrt{ALT},\ \  D_{\max}(S^2A)^{\frac{1}{3}}\Delta^{\frac{1}{3}}T^{\frac{2}{3}}\right\}\right)
\end{align*}
without knowing $L, \Delta$, or $D_{\max}$. 
\end{theorem}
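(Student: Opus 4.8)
The plan is to prove the three bounds in turn, in each case combining the infinite-horizon analogues \pref{lem: modified UCRL}, \pref{lemma: ucrl aggregated regret}, and \pref{lemma: block lemma for RL} of the standard-setting results \pref{assum:assump2}, \pref{lemma: multi-scale reg}, and \pref{lemma: bound term1} with the epoch-counting argument behind \pref{thm: regret bound}. For the first claim, I would set $\dupp = D_{\max}$ in \masterucrl (\pref{alg: master ucrl}). First, \pref{lem: modified UCRL} asserts that \acw (\pref{alg: base alg UCRL}) with this input satisfies an \pref{assum:assump2}-style guarantee with cumulative regret of order $\otil(D_{\max}S\sqrt{At})$; establishing this requires checking two features of \pref{alg: base alg UCRL}: the inner \textbf{while} loop always exits with $\spn(\tildeh_k) \le 2D_{\max}$ using only a bounded widening level $\eta$ (because every MDP in the widened confidence set admits an optimal bias vector of span at most $D_{\max}$), and each arbitrary re-assignment of the state (caused by pausing and resuming an \acw instance inside \mucrl) costs only $\otil(D_{\max})$ extra regret by the communicating property, with the number of such events controlled by the number of scheduled instances. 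Feeding this guarantee into the multi-scale scheduler of \mucrl yields \pref{lemma: ucrl aggregated regret}, and adding the two stationarity tests \testone and \testtwo then yields the single-block bound \pref{lemma: block lemma for RL}, in which the $\Gamma$-threshold of \pref{alg: base alg UCRL} is precisely what caps the extra regret from widening so that it is subsumed into the $\otil(D_{\max}S\sqrt{At})$ bound. Since $\dupp = D_{\max}$, no near-stationary stretch can drive $\Gamma$ past its threshold, so the only cause of a restart of \masterucrl is a genuine distribution change; hence the number of epochs is $\otil(L)$ in the $L$-regime and $\otil((S^{-2}A^{-1}\dev^2 T)^{\nicefrac{1}{3}})$ in the $\dev$-regime, exactly as in the proof of \pref{thm: regret bound}. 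Summing the per-block bound over the at most $\log_2 T$ blocks of each epoch and then over epochs by Cauchy--Schwarz gives $\otil(\min\{D_{\max}S\sqrt{ALT},\, D_{\max}S^{\nicefrac{2}{3}}A^{\nicefrac{1}{3}}\dev^{\nicefrac{1}{3}}T^{\nicefrac{2}{3}} + D_{\max}S\sqrt{AT}\})$; this is \pref{thm: known Dmax case}.

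For the second claim, I would run the doubling scheme on $\dupp$: start at $\dupp = 1$, execute \masterucrl, and whenever the number of epochs in the current run exceeds $\overline{N}$ (set to $L$, respectively $1 + 3(S^{-2}A^{-1}\dev^2 T)^{\nicefrac{1}{3}}$), double $\dupp$ and restart the scheme. The key is the two-sided statement \pref{lemma: RL number epoch}: once $\dupp \ge D_{\max}$, the epoch count is provably at most $\overline{N}$, since then the test-triggered restarts are charged to changes as before, while the early-termination restarts --- which need a large cumulative widening $\Gamma$ --- cannot occur over a near-stationary stretch. Consequently the scheme stops doubling the first time $\dupp$ reaches a value $\ge D_{\max}$, so the final value obeys $\dupp \le 2D_{\max}$ and there are only $\otil(\log D_{\max})$ phases. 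Within a phase using value $\dupp$, the enforced span bound $\spn(\tildeh_k) \le 2\dupp$ together with the $\Gamma$-threshold still yield a valid single-block bound of order $\otil(\dupp S\sqrt{At})$ up to termination, and since each phase comprises at most $\overline{N}$ epochs spanning at most $T$ rounds in total, the same summation as in the first paragraph gives per-phase regret $\otil(\dupp S\sqrt{ALT})$, respectively $\otil(\dupp S^{\nicefrac{2}{3}}A^{\nicefrac{1}{3}}\dev^{\nicefrac{1}{3}}T^{\nicefrac{2}{3}} + \dupp S\sqrt{AT})$. Summing over the $\otil(\log D_{\max})$ phases and using $\dupp \le 2D_{\max}$ absorbs the extra logarithm into $\otil(\cdot)$ and proves \pref{thm: doubling trick algo for RL}.

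For the third claim, I would wrap the parametrized algorithm of the previous paragraph inside the Bandit-over-Reinforcement-Learning reduction of \citep{cheung2020reinforcement}: split $[1,T]$ into super-blocks, within each super-block run the base scheme with a guess of the unknown parameter drawn from a logarithmic grid, and use an adversarial bandit over the grid to select the guess per super-block from the observed rewards. The bandit-over-RL overhead is $\otil(T^{\nicefrac{3}{4}})$, while an a posteriori optimally-tuned guess recovers the bound of the previous paragraph up to this overhead, giving $\otil(D_{\max}(S^2A)^{\nicefrac{1}{4}}T^{\nicefrac{3}{4}} + \min\{D_{\max}S\sqrt{ALT},\, D_{\max}(S^2A)^{\nicefrac{1}{3}}\dev^{\nicefrac{1}{3}}T^{\nicefrac{2}{3}}\})$; the routine details are carried out in \pref{app: borl discuss}.

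The main obstacle is \pref{lemma: RL number epoch}, that is, the analysis when $D_{\max}$ is underestimated; everything else is a faithful replay of the standard-setting arguments with $D_{\max}$-dependent constants. Two quantitative facts must be established simultaneously: (a) underestimating $D_{\max}$ cannot cause unbounded damage --- the forced span bound $2\dupp$ and the $\Gamma$-threshold early-termination keep the per-epoch regret at order $\otil(\dupp S\sqrt{At})$ even before $\dupp$ grows large; and (b) this same early-termination is \emph{guaranteed} to fire --- so that epochs accumulate and trigger the doubling --- precisely when a genuinely non-stationary environment forces a large cumulative widening, yet provably never fires once $\dupp \ge D_{\max}$. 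Making both halves precise, in particular relating the cumulative widening $\Gamma$ accrued over an epoch to the local non-stationarity measured by $\dev$, is where the real work lies.
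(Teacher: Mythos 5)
Your handling of the first two claims is essentially the paper's own proof: known $D_{\max}$ is \pref{thm: known Dmax case} (per-epoch bounds assembled from \pref{lem: modified UCRL}, \pref{lemma: ucrl aggregated regret}, \pref{lemma: block lemma for RL}, with the epoch count from \pref{lemma: RL number epoch} and a H\"older summation), and the doubling trick over $\dupp$ is \pref{thm: doubling trick algo for RL}. Two small corrections to your framing of the "main obstacle": (i) the direction that early termination is \emph{guaranteed} to fire when $\dupp<D_{\max}$ and the environment changes is never needed --- the doubling rule caps the epoch count at $\overline{N}$ by fiat, and the per-epoch bound $\otil(\dupp S\sqrt{A|\calE|})$ holds for \emph{any} $\dupp\ge 1$ because the enforced span cap and the $\Gamma$-threshold give \pref{lem: modified UCRL} regardless of whether $\dupp\ge D_{\max}$; only the one-sided statement of \pref{lemma: RL number epoch} must be proved (and "never fires once $\dupp\ge D_{\max}$" is only claimed on near-stationary stretches). (ii) The while loop exits not because every MDP in the widened set has small bias span, but because a sufficiently widened set contains \emph{some} MDP of diameter at most $D_{\max}$ (\pref{lemma: bound diamete}), and on a near-stationary stretch a widening of order $S\sqrt{A/(t-t_0+1)}$ already suffices, which is exactly what keeps $\Gamma$ below its threshold (\pref{lemma: infrequent RL restart}).

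The third claim is where your plan has a genuine gap. You propose to wrap the algorithm of your second paragraph (which takes $L$ or $\dev$ as input) in BoRL, gridding over guesses of that non-stationarity parameter. But the adversarial bandit only competes with a single \emph{fixed} arm across all super-blocks, so the benchmark arm uses one fixed guess $\hatL$ (or $\hatdev$) in every super-block of length $B$, while the scheme restarts each super-block; and the only guarantee your second paragraph provides is in terms of the guess, not the local variation $L_b$ or $\dev_b$. A single guess valid for all super-blocks must be at the global scale, and then the benchmark arm's total regret is of order $D_{\max}S\sqrt{AL}\,T/\sqrt{B}$ (resp.\ $D_{\max}(S^2A)^{\nicefrac{1}{3}}\dev^{\nicefrac{1}{3}}T B^{-\nicefrac{1}{3}}$); no choice of $B$, traded off against the $\otil(\sqrt{BT})$ bandit overhead, recovers the claimed bound --- one gets $L^{\nicefrac{1}{4}}T^{\nicefrac{3}{4}}$- or $\dev^{\nicefrac{1}{5}}T^{\nicefrac{4}{5}}$-type rates instead of $D_{\max}(S^2A)^{\nicefrac{1}{4}}T^{\nicefrac{3}{4}}+\min\{D_{\max}S\sqrt{ALT},\,D_{\max}(S^2A)^{\nicefrac{1}{3}}\dev^{\nicefrac{1}{3}}T^{\nicefrac{2}{3}}\}$. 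The paper's construction (\pref{app: borl discuss}) grids instead over the \emph{static} parameter $D_{\max}$, with the known-$D_{\max}$ algorithm of the first claim as the base: the arm with $\dupp\in[D_{\max},2D_{\max}]$ is a single fixed arm that is near-optimal in every super-block simultaneously, and since that base is itself adaptive to the local $L_b,\dev_b$ (via \pref{thm: known Dmax case}), its per-super-block regrets sum by H\"older to the global $\min\{\cdot\}$ bound plus the $\frac{T}{B}\cdot D_{\max}S\sqrt{AB}$ and $\sqrt{BT}$ overheads, which with $B=S\sqrt{AT}$ give exactly the stated $T^{\nicefrac{3}{4}}$ term. So the BoRL wrapper must be applied over $D_{\max}$, not over $L$ or $\dev$; as written, your reduction for the third bound would fail.
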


\subsection{Auxiliary Lemmas related to Extended Value Iteration and Bellman Equation}
In this subsection, we provide auxiliary lemmas related to EVI and Bellman Equation. The results are extracted from \citep{jaksch2010near, cheung2020reinforcement, ortner2020variational}. We restate them here for completeness.

\begin{lemma}[Properties 1 and 2 in \citep{cheung2020reinforcement}]
     Let $\tildeJ$, $\tildeh$, and $\tildepi$ be the set of solution obtained from EVI with confidence set $\calR$ and $\calP$ for reward and transition respectively, and error parameter $\epsilon$. Then
     \begin{align}
        \tildeJ + \tildeh(s) 
         &\geq \max_{a} \left(\max_{\tilder\in \calR(s,a)} \tilder(s,a) +  \max_{\tildep\in \calP(s,a)} \sum_{s'} \tildep(s'|s,a) \tildeh(s') \right),  \label{eq: optimistic bellman}\\
        \tildeJ + \tildeh(s) &\leq \max_{\tilder\in \calR(s,\tildepi(s))} \tilder(s,\tildepi(s)) +  \max_{\tildep\in \calP(s,\tildepi(s))} \sum_{s'} \tildep(s'|s,\tildepi(s)) \tildeh(s') + \epsilon. \label{eq: optimistic bellman 2}
    \end{align}
\end{lemma}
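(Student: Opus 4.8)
The plan is to unwind the definition of the Extended Value Iteration (EVI) subroutine and then read off the two inequalities directly from its stopping rule. Recall that EVI performs undiscounted value iteration with respect to the \emph{optimistic one-step operator}
$$(Lu)(s) = \max_a\left( \max_{\tilder\in\calR(s,a)} \tilder(s,a) + \max_{\tildep\in\calP(s,a)} \sum_{s'} \tildep(s'|s,a)\, u(s') \right),$$
initialized at $u_0 = 0$ and iterated as $u_{i+1} = L u_i$ until the span of successive differences satisfies $\spn(u_{i+1}-u_i) = \max_s\big(u_{i+1}(s)-u_i(s)\big) - \min_s\big(u_{i+1}(s)-u_i(s)\big) \leq \epsilon$. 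Writing $i$ for the stopping index, the EVI output is $\tildeh = u_i$, $\tildeJ = \max_s\big(u_{i+1}(s)-u_i(s)\big)$, and $\tildepi(s)$ the action attaining the outer maximum in $(Lu_i)(s)$. First I would state these definitions explicitly, since the entire argument hinges on the identity $L\tildeh = u_{i+1}$.

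For the first inequality, I would observe that the right-hand side of \pref{eq: optimistic bellman} is exactly $(L\tildeh)(s) = (Lu_i)(s) = u_{i+1}(s)$. Writing $u_{i+1}(s) = u_i(s) + (u_{i+1}(s)-u_i(s)) = \tildeh(s) + (u_{i+1}(s)-u_i(s))$ and bounding the per-state increment by its maximum $u_{i+1}(s)-u_i(s)\leq \tildeJ$ yields $(L\tildeh)(s)\leq \tildeh(s)+\tildeJ$, which is precisely \pref{eq: optimistic bellman} (with no slack, as stated).

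For the second inequality, I would use the stopping condition to lower bound the same increment: since $\spn(u_{i+1}-u_i)\le\epsilon$, every coordinate satisfies $u_{i+1}(s)-u_i(s)\ge \max_s\big(u_{i+1}(s)-u_i(s)\big)-\epsilon = \tildeJ-\epsilon$, so $(L\tildeh)(s)=u_{i+1}(s)\ge \tildeh(s)+\tildeJ-\epsilon$. Because $\tildepi(s)$ is chosen as the maximizing action, $(L\tildeh)(s)$ equals the right-hand side of \pref{eq: optimistic bellman 2} stripped of the outer $\max_a$, and rearranging gives \pref{eq: optimistic bellman 2}.

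The argument is essentially a bookkeeping verification, so I do not anticipate a genuine mathematical obstacle. The only point requiring care is to fix conventions consistently with the EVI implementation used in \acw — in particular the exact form of the span stopping rule and the choice $\tildeJ = \max_s\big(u_{i+1}(s)-u_i(s)\big)$ rather than the midpoint convention, since this is what places the slack $\epsilon$ in the upper inequality while keeping the lower inequality exact. I would also note for completeness that the existence of a finite stopping index (i.e. convergence of the span) is inherited from the standard EVI analysis of \citep{jaksch2010near} under the communicating structure of the optimistic MDP, and is not reproved here.
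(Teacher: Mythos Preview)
Your proposal is correct and follows the standard derivation of these EVI properties. The paper itself does not prove this lemma at all --- it simply quotes the two inequalities as ``Properties 1 and 2'' from \citep{cheung2020reinforcement} and moves on --- so there is no in-paper proof to compare against. Your observation that the convention $\tildeJ = \max_s\big(u_{i+1}(s)-u_i(s)\big)$ (rather than the midpoint used in \citep{jaksch2010near}) is what makes \pref{eq: optimistic bellman} hold without slack while placing the full $\epsilon$ in \pref{eq: optimistic bellman 2} is exactly the right point to flag.
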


\begin{lemma}[Lemma 2 of \citep{cheung2020reinforcement}]
     \label{lemma: bound diamete}
    Let $\tildeJ$, $\tildeh$, and $\tildepi$ be the set of solution obtained from EVI with confidence set $\calR$ and $\calP$ for reward and transition respectively. If $\calP$ and $\calR$ contain an MDP with diameter upper bounded by $D$, then $\spn(\tildeh)\leq 2D$. 
\end{lemma}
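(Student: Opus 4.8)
The plan is to route everything through the optimistic Bellman inequality \pref{eq: optimistic bellman} already established for the EVI output, combined with the existence of a low-diameter MDP inside the confidence region, to build a supermartingale whose optional-stopping identity bounds the span directly. Let $M_0=(r_0,p_0)\in(\calR,\calP)$ be the MDP promised by the hypothesis, whose diameter is at most $D$, and for a target state $g$ let $\pi_0^g$ be a stationary policy of $M_0$ minimizing the expected first-passage time to $g$; by the diameter assumption its expected hitting time of $g$ from any starting state is at most $D$. The whole point is that $r_0$, $p_0$, and the action $\pi_0^g(s)$ are \emph{feasible} choices on the right-hand side of \pref{eq: optimistic bellman}, even if suboptimal for EVI.

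Concretely, instantiating \pref{eq: optimistic bellman} at the action $a=\pi_0^g(s)$ with $\tilder=r_0$ and $\tildep=p_0$, and using that the right-hand side maximizes over actions and over members of the confidence sets, gives for every $s$
\[
    \sum_{s'} p_0(s'\mid s,\pi_0^g(s))\,\tildeh(s') \;\le\; \tildeh(s) + \tildeJ - r_0(s,\pi_0^g(s)) \;\le\; \tildeh(s) + \tildeJ,
\]
where the last step uses $r_0\ge 0$. Hence, for the time-homogeneous Markov chain $(Z_n)$ obtained by running $\pi_0^g$ in $M_0$, the process $M_n := \tildeh(Z_n) - n\tildeJ$ satisfies $\E[M_{n+1}\mid Z_n]\le M_n$, i.e. it is a supermartingale. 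Letting $\tau$ be the first hitting time of $g$ and applying optional stopping yields
\[
    \tildeh(Z_0) \;\ge\; \E\!\left[\tildeh(Z_\tau) - \tau\tildeJ\right] \;=\; \tildeh(g) - \tildeJ\,\E[\tau] \;\ge\; \tildeh(g) - D,
\]
where the final inequality uses $\tildeJ\le 1$ (rewards lie in $[0,1]$) and $\E[\tau]\le D$. Choosing the start state $Z_0=\argmin_s \tildeh(s)$ and the target $g=\argmax_s \tildeh(s)$ gives $\spn(\tildeh)=\max_s\tildeh(s)-\min_s\tildeh(s)\le D\le 2D$, as desired.

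The step I expect to require the most care is the application of the optional stopping theorem: one must confirm that $\tau$ is integrable (this follows from the diameter bound, $\E[\tau]\le D<\infty$) and that $\tildeh$ is bounded (it is a finite vector returned by EVI), so that the stopped supermartingale inequality $\E[M_\tau]\le M_0$ is valid. A second point to state explicitly is that $\pi_0^g$, $r_0$, and $p_0$ genuinely lie in $\calR$ and $\calP$, which is exactly the hypothesis $M_0\in(\calR,\calP)$ and is what legitimizes plugging them into \pref{eq: optimistic bellman}. Finally, the argument above actually delivers the sharper bound $\spn(\tildeh)\le D$; the stated factor of two is a deliberately generous buffer, which also comfortably absorbs the additive $\epsilon$-slack appearing in \pref{eq: optimistic bellman 2} if one instead prefers to run the telescoping argument along the greedy policy $\tildepi$.
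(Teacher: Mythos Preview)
The paper does not prove this lemma; it is stated with a citation to \citet{cheung2020reinforcement} and used as a black box. So there is no in-paper proof to compare against.

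Your argument is correct and is essentially the classical proof (originating with \citet{jaksch2010near}): use the optimistic Bellman inequality \pref{eq: optimistic bellman} to show that $\tildeh(Z_n)-n\tildeJ$ is a supermartingale under any feasible policy in any feasible MDP, then apply optional stopping with the shortest-path policy of the low-diameter MDP. Your verification of the optional-stopping hypotheses (integrable stopping time from the diameter bound, bounded increments) is adequate. Two small remarks: first, strictly speaking \pref{eq: optimistic bellman 2} only yields $\tildeJ\le 1+\epsilon$ rather than $\tildeJ\le 1$, but as you note the factor $2$ in the statement absorbs this (and in the algorithm $\epsilon_k=1/\sqrt{t}\le 1$); second, your observation that the argument actually gives $\spn(\tildeh)\le D$ (up to the $\epsilon$-slack) is correct---the constant $2$ in the cited lemma is simply conservative.
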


\begin{lemma}[Eq. (16) of \citep{cheung2020reinforcement}]
\label{lemma: dual feasible}
    Let $r, p$ define the reward function and the transition kernel for a communicating MDP, respectively. Let $\tildeJ\in\mathbb{R}, \tildeh\in\mathbb{R}^S$ be bounded and satisfy
    \begin{align*}
         \tildeJ + \tildeh(s) \geq r(s,a) + \sum_{s'} p(s'|s,a)\tildeh(s')
    \end{align*}
    for all $s$ and $a$. Then $\tildeJ\geq J^\star$, where $J^\star$ is the average reward of the optimal policy under the MDP. 
\end{lemma}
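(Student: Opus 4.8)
The plan is to prove the bound by a telescoping-and-averaging argument along a trajectory, using the boundedness of $\tildeh$ to make the boundary terms vanish. I would not rely on the maximizing policy being the one attaining $J^\star$; instead I would show that $\tildeJ$ dominates the gain $J_\pi$ of \emph{every} policy $\pi$ and then take the supremum, which equals $J^\star$ since the MDP is communicating (so that $J^\star$ is well-defined and independent of the starting state).

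First I would fix an arbitrary, possibly history-dependent, policy $\pi$ and let $s_1, a_1, s_2, a_2, \ldots$ be a trajectory generated by $\pi$ under the MDP $(r,p)$. Applying the hypothesis at the realized pair $(s_t, a_t)$ and rearranging gives, pointwise,
\[
    r(s_t, a_t) \;\leq\; \tildeJ + \tildeh(s_t) - \sum_{s'} p(s'|s_t, a_t)\,\tildeh(s').
\]
Taking expectation conditioned on the history $\mathcal{F}_t$ up to and including $a_t$ turns the last sum into $\E[\tildeh(s_{t+1}) \mid \mathcal{F}_t]$, and then taking full expectation yields $\E[r(s_t,a_t)] \leq \tildeJ + \E[\tildeh(s_t)] - \E[\tildeh(s_{t+1})]$.

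Next I would sum this inequality over $t = 1, \ldots, N$; the right-hand side telescopes, leaving
\[
    \E\Big[\textstyle\sum_{t=1}^N r(s_t, a_t)\Big] \;\leq\; N\tildeJ + \E[\tildeh(s_1)] - \E[\tildeh(s_{N+1})] \;\leq\; N\tildeJ + \spn(\tildeh),
\]
where the second inequality uses that $\E[\tildeh(s_1)] - \E[\tildeh(s_{N+1})] \leq \spn(\tildeh)$, which is finite precisely because $\tildeh$ is bounded (here guaranteed by \pref{lemma: bound diamete}). Dividing by $N$ and letting $N \to \infty$, the term $\spn(\tildeh)/N$ vanishes, so $\limsup_{N} \frac1N \E[\sum_{t=1}^N r(s_t,a_t)] \leq \tildeJ$, i.e.\ $J_\pi \leq \tildeJ$. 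Taking the supremum over $\pi$ gives $J^\star \leq \tildeJ$.

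The argument is elementary, and the only real subtlety is avoiding any a priori assumption that the Cesàro average $\frac1N \sum r$ converges: I would sidestep this by working entirely with the finite-$N$ inequality and only passing to $\limsup$ at the very end, so that no convergence of the gain needs to be established in advance. The communicating hypothesis is then used solely to identify $\sup_\pi J_\pi$ with the common optimal gain $J^\star$; the boundedness of $\tildeh$ is the single quantitative ingredient, and it is exactly what controls the telescoped boundary term.
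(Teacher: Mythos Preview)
The paper does not supply its own proof of this lemma; it is stated as a citation from \citep{cheung2020reinforcement} and used as a black box. Your telescoping-and-averaging argument is a correct and standard proof of the statement. One small remark: you invoke \pref{lemma: bound diamete} to justify that $\spn(\tildeh)$ is finite, but boundedness of $\tildeh$ is already an explicit hypothesis of the lemma (and since the state space is finite here, any $\tildeh\in\mathbb{R}^S$ is automatically bounded), so that reference is unnecessary.
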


\subsection{Guarantees of the \acw Algorithm (when running alone with an input $\dupp$)}

\begin{definition} \label{def: RL measure}
Define $ \dev^r(t) \triangleq \max_{s,a} |r_t(s,a) - r_{t+1}(s,a)|$, $\dev^p(t) \triangleq \max_{s,a} \| p_t(\cdot|s,a) - p_{t+1}(\cdot|s,a)\|_1$,  $\dev^J(t) \triangleq \max_{\pi} |J_t(\pi)-J_{t+1}(\pi)|$. Similar to \pref{def: deviation}, define $\Delta_{\calI}^{\Box}=\sum_{\tau=s}^{e-1}\dev^{\Box}(\tau)$ for interval $\calI=[s,e]$, where $\Box= r, p$, or $J$. Finally, we define $\dev_{\calI; \dupp} \triangleq  \dev_{\calI}^r + 2\dupp\dev_{\calI}^p + \dev_{\calI}^J$.
\end{definition}

\begin{lemma}[Theorem 1 of \citep{ortner2020variational}]
     \label{lemma: ortner lemma}
     $\dev^J(t)\leq \dev^r(t) + D_{\max} \dev^p(t)$. 
\end{lemma}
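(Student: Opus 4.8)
\textbf{Proof plan for \pref{lemma: ortner lemma}.}

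The plan is to prove the pointwise bound $\dev^J(t)\le \dev^r(t) + D_{\max}\dev^p(t)$ by comparing, for an arbitrary policy $\pi$, its average reward under the MDP at time $t$ with its average reward under the MDP at time $t+1$, using the Bellman (Poisson) equation for each. Fix $t$ and $\pi$, and let $(J_t(\pi), h_t)$ and $(J_{t+1}(\pi), h_{t+1})$ be the gain and a bias function satisfying the average-reward Bellman equation for $\pi$ under the reward/transition pair $(r_t,p_t)$ and $(r_{t+1},p_{t+1})$ respectively; since the MDPs are communicating we may take $h_t$ with $\spn(h_t)\le D_{\max}$ (this is the standard bound on the span of the bias function of any policy in a communicating MDP with diameter at most $D_{\max}$, and is exactly the kind of fact encapsulated by \pref{lemma: bound diamete}). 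The key identity is the well-known ``performance difference'' / perturbation formula: subtracting the two Bellman equations and taking the stationary expectation under $\pi$ for the MDP at time $t+1$ kills the $h_{t+1}$ terms and expresses $J_{t+1}(\pi)-J_t(\pi)$ in terms of the reward difference and the transition difference weighted against $h_t$.

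Concretely, I would write, for the stationary distribution $\mu$ of $\pi$ under $p_{t+1}$,
\begin{align*}
J_{t+1}(\pi) - J_t(\pi) = \sum_s \mu(s)\Big(r_{t+1}(s,\pi(s)) - r_t(s,\pi(s))\Big) + \sum_s \mu(s)\sum_{s'} \big(p_{t+1}(s'|s,\pi(s)) - p_t(s'|s,\pi(s))\big)\, h_t(s'),
\end{align*}
which follows by substituting $J_t(\pi) + h_t(s) = r_t(s,\pi(s)) + \sum_{s'} p_t(s'|s,\pi(s)) h_t(s')$, summing against $\mu$, and using $\sum_s \mu(s) h_t(s) = \sum_s \mu(s)\sum_{s'} p_{t+1}(s'|s,\pi(s)) h_t(s')$. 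The first sum is bounded in absolute value by $\max_{s,a}|r_{t+1}(s,a)-r_t(s,a)| = \dev^r(t)$ because $\mu$ is a probability distribution. For the second sum, since $h_t$ is only determined up to an additive constant I replace $h_t$ by $h_t - c\mathbf{1}$ for a suitable constant $c$ (e.g. $c = \min_s h_t(s)$, or the midpoint of its range) so that $\|h_t - c\mathbf{1}\|_\infty \le \tfrac12\spn(h_t) \le \tfrac12 D_{\max}$; the constant drops out because $\sum_{s'}(p_{t+1}-p_t)(s'|s,a) = 0$. Then each inner term is at most $\|p_{t+1}(\cdot|s,a) - p_t(\cdot|s,a)\|_1 \cdot \|h_t - c\mathbf{1}\|_\infty \le \tfrac12 D_{\max}\,\dev^p(t)$ by Hölder; averaging against $\mu$ preserves this. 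Combining, $|J_{t+1}(\pi)-J_t(\pi)| \le \dev^r(t) + \tfrac12 D_{\max}\dev^p(t) \le \dev^r(t) + D_{\max}\dev^p(t)$, and taking the max over $\pi$ gives the claim (in fact with a factor $\tfrac12$ to spare, which is harmless).

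The main obstacle — really the only subtlety — is making sure the bias function $h_t$ we use genuinely has span at most $D_{\max}$ and that the Poisson equation holds for the (possibly non-unit-chain) Markov chain induced by $\pi$; for a fixed policy in a communicating MDP the induced chain may not itself be communicating, but the gain $J_t(\pi)$ is still well-defined as a (state-dependent, but for the regret comparison we only need the value achieved from the relevant states) average reward, and a bias function with the desired span bound exists — this is precisely the content invoked when \citep{cheung2020reinforcement} bound $\spn(\tildeh)$ via diameters, and it is also what \citep{ortner2020variational} establish. I would cite \citep{jaksch2010near, ortner2020variational} for this span bound rather than reproving it. Everything else is the one-line perturbation identity plus Hölder's inequality.
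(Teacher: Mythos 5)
The paper does not actually prove this lemma---it is imported verbatim as Theorem~1 of \citep{ortner2020variational}---so your attempt is a from-scratch reconstruction, and it contains a genuine gap at exactly the point you flag as ``the only subtlety.'' You invoke, for an \emph{arbitrary} policy $\pi$, a bias function $h_t$ solving the Poisson equation with $\spn(h_t)\le D_{\max}$. That is false in general: the diameter bounds the span of the bias coming from the average-reward \emph{optimality} equation (this is what \pref{lemma: bound diamete} and the Jaksch et al.\ argument give for the EVI/optimal bias), not the bias of a fixed, possibly suboptimal stationary policy, which need not even induce a communicating chain and whose relative values can have arbitrarily large span. Worse, with $\dev^J(t)=\max_\pi|J_t(\pi)-J_{t+1}(\pi)|$ taken literally over all deterministic policies, the inequality itself fails: take two states, an action $a$ that deterministically switches states (so both MDPs have diameter $1$), an action $b$ that stays put, rewards $0$ at $s_1$ and $1$ at $s_2$, and perturb only $p(\cdot|s_1,b)$ from ``stay'' to ``move to $s_2$ with probability $\epsilon$.'' The stay-everywhere policy has gain $0$ from $s_1$ before the change and gain $1$ after, while $\dev^r(t)=0$ and $D_{\max}\dev^p(t)=2\epsilon$. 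So no choice of bias can rescue the argument for all $\pi$; the statement (and the theorem of \citep{ortner2020variational} it cites) is really about the change of the \emph{optimal} gain, and indeed the paper only ever uses $\dev^J$ through differences of $J^\star_\tau$ (e.g.\ in \pref{lemma: infrequent RL restart} and \pref{lemma: RL number epoch}).

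For that correct scope your machinery is the right one and needs only to be re-aimed: let $(J^\star_t,h^\star_t)$ solve the optimality equation of the MDP at time $t$, so $\spn(h^\star_t)\le D_{\max}$, let $\pi^\star_t$ be its greedy policy, and evaluate $\pi^\star_t$ in the MDP at time $t+1$. Your perturbation identity (summing the Poisson/optimality equation against a stationary distribution of $\pi^\star_t$ under $p_{t+1}$, handling a possibly multichain structure class by class or from the relevant recurrent class) together with H\"older and the centering trick gives $J^\star_{t+1}\ge J_{t+1}(\pi^\star_t)\ge J^\star_t-\dev^r(t)-D_{\max}\dev^p(t)$, and the symmetric argument with the roles of $t$ and $t+1$ exchanged gives the other direction, yielding $|J^\star_t-J^\star_{t+1}|\le \dev^r(t)+D_{\max}\dev^p(t)$, which is the bound the paper actually relies on. As written, though, your proof proves a statement that is false for the quantity $\dev^J$ as defined in \pref{def: RL measure}, and the missing ingredient---a span bound for arbitrary-policy biases---cannot be supplied by citing \citep{jaksch2010near} or \pref{lemma: bound diamete}.
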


\begin{lemma}[\textit{c.f.} \pref{assum:assump2}]
     \label{lem: modified UCRL}
     When run alone, \pref{alg: base alg UCRL} with input $\dupp$ guarantees for all $t$ before it terminates:
     \begin{align*}
          \tildeJ_{k(t)} &\geq \min_{\tau\in [1,t]} J^\star_\tau - \dev_{[1,t]; \dupp} \\
          \frac{1}{t}\sum_{\tau=1}^{t}  \left( \tildeJ_{k(\tau)} - R_\tau \right) &\leq \avgreg_{\ucrl}\left(t; \dupp\right) + \frac{2}{t}\dupp\disc_{[1,t]} + \dev_{[1,t]; \dupp}, 
     \end{align*}
     where $\avgreg_{\ucrl}(t; \dupp) = \widetilde{\Theta}\left(\min\Big\{\dupp S\sqrt{\frac{A}{t}} + \frac{\dupp SA}{t}, \ \  \dupp\Big\}\right)$,  $\disc_{\calI}\triangleq \sum_{t\in\calI}\one[s_t \nsim p_{t-1}(\cdot|s_{t-1},a_{t-1}) ]$ is the number of state re-assignments within $\calI$ (\pref{line: reassignment} of \pref{alg: base alg UCRL}), and $k(t)$ is the index of the episode time $t$ belongs to. 
\end{lemma}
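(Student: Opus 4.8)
The plan is to adapt the standard \ucrl regret decomposition (following \citep{jaksch2010near, cheung2020reinforcement}) while carefully tracking the three new features of \pref{alg: base alg UCRL}: the drift of the MDP over time, the adaptive choice of the widening radius $\eta_k$, and the occasional re-assignment of the next state on \pref{line: reassignment}. I would work throughout on a ``good event'' of probability at least $1-\delta/T$, obtained by a union bound over all episodes and state-action pairs (the $\log(SAT/\delta)$ inside $\conf_k$ supplying the right confidence level), on which (i) $|\hatr_k(s,a)-\overline r_k(s,a)|\le\conf_k(s,a)$ and $\|\hatp_k(\cdot|s,a)-\overline p_k(\cdot|s,a)\|_1\le\sqrt S\,\conf_k(s,a)$, where $\overline r_k,\overline p_k$ denote the visit-time averages of the true reward/transition over the visits to $(s,a)$ before episode $k$, and (ii) the Azuma bounds $\sum_{\tau\le t}(R_\tau-r_\tau(s_\tau,a_\tau))=\otil(\sqrt t)$ and $\sum_{\tau\le t}(\sum_{s'}p_\tau(s'|s_\tau,a_\tau)\tildeh_{k(\tau)}(s')-\tildeh_{k(\tau)}(s_{\tau+1}'))=\otil(\dupp\sqrt t)$ hold. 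Since the true MDP drifts by at most $\dev^p_{[1,t]}$ (resp.\ $\dev^r_{[1,t]}$) cumulatively (\pref{def: RL measure}), averages of past values lie within $\dev^p_{[1,t]}$ (resp.\ $\dev^r_{[1,t]}$) of the current ones, so for every $t\ge t_k$ we get $\|\hatp_k(\cdot|s,a)-p_t(\cdot|s,a)\|_1\le\sqrt S\,\conf_k(s,a)+\dev^p_{[1,t]}$ and $|\hatr_k(s,a)-r_t(s,a)|\le\conf_k(s,a)+\dev^r_{[1,t]}$.

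For the first inequality I would first argue the \textbf{while}-loop on \pref{line: adaptive start}--\pref{line: active end} always halts: once $\eta\ge\dev^p_{[1,t]}$ the confidence set contains $p_t$, hence an MDP of diameter at most $D_{\max}\le\dupp$, so by \pref{lemma: bound diamete} the returned bias has span at most $2\dupp$ and the loop breaks; in particular $\spn(\tildeh_k)\le 2\dupp$ always. Next, \emph{regardless of how small $\eta_k$ turns out to be}, the segment from $p_t(\cdot|s,a)$ to $\hatp_k(\cdot|s,a)$ contains a distribution $q\in\calP_k^{\eta_k}(s,a)$ with $\|q-p_t(\cdot|s,a)\|_1\le\dev^p_{[1,t]}$, and $\max_{\tilder\in\calR_k(s,a)}\tilder(s,a)\ge r_t(s,a)-\dev^r_{[1,t]}$ (using $r_t\le 1$ for the clipping). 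Plugging these two facts into the optimistic Bellman inequality \pref{eq: optimistic bellman}, together with $\langle q-p_t,\tildeh_k\rangle\ge-\tfrac12\spn(\tildeh_k)\|q-p_t\|_1\ge-\dupp\dev^p_{[1,t]}$, shows that $(\tildeJ_{k(t)}+\dev^r_{[1,t]}+\dupp\dev^p_{[1,t]},\,\tildeh_{k(t)})$ satisfies the Bellman inequality for the communicating MDP $(r_t,p_t)$; then \pref{lemma: dual feasible} gives $\tildeJ_{k(t)}\ge J^\star_t-\dev^r_{[1,t]}-\dupp\dev^p_{[1,t]}\ge\min_{\tau\le t}J^\star_\tau-\dev_{[1,t];\dupp}$, as claimed (the $\dev^J$ term in $\dev_{[1,t];\dupp}$ is slack here).

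For the regret bound I would, on the steps $\tau$ with $a_\tau=\pi_{k(\tau)}(s_\tau)$, start from the EVI upper bound \pref{eq: optimistic bellman 2}, replace $\hatr_k,\hatp_k$ by $r_\tau,p_\tau$ via the concentration-plus-drift bounds above, write $\sum_{s'}p_\tau(s'|s_\tau,a_\tau)\tildeh_k(s')=\tildeh_k(s_{\tau+1}')+(\text{martingale difference})$, split off the re-assigned steps $s_{\tau+1}\ne s_{\tau+1}'$ (each moving the bias by at most $\spn(\tildeh_k)\le2\dupp$), and telescope $\tildeh_k(s_{\tau+1})-\tildeh_k(s_\tau)$ within each episode. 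Summing over $\tau\le t$ produces: $\otil(\sqrt{SAt})$ from the reward-confidence sum and $\otil(\dupp S\sqrt{At})$ from the transition-confidence sum (Jaksch's pigeonhole bound $\sum_\tau \tfrac{1}{\sqrt{N_{k(\tau)}^+(s_\tau,a_\tau)}}=\otil(\sqrt{SAt})$, multiplied by $\spn(\tildeh_k)\le2\dupp$); $\otil(\dupp SA)$ from the bias telescoping (the number of episodes is $\otil(SA\log t)$ by the doubling stopping rule, each boundary costing at most $\spn(\tildeh_k)\le2\dupp$); $2\dupp\Gamma=\otil(\dupp S\sqrt{At})$ from the widening-times-span sum, which is finite precisely because the algorithm would otherwise have terminated on \pref{line: terminate condition}; $\otil(\dupp\sqrt t)$ from the two Azuma sums and $\otil(\sqrt t)$ from $\sum_\tau\epsilon_{k(\tau)}$; the re-assignment contribution $2\dupp\disc_{[1,t]}$; and a per-step drift $\dev^r_{[1,t]}+\dupp\dev^p_{[1,t]}$. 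Dividing by $t$, absorbing the $\otil(\sqrt{SAt}+\dupp\sqrt t)$ terms into the $\otil(\dupp S\sqrt{At})$ main term, and using $\dev^r_{[1,t]}+\dupp\dev^p_{[1,t]}\le\dev_{[1,t];\dupp}$ yields exactly $\avgreg_{\ucrl}(t;\dupp)+\tfrac2t\dupp\disc_{[1,t]}+\dev_{[1,t];\dupp}$; for $t=O(S^2A)$ the alternative bound $\avgreg_{\ucrl}(t;\dupp)=\otil(\dupp)$ instead follows from the trivial inequality $\tildeJ_{k(\tau)}-R_\tau\in[-1,1]\subseteq[-\dupp,\dupp]$.

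The main obstacle is the interface between the adaptive widening loop and the rest of the analysis: we must show the loop halts with $\spn(\tildeh_k)\le2\dupp$ without knowing $\dev$; we must ensure optimism degrades by only $O(\dupp\dev^p_{[1,t]})$ even when $\eta_k$ happens to be far smaller than $\dev^p_{[1,t]}$ (handled by the ``nearest point in the confidence ball'' argument above, which never references $\eta_k$); and we must keep the cumulative widening $\Gamma$ from blowing up so that it folds into the $\otil(\dupp S\sqrt{At})$ term, which is exactly what the early-termination rule on \pref{line: terminate condition} secures --- and why the lemma is only stated ``before it terminates.'' Everything else (the pigeonhole counting, the episode-count bound, the martingale concentrations) is essentially identical to \citep{jaksch2010near, cheung2020reinforcement}; the only genuinely new bookkeeping is the $\disc_{[1,t]}$ term, which drops out as soon as the telescoping is arranged to isolate the re-assigned steps.
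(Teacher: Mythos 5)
Your proposal is correct and follows essentially the same route as the paper's proof: concentrate $\hatr_k,\hatp_k$ around the visit-averaged true parameters, add the drift terms, feed the two EVI/Bellman inequalities into \pref{lemma: dual feasible} and the telescoping argument with $\spn(\tildeh_k)\le 2\dupp$, bound the widening sum $2\dupp\Gamma$ via the early-termination threshold, and account for re-assigned steps separately via $\disc_{[1,t]}$. The only cosmetic differences are that the paper certifies optimism by noting $\barp_k\in\calP_k^{\eta_k}$ directly (comparing to every $J^\star_\tau$ rather than to $J^\star_t$), and your justification of the while-loop halting implicitly uses $\dupp\ge D_{\max}$, which is neither assumed in the lemma nor needed for the two claimed inequalities (the span bound you actually use comes from the break condition itself).
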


\begin{proof}\ \ \ 
    Suppose that at time $t$ the algorithm has not terminated. For any episode $k$ that starts before $t$, we have 
    \begin{align*}
        \barr_{k}(s,a) = \frac{\sum_{\tau=1}^{t_k-1} r_\tau(s,a)\one[(s_\tau,a_\tau)=(s,a)]}{N_k^+(s,a)}, \quad 
        \barp_k(s'|s,a) = \frac{\sum_{\tau=1}^{t_k-1} p_\tau(s'|s,a)\one[(s_\tau,a_\tau)=(s,a)]}{N_k^+(s,a)}.
    \end{align*}
    By Azuma's inequality, $\barr_k(s,a)\in\calR_k(s,a)$ and $\barp_k(\cdot|s,a)\in\calP_k^{\eta_k}(s,a)$ with high probability for all $k, s,a$.

    %Recall that EVI finds an approximately optimal policy under the extended MDP. As already shown in () and (), the set of solution $\tildeJ_k, \tildeh_k$ satisfy for all $s$: 
    %\begin{align}
    %    \tildeJ_k + \tildeh_k(s) 
    %     &\geq \max_{a} \left(\max_{\tilder\in \calR_k(s,a)} \tilder(s,a) +  \max_{\tildep\in \calP_k(s,a)} \sum_{s'} \tildep(s'|s,a) \tildeh_k(s') \right)  \label{eq: optimistic bellman}\\
    %    \tildeJ_k + \tildeh_k(s) &\leq \max_{\tilder\in \calR_k(s,\pi_k(s))} \tilder(s,\pi_k(s)) +  \max_{\tildep\in \calP_k(s,\pi_k(s))} \sum_{s'} \tildep(s'|s,\pi_k(s)) \tildeh_k(s') + \epsilon_k \label{eq: optimistic bellman 2}
    %\end{align}
    %Also, as we assume that the EVI for episode $k$ does not trigger \terminate, we have $\spn(\tildeh)\leq \dupp$. 
    To show the first part of the lemma, we lower bound the right-hand side of \pref{eq: optimistic bellman}:
    \begin{align}
        \tildeJ_k + \tildeh_k(s) 
         &\geq \barr(s,a) +  \sum_{s'} \barp(s'|s,a) \tildeh_k(s')   \nonumber \\
         &\geq r_\tau(s,a) + \sum_{s'} p_\tau(s'|s,a)\tildeh_k(s') - \left(\dev^r_{[1,t]} + 2\dupp \dev^p_{[1,t]}\right), \tag{for any $\tau\in[1,t]$}
    \end{align}
where in the last inequality we use $|\barr(s,a) - r_\tau(s,a)|\leq \dev_{[1,t]}^r$ and $\sum_{s'} |\barp(s'|s,a) - p_\tau(s'|s,a)|\tildeh_k(s') \leq \|\barp(\cdot|s,a) -p_\tau(\cdot|s,a) \|_1 \spn(\tildeh_k)\leq 2 \dev_{[1,t]}^p \dupp$. 
    Using \pref{lemma: dual feasible}, we get \[\tildeJ_k + \dev_{[1,t]; \dupp} \geq \tildeJ_k + \left(\dev^r_{[1,t]} +2 \dupp \dev^p_{[1,t]}\right) \geq J_\tau^\star,\] implying the first part of the lemma.  
    %Now we relate $\tildeJ_k$ with $J_\tau^\star$. Notice that for any policy $\pi$, under the MDP specified by $p_\tau, r_\tau$, the long-term average reward starting from any state $s$ is
    %\begin{align*}
    %      &\lim_{W\rightarrow \infty}\E\left[ \frac{1}{W}\sum_{w=1}^W r_\tau(s_w, \pi(s_w))~\bigg\vert~ s_{w+1}\sim p_\tau(\cdot|s_w,\pi(s_w)), \ \ s_1=s \right] \\
     %     &\leq \lim_{W\rightarrow \infty}\E\left[ \frac{1}{W}\sum_{w=1}^W \left( \tildeh_k(s_w) - \sum_{s'} p_{\tau}(s'|s_w,\pi(s_w))\tildeh_k(s') \right)~\bigg\vert~ s_{w+1}\sim p_\tau(\cdot|s_w,\pi(s_w)), \ \ s_1=s \right] \\
     %     &\qquad + \tildeJ_k + \dev_{[1,t]; \dupp}   \tag{by \eqref{eq: tmp bellman}}  \\
     %     &\leq \lim_{W\rightarrow \infty} \E\left[ \frac{1}{W} \left( \tildeh_{k}(s_1) - \tildeh_k(s_{W+1})\right) ~\bigg\vert~ s_{w+1}\sim p_\tau(\cdot|s_w,\pi(s_w)), \ \ s_1=s \right]  + \tildeJ_k + \dev_{[1,t]; \dupp}   \\
    %      &= \tildeJ_k + \dev_{[1,t]; \dupp}.     \tag{$\spn(\tildeh_k)\leq \dupp$}
    %\end{align*}
    %Specially, we can choose $\pi$ to be the optimal policy under $p_\tau, r_\tau$. This concludes that $J_\tau^\star \leq \tildeJ_k + \dev_{[1,t]; \dupp}$, which implies the first part of the lemma. 
    
    To show the second part of the lemma, starting from \pref{eq: optimistic bellman 2}, we have with high probability
    \begin{align}
         &\tildeJ_k + \tildeh_k(s)\\
         &\leq \barr_k(s,\pi_k(s)) + \sum_{s'}\barp_k(s'|s,\pi_k(s))\tildeh_k(s') + 2\dupp \sqrt{S}\cdot \conf_k(s,\pi_k(s))  + 2 \dupp \eta_k +  \epsilon_k \tag{$\barr_k(s,a) \in \calR_k(s,a)$ and $\barp_k(s,a) \in \calP_k(s,a)$}  \\
         %&\geq r_\tau(s,a) + \sum_{s'}p_\tau(s'|s,a)\tildeh_k(s') - \Delta^r_{[1,t]} - \spn(\tildeh_k)\Delta^p_{[1,t]}   \nonumber  \\
         &\leq r_\tau(s,\pi_k(s)) + \sum_{s'}p_\tau(s'|s,\pi_k(s))\tildeh_k(s') + 2\dupp \sqrt{S}\cdot \conf_k(s,\pi_k(s))  + \left( \Delta^r_{[1,t]} + 2\dupp\Delta^p_{[1,t]} \right) +2 \dupp \eta_k + \epsilon_k.   \label{eq: to telescope}
    \end{align}

    Now, we apply \pref{eq: to telescope} with $(k,\tau,s) = \{k(\tau), \tau, s_\tau\}_{\tau=1}^t$ respectively, and sum them up. Notice that $a_\tau=\pi_{k(\tau)}(s_\tau)$. Then we get
    \begin{align*}
         \sum_{\tau=1}^t \left(\tildeJ_{k(\tau)} - R_{\tau}\right)
         &\leq \sum_{\tau=1}^t \left(\sum_{s'} p_{\tau}(s'|s_\tau,a_\tau)\tildeh_{k(\tau)}(s') - \tildeh_{k(\tau)}(s_\tau)\right) + \sum_{\tau=1}^t \left(r_\tau(s_\tau, a_\tau) - R_\tau\right) \\
         &\quad  + \sum_{\tau=1}^t 2\dupp\sqrt{S}\cdot\conf_{k(\tau)}(s_\tau, a_\tau)+  t\dev_{[1,t]; \dupp} + 2\dupp\sum_{\tau=1}^t \eta_{k(\tau)} +  \sum_{\tau=1}^t \epsilon_{k(\tau)}. 
         %&= \order\left(\dupp S\sqrt{A\log(1/\delta)t} + \dupp(\disc_{[1,t]} + SA\log t) + t\dev_{[1,t]; \dupp} \right).
    \end{align*}
    We bound the terms on the right-hand side individually: for the first term, notice that when there is no state-reassignment at time $\tau+1$, $\E_\tau [\tildeh_{k(\tau)}(s_{\tau+1})] = \sum_{s'}p_{\tau}(s'|s_\tau,a_\tau)\tildeh_{k(\tau)}(s')$. Therefore, 
    \begin{align*}
         &\sum_{\tau=1}^t \left(\sum_{s'} p_{\tau}(s'|s_\tau,a_\tau)\tildeh_{k(\tau)}(s') - \tildeh_{k(\tau)}(s_\tau)\right) \\
         &\leq \sum_{\tau=1}^t \left(\E_\tau \left[\tildeh_{k(\tau)}(s_{\tau+1})\right] - \tildeh_{k(\tau)}(s_\tau) \right) + 2\dupp\disc_{[1,t]} \\
         &\leq 2\dupp\sqrt{t\log(SAT)} + 2\dupp\sum_{\tau=1}^t \one\left[\tildeh_{k(\tau)} \neq \tildeh_{k(\tau+1)}\right] + 2\dupp\disc_{[1,t]} \tag{by Azuma's inequality} \\
         &\leq 2\dupp\sqrt{t\log(SAT)} + 2\dupp SA\log_2 T + 2\dupp\disc_{[1,t]}, 
    \end{align*}
    where in the last inequality we use the fact that the number of episodes cannot exceed $SA\log_2 T$.  For the other terms: 
    $\sum_{\tau=1}^t \left(r_\tau(s_\tau, a_\tau) - R_\tau\right) \leq \otil\left(\sqrt{t}\right)$ by Azuma's inequality; $\sum_{\tau=1}^t 2\dupp\sqrt{S}\cdot\conf_{k(\tau)}(s_\tau, a_\tau) = \otil\left( \dupp S\sqrt{At} \right)$ by the standard pigeonhole argument; $2\dupp\sum_{\tau=1}^t \eta_{k(\tau)} = \otil\left(\dupp S\sqrt{At}\right)$
 by the termination condition specified in \pref{line: terminate condition};  $\sum_{\tau=1}^t \epsilon_{k(\tau)}$ is also upper bounded by $\otil\left(\dupp S\sqrt{At}\right)$ by the way we choose the error parameter. Combining all the above arguments, we get 
 \begin{align*}
       \sum_{\tau=1}^t \left(\tildeJ_{k(\tau)} - R_{\tau}\right) \leq \otil\left(\dupp S\sqrt{At} + \dupp SA \right) + 2\dupp \disc_{[1,t]} + t\dev_{[1,t]; \dupp}
 \end{align*}
 with high probability. On the other hand, $\sum_{\tau=1}^t \left(\tildeJ_{k(\tau)} - R_{\tau}\right) \leq \dupp t$ is trivially true. Combining them we get the second claim of the lemma. 
\end{proof}

\section{Analysis for the Multi-scale Algorithms}\label{app:multi-scale}
\begin{proof}{\textbf{of \pref{lemma: multi-scale reg}} }
    Below, we fix an $\inst$ and fix a $t\in[\inst.s, \inst.e]$, and consider the case $\dev_{[\inst.s, t]}\leq \avgreg(t')$ as specified in the lemma statement. For the first part of the lemma, note that $\tildeg_t$ of \multialg is defined as $\tildef_t^{\inst'}$ where $\inst'$ is the active instance of \alg at round $t$. By \pref{proc: alg profile}, $\inst'$ can only be an instance that starts within $[\inst.s, t]$ (i.e., $\inst'.s\geq \inst.s$). Therefore, the distribution drift undergone by $\inst'$ up to $t$ is upper bounded by $\dev_{[\inst.s, t]}\leq \avgreg(t')$, which is further upper bounded by $\avgreg(t'')$ where $t''$ is the number of active rounds $\inst'$ runs within $[\inst.s, t]$, because $\avgreg(\cdot)$ is a decreasing function. Therefore, the conditions in \pref{assum:assump2} is satisfied for this $\inst'$, and thus we have 
    \begin{align*}
        \tildeg_t = \tildef_t^{\inst'} \geq \min_{\tau\leq t:~\inst' \text{\ is active at $\tau$}} f_\tau^\star -  \dev_{[\inst'.s,t]} \geq \min_{\tau \in [\inst.s, t]}f_{\tau}^\star - \dev_{[\inst.s, t]},  
    \end{align*}
    proving the first part. 
    
    Next, we prove the second part of the lemma. 
    We use $S_m$ to denote the set of order-$m$ instances which start within $[\inst.s, t]$. Note that
    \begin{align}
        \sum_{\tau=\inst.s}^{t} \left(\tildeg_\tau - R_\tau\right) 
        &= \sum_{\tau=\inst.s}^{t} \sum_{m=0}^n  \sum_{\inst'\in S_m} \one[\inst'\ \text{is active at\ } \tau]\left(\tildef_\tau^{\inst'}-R_\tau\right) \nonumber \\
        &= \sum_{m=0}^n  \underbrace{\sum_{\inst'\in S_m}\sum_{\tau=\inst.s}^{t}   \one[\inst'\ \text{is active at\ } \tau]\left(\tildef_\tau^{\inst'}-R_\tau\right)}_{(*)}.    \label{eq: regret decompose} 
    \end{align}
    The first equality holds because $\tildeg_\tau$ of \multialg is defined as the $\tildef_\tau$ of the active instance at round $t$. 
    
    Next, we focus on a specific $m$, and bound the $(*)$ term in \pref{eq: regret decompose}. Let $|S_m|=\ell$ and $S_m=\left\{\inst_1',\ldots, \inst_\ell'\right\}$, and let $\calI_i\triangleq [\inst_i'.s, \inst_i'.e]\cap [\inst.s, t]$ for $i=1,\ldots,\ell$ (i.e., $\calI_i$ are the rounds within $[\inst.s, t]$ where $\inst_i'$ is scheduled). 
    Clearly, $|\calI_i|\leq \min\{\inst'_i.e - \inst'_i.s +1 , t-\inst.s+1\} = \min\left\{2^m, t'\right\}$. By \pref{assum:assump2}, we have
    \begin{align}
        (*)&= \sum_{i=1}^\ell \sum_{\tau=\inst.s}^{t} \one[\inst_i'\text{\ is active at $\tau$}]\left(\tildef_\tau^{\inst_i'}-R_\tau\right) \nonumber \\
        &\leq \sum_{i=1}^\ell \left(\regbound(|\calI_i|) + |\calI_i|\dev_{\calI_i}\right)  \nonumber \\
        &\leq \ell\regbound(\min\{2^m, t'\}) + t'\dev_{[\inst.s, t]}, \label{eq: bound star}
    \end{align}
    where in the first inequality we use \pref{assum:assump2}, and that $\inst_i'$ updates for no more than $|\calI_i|$ rounds in the interval $[\inst.s, t]$ (also, the condition in \pref{assum:assump2} is satisfied because $\dev_{\calI_i}\leq \dev_{[\inst.s,t]}\leq \avgreg(t')\leq \avgreg(|\calI_i|)$). In the last inequality, for the first term, we use that $\regbound(\cdot)$ is increasing; for the second term, we use $|\calI_i|\leq t'$, and that $\dev_{\calI_1}+\cdots+\dev_{\calI_\ell}\leq \dev_{[\inst.s, t]}$ since $\calI_1, \ldots, \calI_\ell$ are non-overlapping intervals lying within $[\inst.s,t]$. 
    
    By \pref{proc: alg profile}, 
    for every $m$, the expected number of order-$m$ \alg's that starts within the interval $[\inst.s, t]$ can be upper bounded as 
    \begin{align}
        \E[|S_m|]
        &\leq \frac{\avgreg(2^n)}{\avgreg(2^m)} \left\lceil \frac{t'}{2^m} \right\rceil  
        \leq \frac{\avgreg(2^n)}{\avgreg(2^m)}\left( \frac{t'}{2^m}+1\right) \leq \frac{\avgreg(2^n)}{\avgreg(2^m)} \frac{t'}{2^m}+1     \label{eq: bound Sm} %\\
        %&\leq 2^{(k-n)(1-\rho)}\times \left( \frac{2^m B}{2^k B} \right)\times \frac{t'}{2^m B} +1 
        %= \frac{t'}{2^m B}\times 2^{(n-k)\rho+m-n} +1 \\
        %&\leq \frac{t'}{2^m B} \times 2^{(m-k)\rho} +1 \tag{using $m\leq n$ and $\rho\leq 1$}
    \end{align}

    By Bernstein's inequality, with probability $1-\frac{\delta}{T}$,  $|S_m|\leq \E[|S_m|]+\sqrt{2\E[|S_m|]\log(T/\delta)} + \log(T/\delta) \leq 2\E[|S_m|] + 2\log(T/\delta)$. Thus, continuing from \pref{eq: bound star}, we have with probability at least $1-\frac{\delta}{T}$, 
    \begin{align}
        (*)
        &\leq 2 \cdot \left(\frac{\avgreg(2^n)}{\avgreg(2^m)} \frac{t'}{2^m} +1\right) \regbound(\min\{2^m, t'\}) + 2\log(T/\delta)\regbound(\min\{2^m, t'\}) + t'\dev_{[\inst.s, t]}    \nonumber \\
        &\leq 2 \left(\frac{\regbound(t')}{\regbound(2^m)}   + 2 \right) \log(T/\delta) \regbound(\min\{2^m, t'\}) + t'\dev_{[\inst.s, t]}   \tag{$\avgreg(2^n) t'\leq \avgreg(t')t'=\regbound(t')$}    \nonumber  \\
        &\leq 6 \regbound(t') \log(T/\delta) + t'\dev_{[\inst.s, t]}     \tag{$\regbound(\cdot)$ is an increasing function} \\
        \label{eq: upper bound star}
    \end{align}
    Finally, using this in \pref{eq: regret decompose}, we get the second claim of the lemma: with probability at least $1-\frac{\delta}{T}$, 
    \begin{align}
        \sum_{\tau=\inst.s}^t\left(\tildeg_\tau-R_\tau\right)
        &\leq 6(n+1) \regbound(t') \log(1/\delta) + t'(n+1)\dev_{[\inst.s, t]}.    \label{eq: regret temp}
    \end{align}

For the third part of the lemma, as we calculated above, with probability at least $1-\frac{\delta}{T}$, the number of instances started within $[\inst.s, t]$ is upper bounded by 
\begin{align*}
     \sum_{m=0}^n 2 \cdot \left(\frac{\avgreg(2^n)}{\avgreg(2^m)} \frac{t'}{2^m} +2\right) \log(T/\delta)\leq 2\nmax\left(\frac{\regbound(t')}{\regbound(1)} +2\right)\log(T/\delta) \leq 6\nmax\frac{\regbound(t')}{\regbound(1)}\log(T/\delta) 
\end{align*}
where we use $\avgreg(2^m)2^m = \regbound(2^m) \geq  \regbound(1)$ and $\avgreg(2^n)t'\leq \avgreg(t')t'=\regbound(t')$. 
\end{proof}

\begin{lemma}[\textit{c.f.} \pref{lemma: multi-scale reg}]
\label{lemma: ucrl aggregated regret}
      Before \mucrl terminates, for every \inst and $t\in[\inst.s, \inst.e]$, \mucrl guarantees with high probability
     \begin{align*}
          \tildeg_t &\geq \min_{\tau\in[\inst.s, t]} J_\tau^\star - \dev_{[\inst.s, t]; \dupp} \\
         \frac{1}{t'} \sum_{\tau=\inst.s}^{t}  \left(\tildeg_\tau - R_\tau\right)&\leq \multiavg_{\ucrl}\left(t'; \dupp\right) + \nmax\dev_{[\inst.s, t]; \dupp}
     \end{align*}
     where $t'=t-\inst.s+1$, $\nmax=\log_2 T+1$, and $\multiavg_{\ucrl}\left(t; \dupp\right)=18\nmax\log(T/\delta)\avgreg_{\ucrl}(t;\dupp)$. 
\end{lemma}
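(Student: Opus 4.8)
The plan is to mirror the proof of \pref{lemma: multi-scale reg} almost verbatim, with \pref{lem: modified UCRL} playing the role that \pref{assum:assump2} plays there. Two structural differences have to be accommodated. First, \pref{lem: modified UCRL} holds unconditionally (it does not need a near-stationarity precondition such as $\dev_{[\inst.s,t]}\le\avgreg(t')$), because $\avgreg_{\ucrl}(t;\dupp)$ is capped at $\dupp$, so the per-round regret of an \acw instance never exceeds $\dupp$ regardless of the drift; this is exactly why the statement of \pref{lemma: ucrl aggregated regret} also carries no such precondition. Second, \pref{lem: modified UCRL} contains the extra term $\tfrac{2}{t}\dupp\,\disc_{[1,t]}$ coming from state re-assignments, which \pref{proc: alg profile} forces whenever a paused instance is resumed; controlling the aggregate of these terms is the only genuinely new piece of bookkeeping. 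I will also use, without reproving, that the proof of \pref{lem: modified UCRL} goes through verbatim for an instance that is paused and resumed, with $[1,t]$ replaced by the span it traverses and $\disc$ counting that instance's own re-assignments.

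For the first inequality, I would argue as in the first part of \pref{lemma: multi-scale reg}: $\tildeg_t$ equals the quantity $\tildeJ_{k(t)}$ of the \acw instance $\inst'$ active at round $t$, and by \pref{proc: alg profile} this instance has $\inst'.s\ge\inst.s$. Applying the first inequality of \pref{lem: modified UCRL} to $\inst'$ (legitimate because \mucrl has not terminated, hence neither has $\inst'$, and that inequality does not involve re-assignments) gives $\tildeg_t\ge \min_{\tau\in[\inst'.s,t]}J^\star_\tau-\dev_{[\inst'.s,t];\dupp}$; then $\dev_{[\inst'.s,t];\dupp}\le\dev_{[\inst.s,t];\dupp}$ and monotonicity of the minimum over a shrinking index set finish this part.

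For the regret bound, I would decompose $\sum_{\tau=\inst.s}^{t}(\tildeg_\tau-R_\tau)=\sum_{m=0}^{n}\sum_{\inst'\in S_m}\sum_{\tau}\one[\inst'\text{ active at }\tau](\tildeJ^{\inst'}_{k(\tau)}-R_\tau)$, where $S_m$ is the set of order-$m$ instances starting within $[\inst.s,t]$ (every instance active in $[\inst.s,t]$ lies in some $S_m$ by the nesting/alignment of \pref{proc: alg profile}). For a fixed $\inst'_i\in S_m$, writing $\calI_i=[\inst'_i.s,\inst'_i.e]\cap[\inst.s,t]$, the shifted-and-subsampled version of \pref{lem: modified UCRL} yields $\sum_{\tau\in\calI_i}\one[\inst'_i\text{ active}](\tildeJ^{\inst'_i}_{k(\tau)}-R_\tau)\le \regbound_{\ucrl}(\min\{2^m,t'\};\dupp)+2\dupp\,\disc_{\calI_i}+|\calI_i|\,\dev_{\calI_i;\dupp}$, using that $\regbound_{\ucrl}(\cdot\,;\dupp)=t\avgreg_{\ucrl}(t;\dupp)$ is non-decreasing and that the number of active rounds is $\le|\calI_i|\le\min\{2^m,t'\}$. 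Summing over $i$ and $m$: the $\regbound_{\ucrl}$ contributions are handled exactly as in \pref{lemma: multi-scale reg} via $\E[|S_m|]\le \frac{\avgreg_{\ucrl}(2^n;\dupp)}{\avgreg_{\ucrl}(2^m;\dupp)}\frac{t'}{2^m}+1$, Bernstein's inequality, $\avgreg_{\ucrl}(2^n;\dupp)t'\le\regbound_{\ucrl}(t';\dupp)$, and monotonicity, giving $\otil(\nmax\log(T/\delta)\,\regbound_{\ucrl}(t';\dupp))$; the $\dev$ contributions sum to at most $(n+1)t'\dev_{[\inst.s,t];\dupp}\le\nmax t'\dev_{[\inst.s,t];\dupp}$ since, for each fixed order, the $\calI_i$ are non-overlapping subintervals of $[\inst.s,t]$ and $\dev_{\cdot;\dupp}$ is additive. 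Only $2\dupp\sum_i\disc_{\calI_i}$ remains.

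The main obstacle is this last term, and the key combinatorial observation is that re-assignments within $[\inst.s,t]$ inject into the set of instance-\emph{endings} inside $[\inst.s,t]$: a re-assignment of $\inst'$ at round $\tau>\inst'.s$ can happen only if the strictly shorter instance active at $\tau-1$ ends at $\tau-1$, and since exactly one instance is active at a time, distinct re-assignments map to distinct endings. Hence the total number of re-assignments in $[\inst.s,t]$ is at most the number of instances touching $[\inst.s,t]$, which by the same calculation as the third part of \pref{lemma: multi-scale reg} is $\otil\!\big(\nmax\,\regbound_{\ucrl}(t';\dupp)/\regbound_{\ucrl}(1;\dupp)\big)$; since $\regbound_{\ucrl}(1;\dupp)=\widetilde\Theta(\dupp)$ (the $\min$ with $\dupp$ is active at $t=1$ as $S\sqrt A+SA\ge1$), we get $2\dupp\sum_i\disc_{\calI_i}=\otil(\nmax\log(T/\delta)\,\regbound_{\ucrl}(t';\dupp))$, of the same order as the $\regbound_{\ucrl}$ term — precisely what the enlarged constant $18$ in $\multiavg_{\ucrl}$ (versus $6$ in \pref{lemma: multi-scale reg}) is there to absorb. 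Dividing through by $t'$ gives the claimed bound, and a union bound over the polynomially many choices of $\inst$ and $t$, over the per-instance events of \pref{lem: modified UCRL}, and over the Bernstein events upgrades the per-event guarantee to the high-probability statement, which is harmless since $\delta=1/\poly(T)$.
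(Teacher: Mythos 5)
Your proposal is correct and follows essentially the same route as the paper's proof: decompose over the scheduled sub-instances, apply \pref{lem: modified UCRL} (shifted to each instance's span) in place of \pref{assum:assump2}, reuse the counting from \pref{lemma: multi-scale reg} for the $\regbound_{\ucrl}$ and $\dev$ terms, and control the re-assignment cost by bounding the total number of re-assignments by the number of instances started in $[\inst.s,t]$ and using $\regbound_{\ucrl}(1;\dupp)\geq\dupp$. The only cosmetic difference is that you charge each re-assignment to the ending of the shorter interrupting instance, whereas the paper charges the re-assignments of order-$m$ instances to the starts of lower-order instances; both reduce to the same third claim of \pref{lemma: multi-scale reg}.
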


\begin{proof}
    This proof is similar to that of \pref{lemma: multi-scale reg}. For the first part of the lemma, we can simply follow the proof of the first part of \pref{lemma: multi-scale reg}, with $\tildef_t$ replaced by $\tildeJ_{k(t)}$, and $\dev_{[\inst.s, t]}$ by $\dev_{[\inst.s, t];\dupp}$. 

For the second part, the analysis still tightly follows that of \pref{lemma: multi-scale reg}, but we need to add the additional cost caused by state re-assignment (i.e., the $\dupp \disc_{[1,t]}$ term in \pref{lem: modified UCRL}). 
Following the same arguments as in proof as in \pref{eq: regret decompose} and \pref{eq: bound star}, we get 
 \begin{align}
        \sum_{\tau=\inst.s}^{t} \left(\tildeg_\tau - R_\tau\right)  
        &= \sum_{m=0}^n  \sum_{\inst'\in S_m}\sum_{\tau=\inst.s}^{t}   \one[\inst'\ \text{is active at\ } \tau]\left(\tildef_\tau^{\inst'}-R_\tau\right) \tag{$S_m\triangleq$ \text{the set of order-$m$ \alg initiated within $[\inst.s, t]$}} \\
        &\leq \sum_{m=0}^n \sum_{i=1}^{|S_m|}\sum_{\tau=\inst.s}^{t}   \one[\inst_{m,i}'\ \text{is active at\ } \tau]\left(\tildef_\tau^{\inst_{m,i}'}-R_\tau\right)    \tag{Let $S_m=\{\inst'_{m,1}, \inst'_{m,2}, \ldots\}$}\\
        &\leq \sum_{m=0}^n \sum_{i=1}^{|S_m|} \left(\regbound_{\ucrl}(|\calI_{m,i}|; \dupp) + |\calI_{m,i}|\dev_{\calI_{m,i}; \dupp} + 2\dupp\disc_{\calI_{m,i}}^{\inst_{m,i}'}\right) \tag{by \pref{lem: modified UCRL}}  \\
        &\label{eq: last expression MDP}
    \end{align}
    where in the last expression, we denote $\calI_{m,i}=[\inst'_{m,i}.s, \inst'_{m,i}.e] \cap [\inst.s, t]$ (the time within $[\inst.s, t]$ where $\inst'_{m,i}$ is scheduled), and $\disc^{\inst'}_{\calI}$ is the total number of times within $\calI$ when $\inst'$ encounters state-reassignments. 
   
    For a fixed $m$, observe that all order-$m$ instances are non-overlapping. Also, the aggregated number of state re-assignment for all order-$m$ instances started within $[\inst.s, t]$ is upper bounded by the total number of new instances of order not larger than $m-1$ started within $[\inst.s, t]$. The latter is further upper bounded by $6\nmax\log(T/\delta)\frac{\regbound_{\ucrl}(t'; \dupp)}{\regbound_{\ucrl}(1; \dupp)}$ according to the last claim of \pref{lemma: multi-scale reg}.  In other words, for every $m$, with probability $1 - \frac{\delta}{T}$, 
    \begin{align*}
          \sum_{i=1}^{|S_m|} \disc_{\calI_{m,i}}^{\inst_{m,i}'} = 6\nmax \log(T/\delta) \frac{\regbound_{\ucrl}(t'; \dupp)}{\regbound_{\ucrl}(1; \dupp)}. 
    \end{align*}
Following the same calculation as in \pref{eq: bound star}, \pref{eq: bound Sm} and \pref{eq: upper bound star}, we also have that for every $m$, with probability $1-\frac{\delta}{T}$, 
\begin{align*}
     \sum_{i=1}^{|S_m|} \left(\regbound_{\ucrl}(|\calI_{m,i}|; \dupp) + |\calI_{m,i}|\dev_{\calI_{m,i}; \dupp} \right) \leq 6\nmax \log(T/\delta)\regbound_{\ucrl}(t'; \dupp) + t'\dev_{[\inst.s, t]; \dupp}.
\end{align*}
Using the above two bounds in \pref{eq: last expression MDP}, we get 
\begin{align*}
     \sum_{\tau=\inst.s}^{t} \left(\tildeg_\tau - R_\tau\right)  
     &=  6\nmax \log(T/\delta)\regbound_{\ucrl}(t'; \dupp) + \nmax t'\dev_{[\inst.s, t]; \dupp} +2\dupp \times 6\nmax \log(T/\delta) \frac{\regbound_{\ucrl}(t'; \dupp)}{\regbound_{\ucrl}(1; \dupp)}\\
     &=  18\nmax\log(T/\delta)\regbound_{\ucrl}(t';\dupp) + \nmax t'\dev_{[\inst.s, t]; \dupp} 
\end{align*}
where we use $\regbound_{\ucrl}(1; \dupp)\geq \dupp$ (by the definition of $\avgreg_{\ucrl}(\cdot~,\dupp)$ in \pref{lem: modified UCRL}). Dividing both sides by $t'$ finishes the proof. 
\end{proof}

% and \pref{lemma: multi-scale reg}, we get (\textit{c.f.} \pref{eq: regret temp})
%\begin{align*}
%        \sum_{\tau=\inst.s}^t\left(\tildeg_\tau-R_\tau\right)
%        &\leq 6(m+1) \regbound_{\ucrl}(t'; \dupp) \log(1/\delta) + t'(m+1)\dev_{[\inst.s, t]; \dupp} \\
%        &\qquad \qquad + \underbrace{ \dupp \sum_{\tau=\inst.s}^t  \one[\text{state re-assigned for the currently active \acw}]}_{\term_5} 
%\end{align*}

%Notice that the aggregated number of state re-assignment for all order-$m$ instances is upper bounded by the total number of new instances of order $\leq m-1$ started within $[\inst.s, t]$, which is upper bounded by $\otil\left(\frac{\regbound_{\ucrl}(t'; \dupp)}{\regbound_{\ucrl}(1; \dupp)}\right)$ according to \pref{lemma: multi-scale reg}.  Therefore, the additional regret caused to order-$m$ instances because of state re-assignment is upper bounded by $\otil\left(\dupp\times \frac{\regbound_{\ucrl}(t'; \dupp)}{\regbound_{\ucrl}(1; \dupp)}\right)$. Summing this over $m$ (which is logarithmic in $T$), we get that the additional regret is 
%\begin{align*}
%     \term_5=\otil\left(\dupp\times \frac{\regbound_{\ucrl}(t'; \dupp)}{\regbound_{\ucrl}(1; \dupp)}\right)=\otil\left(\dupp \times \frac{\regbound_{\ucrl}(t'; \dupp)}{\dupp}\right)=\otil\left(\regbound_{\ucrl}(t'; \dupp)\right)    \tag{by the definition of $\avgreg_\ucrl$, $\regbound_\ucrl(1; \dupp)=\avgreg_\ucrl(1; \dupp)=\widetilde{\Theta}(\dupp)$}
%\end{align*}
%and thus can be absorbed into the main regret term. This finishes the proof for the second claim. 

%!TEX root=main.tex

\section{Single-block Regret Analysis I} %(Proofs of \pref{lemma: bound term1} and \pref{lemma: block lemma for RL}) 
\label{app: key proofs}
In this section, we focus on the regret in a block of index $n$. The analysis applies to both the standard case (\pref{lemma: bound term1}), and the infinite-horizon RL case summarized in the following lemma.
\begin{lemma}[\textit{c.f.} \pref{lemma: bound term1}] 
\label{lemma: block lemma for RL}
     In a block of index $n$ that starts from $t_n$ and ends on $E_n$ ($E_n$ could be equal to $t_n+2^n-1$, or smaller, if any stationarity test fails or \mucrl terminates), we have
    \begin{align*}
    \sum_{\tau=t_n}^{E_n} \left(f_\tau^\star - R_\tau\right) \leq \otil\left( \sum_{i=1}^\ell \regbound_{\ucrl}(|\calI_i'|; \dupp) + \sum_{m=0}^n \frac{\avgreg(2^m; \dupp)}{\avgreg(2^n; \dupp)} \regbound_{\ucrl}(2^m; \dupp) \right)
    \end{align*}
    where $\calI_1', \ldots, \calI_\ell'$ are intervals that partition $[t_n, E_n]$ such that $\dev_{\calI_i'; \dupp} \leq \avgreg_{\ucrl}(|\calI_i'|; \dupp)$ for all $i$. %Furthermore, $\ell\leq L_{\calJ}$ and $\ell\leq 1+\order\left( (\dupp^{-2}S^{-2}A^{-1}\dev_\calJ^{2}|\calJ| )^{\nicefrac{1}{3}} + \dupp^{-1}\dev_{\calJ}\right)$. 
\end{lemma}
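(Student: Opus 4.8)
The plan is to follow the same two-term decomposition used for the standard case (\pref{lemma: bound term1}), now reading \pref{lemma: ucrl aggregated regret} and \pref{lem: modified UCRL} in place of \pref{lemma: multi-scale reg} and \pref{assum:assump2}, and carrying the drift measure $\dev_{\cdot;\dupp}$ and the state-reassignment discrepancy $\disc$ through the bookkeeping. Throughout I would condition on the high-probability events of \pref{lemma: ucrl aggregated regret}/\pref{lem: modified UCRL} for all $\order(T)$ instances \mucrl maintains inside the block (a union bound keeps the total failure probability $\otil(\delta)$), and use that these guarantees remain valid up to and including the round $E_n$ at which the block ends, since a block ends only when \testone/\testtwo fail or \mucrl signals restart. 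Writing $U_t=\min_{\sigma\in[t_n,t]}\tildeg_\sigma$ (\pref{line: tracking} of \pref{alg: master ucrl}), the starting point is
\[
  \sum_{\tau=t_n}^{E_n}\bigl(J_\tau^\star - R_\tau\bigr)
  = \underbrace{\sum_{\tau=t_n}^{E_n}\bigl(J_\tau^\star - \tildeg_\tau\bigr)}_{\term_1}
  + \underbrace{\sum_{\tau=t_n}^{E_n}\bigl(\tildeg_\tau - R_\tau\bigr)}_{\term_2},
  \qquad \term_1 \le \sum_{\tau=t_n}^{E_n}\bigl(J_\tau^\star - U_\tau\bigr),
\]
the inequality because $\tildeg_\tau\ge U_\tau$. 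Bounding $\term_2$ is routine: since the block does not restart before $E_n$, \testtwo does not return \emph{fail} at any $t\le E_n-1$, so $\sum_{\tau=t_n}^{E_n-1}(\tildeg_\tau-R_\tau) < 3(E_n-t_n)\multiavg_{\ucrl}(E_n-t_n;\dupp)$, and the one leftover term $\tildeg_{E_n}-R_{E_n}$ is $\order(1)$; using $t\,\multiavg_{\ucrl}(t;\dupp)=\otil(\regbound_{\ucrl}(t;\dupp))$ and that $\regbound_{\ucrl}(\cdot;\dupp)$ is, up to logarithmic factors, subadditive (it is concave and vanishes at $0$), this gives $\term_2=\otil(\regbound_{\ucrl}(|\calJ|;\dupp))=\otil(\sum_{i=1}^\ell\regbound_{\ucrl}(|\calI_i'|;\dupp))$.

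The crux is $\term_1\le\sum_\tau(J_\tau^\star-U_\tau)$, which I would split along the near-stationary pieces $\calI_i'=[s_i,e_i]$. On each piece write $J_\tau^\star-U_\tau = (J_\tau^\star-\min_{\sigma\in\calI_i'}J_\sigma^\star)+(\min_{\sigma\in\calI_i'}J_\sigma^\star-U_\tau)$; the first part sums to at most $|\calI_i'|\dev_{\calI_i';\dupp}\le\regbound_{\ucrl}(|\calI_i'|;\dupp)$ by the hypothesis on the partition. Since $U$ is non-increasing, the second part is controlled once we lower bound $U_{e_i}$. Writing $U_{e_i}=\tildeg_{\hat\tau}$ for some $\hat\tau\le e_i$ and letting $\inst$ be the active instance at $\hat\tau$, \pref{lemma: ucrl aggregated regret} gives $\tildeg_{\hat\tau}\ge\min_{\sigma\in[\inst.s,\hat\tau]}J_\sigma^\star-\dev_{[\inst.s,\hat\tau];\dupp}$; if $\inst.s\ge s_i$ this is $\ge\min_{\sigma\in\calI_i'}J_\sigma^\star-\dev_{\calI_i';\dupp}$ and the piece costs only $\otil(\regbound_{\ucrl}(|\calI_i'|;\dupp))$. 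The hard sub-case is $\inst.s<s_i$: here $U$ has been dragged down by an instance started during an earlier, possibly highly non-stationary stretch where $J^\star$ was much lower, and the bound loses the uncontrolled quantity $\dev_{[\inst.s,\hat\tau];\dupp}$. I would break this using that \testone never returns \emph{fail} before $E_n$: for every order-$m$ instance $\inst'$ with $\inst'.e<E_n$, $\tfrac1{2^m}\sum_{\tau=\inst'.s}^{\inst'.e}R_\tau < U_{\inst'.e}+9\multiavg_{\ucrl}(2^m;\dupp)$, while both parts of \pref{lemma: ucrl aggregated regret} lower bound $\tfrac1{2^m}\sum_{\tau=\inst'.s}^{\inst'.e}R_\tau$ by $\min_{\sigma\in[\inst'.s,\inst'.e]}J_\sigma^\star$ up to an additive $\otil(\multiavg_{\ucrl}(2^m;\dupp)+\dev_{[\inst'.s,\inst'.e];\dupp})$. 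Chaining these two inequalities along an instance whose scheduled window sits inside (a sub-window of) the near-stationary piece forces $U_{e_i}\ge\min_{\sigma\in\calI_i'}J_\sigma^\star-\otil(\multiavg_{\ucrl}(\cdot;\dupp)+\dev_{\calI_i';\dupp})$. The rounds of $\calI_i'$ not covered by such a completed instance — possible because order-$m$ instances are placed only with probability $\avgreg_{\ucrl}(2^n;\dupp)/\avgreg_{\ucrl}(2^m;\dupp)$ — are charged to the scheduling-overhead term $\sum_{m=0}^n\frac{\avgreg(2^m;\dupp)}{\avgreg(2^n;\dupp)}\regbound_{\ucrl}(2^m;\dupp)$; the bound $\E|S_m|\le\frac{\avgreg(2^n;\dupp)}{\avgreg(2^m;\dupp)}\frac{|\calJ|}{2^m}+1$ with the Bernstein concentration already used in the proof of \pref{lemma: ucrl aggregated regret} is what makes this charging legitimate. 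Summing the per-piece bounds over $i=1,\dots,\ell$ yields the claimed inequality.

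Three RL-specific points need tracking but no new ideas: $J_\tau^\star$ replaces $f_\tau^\star$ everywhere; the non-stationarity is measured by $\dev_{\cdot;\dupp}=\dev_{\cdot}^r+2\dupp\dev_{\cdot}^p+\dev_{\cdot}^J$ and the state-reassignment cost $\disc$ is already folded into \pref{lemma: ucrl aggregated regret}, so it does not reappear here; and a restart of \masterucrl can also be triggered by \mucrl terminating (abnormally large cumulative widening), handled exactly like a test failure — it only closes the block at $E_n$, and since \pref{lemma: ucrl aggregated regret} and \pref{lem: modified UCRL} hold before termination, the estimates above stay usable. I expect the main obstacle to be precisely the $\term_1$ bound in the case $\inst.s<s_i$: showing that \testone not firing pins the monotone quantity $U$ to the \emph{current} optimum $\min_{\calI_i'}J^\star$ even though $U$ was set during an earlier lower-reward stretch. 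This is where the randomized multi-scale schedule must be used quantitatively, and where the two terms of the statement acquire their roles — $\sum_i\regbound_{\ucrl}(|\calI_i'|;\dupp)$ for the well-covered part and $\sum_m\frac{\avgreg(2^m;\dupp)}{\avgreg(2^n;\dupp)}\regbound_{\ucrl}(2^m;\dupp)$ for the scheduling slack.
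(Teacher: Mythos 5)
Your decomposition of the block regret into $\term_1+\term_2$ and your treatment of $\term_2$ via \testtwo match the paper, but your handling of $\term_1$ has a genuine gap. You replace $\tildeg_\tau$ by the running minimum $U_\tau$ and then try to control each near-stationary piece $\calI_i'$ by lower-bounding $U_{e_i}$ at the end of the piece and invoking monotonicity. The claimed bound $U_{e_i}\ge\min_{\sigma\in\calI_i'}J^\star_\sigma-\otil\bigl(\multiavg_{\ucrl}(\cdot;\dupp)+\dev_{\calI_i';\dupp}\bigr)$ does not follow and is false in general: a completed order-$m$ instance ending at some $t^\star\le e_i$ together with \testone not firing only pins $U_{t^\star}$; after $t^\star$, $U$ can be dragged further down by instances that started \emph{before} the piece (most importantly the order-$n$ instance spanning the whole block), whose $\tildeg$ legitimately reflects an earlier low-reward stretch --- \pref{lemma: ucrl aggregated regret} lower-bounds their $\tildeg$ only by the minimum of $J^\star$ over their \emph{own} windows, and no test penalizes $\tildeg$ being too low (\testtwo fires in the opposite direction). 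Since the global argmin defining $U_{e_i}$ typically lies outside $\calI_i'$, essentially the entire burden falls on this unresolved case, so the per-piece bound $|\calI_i'|\bigl(\min_{\calI_i'}J^\star-U_{e_i}\bigr)$ cannot be closed this way.

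The second gap is the charging of the ``uncovered'' rounds to $\sum_{m}\frac{\avgreg(2^m;\dupp)}{\avgreg(2^n;\dupp)}\regbound_{\ucrl}(2^m;\dupp)$: the Bernstein bound on $\E|S_m|$ that you cite controls how \emph{many} order-$m$ instances are scheduled (it is used inside \pref{lemma: multi-scale reg} and \pref{lemma: ucrl aggregated regret} to bound the overhead of running them), not how \emph{soon} one is scheduled after the gap becomes large, which is what your charging needs. The paper's proof (\pref{app: key proofs}) does not go through $U$ at all: it layers the gap $f_\tau^\star-\tildeg_\tau$ by the thresholds $12\multigap_m$, defines $\tau_i(m)$ and $\xi_i(m)$, and shows in \pref{lemma: key term reg bound} that the total time, beyond $4\cdot 2^m$ per piece, during which the gap exceeds $12\multigap_m$ is at most $\otil(2^m\gap_m/\gap_n)$ --- by a geometric first-success argument over the candidate start slots of \pref{proc: alg profile}, combined with the contradiction that any order-$m$ instance starting after $\tau_i(m)$ and completing inside the piece before $E_n$ would have window-average reward at least $U_{\inst.e}+10\multigap_m$ and hence force \testone to fire. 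This layered threshold-plus-waiting-time argument is exactly the quantitative step you flag as ``the main obstacle'' but do not supply; without it (or an equivalent version run on $J^\star_\tau-U_\tau$, which can be made to work since $U$ is non-increasing and $U_{\inst.e}\le U_\tau$ for $\tau\le\inst.e$), the proposal does not establish the lemma.
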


Throughout this section, if infinite-horizon RL is considered, $\avgreg(\cdot)\triangleq \avgreg_{\ucrl}(\cdot~; \dupp)$, $\multiavg(\cdot)\triangleq \multiavg_{\ucrl}(\cdot~; \dupp)$,  $\dev_{\calI}\triangleq \dev_{\calI; \dupp}=\dev^r_{\calI} + 2\dupp \dev^p_{\calI} + \dev^J_{\calI}$ with a fixed $\dupp$, and $f^\star_t\triangleq J^\star_t$. %In the infinite-horizon RL case, an additional event may cause a block to end --- if any \acw calls for termination. However, all analysis in this section still applies. %We assume that the high probability events described in \pref{lemma: multi-scale reg} (or \pref{lemma: ucrl aggregated regret}) hold. We call this event $\calG_2$.  

%\begin{definition}
%     Define the rounding notation for $x>0$: ~$\round{x}\triangleq \min\left\{k\in \{0,1,\ldots\}:~ x\leq 2^k B \right\}$, and $\roundlog{x}\triangleq \argmin\left\{k\in \{0,1,\ldots\}:~ x\leq 2^k B \right\}$. 
%\end{definition}

%\begin{definition}
     %Define $\alpha_m \triangleq \frac{\multireg_{m}}{(m+1)2^m B}$. By \pref{assum:assump2}, $\frac{1}{2}\alpha_m \leq \alpha_{m+1}\leq \alpha_m$.
%\end{definition}

For the purpose of conducting analysis, we divide $[t_n, t_n+2^n-1]$ into consecutive intervals $\calI_1=[s_1, e_1], \calI_2=[s_2, e_2], \ldots, \calI_K=[s_K, e_K]$ ($s_1=t_n$, $e_{i}+1=s_{i+1}$, $e_K=t_n+2^n-1$) in a way such that for all $i$:  
\begin{align}
     \dev_{\calI_i} \leq \avgreg(|\calI_i|)   \label{eq: i smaller than alpha}
\end{align}
One simple way to divide the intervals is to let $\Delta_{\calI_i}=0$ in each $\calI_i$. Then the number of intervals $K$ would be upper bounded by the number of stationary intervals within $[t_n, t_n+2^n-1]$.  Intuitively, the number of intervals can also be related to $\Delta_{[t_n, t_n+2^n-1]}$. 
We defer the calculation of the required number of intervals to \pref{lemma: interval divide}. For now, we only need the fact that the partition satisfies \pref{eq: i smaller than alpha}. From a high level, this partition makes the distribution in each interval close to stationary. Notice that this partition is independent of the learner's behavior in block $n$. 

For convenience, we further define the following quantities that depend on the learner's behavior in block $n$: 
\begin{definition}
    Define $E_n$ as the index of the last round in block $n$. Since the block might terminate earlier than planned, we have $E_n \leq t_n+2^n -1$. 
    Let $\ell\in[K]$ be such that $E_n\in\calI_\ell$ (that is, $\ell$ is the index of the interval where block $n$ ends).  Define $e_i'=\min\{e_i, E_n\}$ and $\calI_i'=[s_i, e_i']$ (therefore, $\calI_i'=\emptyset$ for $i>\ell$). 
\end{definition}

Recall the definition of $\nmax$ and $\multiavg(t)$ from \pref{lemma: multi-scale reg} (or \pref{lemma: ucrl aggregated regret}).
For simplicity, we define $\gap_m\triangleq \avgreg(2^m)$, $\multigap_m\triangleq \multiavg(2^m)$, and also $\multireg(t) \triangleq t\multiavg(t)$.  
Furthermore, we define the following technical quantities. 

\begin{definition}
For every $i\in\{1,\ldots,K\}$, and every $m\in\left\{0, 1, \ldots, n\right\}$, define 
\begin{align*}
     \tau_i(m) = \min\left\{\tau\in \calI_i': ~ f_\tau^\star - \tildeg_\tau \geq 12\multigap_m \right\};
\end{align*}
that is, $\tau_i(m)$ is the first time $\tau$ in $\calI_i'=\calI_i\cap [t_n, E_n]$ such that $f_\tau^\star-\tildeg_\tau$ exceeds $12\multigap_m$. If such $\tau$ does not exist or $\calI_i'$ is empty, we let $\tau_i(m)=\infty$.

Besides, we define $\xi_i(m)=[e_i'-\tau_i(m)+1]_+$ where $[a]_+ = \max\{0, a\}$ (which is the length of the interval $[\tau_i(m), e_i']$ when $\tau_i(m)$ is not $\infty$).  

\end{definition}
The intuition for $\tau_i(m)$ and $\xi_i(m)$ is as follows. Suppose that block $n$ has not ended at $\tau$.
If there exists some $\tau\in\calI_i$ such that $f_\tau^\star - \tildeg_\tau \geq 12\multigap_m$ (which first happens at $\tau_i(m)$), and if $\calI_i$ is long enough (i.e., $\xi_i(m)$ is large enough) so that after $\tau_i(m)$, an order-$m$ instance of \alg can  run entirely within $\calI_i$, then the learner is able to discover the fact that $f_\tau^\star - \tildeg_\tau$ is large, and  then restart. This coincides with our explanation in \pref{fig: detection}. The derivation in this section will formalize this intuition.

\begin{lemma}\label{lemma: block regret hard term}
    Let the high-probability events described in \pref{lemma: multi-scale reg} (or \pref{lemma: ucrl aggregated regret}) hold. Then with high probability, 
    \begin{align*}
        \sum_{\tau=t_n}^{E_n}\left(\tildeg_\tau-R_\tau\right) &\leq 4\multireg(2^n), \\ 
        \sum_{\tau =t_n}^{E_n} \left(f_\tau^\star - \tildeg_\tau \right) &\leq 96\nmax\sum_{i=1}^{\ell} \multireg(|\calI_i'|) + 60\sum_{m=0}^n \frac{\gap_m}{\gap_n}\multireg(2^m)\log(T/\delta)
    \end{align*}
    (notations are defined at the beginning of this section).
\end{lemma}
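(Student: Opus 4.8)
The plan is to prove the two displayed inequalities separately. The bound $\sum_{\tau=t_n}^{E_n}(\tildeg_\tau - R_\tau)\le 4\multireg(2^n)$ is immediate from the fact that the block runs until $E_n$: \testtwo does not fail at any round $\tau<E_n$, so $\sum_{\tau=t_n}^{E_n-1}(\tildeg_\tau-R_\tau)<3(E_n-t_n)\multiavg(E_n-t_n)=3\multireg(E_n-t_n)\le 3\multireg(2^n)$ by monotonicity of $\multireg(t)=6\nmax\log(T/\delta)\regbound(t)$, and adding the single last term $\tildeg_{E_n}-R_{E_n}\le 1\le\multireg(1)\le\multireg(2^n)$ (using $\avgreg(\cdot)\ge 1/\sqrt{\cdot}$, so $\multiavg(1)\ge 1$) gives $4\multireg(2^n)$.

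For the second bound, the heart of the argument is a \emph{detection claim}: if any order-$m'$ instance $\inst$ with $m'\ge m$ satisfies $[\inst.s,\inst.e]\subseteq[\tau_i(m),e_i']$, then \testone returns \emph{fail} at round $\inst.e$. Indeed, $[\inst.s,\inst.e]$ lies inside the partition cell $\calI_i$, so $\dev_{[\inst.s,\inst.e]}\le\dev_{\calI_i}\le\avgreg(|\calI_i|)\le\avgreg(2^{m'})$; \pref{lemma: multi-scale reg} applied to $\inst$ then yields $\frac1{2^{m'}}\sum_{\tau=\inst.s}^{\inst.e}(\tildeg_\tau-R_\tau)\le\multiavg(2^{m'})+\nmax\dev_{[\inst.s,\inst.e]}\le 2\multigap_{m'}$, and for every $\tau\in[\inst.s,\inst.e]$ it gives $\tildeg_\tau\ge\min_{s\in[\inst.s,\tau]}f^\star_s-\dev_{[\inst.s,\tau]}\ge f^\star_{\tau_i(m)}-2\dev_{\calI_i}\ge f^\star_{\tau_i(m)}-\multigap_{m'}$ (using that $f^\star$ varies by at most $\dev_{\calI_i}$ over $\calI_i$ and $2\avgreg(2^{m'})\le\multigap_{m'}$). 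Hence $\frac1{2^{m'}}\sum_{\tau=\inst.s}^{\inst.e}R_\tau\ge f^\star_{\tau_i(m)}-3\multigap_{m'}$, while $U_{\inst.e}\le\tildeg_{\tau_i(m)}\le f^\star_{\tau_i(m)}-12\multigap_m\le f^\star_{\tau_i(m)}-12\multigap_{m'}$, so $\frac1{2^{m'}}\sum_{\tau=\inst.s}^{\inst.e}R_\tau\ge U_{\inst.e}+9\multigap_{m'}$ and \testone fires. Since the block actually survives until $E_n$, it follows that for every cell with $e_i'<E_n$ — i.e.\ every $i<\ell$ — no such instance can exist inside $[\tau_i(m),e_i']$, and for $i=\ell$ at most one can (and it must end exactly at $E_n$).

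Next I would decompose $\term_1=\sum_{\tau=t_n}^{E_n}(f^\star_\tau-\tildeg_\tau)$ by scale. Because $\avgreg(1)\ge 1$ forces $12\multigap_0\ge 72\nmax\log(T/\delta)>1\ge f^\star_\tau-\tildeg_\tau$, a telescoping over the thresholds $12\multigap_0>12\multigap_1>\cdots>12\multigap_n$ gives $f^\star_\tau-\tildeg_\tau\le 12\multigap_n+\sum_{m=1}^n 12\multigap_{m-1}\,\one[f^\star_\tau-\tildeg_\tau\ge 12\multigap_m]$; summing over $\tau\in\calI_i'$ and using $\sum_{\tau\in\calI_i'}\one[f^\star_\tau-\tildeg_\tau\ge 12\multigap_m]\le\xi_i(m)$ (every such $\tau$ is $\ge\tau_i(m)$) yields $\term_1\le 12\multigap_n2^n+\sum_{i=1}^{\ell}\sum_{m=1}^n 12\multigap_{m-1}\xi_i(m)$, and the base term is harmless since $12\multigap_n2^n=12\multireg(2^n)\le 12\sum_i\multireg(|\calI_i'|)$ by subadditivity of $\regbound$. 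It then remains to bound the $\xi_i(m)$'s via the randomness of \pref{proc: alg profile}: each of the $2^{n-m}$ order-$m$ dyadic sub-blocks of the block independently carries a scheduled order-$m$ instance with probability $\gap_n/\gap_m$, and by the detection claim a scheduled such sub-block cannot lie entirely inside a danger zone $[\tau_i(m),e_i']$ with $i<\ell$; hence $[\tau_i(m),e_i']$ must be contained in a run of consecutive unscheduled order-$m$ sub-blocks, whose longest run over the whole block is $\otil(\gap_m/\gap_n)$ with high probability (union bound over $m$ and over starting positions), giving $\xi_i(m)=\otil((\gap_m/\gap_n)2^m)$ for all $i,m$, with the case $i=\ell$ handled by discarding the single possible trailing instance.

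Plugging these bounds into the decomposition and using $\multireg(2^m)=2^m\multigap_m$, the (near-)geometric behavior of $\gap_m=\avgreg(2^m)$, the identity $(\gap_m/\gap_n)2^m=\regbound(2^m)/\gap_n$, $\multireg(2^n)\le\sum_i\multireg(|\calI_i'|)$, Cauchy–Schwarz, and subadditivity of $\regbound$, should collapse everything into $\otil(\nmax\sum_i\multireg(|\calI_i'|))$ plus $\otil(\sum_m\tfrac{\gap_m}{\gap_n}\multireg(2^m)\log(T/\delta))$, matching the stated constants. The step I expect to be the main obstacle is exactly this last aggregation: the per-$(i,m)$ bound $\xi_i(m)=\otil((\gap_m/\gap_n)2^m)$ alone is too lossy once summed over the potentially many cells $\calI_i'$, so one additionally needs the structural fact that $f^\star_\tau-\tildeg_\tau$ can exceed a threshold $12\multigap_m$ inside a cell $\calI_i$ with $|\calI_i|\ge 2^m$ only during the short initial "burn-in" segment in which the active instance is one that was started in an earlier cell — this is what prevents a spurious factor $\ell$ and routes the residual overhead into the single per-scale term $\sum_m\tfrac{\gap_m}{\gap_n}\multireg(2^m)\log(T/\delta)$. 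All inequalities above hold on the intersection of the high-probability events of \pref{lemma: multi-scale reg} and the scheduling-concentration event, which still has probability $1-\otil(\delta)$.
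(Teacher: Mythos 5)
Your first inequality, the threshold decomposition of $\sum_\tau(f_\tau^\star-\tildeg_\tau)$ via $\xi_i(m)$, and the detection claim (an order-$m'$ instance contained in $[\tau_i(m),e_i']$ forces \testone to fail at its end, hence cannot exist before $E_n$) all match the paper's argument and are sound. The genuine gap is exactly where you flag it: how the $\xi_i(m)$'s are aggregated over the cells $i$. Your plan bounds each $\xi_i(m)$ separately by the longest run of consecutive unscheduled order-$m$ sub-blocks, which is $\otil(2^m\gap_m/\gap_n)$; summed over the up to $\ell$ cells this carries a spurious factor $\ell$ on the per-scale overhead, which is incompatible with the claimed bound $60\sum_m\frac{\gap_m}{\gap_n}\multireg(2^m)\log(T/\delta)$. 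The paper (its \pref{lemma: key term reg bound}) removes the $\ell$ factor by a \emph{joint} argument: it splits $\xi_i(m)=\min\{\xi_i(m),4\cdot 2^m\}+[\xi_i(m)-4\cdot2^m]_+$, charges the capped part to $4\multireg(|\calI_i'|)$ per cell (this is where the $\sum_i\multireg(|\calI_i'|)$ term and the factor $\nmax$ from summing over $m$ come from), and bounds $\sum_{i=1}^\ell[\xi_i(m)-4\cdot2^m]_+$ by counting, across \emph{all} cells in temporal order, the candidate start points of order-$m$ instances that fall in the danger zones $[\tau_i(m),e_i'-2\cdot2^m]$: each such point is scheduled independently with probability $\gap_n/\gap_m$, and by the detection claim a scheduled one would end the block strictly before $E_n$, a contradiction; hence the total number of such candidate points is dominated by the waiting time to the first success of Bernoulli$(\gap_n/\gap_m)$ trials, i.e.\ at most $\frac{2\gap_m}{\gap_n}\log(T/\delta)$ with high probability, giving a single per-scale term with no dependence on $\ell$.

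Your proposed rescue — that $f_\tau^\star-\tildeg_\tau\ge 12\multigap_m$ can occur only in a short initial ``burn-in'' segment of $\calI_i$ during which the active instance predates the cell — does not repair this. While it is true (on the high-probability event) that a large gap at $\tau\in\calI_i$ forces the active instance at $\tau$ to have started before $s_i$, such old instances are \emph{paused and resumed} under the multi-scale schedule, so the rounds of $\calI_i$ on which they are active are not an initial segment and can recur throughout the cell wherever no newer short instance is scheduled; and even granting some version of the structural fact, you give no derivation showing it eliminates the $\ell$ factor, so the key step of the lemma is missing. Two smaller slips: absorbing the base term $12\multigap_n2^n=12\multireg(2^n)$ into $\sum_i\multireg(|\calI_i'|)$ via subadditivity fails when the block ends early (then $\sum_i|\calI_i'|\ll 2^n$); the paper instead absorbs it into the $m=n$ summand of the second term (which is why the constant there is $60$ rather than $48$). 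Also, in your first inequality more than one instance may end exactly at $E_n$, though that does not affect the bound.
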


\begin{proof}
     $\sum_{\tau=t_n}^{E_n}\left(\tildeg_\tau-R_\tau\right)$ is trivially upper bounded by $3\multireg(E_n-t_n+1) + 1\leq 4\multireg(2^n)$ because it is guarded by \testtwo. Below we focus on the second claim. 

     Note that we can write for all $i=1,\ldots, K$, 
\begin{align*}
    &\sum_{\tau\in\calI_i'} \left(f_\tau^\star - \tildeg_\tau\right) \\
    &\leq 12 \sum_{\tau\in\calI_i'} \left( \one
    \Big[  f_\tau^\star - \tildeg_\tau \leq 12\multigap_n \Big] \multigap_n 
     + \sum_{m=1}^{n}  \one\Big[ 12\multigap_{m} < f_\tau^\star - \tildeg_\tau \leq 12\multigap_{m-1} \Big] \multigap_{m-1} 
      +\one\Big[ f_\tau^\star - \tildeg_\tau > 12\multigap_0 \Big]1 \right)\\
    &\leq 12\left(|\calI_i'| \multigap_n + \sum_{m=1}^{n} \multigap_{m-1} \xi_i(m) + \avgreg(1)\xi_i(0)\right)   \tag{$\avgreg(1)\geq 1$ by \pref{assum:assump2}} \\
    &\leq 12|\calI_i'| \multigap_n + 24\sum_{m=0}^{n} \multigap_m \xi_i(m) \tag{$\multigap_m=\frac{\multireg(2^m)}{2^m}\leq \frac{\multireg(2^{m+1})}{2^m} = 2\multigap_{m+1}$}
\end{align*} 
where in the second-to-last inequality we use $\sum_{\tau\in\calI_i'} \one\big[f_\tau^\star - \tildeg_\tau \geq 12\multigap_m\big] = \sum_{\tau\in[\tau_i(m), e_i']} \one\big[f_\tau^\star - \tildeg_\tau \geq 12\multigap_m\big] \leq % [e_i' - \tau_i(m) +1]_+=
 \xi_i(m)$ by the definition of $\tau_i(m)$. 

Summing the above over intervals $i$ and notice that $\sum_{i=1}^\ell |\calI_i'|\leq 2^n$, we get 
\begin{align}
    \sum_{\tau=t_n}^{E_n}\left(f_\tau^\star - \tildeg_\tau\right) 
    &\leq 12\cdot 2^n\multigap_n + 24\sum_{m=0}^n  \sum_{i=1}^\ell \multigap_m\xi_i(m) = 12\multireg(2^n) + 24\sum_{m=0}^n  \sum_{i=1}^\ell \multigap_m\xi_i(m).   \label{eq: tmptmp}
\end{align} 
Next, we upper bound $ \sum_{i=1}^\ell \multigap_m\xi_i(m)$ for each $m$. 
\begin{align}
      \sum_{i=1}^\ell \multigap_m\xi_i(m)
     &= \sum_{i=1}^\ell \multigap_m \min\left\{\xi_i(m), 4\cdot 2^m\right\} + \sum_{i=1}^\ell \multigap_m \left[\xi_i(m)-4\cdot 2^m\right]_+.     \label{eq: bound two individual term}\\
     & \tag{using $a= \min\{a,b\} + [a-b]_+$}
\end{align}
The first term on the right-hand side of \pref{eq: bound two individual term} can be bounded as below: 
\begin{align*}
     \sum_{i=1}^\ell  \multigap_m \min\left\{\xi_i(m), 4\cdot 2^m\right\} 
     &\leq 4\sum_{i=1}^\ell  \multiavg(2^m) \times \min\left\{ \xi_i(m),  2^m\right\} \\
     &\leq 4\sum_{i=1}^\ell  \multiavg(\min\{\xi_i(m), 2^m\}) \times \min\left\{ \xi_i(m),  2^m\right\}  \tag{$\multiavg(\cdot)$ is a decreasing function} \\ 
     &= 4\sum_{i=1}^{\ell} \multireg(\min\{\xi_i(m), 2^m\}) \\
     &\leq 4\sum_{i=1}^{\ell} \multireg(|\calI_i'|).    \tag{$\multireg(\cdot)$ is an increasing function} 
     %&\leq 4(m+1)\log(1/\delta)\sum_{i=1}^{\ell} \regbound(\xi_i(m)).
     %&\leq 4\sum_{i=1}^{\ell-1}  \multireg_{k_i} + 4\multireg_{k_{\ell}'}. 
\end{align*}
The second term on the right-hand side of \pref{eq: bound two individual term} is bounded using \pref{lemma: key term reg bound} below. Combining them into \pref{eq: tmptmp} finishes the proof. 
\end{proof}

\begin{lemma}
     \label{lemma: key term reg bound}
     Let the high probability events described in \pref{lemma: multi-scale reg} (or \pref{lemma: ucrl aggregated regret}) hold. Then with high probability, 
     \begin{align*}
         \sum_{i=1}^\ell \multigap_m \left[\xi_i(m) -4\cdot 2^m\right]_+ \leq \frac{2\gap_m}{\gap_n}\multireg(2^m)\log(T/\delta).  
     \end{align*}
\end{lemma}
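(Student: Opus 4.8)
The plan is to reduce the claimed inequality to a counting statement about the randomly scheduled order-$m$ instances, prove a deterministic ``detection'' fact, and then convert it into a high-probability count via a filtration/martingale argument.

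\textbf{Reduction to counting grid cells.} Write $p \triangleq \gap_n/\gap_m = \avgreg(2^n)/\avgreg(2^m) \le 1$, which is exactly the probability with which \pref{proc: alg profile} schedules an order-$m$ instance on each of the $2^{n-m}$ aligned sub-intervals (``grid cells'') of length $2^m$ in the block. Fix $i\le\ell$ with $\xi_i(m) > 4\cdot 2^m$ (otherwise the $i$-th summand is $0$), and let $\calG_i(m)$ be the set of order-$m$ grid cells fully contained in $[\tau_i(m)+2^m,\,e_i']$. A direct counting argument (an interval of length $\xi_i(m)$ contains at least $\xi_i(m)/2^m-2$ full grid cells, and we drop one more at the front) gives $2^m|\calG_i(m)|\ge[\xi_i(m)-4\cdot 2^m]_+$. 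Since $\multireg(2^m)=2^m\multigap_m$ and the $\calI_i$'s are disjoint, it suffices to show that, with high probability, $\sum_{i}|\calG_i(m)|\le \tfrac{2}{p}\log(T/\delta)$, up to absolute constants which I track at the end.

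\textbf{Deterministic detection fact.} I will show: if a grid cell $G$ with $G\subseteq\calI_i$ and $\tau_i(m)<G.s$ actually carries an order-$m$ instance $\inst=G$ and the block has not restarted before time $G.e$, then \testone returns \emph{fail} at $G.e$. Indeed, $G\subseteq\calI_i$ forces $|\calI_i|\ge 2^m$, so $\dev_{\calI_i}\le\avgreg(|\calI_i|)\le\avgreg(2^m)=\gap_m$ by monotonicity and \pref{eq: i smaller than alpha}; hence \pref{lemma: multi-scale reg} applies to $\inst$ at $t=G.e$ (with $t'=2^m$), yielding $\tfrac{1}{2^m}\sum_{\tau\in G}(\tildeg_\tau-R_\tau)\le\multigap_m+\nmax\dev_{\calI_i}$ and $\tildeg_\tau\ge\min_{\tau'\in G}f^\star_{\tau'}-\dev_{\calI_i}$. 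Using that $f^\star$ changes by at most $\dev(\cdot)$ per round, $\min_{\tau'\in G}f^\star_{\tau'}\ge f^\star_{\tau_i(m)}-\dev_{\calI_i}\ge\tildeg_{\tau_i(m)}+12\multigap_m-\dev_{\calI_i}$ by the definition of $\tau_i(m)$, and $\tildeg_{\tau_i(m)}\ge U_{G.e}$ since $t_n\le\tau_i(m)<G.e$. Combining these, bounding $\dev_{\calI_i}\le\gap_m$ and using $\multigap_m\ge(\nmax+2)\gap_m$ (true since $\multiavg=\Theta(\nmax\log(T/\delta)\avgreg)$ and $\log(T/\delta)\ge1$), we obtain $\tfrac{1}{2^m}\sum_{\tau\in G}R_\tau\ge U_{G.e}+9\multigap_m$, which is exactly the firing condition of \testone at $G.e$. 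In particular such a $G$ forces a restart at or before $G.e$.

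\textbf{Turning detection into a count.} Enumerate the order-$m$ grid cells $G_1,\dots,G_N$ left to right ($N=2^{n-m}$), let $A_j=\one[G_j\text{ carries an order-}m\text{ instance}]$, and let $\mathcal F_j$ be the $\sigma$-field of all randomness (scheduling coins of all orders, environment noise) strictly before time $G_j.s$; then $A_j\sim\mathrm{Bernoulli}(p)$ is independent of $\mathcal F_j$. Define the $\mathcal F_j$-measurable indicator $V_j$ to be $1$ iff the block has not restarted before $G_j.s$, $G_j\subseteq\calI_i$ for some $i$, and $\tau_i(m)\le G_j.s-1$. Then (i) $G_j\in\calG(m):=\bigcup_i\calG_i(m)\Rightarrow V_j=1$ (the front cell was discarded, so $G_j.s\ge\tau_i(m)+2^m$, and $G_j.e\le e_i'\le E_n$ certifies the block is still alive at $G_j.s$), hence $\sum_i|\calG_i(m)|\le\sum_j V_j$; and (ii) $\sum_j V_jA_j\le 1$: if $V_j=A_j=1$ then either the block restarts during $[G_j.s,G_j.e-1]$, or (block still alive at $G_j.e$) the detection fact applies with this $G_j$ and $\calI_i$, so \testone fires at $G_j.e$ — in both cases a restart occurs by $G_j.e$, and two such indices $j_1<j_2$ would force $V_{j_2}=0$. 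Now $\{V_j(A_j-p)\}_j$ is a martingale difference sequence w.r.t.\ $\{\mathcal F_{j+1}\}_j$, bounded by $1$, with conditional variance at most $V_jp$, so Freedman's inequality gives, w.h.p., $p\sum_j V_j\le\sum_j V_jA_j+\order\!\big(\sqrt{p(\textstyle\sum_j V_j)\log(T/\delta)}+\log(T/\delta)\big)\le 1+\order\!\big(\sqrt{p(\textstyle\sum_j V_j)\log(T/\delta)}\big)+\order(\log(T/\delta))$; solving this quadratic in $\sum_j V_j$ yields $\sum_j V_j=\order(\log(T/\delta)/p)$. Combining with (i) and the geometric bound, and tracking the constants (absorbed in the $4\cdot 2^m$ buffer and the ``$\le1$'' slack), gives $\sum_i\multigap_m[\xi_i(m)-4\cdot 2^m]_+\le\tfrac{2\gap_m}{\gap_n}\multireg(2^m)\log(T/\delta)$.

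\textbf{Main obstacle.} The delicate point is that the ``interesting region'' $[\tau_i(m),e_i']$ is itself a function of the random scheduling and of the environment noise, so one cannot union-bound over all realizations of $\calG(m)$; the fix is the filtration above, which exposes $V_j$ only through the past while keeping $A_j$ fresh, turning the bad event ``many grid cells lie in interesting regions yet almost none were scheduled'' into a one-sided deviation of a bounded martingale-difference sum. A secondary nuisance is the handful of boundary grid cells (the one beginning exactly at $\tau_i(m)$, and the one containing $E_n$), which is precisely why the statement carries the buffer $4\cdot 2^m$ rather than something smaller and why $\sum_j V_jA_j$ is bounded by $1$ rather than $0$.
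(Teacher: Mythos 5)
Your proposal is correct and, at its core, takes the same route as the paper's proof: the same reduction of $\sum_i \multigap_m[\xi_i(m)-4\cdot 2^m]_+$ to counting order-$m$ positions inside the random ``interesting region'' following $\tau_i(m)$, and the same key detection step showing that a scheduled order-$m$ instance contained in some $\calI_i$ and starting after $\tau_i(m)$ forces \testone to fire at its end if the block is still alive (your arithmetic down to $U_{G.e}+9\multigap_m$ checks out, and your one-step bound $\min_{\tau'\in G}f^\star_{\tau'}\ge f^\star_{\tau_i(m)}-\dev_{\calI_i}$ is in fact marginally cleaner than the paper's $2\dev_{\calI_i}$). Where you genuinely differ is the last probabilistic step. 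The paper notes that, conditioned on the relevant past, each candidate starting point in the interesting region spawns an order-$m$ instance with probability $\gap_n/\gap_m$, and since one ``success'' forces a restart, the number of eligible positions is dominated by the number of Bernoulli$(\gap_n/\gap_m)$ trials up to the first success, whose tail gives $1+\log(T/\delta)/(-\log(1-\gap_n/\gap_m))\le \frac{2\gap_m}{\gap_n}\log(T/\delta)$ directly. You instead set up an explicit filtration, prove $\sum_j V_jA_j\le 1$, and apply Freedman's inequality to the martingale differences $V_j(A_j-p)$; this buys a fully explicit treatment of the measurability issue (the interesting region is random but predictable) that the paper handles only informally, which is a real merit of your write-up. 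The one quibble is the constant: Freedman yields $p\sum_j V_j\le c\,\log(T/\delta)$ with $c$ strictly larger than $2$ (solving $x\le 1+\sqrt{2xL}+L$ gives roughly $x\le(2+\sqrt{3})L$), and the $4\cdot 2^m$ buffer only provides additive slack of order $2^m$ per interval, not a multiplicative factor, so your claim that constant-tracking recovers the stated factor $2$ does not literally go through. This is harmless downstream (every use of the lemma hides constants in $\otil$), but to get the lemma exactly as stated you should replace Freedman by the geometric first-success argument, which your setup ($V_j$ predictable, $A_j$ independent of $\mathcal{F}_j$, and $\sum_j V_jA_j\le 1$) already supports verbatim.
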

\begin{proof}
     Using the fact that $[[a]_+ - b]_+ = [a-b]_+$ when $b\geq 0$, we have 
     \begin{align}
         [\xi_i(m) - 4\cdot 2^m]_+=\left[e_i'-\tau_i(m) +1 -4\cdot 2^m  \right]_+. \label{eq: simple relation 1}   
     \end{align}
     
     Next, we consider the following quantity: ``the number of rounds in the interval $[\tau_i(m), e_i'-2\cdot 2^m B]$ which are candidate starting points of an order-$m$ \alg''. By \pref{proc: alg profile}, this quantity can be written and lower bounded as 
     \begin{align*}
          A_i\triangleq \sum_{t\in\calI_i} \one\Big[ t \in [\tau_i(m),~ e_i'- 2\cdot 2^m], \quad  (t-t_n) \text{\ mod\ } 2^m = 0\Big] \geq  \frac{\left[e_i'-\tau_i(m) +1 - 4\cdot 2^m\right]_+}{2^m}
     \end{align*}
     where we use the fact in an interval of length $w$, there are at least $\frac{w+2-2u}{u}$ points whose indices are multiples of $u$. 
     Notice that the right-hand side is related to what we want to upper bound in the lemma according to \pref{eq: simple relation 1}. Thus we continue to upper bound the left-hand side above. We define the following events: 
     \begin{align*}
          W_{t} &= \left\{\tau_i(m) \leq t \leq  e_i-2\cdot 2^m \text{\ where\ }i\text{\ is such that\ } t\in\calI_i \right\}, \\
          X_t&= \left\{ t\leq E_n-2\cdot 2^m \right\}, \\
          Y_t&=\left\{ t\leq E_n \text{\ and\ } (t-t_n)\text{\ mod\ }2^m=0 \right\}, \\
          Z_t&= \left\{ \exists \text{\ order-$m$\ } \inst \text{\ such that\ } \inst.s=t \right\}, \\
          V_t&= \left\{ \exists \tau\in[t_n, t] \text{\ such that\ } W_{\tau}\cap Y_{\tau}\cap Z_{\tau} \right\}.
     \end{align*}
     Then we can write (recall the definition of $K$ in the beginning of this section)
     \begin{align*}
         \sum_{i=1}^\ell A_i  = \sum_{i=1}^K A_i 
         &= \sum_{t=t_n}^{t_n+2^n-1} \one[W_t, X_t, Y_t] \leq \underbrace{\sum_{t=t_n}^{t_n+2^n-1} \one[W_t, Y_t, \overline{V_t}]}_{\term_3} + \underbrace{\sum_{t=t_n}^{t_n+2^n-1}\one[X_t, V_t]}_{\term_4}  
     \end{align*}
     For $\term_3$, notice that conditioned on $W_t\cap Y_t$, the event $Z_t$ happens with a constant probability $\frac{\gap_n}{\gap_m}$ (by \pref{proc: alg profile}). Therefore, $\term_3$ counts the number of trials up to the first success in a repeated trial with success probabiliy $\frac{\gap_n}{\gap_m}$. Therefore, with probability $1-\frac{\delta}{T}$, $\term_3\leq 1+ \frac{\log(T/\delta)}{-\log\left(1-\frac{\gap_n}{\gap_m}\right)}  \leq \frac{2\gap_m}{\gap_n}\log(T/\delta)$. 
     
     Next, we deal with $\term_4$. Below we show that $\term_4=0$. The event $V_t$ implies that there exists some order-$m$ \inst which starts at $\inst.s=t^\star$, where $t^\star\leq t$ and $\tau_i(m) \leq t^\star\leq e_i-2\cdot 2^m $. Therefore, we have $\inst.e=\inst.s+2^{m}-1= t^\star + 2^m-1 \leq e_i-2^m-1 < e_i$, and thus $[\inst.s, \inst.e]\subseteq \calI_i$. Together with $X_t$, the event $V_t\cap X_t$ implies that $\inst.e = \inst.s + 2^m-1 \leq t+2^m -1 < E_n$, and therefore, and time $\inst.e$, block $n$ has not ended. 
     
     Since at time $\inst.e$, block $n$ is still on-going, the learner performs \testone. By \pref{lemma: multi-scale reg} (or \pref{lemma: ucrl aggregated regret} for the infinite-horizon RL case), with high probability, we have
%\end{proof}
%\begin{lemma}
%    \label{lemma: contradiction lemma}
%    For all $i\in [L]$ and all $m\in\{0,1,\ldots, n\}$, there does not exists any order-$m$ $\inst$ with $\inst.s \in [\tau_i(m), e_i' - 2^m B]$. %Also, if $\ell\in\calG$, then for all $m$, there does not exists any order-$m$ $\inst$ with $\inst.s \in [\tau_\ell(m), E_n - 2^m B]$. 
%\end{lemma}
%\begin{proof}
%    We use contradiction to prove this. Suppose that there exists $i\in [L]$ and $m$ such that there is an order-$m$ $\inst$ with $\inst.s \in [\tau_i(m), e_i'-2^m B]$. This implies $[\inst.s, \inst.e] \subseteq [\tau_i(m), e_i' -1] \subseteq \calI_i$ because $\inst.e=\inst.s+2^mB-1$.  
%    At time $t=\inst.e$, the learner performs \testone. By Lemma~\ref{lemma: multi-scale reg}, 
    \begin{align*}
        \frac{1}{2^m}\sum_{\tau=\inst.s}^{\inst.e} R_\tau 
        &\geq \frac{1}{2^m } \sum_{\tau=\inst.s}^{\inst.e} \tildeg_\tau - \multigap_m - \nmax \dev_{[\inst.s, \inst.e]}  \tag{\pref{lemma: multi-scale reg} or \pref{lemma: ucrl aggregated regret}}\\
        &\geq  \min_{\tau\in \calI_i } f_\tau^\star - \multigap_m - (\nmax+1) \dev_{\calI_i} \tag{because $[\inst.s, \inst.e]\subseteq \calI_i$} \\
        %&\geq  \sum_{\tau=\inst.s}^{\inst.e} \left( f^*_{\tau_i(m)} \right) -\multireg_{m,n} - (m+2)2^m B \Delta_{\calI_i} \\
        &\geq  f^\star_{\tau_i(m)}  - \multigap_m - (\nmax+3) \dev_{\calI_i}  \tag{$| \min_{\tau\in \calI_i } f_\tau^\star - f^\star_{\tau_i(m)}| \leq 2\dev_{\calI_i}$} \\
        &\geq  \tildeg_{\tau_i(m)} +12\multigap_m   -  2\multigap_m  \tag{by the definition of $\tau_i(m)$ and $\dev_{\calI_i}\leq \avgreg(|\calI_i|) \leq \avgreg(2^m)\leq \frac{\multigap_m}{6\nmax}$} \\
        &\geq  U_{\inst.e} + 10\multigap_m  \tag{Because $\inst.e\geq \tau_i(m)$, $U_{\inst.e}\leq \tildeg_{\tau_i(m)}$ by the algorithm}
    \end{align*}
    This should trigger the restart at time $\inst.e < E_n$, contradicting the definition of $E_n$. Therefore, $\one[X_t, V_t]=0$. 
    
    Finally, combining all previous arguments, we have that with high probability, 
    \begin{align*}
         \sum_{i=1}^\ell \multigap_m \left[\xi_i(m)-4\cdot 2^m\right]_+  &= \sum_{i=1}^\ell \multigap_m \left[e_i'-\tau_i(m)+1-4\cdot 2^m\right]_+  \leq \multigap_m  2^m\sum_{i=1}^\ell A_i \\
         &= \multireg(2^m) \sum_{i=1}^\ell A_i \leq \frac{2\gap_m}{\gap_n}\multireg(2^m)\log(T/\delta),
    \end{align*}
    finishing the proof.
\end{proof}

\section{Single-block Regret Analysis II (under a Special Form of $\regbound(\cdot)$)}
\label{app: epoch regret}
In \pref{app: key proofs}, we have derived the regret bound in a single block for both the standard setting and the infinite-horizon MDP setting (\pref{lemma: bound term1} and \pref{lemma: block lemma for RL}). They are both of the form 
\begin{align}
     \sum_{\tau\in\calJ}(f_\tau^\star - R_\tau) = \otil\left( \sum_{i=1}^\ell \regbound(|\calI_i'|) + \sum_{m=0}^n \frac{\avgreg(2^n)}{\avgreg(2^m)}\regbound(2^m) \right).    \label{eq: block regret restate}
\end{align}
(replacing $\regbound(\cdot)$ and $\avgreg(\cdot)$ by $\regbound_{\ucrl}(\cdot; \dupp)$ and $\avgreg(\cdot; \dupp)$ for the case of infinite-horizon MDP). 

In this section, we further derive more concrete dynamic regret bounds for both cases by assuming that $\regbound(\cdot)$ is of some specific form. The form of $\regbound(\cdot)$ we consider in this section is defined as follows: 

\begin{definition}
\label{def: standard}
     We define a form of $\regbound(t)$ as $\regbound(t)=\min\{c_1 t^p+c_2, c_3t\}$ for some $p\in[\frac{1}{2}, 1)$ and some $c_1$, $c_2$, $c_3$ ($c_3\geq 1$) that capture dependencies on $\log(T/\delta)$ and other problem-dependent constants. 
\end{definition}
In fact, usually, a regret bound is only written in the form of $c_1 t^p + c_2$. However, since the reward is bounded between $0$ and $1$, the regret bound of $\min\{c_1 t^p + c_2, t\}$ is also trivially correct. \pref{def: standard} is slightly more general than this by allowing a coefficient $c_3\geq 1$ (the regret bound would still be trivially correct). In some cases, we make $c_1, c_2, c_3$ larger than their tightest possible values to make the final regret bound better --- notice that the choice of $c_1, c_2, c_3$ affects the probability specified in \pref{proc: alg profile}, and thus smaller $c_1, c_2, c_3$ does not necessarily make the final regret bound smaller. This subtle issue can be observed from the analysis. 

%From \pref{lemma: block regret hard term}, we know that within a block (denoted as $\calJ=[t_n, E_n]$), the dynamic regret is of order 
%\begin{align}
%     \sum_{\tau\in\calJ}(f_\tau^\star - R_\tau) = \otil\left( \sum_{i=1}^\ell \regbound(|\calI_i'|) + \sum_{m=0}^n \frac{\gap_m}{\gap_n}\regbound(2^m) \right).    \label{eq: block regret restate}
%\end{align}

To get a concrete bound, we also need to decide the number $\ell$ in the single-block regret bound above. In \pref{app: key proofs}, we have stated the condition (i.e., \pref{eq: i smaller than alpha}) that should be satisfied by $\calI_1', \ldots, \calI_\ell'$ (or $\calI_1, \ldots, \calI_K$). In the next lemma, we upper bound the value of $\ell$ that is required to fulfill the condition. 

\begin{lemma}\label{lemma: interval divide}
     Let $\calJ=[t_n, E_n]$. Then we have $\ell \leq L_{\calJ}$. Furthermore, if $\regbound(t)$ is in the form specified in \pref{def: standard}, we also have $\ell\leq 1 + 2\left(c_1^{-1}\dev_{\calJ}|\calJ|^{1-p}\right)^{\frac{1}{2-p}} + c_3^{-1}\dev_\calJ$. 
\end{lemma}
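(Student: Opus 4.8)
The plan is to prove each of the two bounds by exhibiting an explicit \emph{admissible} partition $\calI_1,\dots,\calI_K$ of the planned block $[t_n,t_n+2^n-1]$ (admissible in the sense of \pref{eq: i smaller than alpha}, i.e.\ $\dev_{\calI_i}\le\avgreg(|\calI_i|)$ for all $i$), and then counting how many of its pieces can intersect $[t_n,E_n]$. Recall that $\avgreg$ is non-increasing, so once $\calI_i$ is admissible, so is its truncation $\calI_i'$; hence any such partition is automatically consistent with the way $\{\calI_i'\}$ is used downstream, and the only task is to keep $K$ (hence $\ell$) small.

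For $\ell\le L_\calJ$ I would take $\calI_1,\dots,\calI_K$ to be the maximal stationary runs of $[t_n,t_n+2^n-1]$, i.e.\ cut at every $\tau$ with $\dev(\tau)\neq0$. Each piece then has $\dev_{\calI_i}=0\le\avgreg(|\calI_i|)$, so the partition is admissible, and the pieces that meet $[t_n,E_n]$ are exactly the stationary runs of $\calJ$, whose number is $L_\calJ$ by \pref{def: deviation}.

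For the second bound I would instead use the \emph{greedy} partition: starting from $s_1=t_n$, let each $\calI_i=[s_i,e_i]$ be the longest prefix of $[s_i,t_n+2^n-1]$ with $\dev_{[s_i,e]}\le\avgreg(e-s_i+1)$, then restart at $s_{i+1}=e_i+1$. Admissibility is immediate. Write $w_i=|\calI_i|$, and use the form of \pref{def: standard}: since $\regbound(t)=t\avgreg(t)=\min\{c_1t^p+c_2,c_3t\}$ we have $\avgreg(t)=\min\{c_1t^{p-1}+c_2/t,\;c_3\}\ge\min\{c_1t^{p-1},c_3\}$. For $i<\ell$ the interval $\calI_i$ is maximal and contained in $[t_n,E_n-1]$, so extending it fails, which (writing $\delta_i:=\dev_{\calI_i}+\dev(e_i)=\sum_{\tau=s_i}^{e_i}\dev(\tau)$) gives $\delta_i>\avgreg(w_i+1)$. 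Because the ranges $[s_i,e_i]$, $i<\ell$, are disjoint and lie in $[t_n,E_n-1]$, they telescope: $\sum_{i<\ell}\delta_i\le\dev_\calJ$. Now split $\{1,\dots,\ell-1\}=A\cup B$, where $B$ is the set of indices with $\avgreg(w_i+1)=c_3$ (so $c_3<\delta_i$) and $A$ the rest (so $\avgreg(w_i+1)\ge c_1(w_i+1)^{p-1}$). From the telescoping budget, $|B|\le c_3^{-1}\dev_\calJ$ directly. On $A$, $c_1(w_i+1)^{p-1}<\delta_i$ gives $w_i+1>(c_1/\delta_i)^{1/(1-p)}$; summing over $A$, using $\sum_i w_i=|\calJ|$ (hence $\sum_{i\in A}(w_i+1)\le 2|\calJ|$) and Jensen's inequality for the convex map $x\mapsto x^{-1/(1-p)}$ together with $\sum_{i\in A}\delta_i\le\dev_\calJ$, I obtain
\[
  2|\calJ|\;>\;c_1^{1/(1-p)}\sum_{i\in A}\delta_i^{-1/(1-p)}\;\ge\;c_1^{1/(1-p)}\,|A|^{(2-p)/(1-p)}\,\dev_\calJ^{-1/(1-p)}.
\]
Rearranging yields $|A|<\big(2|\calJ|\,c_1^{-1/(1-p)}\dev_\calJ^{1/(1-p)}\big)^{(1-p)/(2-p)}\le 2\big(c_1^{-1}\dev_\calJ|\calJ|^{1-p}\big)^{1/(2-p)}$ (the constant $2$ survives since $2^{(1-p)/(2-p)}\le2$), and $\ell=|A|+|B|+1$ gives the claim.

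The routine parts are the telescoping bookkeeping and the two-regime split of $\avgreg$; the one genuinely delicate step is the Jensen/convexity estimate that converts the per-interval width lower bounds plus the global budget $\sum_{i\in A}\delta_i\le\dev_\calJ$ into a bound on $|A|$ with exponent $1/(2-p)$ — there one must watch the direction of monotonicity of $x\mapsto x^{-1/(1-p)}$ and track the constant carefully so as not to lose the factor $2$. Finally, the infinite-horizon statement of \pref{lemma: block lemma for RL} is covered by the identical argument with $\regbound(\cdot),\avgreg(\cdot),\dev$ replaced by $\regbound_{\ucrl}(\cdot;\dupp),\avgreg_{\ucrl}(\cdot;\dupp),\dev_{\cdot;\dupp}$, since $\avgreg_{\ucrl}(t;\dupp)$ from \pref{lem: modified UCRL} has exactly the shape required by \pref{def: standard}.
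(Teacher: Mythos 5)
Your proof is correct and follows essentially the same route as the paper: stationary runs give the $L_\calJ$ bound, and for the second bound you use the same greedy maximal partition, the same maximality inequality $\dev_{[s_i,e_i+1]}>\avgreg(w_i+1)$ telescoped against $\dev_\calJ$, the same split according to which branch of the min in $\avgreg$ binds, and a counting step that is equivalent to the paper's (you invoke Jensen for $x\mapsto x^{-1/(1-p)}$ where the paper uses H\"older, yielding the same $\frac{1}{2-p}$ exponent and the same factor $2$). The only nit is that $\sum_{i<\ell}w_i\le|\calJ|$ rather than $=|\calJ|$ (and $\calI_\ell$ may extend past $E_n$), but since you only use the upper bound, nothing breaks.
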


\begin{proof}
     The fact that $\ell\leq L_{\calJ}$ is straightforward to see (and has been explained in \pref{app: key proofs}):  to satisfy the condition \pref{eq: i smaller than alpha}, one way to divide the block is to make each $\calI_i$ a stationary interval, which makes $\dev_{\calI_i}=0$ for all $i\in[K]$. This way of division leads to $\ell \leq L_{\calJ}$. 
      
      For the second claim, we follow the same procedure as decribed in the proof of Lemma 5 in \citep{chen2019new}. Basically, the procedure divides $[t_n, t_n+2^n-1]$ in a greedy way, making all $\calI_i=[s_i, e_i]$ satisfy $\dev_{[s_i, e_i]} \leq \avgreg(e_i-s_i+1)$ and $\dev_{[s_i, e_i+1]} > \avgreg(e_i-s_i+2)$ for all $i\in[K-1]$ (i.e., except for the last interval). Then we have 
\begin{align*}
     \dev_{\calJ} &\geq \sum_{i=1}^{\ell-1} \dev_{[s_i, e_i+1]} \tag{by the definition of $\dev_{[\cdot,\cdot]}$} \\
     &> \sum_{i=1}^{\ell-1} \avgreg(e_i-s_i+2) \\
     &\geq \sum_{i=1}^{\ell-1} \min\left\{c_1 (e_i-s_i+2)^{p-1},  c_3\right\}    \tag{by \pref{def: standard}} \\
     &\geq \sum_{i=1}^{\ell-1} \min\left\{\frac{1}{2}c_1 (e_i-s_i+1)^{p-1}, c_3\right\} \tag{$(x+2)^{p-1} \geq \left(2(x+1)\right)^{p-1}\geq \frac{1}{2}(x+1)^{p-1}$ for any $x\geq 0$ and $p\leq 1$}  \\
     &= \frac{1}{2}\sum_{i=1}^{\ell_1} c_1 (e_i-s_i+1)^{p-1} + \sum_{i=1}^{\ell_2} c_3
\end{align*}
where in the last equality we separate the intervals where $\min\left\{\frac{1}{2}c_1 (e_i-s_i+1)^{p-1}, c_3\right\}$ takes the former or the latter value. Note that $\ell_1+\ell_2=\ell-1$. 

The above inequality implies that $\dev_{\calJ} $ upper bounds both $\frac{1}{2}\sum_{i=1}^{\ell_1} c_1 (e_i-s_i+1)^{p-1} $ and $\sum_{i=1}^{\ell_2} c_3 $. Thus, $\ell_2\leq c_3^{-1}\dev_{\calJ}$, and 
by H\"{o}lder's inequality, 
\begin{align*}
     \ell_1 &\leq \left(\sum_{i=1}^{\ell_1}  (e_i-s_i+1)^{p-1} \right)^{\frac{1}{2-p}} \left(\sum_{i=1}^{\ell_1} (e_i-s_i+1) \right)^{\frac{1-p}{2-p}} \leq \left(\frac{2\dev_{\calJ}}{c_1}\right)^{\frac{1}{2-p}}|\calJ|^{\frac{1-p}{2-p}}. 
\end{align*}
Combining them finishes the proof. 
\end{proof}

In the following \pref{lemma: single block regret form}, we bound the regret within a block by combining \pref{eq: block regret restate} and \pref{lemma: interval divide}. %Then in \pref{lemma: sum epoch regret}, we bound the regret within an epoch. 
We will frequently use the following two properties: let $\{\calS_1, \calS_2, \ldots, \calS_K\}$ be a partition of the interval $\calS$. Then
    \begin{align}
         \sum_{i=1}^K L_{\calS_i} &\leq L_{\calS} + (K-1),  \label{eq: L partition}  \\
        \sum_{i=1}^K \dev_{\calS_i} &\leq \dev_{\calS}.  \label{eq: dev partition}
    \end{align}  
    They can be derived using the definitions of $L_{[\cdot,\cdot]}$ and $\dev_{[\cdot,\cdot]}$.

\begin{lemma}
\label{lemma: single block regret form}
    If $\regbound(t)$ is of the form specified in \pref{def: standard}, then 
    \begin{align*}
         \sum_{\tau=t_n}^{E_n} (f_\tau^\star - R_\tau)  \leq  \otil\left(  \min\Big\{\Reg_L(\calJ), \Reg_{\dev}(\calJ)\Big\}+  c_12^{np} + \frac{c_2c_3}{c_1}2^{n(1-p)} + \frac{c_2^2}{c_3} \right), 
    \end{align*}
    where $\Reg_L(\calJ) \triangleq c_1 L_\calJ^{1-p} |\calJ|^{p} + c_2L_\calJ$ and
    \begin{align*}
        \Reg_{\dev}(\calJ) &\triangleq \left(c_1 \dev_{\calJ}^{1-p}|\calJ|\right)^{\frac{1}{2-p}}  + c_1|\calJ|^p + c_1(c_3^{-1}\dev_{\calJ})^{1-p} |\calJ|^{p} + c_2\left(c_1^{-1}\dev_{\calJ}|\calJ|^{1-p}\right)^{\frac{1}{2-p}} + c_2 + c_2c_3^{-1}\dev_\calJ.
    \end{align*}
\end{lemma}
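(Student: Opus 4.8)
The plan is to start from the single-block decomposition already established in \pref{lemma: bound term1} (and \pref{lemma: block lemma for RL} in the infinite-horizon case), i.e.\ from \pref{eq: block regret restate},
\[
\sum_{\tau\in\calJ}\bigl(f_\tau^\star - R_\tau\bigr) = \otil\Bigl(\sum_{i=1}^{\ell}\regbound(|\calI_i'|) + \sum_{m=0}^{n}\frac{\avgreg(2^m)}{\avgreg(2^n)}\,\regbound(2^m)\Bigr),
\]
which holds for \emph{any} partition $\{\calI_i'\}$ of $\calJ$ with $\dev_{\calI_i'}\le\avgreg(|\calI_i'|)$, and to substitute the explicit form $\regbound(t)=\min\{c_1 t^p+c_2,\,c_3 t\}$ from \pref{def: standard} together with the interval counts of \pref{lemma: interval divide}. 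Since the second ("overhead") sum does not depend on the chosen partition, it suffices to bound the first ("partition") sum for two convenient partitions and keep the smaller; the overhead sum then contributes the additive residual $\otil\bigl(c_1 2^{np}+\tfrac{c_2 c_3}{c_1}2^{n(1-p)}+\tfrac{c_2^2}{c_3}\bigr)$.

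For the partition sum, I would first take each $\calI_i'$ to be a maximal stationary interval, giving $\ell\le L_\calJ$ by the first half of \pref{lemma: interval divide}; then $\sum_i\regbound(|\calI_i'|)\le c_1\sum_i|\calI_i'|^p+c_2\ell\le c_1\ell^{1-p}|\calJ|^p+c_2\ell\le\Reg_L(\calJ)$ by Hölder (concavity of $x\mapsto x^p$) and $\sum_i|\calI_i'|=|\calJ|$. Alternatively, the greedy partition of \pref{lemma: interval divide} gives $\ell\le 1+2\bigl(c_1^{-1}\dev_\calJ|\calJ|^{1-p}\bigr)^{1/(2-p)}+c_3^{-1}\dev_\calJ$; substituting this into $c_1\ell^{1-p}|\calJ|^p+c_2\ell$, using subadditivity of $x\mapsto x^{1-p}$ (legitimate since $1-p\in(0,\tfrac12]$), and invoking the exponent identities $1-\tfrac{1-p}{2-p}=\tfrac1{2-p}$ and $\tfrac{(1-p)^2}{2-p}+p=\tfrac1{2-p}$ to collapse $c_1\bigl(c_1^{-1}\dev_\calJ|\calJ|^{1-p}\bigr)^{(1-p)/(2-p)}|\calJ|^p$ into $\bigl(c_1\dev_\calJ^{1-p}|\calJ|\bigr)^{1/(2-p)}$, reproduces exactly the six summands of $\Reg_\dev(\calJ)$. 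Taking the minimum of the two estimates yields $\sum_i\regbound(|\calI_i'|)\le\otil\bigl(\min\{\Reg_L(\calJ),\Reg_\dev(\calJ)\}\bigr)$.

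For the overhead sum, write $\avgreg(t)=\regbound(t)/t=\min\{c_1 t^{p-1}+c_2 t^{-1},\,c_3\}$, so the sum equals $\tfrac1{\avgreg(2^n)}\sum_{m=0}^n\avgreg(2^m)^2\,2^m$. Two elementary bounds on $\avgreg(2^m)^2 2^m=\avgreg(2^m)\regbound(2^m)$ are available: using $\avgreg(2^m)\le c_3$ and $\regbound(2^m)\le c_1 2^{mp}+c_2$ gives $\le c_1 c_3 2^{mp}+c_2 c_3$, while using $\avgreg(2^m)\le c_1 2^{m(p-1)}+c_2 2^{-m}$ and expanding the square gives $\le c_1^2 2^{m(2p-1)}+2c_1 c_2 2^{m(p-1)}+c_2^2 2^{-m}$. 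Summing either bound over $m=0,\dots,n$ reduces to geometric series: the $2^{m(p-1)}$ and $2^{-m}$ series converge (as $p<1$), the $2^{m(2p-1)}$ series is $\otil(2^{n(2p-1)})$ with the logarithmic factor appearing only at $p=\tfrac12$, and the $2^{mp}$ series is $\otil(2^{np})$. One then divides by $\avgreg(2^n)$ after a short case analysis on which branch of the defining $\min$ is active at $t=2^n$ and, in the non-constant branch, on whether $c_1 2^{n(p-1)}$ or $c_2 2^{-n}$ dominates; in the "short block" cases (where $\avgreg(2^n)=c_3$ and the refined bound is too weak) one instead falls back on the trivial estimate $\sum_{\tau\in\calJ}(f_\tau^\star-R_\tau)\le|\calJ|\le 2^n$ and uses monotonicity of $\regbound$ to absorb $2^n$ into $c_1 2^{np}$ or $\tfrac{c_2^2}{c_3}$. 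Collecting all cases bounds the overhead sum by $\otil\bigl(c_1 2^{np}+\tfrac{c_2 c_3}{c_1}2^{n(1-p)}+\tfrac{c_2^2}{c_3}\bigr)$, and adding it to the partition-sum bound finishes the proof.

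I expect the third step to be the main obstacle. Unlike the partition sum, which is essentially one application of Hölder plus bookkeeping of exponents, bounding $\tfrac1{\avgreg(2^n)}\sum_m\avgreg(2^m)^2 2^m$ by the stated residual genuinely splits into several regimes in the four quantities $(c_1,c_2,c_3,2^n)$, and in the degenerate regimes one must argue that the crude $O(2^n)$ bound is itself dominated by one of the three residual terms --- routine inequality-chasing, but easy to mishandle unless the cases are enumerated carefully.
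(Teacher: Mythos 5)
Your first two steps coincide with the paper's own proof: starting from \pref{eq: block regret restate}, bounding $\sum_i\regbound(|\calI_i'|)\le \otil(c_1\ell^{1-p}|\calJ|^p+c_2\ell)$ and substituting the two bounds on $\ell$ from \pref{lemma: interval divide} is exactly how $\Reg_L(\calJ)$ and $\Reg_\dev(\calJ)$ are obtained, and your exponent bookkeeping there is correct. The genuine gap is in the overhead sum, and it sits exactly where you anticipated trouble: a case analysis only on which branch of $\avgreg$ is active at $t=2^n$, with one of your two single-branch bounds applied uniformly over $m$ (plus the trivial $|\calJ|\le 2^n$ fallback), cannot produce the mixed term $\frac{c_2c_3}{c_1}2^{n(1-p)}$. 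Concretely take $p=\tfrac12$, $c_3=1$, $c_2\gg c_1\gg 1$, and $2^n\gg (c_1+c_2)^2$ (the typical regime, e.g.\ UCB1 with $c_1=\sqrt A$, $c_2=A$): then $\avgreg(2^n)\approx c_12^{-n/2}$, so you are in your ``non-constant, $c_1$-dominant'' case. Summing bound (ii) over $m$ gives $\otil(c_1^2+c_1c_2+c_2^2)$, and after dividing by $\avgreg(2^n)$ you are left with $\otil\bigl(c_12^{n/2}+c_22^{n/2}+\tfrac{c_2^2}{c_1}2^{n/2}\bigr)$, which exceeds the claimed residual $c_12^{n/2}+\tfrac{c_2c_3}{c_1}2^{n/2}+\tfrac{c_2^2}{c_3}$ by a factor polynomial in $c_2/c_3$ (a factor $A$ for UCB1), not absorbable into $\otil$; summing bound (i) instead leaves a $c_32^n$ term after division; and the trivial estimate $2^n$ is itself far larger than the residual in this regime. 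So none of the three options you enumerate covers the main case, and the final theorem's $c_2$-dependence would degrade accordingly.

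The missing idea is a per-$m$ interpolation between the two branches of $\regbound(2^m)$ inside one block: for small $m$ one must use $\regbound(2^m)\le c_32^m$ even though the denominator $\regbound(2^n)$ is in its $c_1 2^{np}+c_2$ branch. The paper's \pref{lem: alpha calculation} does this by bounding the $c_2$-contribution of each $m$ by $\min\bigl\{\tfrac{c_2^2}{c_1}2^{n(1-p)-m},\,\tfrac{c_3^2}{c_1}2^{n(1-p)+m}\bigr\}$ and then by the geometric mean $\tfrac{c_2c_3}{c_1}2^{n(1-p)}$ of the two options; this is exactly where the $c_2c_3$ cross term comes from, and it cannot be recovered from either single-branch bound summed uniformly over $m$. (Equivalently one can split the sum at the crossover index where $c_32^m\approx c_12^{mp}+c_2$ and argue differently on the two sides, but that is again a case analysis in $m$, not in $n$.) Once this per-$m$ step is in place, the rest of your outline --- summing the geometric series, dropping $\tfrac{c_1^2}{c_3}2^{m(2p-1)}$ via the vacuity argument ($c_1^22^{n(2p-1)}\le c_12^{np}$ unless $c_12^{np}>2^n$), and the trivial fallback in the short-block regime --- goes through as in the paper.
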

\begin{proof}
 We bound each term in \pref{eq: block regret restate} using \pref{def: standard}. First, notice that
    \begin{align}
        \otil\left( \sum_{i=1}^\ell \regbound(|\calI_i'|) \right) 
        &= \otil\left( \sum_{i=1}^\ell \min\left\{c_1|\calI_i'|^p + c_2, \ c_3t \right\} \right)   \nonumber \\
        &\leq \otil\left(\sum_{i=1}^\ell (c_1|\calI_i'|^p + c_2) \right) 
        \leq \otil\left(c_1 \ell^{1-p} |\calJ|^{p} + c_2\ell\right).   \label{eq: regret first term}
    \end{align}
    Using the first upper bound for $\ell$ given in \pref{lemma: interval divide}, \pref{eq: regret first term} can be bounded by $        \otil\left(\Reg_L(\calJ)\right) $;
    using the second upper bound, \pref{eq: regret first term} can be bounded by 
    $\otil\left(\Reg_{\dev}(\calJ)\right)$. 
    Next, we have
    \begin{align*}
         &\otil\left(\frac{\avgreg(2^m)}{\avgreg(2^n)}\regbound(2^m)\right)
         = \otil\left(c_1 2^{np} + \frac{c_2c_3}{c_1}2^{n(1-p)} + \frac{c_1^2}{c_3} 2^{m(2p-1)} + \frac{c_2^2}{c_3} 2^{-m} \right). 
    \end{align*}
    by \pref{lem: alpha calculation} below. 
    Notice that because $c_3\geq 1$ and $p\geq \frac{1}{2}$, $\frac{c_1^2}{c_3} 2^{m(2p-1)} \leq c_1^2 2^{n(2p-1)} \leq c_12^{np}$ when $c_1\leq 2^{n(1-p)}$. This is indeed the regime we care about since if $c_1 > 2^{n(1-p)}$ then the first term $c_1 2^{np} > 2^n$, which is a vacuous bound for the regret of block $n$. 
    Therefore, we can drop this term. Thus, the dynamic regret in block $n$ can be summarized as the following based on \pref{eq: block regret restate}: 
    \begin{align}
         \otil\left(  \min\Big\{\Reg_L(\calJ), \Reg_{\dev}(\calJ)\Big\}+  c_12^{np} + \frac{c_2c_3}{c_1}2^{n(1-p)} + \frac{c_2^2}{c_3} \right),
    \end{align}
    finishing the proof.
\end{proof}

\begin{lemma}\label{lem: alpha calculation}
     Let $\regbound(t)$ be of the form in \pref{def: standard}. Then 
     \begin{align*}
     \frac{\avgreg(2^m)}{\avgreg(2^n)}\regbound(2^m)=\order\left(c_1 2^{np} + \frac{c_2c_3}{c_1}2^{n(1-p)} + \frac{c_1^2}{c_3} 2^{m(2p-1)} + \frac{c_2^2}{c_3} 2^{-m} \right). 
     \end{align*}
\end{lemma}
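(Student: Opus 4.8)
The plan is to turn this into a short computation about $\regbound(\cdot)$ alone. Since $\avgreg(t)=\regbound(t)/t$, the quantity to be bounded equals $\frac{2^n\,\regbound(2^m)^2}{2^m\,\regbound(2^n)}$, so it suffices to have a convenient upper bound on $\regbound(2^m)^2$ and a lower bound on $\regbound(2^n)$. For the numerator I would first use $\min\{x+y,z\}\le x+\min\{y,z\}$ (valid for $x\ge 0$) to write $\regbound(2^m)=\min\{c_1 2^{mp}+c_2,\,c_3 2^m\}\le c_1 2^{mp}+\min\{c_2,\,c_3 2^m\}$, hence $\regbound(2^m)^2\le 2c_1^2 2^{2mp}+2\min\{c_2,\,c_3 2^m\}^2\le 2c_1^2 2^{2mp}+2\min\{c_2^2,\,c_2 c_3 2^m\}$, where the last step is checked by comparing the two branches of the inner minimum. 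Keeping the ``$c_2$-part'' in the form $\min\{c_2^2,\,c_2 c_3 2^m\}$ is the crucial move: it lets me pick $c_2^2$ or $c_2 c_3 2^m$ depending on the situation.

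Next I would split into the two exhaustive cases according to which term realizes $\regbound(2^n)=\min\{c_1 2^{np}+c_2,\,c_3 2^n\}$. If $\regbound(2^n)=c_3 2^n$, the quantity is $\frac{\regbound(2^m)^2}{2^m c_3}\le\frac{2c_1^2 2^{2mp}+2c_2^2}{2^m c_3}=\frac{2c_1^2}{c_3}2^{m(2p-1)}+\frac{2c_2^2}{c_3}2^{-m}$ (using $\min\{c_2^2,c_2c_3 2^m\}\le c_2^2$), which is the third and fourth target terms up to constants. If instead $\regbound(2^n)=c_1 2^{np}+c_2$, I use $\regbound(2^n)\ge c_1 2^{np}$ together with $\min\{c_2^2,c_2c_3 2^m\}\le c_2 c_3 2^m$ to get
\[
\frac{2^n\,\regbound(2^m)^2}{2^m\,c_1 2^{np}}\;\le\;\frac{2^n\bigl(2c_1^2 2^{2mp}+2c_2 c_3 2^m\bigr)}{2^m\,c_1 2^{np}}\;=\;2c_1\,2^{\,n(1-p)+m(2p-1)}+\frac{2c_2 c_3}{c_1}\,2^{\,n(1-p)},
\]
and since $m\le n$ and $2p-1\ge 0$ (because $p\ge\tfrac12$ by Definition~\ref{def: standard}), the first summand is at most $2c_1 2^{np}$; this yields the first and second target terms.

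Combining the two cases, the left-hand side is always $\order\bigl(c_1 2^{np}+\tfrac{c_2 c_3}{c_1}2^{n(1-p)}+\tfrac{c_1^2}{c_3}2^{m(2p-1)}+\tfrac{c_2^2}{c_3}2^{-m}\bigr)$, as claimed. I expect the only genuinely non-routine ingredient to be the numerator bound — specifically the elementary inequality $\min\{c_2,c_3 2^m\}^2\le\min\{c_2^2,c_2 c_3 2^m\}$ and the decision to carry the $c_2$-contribution in that mixed form — because without it one is naturally led to a stray term $\tfrac{c_2^2}{c_1}2^{n(1-p)}$ that does not obviously fit any of the four targets. Everything after that step is straightforward arithmetic using only $0\le m\le n$ and $p\ge\tfrac12$.
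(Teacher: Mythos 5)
Your proposal is correct and follows essentially the same route as the paper: both reduce the quantity to $\frac{\regbound(2^m)^2}{\regbound(2^n)}2^{n-m}$, handle the two branches of the minimum defining $\regbound$, and use a geometric-mean-type bound to produce the cross term $\frac{c_2c_3}{c_1}2^{n(1-p)}$ (the paper applies $\min\{x,y\}\le\sqrt{xy}$ to the two final exponential expressions, whereas you bake the same idea into the numerator via $\min\{c_2,c_32^m\}^2\le\min\{c_2^2,c_2c_32^m\}$ and case-split on the denominator). The bookkeeping differs slightly but the argument is the same, and your steps (including the use of $m\le n$ and $p\ge\tfrac12$) all check out.
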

\begin{proof}
This is by direct calculation:
     \begin{align*}
        \frac{\avgreg(2^m)}{\avgreg(2^n)}\regbound(2^m)
          &= \frac{\regbound(2^m)^2 }{\regbound(2^n)} 2^{n-m}\\
         &=\order\left(  \frac{\min\{c_1^2 2^{2mp} + c_2^2,\ \  c_3^2 2^{2m} \}}{c_12^{np} + c_2}2^{n-m}+\frac{\min\{c_1^2 2^{2mp} + c_2^2,\ \  c_3^2 2^{2m} \}}{c_3 2^n}2^{n-m}\right) \\
         &=\order\left(  \min\left\{c_1 2^{np}2^{(n-m)(1-2p)} + \frac{c_2^2}{c_1}2^{n(1-p)-m}, \ \  \frac{c_3^2}{c_1}2^{n(1-p)+m}  \right\} + \frac{c_1^2}{c_3} 2^{m(2p-1)} + \frac{c_2^2}{c_3} 2^{-m} \right) \\
         &= \order\left(c_1 2^{np}+ \min\left\{\frac{c_2^2}{c_1}2^{n(1-p)-m}, \ \  \frac{c_3^2}{c_1}2^{n(1-p)+m}  \right\}+ \frac{c_1^2}{c_3} 2^{m(2p-1)} + \frac{c_2^2}{c_3} 2^{-m} \right) \\
         &= \order\left(c_1 2^{np} + \frac{c_2c_3}{c_1}2^{n(1-p)} + \frac{c_1^2}{c_3} 2^{m(2p-1)} + \frac{c_2^2}{c_3} 2^{-m} \right). 
    \end{align*}
\end{proof}

\section{Single-epoch Regret Analysis}\label{app: epoch regret seperate}
We call $[t_0, E]$ an \emph{epoch} if $t_0$ is the first step after restart (or $t_0=1$), and $E$ is the first time after round $t_0$ when the restart is triggered. %In other words, an epoch is an interval between two consecutive restarts. 
In this section, we continue the discussion in \pref{app: epoch regret} and bound the regret in a single epoch. Recall that the we consider cases where the single-block regret can be written as \pref{eq: block regret restate} and $\regbound(\cdot)$ is in the form of \pref{def: standard}. This holds both for the case of the standard setting and the infinite-horizon MDP setting. 

\begin{lemma}\label{lemma: sum epoch regret}
     Let $\calE$ be an epoch. Then 
     \begin{align*}
          \sum_{\tau\in\calE}\leq \otil\left(\min\big\{  \Reg_L(\calE), \Reg_{\dev}(\calE) \big\} + \frac{c_2c_3}{c_1}|\calE|^{1-p} + \frac{c_2^2}{c_3}\right) 
     \end{align*}
     ($\Reg_L(\cdot)$ and $\Reg_\dev(\cdot)$ are defined in \pref{lemma: single block regret form})
\end{lemma}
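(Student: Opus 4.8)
The plan is to cut the epoch into the doubling-length blocks that \pref{alg: final adaptive alg} (resp. \pref{alg: master ucrl}) runs inside it and add up the single-block bound of \pref{lemma: single block regret form}. First I would note that an epoch $\calE=[t_0,E]$ is composed of blocks $\calJ_0,\calJ_1,\ldots,\calJ_N$ with $|\calJ_n|\le 2^n$, where $N$ is the block during which the restart fires; blocks $0,\ldots,N-1$ are completed in full, so $|\calE|\ge\sum_{n=0}^{N-1}2^n+1=2^N$, hence $N\le\log_2|\calE|\le\log_2 T$ and an epoch contains only $\otil(1)$ blocks. Taking a union bound over these $N+1$ blocks so that \pref{lemma: single block regret form} holds for all of them, and summing that bound over $n=0,\ldots,N$, yields
\begin{align*}
\sum_{\tau\in\calE}(f_\tau^\star-R_\tau)\le\otil\Bigg(\sum_{n=0}^N\min\{\Reg_{L}(\calJ_n),\Reg_{\dev}(\calJ_n)\}+\sum_{n=0}^N\Big(c_1 2^{np}+\tfrac{c_2c_3}{c_1}2^{n(1-p)}+\tfrac{c_2^2}{c_3}\Big)\Bigg).
\end{align*}

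Next I would dispose of the three geometric sums. Since $p<1$ and $1-p<1$, $\sum_n c_1 2^{np}=\otil(c_1 2^{Np})=\otil(c_1|\calE|^p)$ and $\sum_n\tfrac{c_2c_3}{c_1}2^{n(1-p)}=\otil(\tfrac{c_2c_3}{c_1}|\calE|^{1-p})$, while $\sum_n\tfrac{c_2^2}{c_3}=(N+1)\tfrac{c_2^2}{c_3}=\otil(\tfrac{c_2^2}{c_3})$. The second and third terms already appear in the claimed bound, and $c_1|\calE|^p$ I would absorb into $\min\{\Reg_{L}(\calE),\Reg_{\dev}(\calE)\}$: it is a summand of $\Reg_{\dev}(\calE)$, and $c_1|\calE|^p\le c_1 L_{\calE}^{1-p}|\calE|^p\le\Reg_{L}(\calE)$ because $L_{\calE}\ge 1$.

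The remaining, and essentially only substantive, step is to show $\sum_{n=0}^N\min\{\Reg_{L}(\calJ_n),\Reg_{\dev}(\calJ_n)\}=\otil(\min\{\Reg_{L}(\calE),\Reg_{\dev}(\calE)\})$. Since $\sum_n\min\{a_n,b_n\}\le\min\{\sum_n a_n,\sum_n b_n\}$, it suffices to prove $\sum_n\Reg_{L}(\calJ_n)=\otil(\Reg_{L}(\calE))$ and $\sum_n\Reg_{\dev}(\calJ_n)=\otil(\Reg_{\dev}(\calE))$ separately, exploiting that $\{\calJ_n\}$ partitions $\calE$ (so $\sum_n|\calJ_n|=|\calE|$), the partition inequalities \pref{eq: L partition} (giving $\sum_n L_{\calJ_n}\le L_{\calE}+N=\otil(L_{\calE})$) and \pref{eq: dev partition} (giving $\sum_n\dev_{\calJ_n}\le\dev_{\calE}$), and H\"older's / the power-mean inequality. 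The $\Reg_{L}$ bound is immediate: $\sum_n L_{\calJ_n}^{1-p}|\calJ_n|^p\le(\sum_n L_{\calJ_n})^{1-p}(\sum_n|\calJ_n|)^p=\otil(L_{\calE}^{1-p})|\calE|^p$, and $\sum_n c_2 L_{\calJ_n}=\otil(c_2 L_{\calE})$. For $\Reg_{\dev}$ I would bound each of its six summands in turn; the hard part is checking that the exponents telescope cleanly, and the point is that with $q=\tfrac1{2-p}$ one has $(1-p)q=1-q$ and $q(2-p)=1$, so, e.g., H\"older gives $\sum_n(c_1\dev_{\calJ_n}^{1-p}|\calJ_n|)^q\le(c_1\dev_{\calE}^{1-p}|\calE|)^q$ and $\sum_n c_2(c_1^{-1}\dev_{\calJ_n}|\calJ_n|^{1-p})^q\le c_2(c_1^{-1}\dev_{\calE}|\calE|^{1-p})^q$, while the simpler terms $c_1|\calJ_n|^p$, $c_1(c_3^{-1}\dev_{\calJ_n})^{1-p}|\calJ_n|^p$, $c_2$, and $c_2c_3^{-1}\dev_{\calJ_n}$ sum to ($\otil$ of) their $\calE$-counterparts directly from $\sum_n|\calJ_n|=|\calE|$ and $\sum_n\dev_{\calJ_n}\le\dev_{\calE}$. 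Assembling these pieces proves the lemma, and the identical argument handles the infinite-horizon MDP case under the convention $\regbound(\cdot)=\regbound_{\ucrl}(\cdot;\dupp)$, $\dev_\calI=\dev_{\calI;\dupp}$, $f^\star_t=J^\star_t$ fixed at the start of \pref{app: key proofs}.
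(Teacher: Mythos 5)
Your proposal is correct and follows essentially the same route as the paper's proof: decompose the epoch into its $\otil(1)$ doubling blocks, sum the single-block bound of \pref{lemma: single block regret form}, control $\sum_m \Reg_L(\calJ_m)$ and $\sum_m \Reg_\dev(\calJ_m)$ via H\"older together with \pref{eq: L partition} and \pref{eq: dev partition}, evaluate the geometric sums, and absorb the $c_1|\calE|^p$ term into $\min\{\Reg_L(\calE),\Reg_\dev(\calE)\}$. Your treatment of the $\Reg_\dev$ summands (the exponent bookkeeping with $q=\tfrac{1}{2-p}$) simply makes explicit what the paper dismisses with ``similarly,'' so there is nothing to correct.
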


\begin{proof}
     Let $\calE$ be an epoch whose last block is indexed by $n$. Then $|\calE|=\Theta(2^n)$. Let $\calJ_1, \ldots, \calJ_n$ be blocks in $\calE$. Then by \pref{lemma: single block regret form}, the dynamic regret in $\calE$ is upper bounded by 
    \begin{align*}
         &\otil\left(\min\left\{ \sum_{m=0}^n \Reg_L(\calJ_m), \ \sum_{m=0}^n \Reg_\dev(\calJ_m)  \right\}+ c_1\sum_{m=0}^n 2^{mp} + \frac{c_2c_3}{c_1}\sum_{m=0}^n 2^{m(1-p)} + \sum_{m=0}^n \frac{c_2^2}{c_3}\right). 
    \end{align*}
    By H\"{o}lder's inequality, 
    \begin{align*}
          \sum_{m=0}^n \Reg_L(\calJ_m) 
          &= c_1\left(\sum_{m=0}^n L_{\calJ_m} \right)^{1-p}\left(\sum_{m=0}^n  |\calJ_m|\right)^p + c_2 \sum_{m=0}^n  L_{\calJ_m} \\
          &\leq c_1 \left(L_{\calE} + n \right)^{1-p} |\calE|^p + c_2\left(L_{\calE} + n \right)    \tag{using \pref{eq: L partition}}\\
          &\leq \otil\left(c_1 L_\calE^{1-p}  |\calE|^{p} + c_2 L_\calE \right) 
           = \otil\left(\Reg_L(\calE)\right)    \tag{because $n=\order(\log T)=\otil(1)$}
    \end{align*}
    Similarly, $\sum_{m=0}^n \Reg_{\dev}(\calJ_m) = \otil\left(\Reg_{\dev}(\calE)\right)$. 
    On the other hand, $c_1\sum_{m=0}^n 2^{mp} + \frac{c_2c_3}{c_1}\sum_{m=0}^n 2^{m(1-p)} + \sum_{m=0}^n \frac{c_2^2}{c_3} = \otil\left(c_1 2^{np} + \frac{c_2c_3}{c_1} 2^{n(1-p)} + \frac{c_2^2}{c_3}\right) = \otil\left(c_1 |\calE|^p + \frac{c_2c_3}{c_1} |\calE|^{1-p} + \frac{c_2^2}{c_3} \right)$. In summary, the dynamic regret within an epoch is of order 
    \begin{align}
         \otil\left(\min\big\{  \Reg_L(\calE), \Reg_{\dev}(\calE) \big\} + \frac{c_2c_3}{c_1}|\calE|^{1-p} + \frac{c_2^2}{c_3}\right)  \label{eq: epoch regret}
    \end{align}
    (the $c_1|\calE|^p$ term is absorbed into $\min\left\{  \Reg_L(\calE), \Reg_{\dev}(\calE) \right\}$). 
\end{proof}

%\begin{lemma}\label{lemma: tricky lemma}
%     For every $m$, with probability at least $1-\delta$, 
%     \begin{align*}
%          \sum_{i=1}^L [e_i' - \tau_i(m) - 2\cdot 2^m B]_+ \leq 2^{(n-m)\rho} \cdot 2^m B \log(1/\delta). 
%     \end{align*} 
%\end{lemma}
%\begin{proof}
%     By Lemma~\ref{lemma: contradiction lemma}, for all $i\in [L]$, there is no order-$m$ \alg such that $[\inst.s, \inst.e]\subseteq [\tau_i(m), e_i'-1]$. By the scheduling rule Procedure XXX, within $[\tau_i(m), e_i'-1]$, there are at least $\frac{[e_i' - \tau_i(m) - 2\cdot 2^m B]_+}{2^m B}$ entire intervals of length $2^m B$ that are candidates for an order-$m$ \alg. In each candidate interval, an \alg is initiated with probability $2^{(m-n)\rho}$. 

%      The probability that the learner missed $2^{(n-m)\rho}\log(1/\delta)$ of them in a row is upper bounded by $\delta$. In other words, with probability at least $1-\delta$, 
%     \begin{align*}
%          \sum_{i=1}^\ell \frac{[e_i - \tau_i(m) - 3\cdot 2^m B]_+}{2^m B} \leq 2^{(n-m)\rho}\log(1/\delta). 
%     \end{align*} 
%\end{proof}

\section{Proof of \pref{thm: regret bound}} \label{app: omitted proof}

We are now ready to prove \pref{thm: regret bound} after showing the following two lemmas.

\begin{lemma}\label{lem: infreq restart}
     Let $t$ be in an epoch starting from $t_0$. If $\dev_{[t_0, t]}\leq \avgreg(t-t_0+1)$, then with high probability, no restart would be triggered at time $t$. 
\end{lemma}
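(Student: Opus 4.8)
The plan is to show that, under the assumed near-stationarity $\dev_{[t_0,t]}\le\avgreg(t-t_0+1)$, after round $t$ neither \testone nor \testtwo returns \emph{fail} in \pref{alg: final adaptive alg}; since in the standard setting those are the only two restart triggers, this is exactly the claim. Let $n$ be the index of the block containing $t$ and let $t_n$ be its start, so $t_0\le t_n\le t\le t_n+2^n-1$. Two monotonicity observations will be used throughout. (i) Since $[t_n,\tau]\subseteq[t_0,t]$ for every $\tau\in[t_n,t]$ and $\avgreg$ is non-increasing, $\dev_{[t_n,\tau]}\le\dev_{[t_0,t]}\le\avgreg(t-t_0+1)\le\avgreg(\tau-t_n+1)$. (ii) If $\inst$ is the order-$m$ instance with $\inst.e=t$ (the only kind that can make \testone fire at time $t$), then $\inst.s\ge t_n\ge t_0$ forces $t-t_0+1\ge\inst.e-\inst.s+1=2^m$, whence $\dev_{[\inst.s,\inst.e]}\le\dev_{[t_0,t]}\le\avgreg(t-t_0+1)\le\avgreg(2^m)\le\multiavg(2^m)$ (the last step since $\multiavg=6\nmax\log(T/\delta)\avgreg$ and $6\nmax\log(T/\delta)\ge1$). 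These estimates are precisely what is needed to invoke \pref{lemma: multi-scale reg} on the relevant sub-intervals.

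For \testtwo, I would apply the regret half of \pref{lemma: multi-scale reg} to the order-$n$ instance covering the whole block (it starts at $t_n$) at time $t$, with $t'=t-t_n+1$; its precondition $\dev_{[t_n,t]}\le\avgreg(t')$ is (i). This gives $\tfrac1{t'}\sum_{\tau=t_n}^t(\tildeg_\tau-R_\tau)\le\multiavg(t')+\nmax\dev_{[t_n,t]}\le\multiavg(t')+\nmax\avgreg(t')=\bigl(1+\tfrac1{6\log(T/\delta)}\bigr)\multiavg(t')<3\multiavg(t')$, so \testtwo does not fail.

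For \testone, assume $t=\inst.e$ for some order-$m$ instance $\inst$. First, applying the optimism half of \pref{lemma: multi-scale reg} to the order-$n$ instance at each $\tau\in[t_n,t]$ (precondition from (i)) gives $\tildeg_\tau\ge\min_{\sigma\in[t_n,\tau]}f_\sigma^\star-\dev_{[t_n,\tau]}\ge\min_{\sigma\in[t_n,t]}f_\sigma^\star-\dev_{[t_n,t]}$, hence $U_t=\min_{\tau\in[t_n,t]}\tildeg_\tau\ge\min_{\sigma\in[t_n,t]}f_\sigma^\star-\dev_{[t_n,t]}$. Second, since $R_\tau$ has conditional mean $f_\tau(\pi_\tau)\le f_\tau^\star$, Azuma--Hoeffding over the $2^m$ consecutive rounds $[\inst.s,\inst.e]$ gives, with high probability, $\tfrac1{2^m}\sum_{\tau=\inst.s}^{\inst.e}R_\tau\le\tfrac1{2^m}\sum_{\tau=\inst.s}^{\inst.e}f_\tau^\star+\sqrt{2\log(T/\delta)/2^m}\le\max_{\tau\in[\inst.s,\inst.e]}f_\tau^\star+\multiavg(2^m)$, where the fluctuation term is absorbed using $\avgreg(2^m)\ge2^{-m/2}$ from \pref{assum:assump2}. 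Because $[\inst.s,\inst.e]\subseteq[t_n,t]$ and $|f_{\tau_1}^\star-f_{\tau_2}^\star|\le\dev_{[t_n,t]}$ for $\tau_1,\tau_2$ in that range, $\max_{\tau\in[\inst.s,\inst.e]}f_\tau^\star\le\min_{\sigma\in[t_n,t]}f_\sigma^\star+\dev_{[t_n,t]}\le U_t+2\dev_{[t_n,t]}$. Combining with (ii), $\tfrac1{2^m}\sum_{\tau=\inst.s}^{\inst.e}R_\tau\le U_t+3\multiavg(2^m)<U_t+9\multiavg(2^m)$, so \testone does not fail. A union bound over the $\mathrm{poly}(T)$ instance/time pairs makes all high-probability events (those of \pref{lemma: multi-scale reg} and all Azuma applications) hold simultaneously, which completes the argument.

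The only real work is the bookkeeping of preconditions and constants: making sure every sub-interval on which \pref{lemma: multi-scale reg} is invoked is short enough (via $t-t_0+1\ge2^m$ and monotonicity of $\avgreg$), and that the slack in the thresholds ($2<3$ for \testtwo, $3<9$ for \testone) comfortably covers the $\nmax\dev$ and Azuma terms, using $\avgreg(t)\ge1/\sqrt t$ and $\multiavg=6\nmax\log(T/\delta)\avgreg$. I do not anticipate a conceptual obstacle: this lemma is exactly the "soundness" of the stationarity tests, and its content is that a near-stationary stretch beginning at $t_0$ keeps every active base-algorithm instance inside its own near-stationary regime, so that \pref{lemma: multi-scale reg} simultaneously keeps $U_t$ an honest upper bound and keeps the empirical rewards controlled.
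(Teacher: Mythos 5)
Your proposal is correct and follows essentially the same route as the paper: apply the optimism and regret halves of \pref{lemma: multi-scale reg} (valid because the near-stationarity condition propagates to every sub-interval of $[t_0,t]$ by monotonicity of $\avgreg$), use Azuma to relate $\sum R_\tau$ to $\sum f_\tau^\star$ over the order-$m$ instance, and absorb the drift and concentration terms into multiples of $\multiavg$ using $\avgreg(t)\geq 1/\sqrt{t}$ and $t-t_0+1\geq 2^m$. Your constants ($3\multiavg$ for \testone, roughly $2\multiavg$ for \testtwo) differ harmlessly from the paper's and sit well under the thresholds $9\multiavg$ and $3\multiavg$, so the argument goes through.
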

\begin{proof}   
We first verify that \testone would not fail with high probability. Let $t=\inst.e$ where $\inst$ is any order-$m$ \alg in block $n$. Then with high probability, 
\begin{align*}
    U_t 
    &= \min_{\tau\in [t_n, t]} \tildeg_\tau \\
    &\geq \min_{\tau\in[t_n, t]} f^\star_\tau - \dev_{[t_n, t]}  \tag{by \pref{lemma: multi-scale reg}}  \\
    &\geq \frac{1}{2^m}\sum_{\tau\in[\inst.s, t]} f^\star_\tau - 3\dev_{[t_n, t]}  \tag{$[\inst.s, t]\subseteq [t_n, t]$} \\
    &\geq \frac{1}{2^m} \sum_{\tau\in[\inst.s, t]} R_\tau - 2\sqrt{\frac{\log(T/\delta)}{2^m}} - 3\avgreg(t-t_0+1)     \tag{$\E[R_\tau]=\E[f_\tau(\pi_t)]\leq f_\tau^\star$ and we use Azuma's inequality}\\
    &\geq \frac{1}{2^m} \sum_{\tau\in[\inst.s, t]} R_\tau - \multiavg(2^m) - 3\avgreg(t-t_0+1)  \tag{By \pref{assum:assump2}, $\multiavg(2^m)\geq 6\log(T/\delta)\avgreg(2^m)\geq 6\log(T/\delta)\sqrt{\frac{1}{2^m}}$}\\
    &\geq \frac{1}{2^m} \sum_{\tau\in[\inst.s, t]} R_\tau - 2\multiavg(2^m).  \tag{$\avgreg(t-t_0+1)\leq \avgreg(2^m)$ because $\avgreg(\cdot)$ is decreasing}
\end{align*}
So with high probability, \testone will not return fail. 

Furthermore, by \pref{lemma: multi-scale reg}, with high probability, 
\begin{align*}
     \frac{1}{t-t_n+1}\sum_{\tau=t_n}^t \left(\tildeg_\tau - R_\tau\right)\leq \multiavg(t-t_n+1) + \dev_{[t_n, t]} \leq 2\multiavg(t-t_n+1). 
\end{align*}
Therefore, with high probability, \testtwo will not return fail either. 
\end{proof}

\begin{lemma}
      \label{lemma: epoch upper}
      With high probability, the number of epochs is upper bounded by $L$. If $\regbound(\cdot)$ is in the form of \pref{def: standard}, the number of epochs is also upper bounded by $1+2\left(c_1^{-1}\dev T^{1-p}\right)^{\frac{1}{2-p}} + c_3^{-1}\dev$. 
      %Let $t$ be in an epoch started from round $t_0$. If $\dev_{[t_0, t]} < \avgreg(t-t_0+1)$, then with high probability, no restart would be triggered at time $t$. 
\end{lemma}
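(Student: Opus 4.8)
The plan is to read off, from each restart, a lower bound on the non-stationarity the algorithm has accumulated, and then run a counting argument. Enumerate the epochs in chronological order as $\calE_1,\dots,\calE_N$ with $\calE_i=[t_0^{(i)},E^{(i)}]$ and $t_0^{(i+1)}=E^{(i)}+1$. For every $i\le N-1$ a restart is triggered at round $E^{(i)}$ — that is precisely what separates $\calE_i$ from $\calE_{i+1}$ in \pref{alg: final adaptive alg} — so the contrapositive of \pref{lem: infreq restart}, applied with starting time $t_0^{(i)}$ and current time $E^{(i)}$, gives that on a high-probability event (intersect the events of \pref{lem: infreq restart} over the at most $T$ rounds at which a restart could occur) one has $\dev_{\calE_i}>\avgreg(|\calE_i|)$, where $|\calE_i|=E^{(i)}-t_0^{(i)}+1$. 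This is the only place the algorithm's behaviour enters; everything after is deterministic bookkeeping on the fixed sequence $\{\dev(\tau)\}_{\tau}$, and it parallels the proof of \pref{lemma: interval divide}.

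For the bound $N\le L$: by \pref{assum:assump2}, $\avgreg$ is non-increasing and $\avgreg(t)\ge 1/\sqrt t>0$, so $\dev_{\calE_i}>\avgreg(|\calE_i|)\ge\avgreg(T)>0$ for $i\le N-1$. Since $\dev_{\calE_i}=\sum_{\tau=t_0^{(i)}}^{E^{(i)}-1}\dev(\tau)$, each of the first $N-1$ epochs contains at least one index $\tau$ with $\dev(\tau)\neq0$ inside its half-open part $\{t_0^{(i)},\dots,E^{(i)}-1\}$. These index sets are pairwise disjoint (consecutive ones are separated by the single index $E^{(i)}$) and lie in $\{1,\dots,T-1\}$, which by definition of $L$ contains exactly $L-1$ indices $\tau$ with $\dev(\tau)\neq0$. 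Hence $N-1\le L-1$, i.e. $N\le L$.

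For the second bound, assume $\regbound$ is of the form in \pref{def: standard}, so $\avgreg(t)=\min\{c_1t^{p-1}+c_2t^{-1},\,c_3\}\ge\min\{c_1t^{p-1},\,c_3\}$. Split $\{1,\dots,N-1\}$ into $\mathcal{N}_1=\{i:c_1|\calE_i|^{p-1}\le c_3\}$ and its complement $\mathcal{N}_2$, with $\ell_1=|\mathcal{N}_1|$ and $\ell_2=|\mathcal{N}_2|$. Because the epochs are disjoint, $\sum_i|\calE_i|\le T$ and $\sum_i\dev_{\calE_i}\le\dev$ (cf. \pref{eq: dev partition}). On $\mathcal{N}_2$ we have $\dev_{\calE_i}>c_3$, so $c_3\ell_2<\dev$ and $\ell_2\le c_3^{-1}\dev$. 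On $\mathcal{N}_1$ we have $\dev_{\calE_i}>c_1|\calE_i|^{p-1}$; writing $1=\bigl(|\calE_i|^{p-1}\bigr)^{1/(2-p)}\bigl(|\calE_i|\bigr)^{(1-p)/(2-p)}$ and applying Hölder's inequality with conjugate exponents $2-p$ and $(2-p)/(1-p)$ gives
\[
 \ell_1\le\Bigl(\sum_{i\in\mathcal{N}_1}|\calE_i|^{p-1}\Bigr)^{\frac{1}{2-p}}\Bigl(\sum_{i\in\mathcal{N}_1}|\calE_i|\Bigr)^{\frac{1-p}{2-p}}\le\bigl(c_1^{-1}\dev\bigr)^{\frac{1}{2-p}}T^{\frac{1-p}{2-p}}=\bigl(c_1^{-1}\dev\,T^{1-p}\bigr)^{\frac{1}{2-p}}.
\]
Adding the two contributions, $N-1=\ell_1+\ell_2\le(c_1^{-1}\dev\,T^{1-p})^{\frac{1}{2-p}}+c_3^{-1}\dev$, which is (slightly stronger than, hence implies) the claimed bound $N\le 1+2(c_1^{-1}\dev\,T^{1-p})^{\frac{1}{2-p}}+c_3^{-1}\dev$.

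I do not anticipate a genuine obstacle; the only care needed is bookkeeping — setting up the epoch decomposition so that it is exactly consistent with the restart mechanism of \pref{alg: final adaptive alg}, so that ``a test fails at round $E^{(i)}$'' really does make $\calE_i=[t_0^{(i)},E^{(i)}]$ the interval in \pref{lem: infreq restart}, and (for the infinite-horizon MDP instantiation) verifying that every line above carries over verbatim after replacing $\avgreg,\regbound,\dev$ by $\avgreg_{\ucrl}(\cdot\,;\dupp),\regbound_{\ucrl}(\cdot\,;\dupp),\dev_{\cdot\,;\dupp}$ and invoking the corresponding single-window no-restart lemma for that setting.
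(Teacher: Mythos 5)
Your proposal is correct and takes essentially the same route as the paper: the contrapositive of \pref{lem: infreq restart} yields $\dev_{\calE_i} > \avgreg(|\calE_i|)$ for every non-terminal epoch, and the subsequent counting (each such epoch contains a change point, giving $N\leq L$) together with the split on the two branches of the min and H\"{o}lder's inequality is exactly the argument of \pref{lemma: interval divide} applied to the epochs. Your version is even marginally tighter (no factor $2$), since the epoch condition needs no index shift, and it implies the stated bound.
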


\begin{proof}
     By \pref{lem: infreq restart}, if $[t_0, E]$ is not the last epoch, then $\dev_{[t_0, E]}> \avgreg(E-t_0+1)$ with high probability. Then following the exact same arguments as in \pref{lemma: interval divide} proves the lemma. 
\end{proof}

\begin{proof}[Proof of \pref{thm: regret bound}]\ \ If $\regbound(t)=c_1t^p + c_2$ satisfies \pref{assum:assump2}, then $\regbound(t)=\min\{c_1 t^p + c_2, t\}$ also satisfies it (since the reward is bounded in $[0, 1]$). Below we use $\regbound(t)=\min\{c_1 t^p + c_2, t\}$ as the input to our algorithm. Notice that this is in the form of \pref{def: standard} with $c_3=1$.   Let $\calE_1, \ldots, \calE_N$ be epochs in $[1, T]$. Then by \pref{lemma: sum epoch regret}, the dynamic regret in $[1, T]$ is upper bounded by 
    \begin{align}
         &\otil\left( \min\left\{ \sum_{i=1}^N \Reg_L(\calE_i), \sum_{i=1}^N \Reg_\dev(\calE_i)  \right\} + \frac{c_2}{c_1}\sum_{i=1}^N |\calE_i|^{1-p} + c_2^2 N \right).    \label{eq: epoch regret bound ha}
    \end{align} 
    By H\"{o}lder's inequality and \pref{eq: L partition}, 
    \begin{align*}
         \sum_{i=1}^N \Reg_L(\calE_i) \leq \otil\left( c_1 \left( L + N-1 \right)^{1-p} T^p + c_2 (L+N-1)\right) \leq \otil\left(c_1L^{1-p}T^{p} + c_2L\right), 
    \end{align*}
    where in the last inequality we use \pref{lemma: epoch upper} to bound $N$. 

    Similarly, 
    \begin{align*}
         &\sum_{i=1}^N \Reg_\dev(\calE_i)   \\
         &\leq  \otil\left( \left(c_1 \dev^{1-p}T\right)^{\frac{1}{2-p}} + c_1N^{1-p}T^p + c_1\dev^{1-p} T^{p} +  c_2\left(c_1^{-1}\dev T^{1-p}\right)^{\frac{1}{2-p}} + c_2 N + c_2\dev\right) \\
         &\leq  \otil\left( \left(c_1 \dev^{1-p}T\right)^{\frac{1}{2-p}} +  c_1T^p +c_1\dev^{1-p} T^{p} + c_2\left(c_1^{-1}\dev T^{1-p}\right)^{\frac{1}{2-p}} + c_2 + c_2\dev\right).   \tag{using \pref{lemma: epoch upper} to bound $N$}
    \end{align*}
    Then we deal with the second term in \pref{eq: epoch regret bound ha}:  
    \begin{align*}
         \frac{c_2}{c_1} \sum_{i=1}^N |\calE_i|^{1-p} \leq \frac{c_2}{c_1} N^{p}T^{1-p},   
    \end{align*}
    which can be either bounded by $\otil\left(\frac{c_2}{c_1} L^p T^{1-p} \right)$ or 
    \begin{align*}
         \otil\left(\frac{c_2}{c_1}T^{1-p} +  \frac{c_2}{c_1}\left(c_1^{-p}\dev^{p}T^{2-2p}\right)^{\frac{1}{2-p}} + \frac{c_2}{c_1}\dev^p T^{1-p}\right)
    \end{align*}
    using the upper bound for $N$ in \pref{lemma: epoch upper}. Finally, the third term in \pref{eq: epoch regret bound ha} can be upper bounded either by $\otil\left(c_2^2 L\right)$ or
    \begin{align*}
         \otil\left(c_2^2 +  c_2^2\left(c_1^{-1}\dev T^{1-p}\right)^{\frac{1}{2-p}} + c_2^2\dev \right). 
    \end{align*}

    With all terms expanded, below, we collect the dominant terms for the cases of $p=\frac{1}{2}$ and $p>\frac{1}{2}$.  We say term $a(T)$ is dominated by $b(T)$ if $\lim_{T\rightarrow \infty} a(T)/b(T)=0$ under any sublinear growth rate of $L$ or $\dev$ (e.g., $\sqrt{\dev T}$ is dominated by $\dev^{\nicefrac{1}{3}}T^{\nicefrac{2}{3}}$ and $L$ is dominated by $\sqrt{LT}$). And below we only write down terms that are not dominated by other terms. 
    
    \paragraph{The case for $p=\frac{1}{2}$:} 
    \begin{align*}
    \otil\left( \min\left\{\left(c_1 + \frac{c_2}{c_1}\right)\sqrt{LT}, \quad  \left(c_1^{\nicefrac{2}{3}} + c_2c_1^{-\nicefrac{4}{3}}\right)\dev^{\nicefrac{1}{3}}T^{\nicefrac{2}{3}} +  \left(c_1 + \frac{c_2}{c_1}\right)\sqrt{T} \right\} \right) ; 
    \end{align*}
      
    \paragraph{The case for $p>\frac{1}{2}$:}
    \begin{align*}
         \otil\left(\min\left\{c_1 L^{1-p} T^{p}, \quad  \left(c_1\dev^{1-p}T\right)^{\frac{1}{2-p}} + c_1T^{p}\right\}\right).
    \end{align*}
    This finishes the proof.
\end{proof}

%!TEX root=main.tex

\section{Main Results for Infinite-horizon MDP}\label{app: omit proof for RL avg}

\begin{lemma}[\textit{c.f.} \pref{lem: infreq restart}]
\label{lemma: infrequent RL restart}
Let $t$ be in an epoch started from round $t_0$. If $\dev_{[t_0, t]; \dupp} < \dupp S\sqrt{\frac{A}{t-t_0+1}}$ and $\dupp\geq D_{\max}$, then with high probability, no restart will be triggered at time $t$. 
\end{lemma}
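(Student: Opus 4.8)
The plan is to follow the template of \pref{lem: infreq restart}: show that at time $t$ neither \testone nor \testtwo returns \textit{fail}, and in addition (the only new restart route of \masterucrl) that \mucrl does not terminate. Since $t$ lies in the epoch started at $t_0$, no restart has occurred strictly between $t_0$ and $t$, so \mucrl has not terminated before $t$ and \pref{lemma: ucrl aggregated regret} is available at all times up to $t$. Write $w = t-t_0+1$. The structural fact that makes everything go through is that every \acw instance $\inst$ that \mucrl maintains inside the current epoch has $\inst.s \ge t_0$ (each block restarts \mucrl, which starts fresh instances), so every non-stationarity window arising below -- $[\inst.s,t]$, $[t_n,t]$, and the data window of any episode of any such instance -- lies inside $[t_0,t]$; hence $\dev_{[\inst.s,t];\dupp}$, $\dev_{[t_n,t];\dupp}$, $\dev^p_{[\inst.s,t]}$, etc.\ are all $\le \dev_{[t_0,t];\dupp} < \dupp S\sqrt{A/w}$, and $\le \dupp S\sqrt{A/\ell}$ for any relevant sub-window of length $\ell\le w$.

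\textbf{\mucrl does not terminate.} The only termination route of \acw is \pref{line: return terminate}, fired when the cumulative widening $\Gamma$ of an active instance exceeds $4S\sqrt{At\log(SAT/\delta)}$ (with $t$ the instance's local step count). I would first bound the per-episode widening: once $\eta \ge \dev^p_{[\inst.s,t]}$, for every $(s,a)$ the transition $p_{\tau_0}(\cdot|s,a)$ of any reference time $\tau_0$ in the instance's data window lies in $\calP_k^\eta(s,a)$, since $\barp_k(\cdot|s,a)$ is a convex combination of true transitions from that window (so $\|\barp_k(\cdot|s,a)-p_{\tau_0}(\cdot|s,a)\|_1 \le \dev^p_{[\inst.s,t]}$) and $\|\hatp_k-\barp_k\|_1 \le \sqrt{S}\,\conf_k$ with high probability. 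Because the diameter of an MDP depends only on its transition kernel, the MDP $(\hatr_k,p_{\tau_0})$ has diameter $\le D_{\max}\le \dupp$, so \pref{lemma: bound diamete} gives $\spn(\tildeh)\le 2\dupp$ and the adaptive-widening loop of \pref{alg: base alg UCRL} exits; by doubling, the chosen $\eta_k$ overshoots by at most a factor $2$, so $\eta_k \le \max\{1/T, 2\dev^p_{[t_0,t]}\}$. Summing over the at most $t'$ local active steps of the instance, $\Gamma \le t'(1/T + 2\dev^p_{[t_0,t]}) \le 1 + 2t'\dev^p_{[t_0,t]}$; since $2\dupp\dev^p_{[t_0,t]} \le \dev_{[t_0,t];\dupp} < \dupp S\sqrt{A/w} \le \dupp S\sqrt{A/t'}$ (as $t'\le w$), this is $< 1 + S\sqrt{At'} \le 4S\sqrt{At'\log(SAT/\delta)}$, so the threshold is never reached.

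\textbf{The two tests.} For both I split on the magnitude of $\avgreg_{\ucrl}(\ell;\dupp)$ at the relevant length ($\ell=t-t_n+1$ for \testtwo, $\ell=2^m$ for an order-$m$ instance ending at $t$ for \testone). In the ``saturated'' regime where $\avgreg_{\ucrl}(\ell;\dupp)=\Thetatil(\dupp)$ (i.e.\ $\ell$ not much larger than $S^2A$), I would use only trivial bounds: $R_\tau\in[0,1]$ and $\tildeg_\tau=\tildeJ_{k(\tau)}\in[-1,2]$ (the optimal average reward of an MDP with rewards in $[0,1]$ is in $[0,1]$, up to the EVI error $\le 1$), so the left-hand sides of \testone and \testtwo are at most $2$, which is below $9\multiavg_{\ucrl}(\ell;\dupp)$ and $3\multiavg_{\ucrl}(\ell;\dupp) = \Omega(\nmax\log(T/\delta)\dupp)\ge 2$. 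In the other regime $\avgreg_{\ucrl}(\ell;\dupp)=\Thetatil(\dupp S\sqrt{A/\ell})$, and the hypothesis forces $\dev_{[t_n,t];\dupp},\dev_{[\inst.s,t];\dupp}\le\dev_{[t_0,t];\dupp}< \dupp S\sqrt{A/\ell}\lesssim \avgreg_{\ucrl}(\ell;\dupp)$, i.e.\ the drift over the window is, up to logs, below one unit of average regret. For \testtwo I then apply the second inequality of \pref{lemma: ucrl aggregated regret} to the order-$n$ instance covering the block: $\frac{1}{t-t_n+1}\sum_{\tau=t_n}^t(\tildeg_\tau-R_\tau) \le \multiavg_{\ucrl}(t-t_n+1;\dupp) + \nmax\dev_{[t_n,t];\dupp} < 3\multiavg_{\ucrl}(t-t_n+1;\dupp)$, using $\nmax\avgreg_{\ucrl}\ll\multiavg_{\ucrl}$.

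\textbf{\testone and the main obstacle.} \testone requires $\frac{1}{2^m}\sum_{\tau=\inst.s}^{\inst.e}R_\tau < U_t + 9\multiavg_{\ucrl}(2^m;\dupp)$ for every order-$m$ instance $\inst$ with $\inst.e=t$; in the non-saturated regime I lower-bound $U_t=\min_{\tau\in[t_n,t]}\tildeg_\tau$ exactly as in \pref{lem: infreq restart} from the optimism part of \pref{lemma: ucrl aggregated regret} (applied to whichever instance is active at each $\tau$, each starting $\ge t_n$) together with $|\min_{[t_n,t]}J^\star_\tau - \min_{[\inst.s,\inst.e]}J^\star_\tau|\le\dev^J_{[t_n,t]}$, obtaining $U_t \ge \min_{\tau\in[\inst.s,\inst.e]}J^\star_\tau - 2\dev_{[t_n,t];\dupp}$. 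The genuinely new step -- and the one I expect to be the main obstacle -- is the upper bound on $\frac{1}{2^m}\sum R_\tau$: here $\E[R_\tau]=r_\tau(s_\tau,a_\tau)$ is a one-step reward rather than a policy value $\le J^\star_\tau$, so the argument of \pref{lem: infreq restart} (``$\E[R_\tau]\le f^\star_\tau$'') is unavailable. Instead I would fix a reference MDP at $\tau_0=\inst.s$ with optimal bias $h$ ($\spn(h)\le D_{\max}\le\dupp$), use its Bellman optimality inequality $r_{\tau_0}(s,a)+\sum_{s'}p_{\tau_0}(s'|s,a)h(s')\le J^\star_{\tau_0}+h(s)$ for all $s,a$, transport it to the time-$\tau$ MDP at a per-step cost $\dev^r_{[\inst.s,\inst.e]}+D_{\max}\dev^p_{[\inst.s,\inst.e]}$ (using $\spn(h)\le\dupp$ and \pref{lemma: ortner lemma}), sum over $\tau\in[\inst.s,\inst.e]$ with $(s,a)=(s_\tau,a_\tau)$, telescope the bias terms (paying $\spn(h)$ per state re-assignment, of which there are few by the instance-count bound from \pref{lemma: multi-scale reg}), and bound the martingale part by Azuma; this yields $\frac{1}{2^m}\sum_{\tau\in[\inst.s,\inst.e]}\E[R_\tau]\le\min_{\tau\in[\inst.s,\inst.e]}J^\star_\tau + \order(\dev_{[\inst.s,\inst.e];\dupp}) + \order(\multiavg_{\ucrl}(2^m;\dupp))$, and a further Azuma step replaces $\E[R_\tau]$ by $R_\tau$. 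Combining with the lower bound on $U_t$ and $\dev_{[t_0,t];\dupp}\lesssim\avgreg_{\ucrl}(2^m;\dupp)\ll\multiavg_{\ucrl}(2^m;\dupp)$ in this regime, $\frac{1}{2^m}\sum R_\tau - U_t < 9\multiavg_{\ucrl}(2^m;\dupp)$ once the numerical constants hidden inside $\multiavg_{\ucrl}$ are taken large enough, which they are. Hence no test fails and no restart is triggered at $t$.
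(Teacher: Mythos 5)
Your proposal is correct and follows essentially the same route as the paper's proof: the two tests are handled via \pref{lemma: ucrl aggregated regret} together with a Bellman-optimality/bias-transport/Azuma argument that upper-bounds the realized average reward over the instance's window by (roughly) the minimum of $J^\star_\tau$ there (the paper anchors the bias at time $t$ instead of $\inst.s$, which is immaterial), and termination is ruled out by showing a true transition kernel lies in $\calP_k^\eta$ once $\eta$ slightly exceeds the transition drift, so \pref{lemma: bound diamete} caps each $\eta_k$ and the cumulative widening stays below the threshold in \pref{line: terminate condition}. Your explicit saturated/non-saturated regime split and the allowance for re-assignment costs in the telescoping are harmless extra care rather than a different argument (the true trajectory has no re-assignments, and the paper treats the small-window regime implicitly).
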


\begin{proof}
To verify that \testone will not fail with high probability, we follow very similar steps as in \pref{lem: infreq restart}.  Let $t=\inst.e$ where $\inst$ is an order-$m$ \alg in block $n$. Then with high probability (the following calculation is same as that in the proof of \pref{lem: infreq restart} except for the third inequality), 
\begin{align*}
    U_t 
    &= \min_{\tau\in [t_n, t]} \tildeg_\tau \\
    &\geq \min_{\tau\in[t_n, t]} J^\star_\tau - \dev_{[t_n, t]; \dupp}  \tag{by \pref{lemma: ucrl aggregated regret}}  \\
    &\geq \frac{1}{2^m}\sum_{\tau\in[\inst.s, t]} J^\star_\tau - 3\dev_{[t_n, t]; \dupp}  \tag{$[\inst.s, t]\subseteq [t_0, t]$} \\
    &\geq \frac{1}{2^m} \sum_{\tau\in[\inst.s, t]} R_\tau - 4\dupp \sqrt{\frac{\log(T/\delta)}{2^m}} - 4\dev_{[t_n, t]; \dupp}    \tag{explained below}\\
    &\geq \frac{1}{2^m} \sum_{\tau\in[\inst.s, t]} R_\tau - 4\multiavg_\ucrl(2^m; \dupp) - 4\avgreg_\ucrl(t-t_0+1; \dupp)  \\
    &\geq \frac{1}{2^m} \sum_{\tau\in[\inst.s, t]} R_\tau - 8\multiavg_\ucrl(2^m; \dupp).  \tag{$\avgreg(t-t_0+1)\leq \avgreg(2^m)$ because $\avgreg(\cdot)$ is decreasing}
\end{align*}
where the third inequality is based on the following calculation:  for all $\tau\in[\inst.s, t]$, 
\begin{align*}
     J_t^\star 
     &= r_t(s_\tau, a_\tau) + \sum_{s'} p_t(s'|s_\tau, a_\tau) h_t^\star(s') - h_t^\star(s_\tau) \\
     &\geq r_\tau(s_\tau, a_\tau) + \sum_{s'} p_\tau(s'|s_\tau, a_\tau) h_t^\star(s') - h_t^\star(s_\tau) - \left(\dev^r_{[\inst.s, t]} + D_{\max} \dev^p_{[\inst.s, t]}\right) \\
     &\geq r_\tau(s_\tau, a_\tau) + \sum_{s'} p_\tau(s'|s_\tau, a_\tau) h_t^\star(s') - h_t^\star(s_\tau) - \left(\dev^r_{[\inst.s, t]} + \dupp \dev^p_{[\inst.s, t]}\right)   \tag{by the assumption $\dupp\geq D_{\max}$}
\end{align*}
and thus 
\begin{align*}
     \frac{1}{2^m}\sum_{\tau\in[\inst.s, t]} J^\star_\tau 
     &\geq J^\star_t  - \dev_{[\inst.s, t]}^J \\
     &\geq \frac{1}{2^m} \sum_{\tau\in[\inst.s, t]} \left(r_\tau(s_\tau, a_\tau) + \sum_{s'} p_\tau(s'|s_\tau, a_\tau) h_t^\star(s') - h_t^\star(s_\tau)\right) - \dev_{[\inst.s, t]; \dupp}   \\
     &\geq \frac{1}{2^m}\sum_{\tau\in[\inst.s, t]} \Big(R_\tau + h_t^\star(s_{\tau+1}) - h_t^\star(s_\tau)\Big) -  2D_{\max}\sqrt{\frac{2\log(SAT/\delta)}{2^m}}   - \dev_{[\inst.s, t]; \dupp} \tag{Azuma's inequality} \\
     &\geq \frac{1}{2^m}\sum_{\tau\in[\inst.s, t]} R_\tau - 4\dupp\sqrt{\frac{\log(SAT/\delta)}{2^m}}  - \dev_{[\inst.s, t]; \dupp}.   \tag{$D_{\max}\leq \dupp$}
\end{align*}
So with high probability, \testone will not return fail. 

Furthremore, by \pref{lemma: ucrl aggregated regret}, with high probability, 
\begin{align*}
     \frac{1}{t-t_n+1}\sum_{\tau=t_n}^t \left(\tildeg_\tau - R_\tau\right)\leq \multiavg_\ucrl(t-t_n+1; \dupp) + \dev_{[t_n, t]; \dupp} \leq 2\multiavg_{\ucrl}(t-t_n+1; \dupp)
\end{align*}
where the last inequality is by the condition on $\dev_{[t_0, t], \dupp}$. 
Therefore, with high probability, \testtwo will not return fail either. 

It remains to show that the \acw will not terminate and call for restart under the specified condition. By \pref{lemma: bound diamete}, if $\calP_k^\eta$ contains an MDP whose diameter is upper bounded by $\dupp$, then the span of the output bias vector is upper bounded by $2\dupp$, and then the if-statement in \pref{line: enough widening} of \pref{alg: base alg UCRL} will be triggered.  Therefore, to show that \acw will not terminate, we upper bound the $\eta_k$ that needs to be added to $\calP_k$ in order to make at least one true MDP (whose diameter is upper bounded by $D_{\max}\leq \dupp$) lie in $\calP_k^{\eta_k}$. Then we further argue that $\sum_{\tau=t_0}^{t} \eta_{k(\tau)}$ is not large enough to reach the condition in \pref{line: terminate condition} of \pref{alg: base alg UCRL}. 

%Below we show that for any Modified \ucrl, on any episode $k$ that starts before $t$, $\eta_k\leq 2S\sqrt{\frac{A\log(1/\delta)}{t - t_0+1}}$. 

For all episode $k$ that starts before $t$, by Azuma's inequality, 
\begin{align*}
     \left\|\barp_k(\cdot|s,a) - \hatp_k(\cdot|s,a)\right\|_1 \leq 2\sqrt{\frac{S\log(1/\delta)}{N_k^+(s,a)}}. 
\end{align*}
By the condition on $\dev_{[t_0,t]; \dupp}$, we have 
\begin{align*}
     \left\|p_{t_0}(\cdot|s,a) - \barp_k(\cdot|s,a) \right\|_1 \leq \frac{1}{\dupp}\dev_{[t_0,t]; \dupp} \leq S\sqrt{\frac{A}{t-t_0+1}}.
\end{align*}
Combining them, we get 
\begin{align*}
     \left\|p_{t_0}(\cdot|s,a) - \hatp_k(\cdot|s,a)\right\|_1 \leq 2\sqrt{\frac{S\log(1/\delta)}{N_k^+(s,a)}} + S\sqrt{\frac{A}{t-t_0+1}}. 
\end{align*}
Therefore, we see that in \pref{line: terminate condition} of \pref{alg: base alg UCRL}, as long as $\eta \geq S\sqrt{\frac{A}{t-t_0+1}}$, $p_{t_0}$ is contained in $\calP^\eta_k$. Then we have $\spn(\tildeh)\leq 2D_{\max}\leq 2\dupp$ by \pref{lemma: bound diamete}, and the for-loop will be broken at this $\eta$. 

Thus we conclude that $\eta_k\leq 2S\sqrt{\frac{A}{t-t_0+1}}$ for all episode $k$ started before $t$. Thus, $\sum_{\tau=t_0}^t \eta_{k(\tau)} \leq (t-t_0+1)\times 2S\sqrt{\frac{A}{t-t_0+1}}=2S\sqrt{A(t-t_0+1)}$, and thus the algorithm will not terminate and call for restart at time $t$. 
\end{proof}

\begin{lemma}[\textit{c.f.} \pref{lemma: epoch upper}]
\label{lemma: RL number epoch}
If $\dupp\geq D_{\max}$, then the number of epochs is upper bounded by $\min\left\{L,\ \ 1+ 3\left(\frac{\Delta^r + \Delta^p}{S\sqrt{A}}\right)^{\frac{2}{3}} T^{\frac{1}{3}}\right\}$. 
\end{lemma}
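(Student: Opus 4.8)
The plan is to mirror the proof of \pref{lemma: epoch upper}, replacing \pref{lem: infreq restart} by its infinite-horizon counterpart \pref{lemma: infrequent RL restart} and adding one conversion step that deals with the $\dupp$-dependent non-stationarity measure. Fix any epoch $\calE_i=[t_{0,i},E_i]$ that is \emph{not} the last one, so by definition a restart is triggered at time $E_i$. Taking the contrapositive of \pref{lemma: infrequent RL restart} at $t=E_i$ (legitimate since we assume $\dupp\ge D_{\max}$), we get that with high probability $\dev_{[t_{0,i},E_i];\dupp}\ge \dupp S\sqrt{A/|\calE_i|}$, where $|\calE_i|:=E_i-t_{0,i}+1$. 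Then I would convert the left-hand side, which equals $\dev^r_{\calE_i}+2\dupp\,\dev^p_{\calE_i}+\dev^J_{\calE_i}$, into a quantity involving only $\dev^r$ and $\dev^p$: applying \pref{lemma: ortner lemma} round by round gives $\dev^J_{\calE_i}\le \dev^r_{\calE_i}+D_{\max}\dev^p_{\calE_i}\le \dev^r_{\calE_i}+\dupp\,\dev^p_{\calE_i}$, hence $\dev_{\calE_i;\dupp}\le 2\dev^r_{\calE_i}+3\dupp\,\dev^p_{\calE_i}\le 3\dupp(\dev^r_{\calE_i}+\dev^p_{\calE_i})$ (using $\dupp\ge1$). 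Cancelling the $\dupp$ factors, every non-last epoch therefore satisfies
\[
\dev^r_{\calE_i}+\dev^p_{\calE_i}\ \ge\ \frac{S\sqrt{A}}{3\sqrt{|\calE_i|}}\ >\ 0 .
\]

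From this single inequality both bounds follow, just as in \pref{lemma: interval divide}. For the bound by $L$: strict positivity says each non-last epoch's interior $[t_{0,i},E_i-1]$ contains a round $\tau$ with $\dev^r(\tau)+\dev^p(\tau)>0$; since the non-last epochs are disjoint, these rounds are distinct, and by the definition of $L$ there are at most $L-1$ such rounds in $[1,T-1]$, so the number of epochs $N$ obeys $N-1\le L-1$. For the bound by $\dev^r+\dev^p=\dev$: the non-last epochs partition a prefix of $[1,T]$, so $\sum_i|\calE_i|\le T$ and $\sum_i(\dev^r_{\calE_i}+\dev^p_{\calE_i})\le \dev$ (cf. \pref{eq: dev partition}); summing the display gives $\dev\ge \tfrac{S\sqrt A}{3}\sum_{i=1}^{N-1}|\calE_i|^{-1/2}$. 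By H\"older's inequality with exponents $3/2$ and $3$, $N-1=\sum_i|\calE_i|^{-1/3}|\calE_i|^{1/3}\le\big(\sum_i|\calE_i|^{-1/2}\big)^{2/3}\big(\sum_i|\calE_i|\big)^{1/3}\le\big(\sum_i|\calE_i|^{-1/2}\big)^{2/3}T^{1/3}$, i.e. $\sum_i|\calE_i|^{-1/2}\ge(N-1)^{3/2}T^{-1/2}$. Combining, $(N-1)^{3/2}\le 3\dev\sqrt T/(S\sqrt A)$, hence $N-1\le 3^{2/3}\big(\dev/(S\sqrt A)\big)^{2/3}T^{1/3}\le 3\big(\dev/(S\sqrt A)\big)^{2/3}T^{1/3}$. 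Taking the minimum of the two bounds completes the proof.

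The main obstacle, and essentially the only place that needs care, is the conversion step: one must invoke \pref{lemma: infrequent RL restart} at exactly the terminal round $E_i$ of each non-final epoch (so that ``a restart is triggered at this round'' is available as a hypothesis), and then push the $\dupp$ factors through $\dev_{\cdot;\dupp}$ and \pref{lemma: ortner lemma} so they cancel cleanly against the $\dupp$ on the right-hand side; the assumption $\dupp\ge D_{\max}$ is used precisely once, to upgrade $D_{\max}$ to $\dupp$ inside $\dev^J$. After that, everything is the same H\"older / disjoint-intervals bookkeeping already carried out in \pref{lemma: interval divide} and \pref{lemma: epoch upper}, so no new idea is required.
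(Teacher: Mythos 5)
Your proposal is correct and follows essentially the same route as the paper: apply the contrapositive of \pref{lemma: infrequent RL restart} at the end of each non-final epoch to get $\dev_{\calE_i;\dupp}\geq \dupp S\sqrt{A/|\calE_i|}$, convert $\dev_{\cdot;\dupp}$ into $\Delta^r+\Delta^p$ via \pref{lemma: ortner lemma} and $\dupp\geq\max\{1,D_{\max}\}$, then use H\"older for the $\Delta$-bound and the fact that each non-final epoch must contain a change point for the $L$-bound. The only (immaterial) difference is that you perform the conversion per epoch before summing, while the paper applies H\"older first and converts $\dev_{[1,T];\dupp}/\dupp$ at the end.
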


\begin{proof}
    Let $\calE_1, \ldots, \calE_N$ be the epochs. By \pref{lemma: infrequent RL restart}, for $i\leq N-1$, we must have $\dev_{\calE_i; \dupp}\geq \dupp S\sqrt{\frac{A}{|\calE_i|}}$. By H\"{o}lder's inequality, 
    \begin{align*}
         N-1 
         &\leq \left(\sum_{i=1}^{N-1}\frac{1}{\sqrt{|\calE_i|}}\right)^{\frac{2}{3}} \left(\sum_{i=1}^{N-1} |\calE_i| \right)^{\frac{1}{3}}\leq \left(\frac{\dev_{[1, T]; \dupp}}{\dupp S\sqrt{A}}\right)^{\frac{2}{3}}T^{\frac{1}{3}}. 
    \end{align*}
    We can further upper bound the term $\frac{1}{\dupp} \dev_{[1,T]; \dupp}$ as follows:
    \begin{align*}
         \frac{1}{\dupp} \dev_{[1,T]; \dupp} 
         &= \frac{1}{\dupp}\left(\dev^r + 2\dupp \dev^p + \dev^J \right) \\
         &\leq \frac{1}{\dupp}\left(2\dev^r+ (2\dupp + D_{\max}) \dev^p  \right)   \tag{by \pref{lemma: ortner lemma}} \\
         &\leq \frac{3}{\dupp}\left(\dev^r + \dupp \dev^p \right) \\
         &\leq 3(\Delta^r + \Delta^p). 
    \end{align*}
    Thus we get 
    \begin{align*}
         N\leq 1+ 3\left(\frac{\Delta^r + \Delta^p}{S\sqrt{A}}\right)^{\frac{2}{3}} T^{\frac{1}{3}}. 
    \end{align*}
    
    Also, by \pref{lemma: infrequent RL restart}, when $\dupp \geq D_{\max}$, an epoch is created only when the reward function or the transition function changes. Thus the number of epochs is also upper bounded by $L$. 
\end{proof}

\begin{lemma}\label{lemma: single epoch RL regret}
     In every epoch $\calE$, the dynamic regret of \masteralg-\ucrl is upper bounded by 
     \begin{align*}
          \otil\left(X + \dupp S\sqrt{A|\calE|} + \dupp S^2A^2 \right), 
     \end{align*}
     where $X$ is the minimum of the following two terms: 
     \[
         \dupp S\sqrt{AL_{\calE}|\calE|} + \dupp SA L_{\calE} 
     \] 
     and
     \[
       \left(\dupp^2 S^2 A\dev_{\calE; \dupp} |\calE|^2\right)^{\nicefrac{1}{3}}  + SA\sqrt{\dupp \dev_{\calE; \dupp}|\calE|} + \left(\dupp S A^2 \dev_{\calE; \dupp}^2|\calE|\right)^{\nicefrac{1}{3}} + SA\dev_{\calE; \dupp}.
      \]
\end{lemma}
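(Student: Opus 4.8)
The plan is to obtain this bound as a specialization of the single-epoch estimate \pref{lemma: sum epoch regret} once we check that the base algorithm \acw (run with a fixed input $\dupp$) fits the template of \pref{def: standard}. By \pref{lem: modified UCRL}, $\avgreg_{\ucrl}(t;\dupp) = \widetilde{\Theta}(\min\{\dupp S\sqrt{A/t} + \dupp SA/t,\ \dupp\})$, so $\regbound_{\ucrl}(t;\dupp) = t\,\avgreg_{\ucrl}(t;\dupp)$ satisfies
\[
\regbound_{\ucrl}(t;\dupp) \;=\; \widetilde{\Theta}\!\left(\min\left\{\dupp S\sqrt{At} + \dupp SA,\ \dupp t\right\}\right),
\]
which is exactly the form of \pref{def: standard} with $p=\tfrac{1}{2}$, $c_1 = \widetilde{\Theta}(\dupp S\sqrt{A})$, $c_2 = \widetilde{\Theta}(\dupp SA)$, $c_3 = \widetilde{\Theta}(\dupp)$; the requirements $c_3\ge 1$, $\avgreg_{\ucrl}(t;\dupp)\ge 1/\sqrt t$, and monotonicity of $\regbound_{\ucrl}(\cdot;\dupp)$ all follow from $\dupp\ge1$.

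Next I would use the fact already recorded at the start of \pref{app: epoch regret} and \pref{app: epoch regret seperate}: the single-block guarantee \pref{lemma: block lemma for RL} for \masterucrl is precisely of the form \pref{eq: block regret restate}, with $\regbound,\avgreg$ read as $\regbound_{\ucrl}(\cdot;\dupp),\avgreg_{\ucrl}(\cdot;\dupp)$, with $\dev_\calI$ read as $\dev_{\calI;\dupp}$ (cf.\ \pref{def: RL measure}), and with $f^\star_t = J^\star_t$. Therefore the derivations of \pref{lemma: single block regret form} and then \pref{lemma: sum epoch regret} carry over verbatim — note in particular that \pref{lemma: block lemma for RL} already permits a block to end early when \mucrl terminates, so the early-termination mechanism of \acw needs no separate argument — and \pref{lemma: sum epoch regret} gives, for every epoch $\calE$,
\[
\sum_{\tau\in\calE}\left(J^\star_\tau - R_\tau\right) \;\le\; \otil\!\left(\min\left\{\Reg_L(\calE),\ \Reg_\dev(\calE)\right\} + \frac{c_2c_3}{c_1}\,|\calE|^{\nicefrac{1}{2}} + \frac{c_2^{2}}{c_3}\right),
\]
with $\Reg_L,\Reg_\dev$ as in \pref{lemma: single block regret form}, evaluated with the constants above and with $\dev_\calE=\dev_{\calE;\dupp}$.

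The rest is bookkeeping: plug $c_1 = \widetilde{\Theta}(\dupp S\sqrt{A})$, $c_2 = \widetilde{\Theta}(\dupp SA)$, $c_3 = \widetilde{\Theta}(\dupp)$, $p=\tfrac{1}{2}$ (so $\tfrac{1}{2-p}=\tfrac{2}{3}$) into every term. This gives $\Reg_L(\calE) = \otil(\dupp S\sqrt{A L_\calE|\calE|} + \dupp SA L_\calE)$, the first of the two terms defining $X$; and, expanding $\Reg_\dev(\calE)$ term by term, $(c_1\,\dev_{\calE;\dupp}^{\nicefrac{1}{2}}|\calE|)^{\nicefrac{2}{3}} = (\dupp^2 S^2 A\,\dev_{\calE;\dupp}|\calE|^2)^{\nicefrac{1}{3}}$, $c_1|\calE|^{\nicefrac{1}{2}} = \dupp S\sqrt{A|\calE|}$, $c_1(c_3^{-1}\dev_{\calE;\dupp})^{\nicefrac{1}{2}}|\calE|^{\nicefrac{1}{2}} = S\sqrt{A\dupp\,\dev_{\calE;\dupp}|\calE|} \le SA\sqrt{\dupp\,\dev_{\calE;\dupp}|\calE|}$ (since $A\ge1$), $c_2(c_1^{-1}\dev_{\calE;\dupp}|\calE|^{\nicefrac{1}{2}})^{\nicefrac{2}{3}} = (\dupp S A^2\,\dev_{\calE;\dupp}^2|\calE|)^{\nicefrac{1}{3}}$, the stray constant $c_2 = \dupp SA \le \dupp S^2A^2$, and $c_2c_3^{-1}\dev_{\calE;\dupp} = SA\,\dev_{\calE;\dupp}$; finally $\frac{c_2c_3}{c_1}|\calE|^{\nicefrac{1}{2}} = \dupp\sqrt{A|\calE|}\le \dupp S\sqrt{A|\calE|}$ and $\frac{c_2^2}{c_3} = \dupp S^2A^2$. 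Collecting these, the right-hand side becomes $\otil(X + \dupp S\sqrt{A|\calE|} + \dupp S^2A^2)$ with $X$ the minimum of the two displayed expressions; and since $J^\star_\tau = f^\star_\tau$ in the infinite-horizon setting, this is exactly the dynamic regret of \masteralg-\ucrl on $\calE$, as claimed. The only genuinely delicate point is the first step — verifying that \acw's guarantee really matches \pref{def: standard}, including the freedom (noted there) to enlarge the constants without spoiling the scheduling probabilities of \pref{proc: alg profile}; everything afterward is inherited from \pref{lemma: sum epoch regret} having already been established for the infinite-horizon case.
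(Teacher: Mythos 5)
Your proposal is correct and follows essentially the same route as the paper: it instantiates \pref{def: standard} for $\regbound_{\ucrl}(\cdot;\dupp)$ with $p=\tfrac{1}{2}$, $c_1=\widetilde{\Theta}(\dupp S\sqrt{A})$, $c_2=\widetilde{\Theta}(\dupp SA)$, $c_3=\widetilde{\Theta}(\dupp)$, invokes \pref{lemma: sum epoch regret} (already valid for the infinite-horizon case via \pref{lemma: block lemma for RL}), and collects terms exactly as the paper does. The only cosmetic difference is that you bound $S\sqrt{A\dupp\dev_{\calE;\dupp}|\calE|}\le SA\sqrt{\dupp\dev_{\calE;\dupp}|\calE|}$ and $\dupp\sqrt{A|\calE|}\le \dupp S\sqrt{A|\calE|}$ explicitly, which the paper does implicitly.
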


\begin{proof}
     Let $\calE_1, \ldots, \calE_N$ be the epochs. 
     By \pref{lemma: sum epoch regret}, we know that the regret within an epoch $\calE$ is $\otil\left(\min\left\{\Reg_L(\calE), \Reg_{\dev}(\calE)\right\} + \frac{c_2c_3}{c_1}|\calE|^{1-p} + \frac{c_2^2}{c_3}\right)$
     with 
     
     \begin{align*}
          \Reg_{L}(\calE)&=c_1 L_{\calE}^{1-p} |\calE|^{p} + c_2L_{\calE}, \\
          \Reg_{\dev}(\calE) &= \left(c_1 \dev_{\calE; \dupp}^{1-p}|\calE|\right)^{\frac{1}{2-p}}  + c_1|\calE|^p + c_1(c_3^{-1}\dev_{\calE; \dupp})^{1-p} |\calE|^{p} + c_2\left(c_1^{-1}\dev_{\calE; \dupp}|\calE|^{1-p}\right)^{\frac{1}{2-p}} + c_2 + c_2c_3^{-1}\dev_{\calE; \dupp}
     \end{align*}
     when $\regbound(t)$ is in the form of \pref{def: standard}. In our case $\regbound_{\ucrl}(t; \dupp)$ is in this form with $c_1=\dupp S\sqrt{A}$, $c_2=\dupp SA$, $c_3=\dupp$, and $p=\frac{1}{2}$. Using them in the bound above, we get that in an epoch, the dynamic regret is upper bounded by 
     \begin{align*}
          \otil\left(\min\left\{\Reg_L(\calE), \Reg_{\dev}(\calE)\right\} + \dupp\sqrt{A|\calE|} +  \dupp S^2A^2\right)
     \end{align*}
     where 
     \begin{align*}
          \Reg_L(\calE) &=\dupp S\sqrt{AL_{\calE}|\calE|} + \dupp SA L_{\calE}\\
          \Reg_{\dev}(\calE) &=  \left(\dupp^2 S^2 A\dev_{\calE; \dupp} |\calE|^2\right)^{\nicefrac{1}{3}} + \dupp S\sqrt{A|\calE|} + SA\sqrt{\dupp \dev_{\calE; \dupp}|\calE|} + \left(\dupp S A^2 \dev_{\calE; \dupp}^2|\calE|\right)^{\nicefrac{1}{3}} + \dupp SA + SA\dev_{\calE; \dupp}. 
     \end{align*}
     Collecting terms finishes the proof. 
\end{proof}

\begin{theorem}
\label{thm: known Dmax case}
     If $D_{\max}\leq \dupp \leq 2D_{\max}$, then \masterucrl guarantees the following dynamic regret bound: 
     \begin{align*}
          \otil\left( \min\left\{D_{\max}S\sqrt{ALT}, D_{\max}\left(S^2A\right)^{\nicefrac{1}{3}}(\dev^r + \dev^r)^{\nicefrac{1}{3}}T^{\nicefrac{2}{3}} + D_{\max}S\sqrt{AT} \right\}  \right).
     \end{align*}
\end{theorem}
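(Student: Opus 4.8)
The plan is to mirror the final step in the proof of \pref{thm: regret bound}: plug the per-epoch regret bound of \pref{lemma: single epoch RL regret} into a sum over all epochs, and control that sum using the bound on the number of epochs from \pref{lemma: RL number epoch} together with Hölder's inequality and the partition inequalities \pref{eq: L partition}--\pref{eq: dev partition}. Throughout, since $D_{\max}\le \dupp\le 2D_{\max}$, we have $\dupp=\Theta(D_{\max})$ and may replace every $\dupp$ by $D_{\max}$ up to constants. Let $\calE_1,\dots,\calE_N$ denote the epochs of \masterucrl on $[1,T]$, so $\sum_i|\calE_i|=T$; all bounds below hold with high probability after a union bound over the events of \pref{lemma: ucrl aggregated regret}, \pref{lemma: block lemma for RL}, and \pref{lemma: infrequent RL restart}.

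For the $L$-type bound, in each epoch we use $X_i\le \Reg_L(\calE_i)=\otil\big(D_{\max}S\sqrt{AL_{\calE_i}|\calE_i|}+D_{\max}SAL_{\calE_i}\big)$. Summing with Cauchy--Schwarz and $\sum_i L_{\calE_i}\le L+(N-1)$ (\pref{eq: L partition}) gives $\otil\big(D_{\max}S\sqrt{A(L+N)T}+D_{\max}SA(L+N)\big)$, plus the carried-over $\otil\big(D_{\max}S\sqrt{ANT}+ND_{\max}S^2A^2\big)$ from the $\dupp S\sqrt{A|\calE_i|}$ and $\dupp S^2A^2$ terms of \pref{lemma: single epoch RL regret}; since $N\le L$ by \pref{lemma: RL number epoch}, everything collapses to $\otil(D_{\max}S\sqrt{ALT})$. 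For the $\dev$-type bound, in each epoch we use the second expression $\Reg_\dev(\calE_i)$ from \pref{lemma: single epoch RL regret} and apply Hölder to each of its four summands separately — for instance $\sum_i(D_{\max}^2S^2A\,\dev_{\calE_i;\dupp}|\calE_i|^2)^{\nicefrac{1}{3}}\le (D_{\max}^2S^2A)^{\nicefrac{1}{3}}\big(\sum_i\dev_{\calE_i;\dupp}\big)^{\nicefrac{1}{3}}\big(\sum_i|\calE_i|\big)^{\nicefrac{2}{3}}$, and analogously for the $\sqrt{\cdot}$ term and the remaining cube-root term — using $\sum_i\dev_{\calE_i;\dupp}\le \dev_{[1,T];\dupp}$ (\pref{eq: dev partition}) and $\dev_{[1,T];\dupp}=\dev^r+2\dupp\dev^p+\dev^J=\otil\big(D_{\max}(\dev^r+\dev^p)\big)$, obtained from \pref{lemma: ortner lemma} exactly as in the proof of \pref{lemma: RL number epoch}. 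Substituting $N\le 1+3\big((\dev^r+\dev^p)/(S\sqrt{A})\big)^{\nicefrac{2}{3}}T^{\nicefrac{1}{3}}$ (again \pref{lemma: RL number epoch}) and keeping only the dominant terms yields $\otil\big(D_{\max}(S^2A)^{\nicefrac{1}{3}}(\dev^r+\dev^p)^{\nicefrac{1}{3}}T^{\nicefrac{2}{3}}+D_{\max}S\sqrt{AT}\big)$. Taking the minimum of the two bounds gives the claim.

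The main obstacle is the term-accounting in the $\dev$-type case: $\Reg_\dev(\calE_i)$ contributes four nontrivial summands in addition to the carried-over $D_{\max}S\sqrt{A|\calE_i|}$ and $D_{\max}S^2A^2$ pieces, and after applying Hölder with the appropriate exponents and substituting the epoch-count bound, each resulting term must be checked to be dominated by either $D_{\max}(S^2A)^{\nicefrac{1}{3}}(\dev^r+\dev^p)^{\nicefrac{1}{3}}T^{\nicefrac{2}{3}}$ or $D_{\max}S\sqrt{AT}$ under any sublinear growth of $\dev^r,\dev^p$ — the same ``dominated-term'' triage done at the end of the proof of \pref{thm: regret bound}. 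The only genuinely new ingredient beyond that proof is the conversion of $\dev_{[1,T];\dupp}$ into the standard measures $\dev^r,\dev^p$ via \pref{lemma: ortner lemma}, which is exactly where the hypothesis $\dupp=\Theta(D_{\max})$ is used; the hypothesis $\dupp\ge D_{\max}$ is in turn what licenses invoking \pref{lemma: RL number epoch} (through \pref{lemma: infrequent RL restart}).
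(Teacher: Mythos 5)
Your proposal is correct and follows essentially the same route as the paper's proof: summing the per-epoch bound of \pref{lemma: single epoch RL regret} over epochs via H\"older's inequality and \pref{eq: L partition}--\pref{eq: dev partition}, bounding the number of epochs with \pref{lemma: RL number epoch} (valid since $\dupp\geq D_{\max}$), and converting $\dev_{[1,T];\dupp}$ to $\order(\dev^r+D_{\max}\dev^p)$ via \pref{lemma: ortner lemma} using $\dupp\leq 2D_{\max}$. The term-by-term H\"older applications and the final dominated-term triage you describe match the paper's computation.
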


\begin{proof}
      Let $\calE_1, \ldots, \calE_N$ be the epochs. The per epoch dynamic regret is given by \pref{lemma: single epoch RL regret}.  Combining them with H\"{o}lder's inequality and \pref{eq: L partition}, \pref{eq: dev partition}, the dynamic regret in $[1, T]$ can be upper bounded by 
      
      \begin{align}
          \otil\left(\min\{\Reg_L, \Reg_{\dev}\} + \dupp S\sqrt{ANT} + \dupp S^2A^2 N \right)  \label{eq: tmptmp6}
      \end{align}
      where  
      \begin{align}
           \Reg_{L}=\dupp S\sqrt{A(L+N)T} + \dupp SA(L+N)   \label{eq: tmptmp7}
      \end{align}
      and 
      
      \begin{align}
          \Reg_{\dev}
          &=\left(\dupp ^2S^2A \dev_{[1, T]; \dupp} T^2\right)^{\nicefrac{1}{3}} + SA\sqrt{\dupp \dev_{[1,T]; \dupp} T} + \left(\dupp SA^2 \dev_{[1,T]; \dupp}^2 T\right)^{\nicefrac{1}{3}} + SA\dev_{[1,T]; \dupp}.   \label{eq: tmptmp8}
      \end{align}
      Since $\dupp\geq D_{\max}$, the number of epochs can be bounded using \pref{lemma: RL number epoch}: 
      \begin{align*}
            N\leq \min\left\{L,\ \ 1+ 3\left(\frac{\Delta^r + \Delta^p}{S\sqrt{A}}\right)^{\frac{2}{3}} T^{\frac{1}{3}}\right\}. 
      \end{align*}

      With $N\leq L$, \pref{eq: tmptmp6}, and \pref{eq: tmptmp7}, the dynamic regret in $[1,T]$ can bounded by (omitting lower order terms)
      \begin{align}
           \otil\left(\dupp S\sqrt{ALT}\right).   \label{eq: tmp11}
      \end{align}
      With $N\leq 1+ 3\left(\frac{\Delta^r + \Delta^p}{S\sqrt{A}}\right)^{\frac{2}{3}} T^{\frac{1}{3}}$, \pref{eq: tmptmp6}, and \pref{eq: tmptmp8}, the regret can alternatively be upper bounded by (omitting lower order terms)
      \begin{align}
          \otil\left( \left(\dupp ^2S^2A \dev_{[1, T]; \dupp} T^2\right)^{\nicefrac{1}{3}} + \dupp S\sqrt{AT} + \dupp (S^2A)^{\nicefrac{1}{3}}(\dev^r + \dev^p)^{\nicefrac{1}{3}}T^{\nicefrac{2}{3}} \right).   \label{eq: tmp10}
      \end{align}
Then notice that $\dupp \leq 2D_{\max}$ and thus $\dev_{[1, T]; \dupp} = \dev^r + 2\dupp \dev^p + \dev^J = \order(\dev^r + D_{\max}\dev^p)$ where we use \pref{lemma: ortner lemma}. Using these in \pref{eq: tmp11} and \pref{eq: tmp10} finishes the proof. 
\end{proof}

\begin{theorem}\label{thm: doubling trick algo for RL}
    The doubling trick strategy described in Section~\ref{subsec:MUCRL} for the unknown $D_{\max}$ and known $L$ case has a dynamic regret bound of $\otil\left(D_{\max}S\sqrt{ALT}\right)$; for the unknown $D_{\max}$ and known $\Delta$ case, the bound is  
    \begin{align*}
          \otil\left(D_{\max} S\sqrt{AT} + D_{\max}(S^2A)^{\nicefrac{1}{3}}(\dev^r+ \dev^p)^{\nicefrac{1}{3}}T^{\nicefrac{2}{3}}\right).
    \end{align*}
\end{theorem}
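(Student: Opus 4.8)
The proof plan is to re-use the single-epoch regret bound of \pref{lemma: single epoch RL regret} and the aggregation argument already carried out in the proof of \pref{thm: known Dmax case}; the only genuinely new work is to control how large the guess $\dupp$ can grow and how many epochs are produced over the whole horizon. I would organize the run of the doubling strategy into \emph{stages}, a stage being a maximal time interval during which $\dupp$ holds a fixed value; within a stage \masterucrl is run from scratch with that fixed $\dupp$, and successive stages use $\dupp=1,2,4,\dots$. As a preliminary reduction, if $D_{\max}\geq T$ then $D_{\max}S\sqrt{AT}\geq T$ while the dynamic regret is trivially at most $T$, so the claimed bounds hold; hence I may assume $D_{\max}<T$, in which case $\log D_{\max}=\otil(1)$.

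The central step is to show that the doubling rule stops increasing $\dupp$ once $\dupp\geq D_{\max}$. For this I would invoke \pref{lemma: RL number epoch}: a run of \masterucrl with any $\dupp\geq D_{\max}$ produces at most $\min\{L,\ 1+3((\dev^r+\dev^p)/(S\sqrt A))^{2/3}T^{1/3}\}$ epochs, and a one-line check — using $\dev=\dev^r+\dev^p$ from \pref{thm: main theorem for RL} — shows this is at most the threshold $\overline{N}$ in both cases, $\overline{N}=L$ and $\overline{N}=1+3(S^{-2}A^{-1}\dev^2T)^{1/3}$. (The condition $\dupp\geq D_{\max}$ is what prevents \acw from over-widening and terminating prematurely, which is exactly why the doubling search is needed.) Consequently such a stage never triggers a doubling, so $\dupp$ never surpasses the least power of two that is $\geq D_{\max}$; in particular $1\leq\dupp\leq 2D_{\max}$ throughout, there are $\otil(1)$ stages, and since each stage (including the last) contains at most $\overline{N}+1$ epochs by construction of the rule, the total number of epochs over $[1,T]$ is $N_{\mathrm{tot}}=\otil(\overline{N})$. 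The high-probability events behind \pref{lemma: single epoch RL regret} and \pref{lemma: RL number epoch} now need to hold simultaneously across the $\otil(1)$ runs, which costs only an extra logarithmic factor in the union bound.

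It then remains to sum the per-epoch bound. The point to note is that \pref{lemma: single epoch RL regret} holds for \emph{any} $\dupp\geq1$, since it relies only on \pref{lem: modified UCRL}; so for an epoch $\calE$ inside a stage with value $\dupp_\calE\leq 2D_{\max}$ it gives $\otil(X_\calE+D_{\max}S\sqrt{A|\calE|}+D_{\max}S^2A^2)$, where $X_\calE$ is the minimum of $D_{\max}S\sqrt{AL_\calE|\calE|}+D_{\max}SAL_\calE$ and the corresponding $\dev$-branch, and where I would use \pref{lemma: ortner lemma} to replace $\dev_{\calE;\dupp_\calE}=\dev^r_\calE+2\dupp_\calE\dev^p_\calE+\dev^J_\calE$ by $\order(\dev^r_\calE+D_{\max}\dev^p_\calE)$. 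Summing over all epochs via H\"older together with \pref{eq: L partition}, \pref{eq: dev partition} and $\sum_\calE|\calE|=T$, exactly as in the proof of \pref{thm: known Dmax case}, and then substituting $N_{\mathrm{tot}}=\otil(L)$ in the known-$L$ case and $N_{\mathrm{tot}}=\otil(1+(\dev^2T/(S^2A))^{1/3})$ in the known-$\dev$ case, one recovers precisely the two branches of \pref{thm: known Dmax case}: $\otil(D_{\max}S\sqrt{ALT})$ and $\otil(D_{\max}S\sqrt{AT}+D_{\max}(S^2A)^{1/3}(\dev^r+\dev^p)^{1/3}T^{2/3})$ respectively, after discarding dominated terms; the inequality $D_{\max}^{2/3}(\dev^r+D_{\max}\dev^p)^{1/3}\leq D_{\max}(\dev^r+\dev^p)^{1/3}$ (valid since $D_{\max}\geq1$) is what converts the $\dupp$-scaled non-stationarity measure back into $\dev^r+\dev^p$ in the final bound.

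The main obstacle I anticipate is the stabilization argument of the second paragraph: one must verify that $\overline{N}$ is chosen large enough that a stage with $\dupp\geq D_{\max}$ provably cannot exceed it — which is exactly what keeps $\dupp\leq 2D_{\max}$ and therefore keeps the per-epoch bound from blowing up — yet small enough that the inflated count $N_{\mathrm{tot}}=\otil(\overline{N})$ still yields the claimed rates after aggregation, which is why the two admissible choices of $\overline{N}$ are pinned to $L$ and to $(\dev^2T/(S^2A))^{1/3}$. Everything else is the bookkeeping already done for \pref{thm: known Dmax case}, the only changes being that $\dupp$ is now merely guaranteed to lie in $[1,2D_{\max}]$ rather than in $[D_{\max},2D_{\max}]$ (harmless, as the per-epoch bound is monotone in $\dupp$) and that the epoch count carries an extra $\otil(1)$ factor from the $\otil(1)$ stages.
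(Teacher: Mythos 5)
Your proposal is correct and follows essentially the same route as the paper: per-epoch bounds from \pref{lemma: single epoch RL regret} (valid for any $\dupp\geq 1$), the forced cap $\overline{N}$ on epochs per stage, \pref{lemma: RL number epoch} to show the doubling stabilizes once $\dupp\geq D_{\max}$ so that $\dupp=\order(D_{\max})$ throughout, and then the aggregation of \pref{thm: known Dmax case}. The only cosmetic difference is that you aggregate all epochs globally using $\dupp\leq 2D_{\max}$ and an $\otil(1)$ count of stages, whereas the paper sums the stage-wise bounds over the geometric sequence $\dupp=1,2,4,\dots,D_{\max}$ and invokes \pref{thm: known Dmax case} for the final stage; both yield the stated rates.
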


\begin{proof}
     For the known $L$ case, when $\dupp \leq D_{\max}$, recall that the number of epochs is forced to be $N\leq L$. Similar to the proof of \pref{thm: known Dmax case}, the regret in any of these epochs is upper bounded by 
     \begin{align*}
           \otil\left(\dupp S\sqrt{ANT}\right) = \otil\left(\dupp S\sqrt{ALT}\right). 
      \end{align*}
     Summing the above over $\dupp = 1,2,4,\ldots, D_{\max}$, we get $\otil\left(D_{\max} S\sqrt{ALT}\right)$. 
    When $\dupp$ first enters $[D_{\max}, 2D_{\max}]$, we use \pref{thm: known Dmax case} to bound the regret in the rest of the rounds, which is still of order $\otil\left(D_{\max} S\sqrt{ALT}\right)$. 

For the case of known $\Delta = \Delta^r + \Delta^p$, the analysis is similar: when $\dupp\leq D_{\max}$, we force $N=1+3(S^{-2}A^{-1}\Delta^2 T)^{\nicefrac{1}{3}}$, and thus the regret within any of these epochs is upper bounded by (similarly to the proof of \pref{thm: known Dmax case})
\begin{align*}
          \otil\left( \left(\dupp ^2S^2A \dev_{[1, T]; \dupp} T^2\right)^{\nicefrac{1}{3}} + \dupp S\sqrt{AT} + \dupp (S^2A)^{\nicefrac{1}{3}}(\dev^r + \dev^p)^{\nicefrac{1}{3}}T^{\nicefrac{2}{3}} \right). 
      \end{align*}
Summing this over $\dupp=1, 2, \ldots, D_{\max}$ and using $\Delta_{[1, T]; \dupp}=\Delta^r + 2\dupp \Delta^p + \Delta^J = \order(\Delta^r + D_{\max}\Delta^p)$ for $\dupp=\order(D_{\max})$, we get  $ \otil\left( D_{\max}\left(S^2A \dev T^2\right)^{\nicefrac{1}{3}} + D_{\max} S\sqrt{AT} \right)$. When $\dupp$ first enters $[D_{\max}, 2D_{\max}]$, we use \pref{thm: known Dmax case} to bound the regret in the rest of the rounds, which is still of the same order. 
\end{proof}

\section{Bandit-over-Reinforcement-Learning Approach}\label{app: borl discuss}

The idea of the BoRL framework is to run a multi-armed bandit algorithm over a set of sub-algorithms each using a different parameter. In our case, each sub-algorithm is a \masterucrl with a different guess on $D_{\max}$. The set of $\dupp$ only needs to span the range of $[1,\sqrt{T}]$, since if $D_{\max}=\Omega(\sqrt{T})$, the regret bound would be vacuous. 

We divide the horizon into $\frac{T}{B}$ equal-length intervals each of length $B=S\sqrt{AT}$. In each interval, sub-algorithm $i$ restarts a \masterucrl with $\dupp=2^{i-1}$. The reward of sub-algorithm $i$ in interval $b\in[\frac{T}{B}]$ is its total reward gained in the MDP for this interval. We denote $i^\star$ as the sub-algorithm that uses $\dupp \in [D_{\max}, 2D_{\max}]$. 

On top of these sub-algorithms, we run the EXP3.P algorithm \citep{auer2002nonstochastic}. The ``arms'' are the sub-algorithms. From the above description, for this EXP3.P, there are $M=\lceil \log_2 \sqrt{T}\rceil$ arms, the algorithm proceeds for $\frac{T}{B}$ rounds, and in each round the reward range is $B$. By the standard regret bound of EXP3.P, the learner's regret against sub-algorithm $i^\star$ is of order 
\begin{align*}
     \otil\left(B\sqrt{M\frac{T}{B}} + BM\right) = \otil\left(\sqrt{BT}\right). 
\end{align*}
with high probability. 

On the other hand, in each interval $b\in\left[\frac{T}{B}\right]$, since sub-algorithm $i^\star$ uses a correct guess of $\dupp$, by \pref{thm: known Dmax case}, its regret against the best sequence of policy in that interval is 
\begin{align*}
      \otil\left( \min\left\{D_{\max}S\sqrt{AL_b B}, \ \ D_{\max}(S^2A)^{\frac{1}{3}}(\Delta_b)^{\frac{1}{3}}B^{\frac{2}{3}} + D_{\max}S\sqrt{AB}\right\} \right)
\end{align*}
where we abuse notations and denote $L_b=L_{[(b-1)B+1, bB]}$, $\Delta_b=\Delta_{[(b-1)B+1, bB]}$. 

Combining the two bounds above, we get that the regret of the learner against the best sequence of policies in $[1,T]$ is
\begin{align*}
    &\otil\left(\sqrt{BT} + \sum_{b=1}^{\frac{T}{B}}  \min\left\{D_{\max}S\sqrt{AL_b B}, \ \ D_{\max}(S^2A)^{\frac{1}{3}}(\Delta_b)^{\frac{1}{3}}B^{\frac{2}{3}} + D_{\max}S\sqrt{AB}\right\} \right) \\
    &= \otil\left(\sqrt{BT} + \min\left\{ D_{\max}S\sqrt{A\left(L + \frac{T}{B}\right)T},\ \  D_{\max}(S^2A)^{\frac{1}{3}}(\Delta)^{\frac{1}{3}}T^{\frac{2}{3}} + D_{\max}S\sqrt{AB}\times \frac{T}{B} \right\}\right)    \tag{using \pref{eq: L partition} and \pref{eq: dev partition}}\\
    &= \otil\left(\sqrt{BT} + D_{\max}S\sqrt{\frac{A}{B}}T  + \min\left\{ D_{\max}S\sqrt{ALT},\ \  D_{\max}(S^2A)^{\frac{1}{3}}\Delta^{\frac{1}{3}}T^{\frac{2}{3}}\right\}\right).     
\end{align*}
Using the $B$ that we specified above, we get 
\begin{align*}
     \otil\left( D_{\max}(S^2A)^{\nicefrac{1}{4}}T^{\nicefrac{3}{4}}  + \min\left\{ D_{\max}S\sqrt{ALT},\ \  D_{\max}(S^2A)^{\frac{1}{3}}\Delta^{\frac{1}{3}}T^{\frac{2}{3}}\right\}\right). 
\end{align*}

%!TEX root=main.tex
%\setcounter{algocf}{0}

\section{Verifying \pref{assum:assump2} for Several Algorithms}
\label{app: verify example}

To prove \pref{eq: general condition 2}, it suffices to prove the following.
%We notice that \pref{assum:assump2} is equivalent to the following Assumption 1': 
\paragraph{Assumption 1'}  \textit{There exist universal constants $\const_1, \const_2, \const_3, \const_4, \const_5, \const_6>0$ such that for all $t=1,2,\ldots$, as long as $\dev_{[1,t]}\leq \const_1\avgreg(t)$, the following holds with probability $1-\frac{\delta}{T}$: 
\begin{align}
        &\tildef_t \geq \min_{\tau\in [1,t]} f_\tau^\star - \const_2\dev_{[1,t]} \label{eq: general condition 3} \\ 
        &\frac{1}{t}\sum_{\tau=1}^t \left(\tildef_\tau - R_\tau\right) \leq \const_3\avgreg(t) + \const_4\dev_{[1,t]}.   \label{eq: general condition 4}
\end{align}
Furthermore, $\avgreg(t)\geq \frac{\const_5}{\sqrt{t}}$, $\dev(t) \geq \const_6 \max_{\pi}|f_t(\pi)-f_{t+1}(\pi)|$.  
}\\
%\pref{assum:assump2} clearly implies Assumption 1'. To show that Assumption~1' implies \pref{assum:assump2}, 
This is because for an algorithm satisfying Assumption~1', we can redefine $\dev(t)\leftarrow (\const_3/\const_1 + \const_2 + \const_4 + 1/(\const_1\const_5)+ 1/\const_6) \dev(t)$ and $\avgreg(t)\leftarrow (\const_3+\const_1\const_2 + \const_1 \const_4 + 1/\const_5 + \const_1/\const_6)\avgreg(t)$. Then \pref{eq: general condition 2} is satisfied. Our verification below is thus mostly based on Assumption~1' for simplicity. 

The following proofs are brief (some of them are just sketches) since they follow standard analysis and mostly appear in previous works. Please find more details in the references. We sometimes make minor modifications to the original algorithm to make them more aligned with our framework. 

\subsection{UCB1 for Multi-armed Bandits}

%\paragraph{UCB1 for multi-armed bandit} %A close variant of UCB1 has been shown to satisfy Assumption~1' for any value of $\dev_{[1,t]}$ in \citep[Lemma B.1]{wei2016tracking}. Since their bound is slightly loose, we give a brief re-derivation below.  

%{
%\setlength{\interspacetitleruled}{0pt}%
%\setlength{\algotitleheightrule}{0pt}%
\begin{exampl}
    \caption{UCB1 for multi-armed bandits}  
    \label{ex: ucb1}
    %\TitleOfAlgo{UCB1}
    \textbf{input}: $A$ (number of arms), $T, \delta$.  \\
    \For{$t=1, \ldots, T$}{
        Choose $a_t = \argmax_{a\in[A]}  \left(\hatr_{t,a} +  c\sqrt{\frac{\log(T/\delta)}{N_{t,a}^+}}\right)$ \myComment{$c>0$ is some universal constant} \\
        where 
        \begin{align}
            \hatr_{t,a}= \frac{\sum_{\tau =1}^{t-1} R_\tau \one[a_\tau=a]}{N_{t,a}^+}, \qquad N_{t,a}^+=\max\left\{1, \sum_{\tau=1}^{t-1} \one[a_\tau=a]\right\}.  \label{eq: hatr and N} 
        \end{align} \\
        Receive $R_t$ with $\E[R_t]=r_{t,a_t}$. 
    }
\end{exampl}
%}

In this subsection, we consider the multi-armed bandit problem and the UCB1 algorithm by \cite{auer2002finite}. Suppose there are $A$ arms, and let $r_{t,a}$ denote the expected reward of arm $a$ at time $t$. Then the multi-armed bandit problem fits in our framework with $\Pi=[A]$ and $f_t(a)=r_{t,a}$. Below, we show that the UCB1 algorithm satisfies Assumption 1'.

The pseudocode of UCB1 is presented in \pref{ex: ucb1}.  
%We denote the expcted reward of arm $a$ at time $t$ as $r_{t,a}$ (therefore, $r_{t,a}$ should be identified as $f_t(a)$). 
 At time $t$, UCB1 chooses the arm that has the highest optimistic reward estimator $\tilder_{t,a}\triangleq  \hatr_{t,a} +  c\sqrt{\frac{\log(T/\delta)}{ N_{t,a}^+}}$, where $\hatr_{t,a}$ is the empirical mean of the reward of arm $a$ up to time $t-1$, $N_{t,a}$ is the cumulative number of pulls of arm $a$ up to time $t-1$ and $N_{t,a}^+=\max\{1, N_{t,a}\}$, all defined in \pref{eq: hatr and N}; $c>0$ is some universal constant that is determined by Azuma's inequality. %and  $N_{t,a}^+=\max\left\{1, \sum_{\tau<t} \one[a_t=a]\right\}$ and $c'$ is some universal constant. 

%The mapping between the multi-armed bandit problem and our protocol is the following: the set of arms $[A]$ is the set of policies $\Pi$; $r_{s,a}$ corresponds to $f_t(a)$. 

To see that UCB1 satisfies Assumption 1', we define
\begin{align}
    \dev(t) = \max_{a}|r_{t,a}-r_{t+1,a}|, \qquad 
    \tildef_t = \max_{a} \tilder_{t,a}, \qquad 
    \avgreg(t) = \sqrt{\frac{A\log(T/\delta)}{t}} + \frac{A\log(T/\delta)}{t}. \label{eq: UCB1 choices}
\end{align}
Furthermore, denote $\barr_{t,a}= \frac{\sum_{\tau=1}^{t-1} r_{\tau, a} \one[a_t=a]}{N_{t,a}}$ (define $\barr_{t,a}=1$ if $N_{t,a}=0$ for simplicity). Note that with high probability, 
\begin{align*}
    \tildef_t \geq \max_{a}  \barr_{t,a} \geq \max_a \max_{\tau\leq t}  r_{\tau,a} - \dev_{[1,t]} \geq \min_{\tau\leq t} \max_a r_{\tau,a}- \dev_{[1,t]}, 
\end{align*}
where the first inequality is because with high probability, $\tilder_{t,a}\geq \barr_{t,a}$ by Azuma's inequality. This verifies \pref{eq: general condition 3}.

%By letting $\tildef_t = \max_{a} \tilder_{t,a}$, $\Delta(t) = \max_{a}|r_{t,a}-r_{t+1,a}|$, and define $\barr_{t,a}= \frac{\sum_{\tau\leq t}r_{\tau, a} \one[a_t=a]}{N_{t,a}^+}$, we have $\tildef_t \geq \max_{a} \barr_{t,a} \geq \max_a \max_{\tau\leq t} r_{\tau,a} - \dev_{[1,t]} \geq \min_{\tau\leq t} \max_a r_{\tau,a}- \dev_{[1,t]}$, which proves \pref{eq: general condition 3} (the first inequality is by Azuma's inequality). 
On the other hand, by the selection rule $a_t=\argmax_a \tilder_{t,a}$, we have with probability $1-\delta$, 
\begin{align*}
     \sum_{\tau=1}^t (\tildef_\tau - R_\tau) 
     &\leq \sum_{\tau=1}^t (\tilder_{\tau,a_\tau} - r_{\tau, a_\tau}) + \sum_{\tau=1}^t (r_{\tau, a_\tau}-R_\tau)\\
     &= \sum_{\tau=1}^t \left(\barr_{\tau,a_\tau} - r_{\tau, a_\tau}  + c\sqrt{\frac{\log(T/\delta)}{N_{\tau,a}^+}}\right) + \sum_{\tau=1}^t (r_{\tau, a_\tau}-R_\tau)\\
     &\leq t\dev_{[1,t]} + \order\left(\sqrt{At\log(T/\delta)}+A\log(T/\delta)\right)
\end{align*}
where in the last inequality we use $\barr_{\tau,a_\tau} - r_{\tau, a_\tau}\leq \dev_{[1,t]}$ and the standard pigeonhole argument, and use Azuma's inequality to bound $\sum_{\tau=1}^t (r_{\tau, a_\tau}-R_\tau)$. This proves \pref{eq: general condition 4}.  Note that the condition $\dev_{[1,t]}=\order(\avgreg(t))$ in Assumption~1' is even not needed. % with $\avgreg(t)=\sqrt{\frac{A\log(T/\delta)}{t}} + \frac{A}{t}$. 

%\paragraph{EXP3.P} Let $r_{t,a}$ be the expected reward of arm $a$ at time $t$. The EXP3.P algorithm picks arms according to an arm distribution $p_{t,a}$, constructs an unbiased estimator $\hatr_{t,a}\triangleq \frac{R_t \one[a_t=a]}{p_{t,a}}$, and feeds the optimistic reward estimator $\tilder_{t,a}\triangleq \hatr_{t,a} + c'\sqrt{\frac{\log(T/\delta)}{tp_{t,a}}}$ to an exponential weight algorithm, which gives the next-round arm distribution $p_{t+1,a}$. Let $\tildef_t = \max_a \frac{1}{t}\sum_{\tau=1}^t \tilder_{\tau,a}$, then we have $\tildef_t\geq \max_a \frac{1}{t} \sum_{\tau=1}^t r_{\tau,a} \geq \max_a \left(\max_{\tau\leq t} r_{\tau,a} \right) - \dev_{[1,t]}\geq \min_{\tau\leq t}\max_a r_{\tau,a} - \dev_{[1,t]}$ by Freedman's inequality. This proves \eqref{eq: general condition 1}. On the other hand, the exponential weight algorithm guarantees 
%\begin{align*}
%     \sum_{\tau=1}^t \left( \tildef_\tau - R_\tau \right) 
%\end{align*}

\subsection{OFUL for Linear Bandits}
\begin{exampl}
     \caption{OFUL for linear bandits}
     \label{ex: oful for linear bandit}
     \textbf{input: } $\calA \subset \mathbb{R}^d$ (action set), $T, \delta$.\\ 
     \For{$t=1, 2, \ldots, T$}{
         Choose $a_t=\argmax_{a\in\calA}\left(a^\top \hattheta_t + 2\beta\|a\|_{\Lambda_t^{-1}}\right)$, \\
         where 
         \begin{align}
             \beta = 4\sqrt{d\log(T/\delta)},\qquad  \Lambda_{t}=I + \sum_{\tau=1}^{t-1}a_\tau a_\tau^\top, \qquad \hattheta_t = \Lambda_{t}^{-1}\sum_{\tau=1}^{t-1} R_\tau a_\tau.  \label{eq: oful definitions}
         \end{align}\\
         Receive $R_t$ with $\E[R_t]=a_t^\top \theta_t$. 
     }  
\end{exampl}
%The original OFUL algorithm handles the case when the action sets can be arbitrarily chosen in each round. Although the idea of our approach can be extended to that case, the algorithm requires several modifications, which is beyond the scope of this work. Therefore, we focus on the special case where the action set is fixed in all rounds. 
%OFUL has been proven to satisfy Assumption 1' in \citep[Theorem 1]{pmlr-v108-zhao20a} (see also \citep{cheung2019learning}) for all range of $\dev_{[1,t]}$. More specifically, 
In this subsection, we consider linear bandits with a fixed action set, and the OFUL algorithm by \cite{abbasi2011improved}. The original OFUL algorithm handles the case where the action set can change over time (also known as the linear contextual bandit setting), but this is beyond the main focus of this paper. Let $\calA$ be the action set, and $\theta_t$ be the reward vector at time $t$. Then the linear bandit problem fits in our framework with $\Pi=\calA$ and $f_t(a)=a^\top \theta_t$.

The pseudocode of OFUL (with a fixed action set) is presented in \pref{ex: oful for linear bandit}. 
For simplicity, assume that for all actions $a\in\calA$, $\|a\|_2\leq 1$, and for all $t$, the reward vector $\theta_t$ satisfies $\|\theta_t\|_2\leq 1$. The OFUL algorithm chooses the action $a_t=\argmax_{a} a^\top \hattheta_t + 2\beta\|a\|_{\Lambda_t^{-1}}$ at time $t$, 
where $\beta$, $\Lambda_{t}=I + \sum_{\tau=1}^{t-1}a_\tau a_\tau^\top$, and $\hattheta_t$ are defined in \pref{eq: oful definitions}. 
%\citep{pmlr-v108-zhao20a} shows that OFUL satisfies Assumption 1' with the following definitions:  

%The mapping between linear bandit and our protocol is the following: the set of actions $\calA$ is the set of policies $\Pi$; the expected reward $a^\top \theta_t$ corresponds to $f_t(a)$. 
Then we define
\begin{align}
     \dev(t) =d\sqrt{\log (T/\delta)}\|\theta_t-\theta_{t+1}\|_2, \;\; 
     \tildef_t = \max_{a\in\calA} \left(a^\top \hattheta_t + 2\beta \|a\|_{\Lambda_{t}^{-1}}\right), \;\; 
     \avgreg(t) = \beta\sqrt{\frac{d\log (T/\delta)}{t}}.  \label{eq: oful choices}
\end{align}
Below, we verify that OFUL satisfies Assumption 1' with the choices in \pref{eq: oful choices}. Under the assumption that $\dev_{[1,t]}\leq \avgreg(t)$, for any action $a$, by similar arguments as in \cite[Lemma 1]{pmlr-v108-zhao20a},  
    \begin{align}
        \left|a^\top (\theta_t - \widehat{\theta}_t)\right| 
        &\leq \left|a^\top \Lambda_{t}^{-1}\sum_{\tau=1}^{t-1}a_\tau a_\tau^\top (\theta_s - \theta_t)\right| + \beta \|a\|_{\Lambda_{t}^{-1}}  \nonumber\\ 
        &\leq \sum_{\tau=1}^{t-1}\left| a^\top \Lambda_{t}^{-1}a_\tau \right| \left| a_\tau^\top (\theta_\tau - \theta_t) \right| + \beta \|a\|_{\Lambda_{t}^{-1}}  \nonumber\\
        &\leq \frac{\dev_{[1,t]}}{d\sqrt{\log (T/\delta)}}\times \left( \sum_{\tau=1}^{t-1}\|a\|_{\Lambda_{t}^{-1}}\|a_\tau\|_{\Lambda_{t}^{-1}}\right) + \beta \|a\|_{\Lambda_{t}^{-1}} \tag{ $a_\tau^\top (\theta_\tau-\theta_t) \leq \|\theta_\tau-\theta_t\|_2\leq \frac{\dev_{[1,t]}}{d\sqrt{\log (T/\delta)}}$}    \nonumber\\
        &\leq \frac{\dev_{[1,t]}}{d\sqrt{\log (T/\delta)}} \times  \|a\|_{\Lambda_{t}^{-1}} \times \sqrt{(t-1)\sum_{\tau=1}^{t-1} \|a_\tau\|_{\Lambda_{t}^{-1}}^2} + \beta \|a\|_{\Lambda_{t}^{-1}} \tag{Cauchy-Schwarz}   \nonumber\\
        &\leq  \left(\beta + \dev_{[1,t]}\sqrt{\frac{t}{d\log (T/\delta)}}\right) \|a\|_{\Lambda_{t}^{-1}} \tag{$\sum_{\tau=1}^{t-1}\|a_\tau\|_{V_{t-1}^{-1}}^2=\text{tr}(\Lambda_{t}^{-1}\sum_{\tau=1}^{t-1}a_\tau a_\tau^\top)\leq d$}   \nonumber\\
        &\leq 2\beta \|a\|_{\Lambda_{t}^{-1}}. \tag{by the assumption $\dev_{[1,t]}\leq \avgreg(t)$} \\
        \label{eq: error bound 1}
    \end{align}
Thus, 
\begin{align*}
    \sum_{\tau=1}^t \left(\widetilde{f}_\tau - R_\tau\right)  
    &=\sum_{\tau=1}^t \left(\widetilde{f}_\tau - a_\tau^\top \theta_\tau\right) + \sum_{\tau=1}^t \left(a_\tau^\top \theta_\tau - R_\tau\right) \\ 
    &= \sum_{\tau=1}^t a_\tau^\top \left(\widehat{\theta}_\tau - \theta_\tau\right) + 2 \sum_{\tau=1}^t \beta\|a_\tau\|_{\Lambda^{-1}_{\tau}} + \order\left(\sqrt{t\log(T/\delta)}\right) \tag{by the definition of $\tildef_\tau$ and that OFUL chooses $a_\tau = \argmax_{a}\left(a^\top \widehat{\theta}_\tau + 2 \beta\|a\|_{\Lambda_{\tau}^{-1}}\right)$}\\
    & = \order\left(\sum_{s=1}^t \beta\|a_s\|_{\Lambda^{-1}_{s}}\right) + \order\left(\sqrt{t\log(T/\delta)}\right) \tag{by \pref{eq: error bound 1}}\\
    & = \order\left(\beta\sqrt{dt\log t}\right) = \order\left(t\rho(t)\right)\leq \order\left(t\avgreg(t) + t\dev_{[1,t]}\right).    
\end{align*}
This verifies \pref{eq: general condition 4}. Also, by \pref{eq: error bound 1}, 
\begin{align*}
    \widetilde{f}_t 
    &= \max_{a}\left(a^\top \widehat{\theta}_t + 2 \beta\|a\|_{\Lambda_{t}^{-1}}\right) \geq \max_a a^\top \theta_t = f_t^\star \geq \min_{\tau\in[1,t]}f_\tau^\star - \dev_{[1,t]}. 
\end{align*}
This verifies \pref{eq: general condition 3}.  
\subsection{GLM-UCB for Generalized Linear Bandits}

\begin{exampl}
    \caption{GLM-UCB for generalized linear bandits} 
    \label{ex: glm-ucb}
    \textbf{input}: $\calA\subset \mathbb{R}^d, T, \delta, \mu$ (link function), $\lambda$.  \\
    \textbf{define}: $k_\mu = \sup_{x\in[0,1]} \frac{\mathrm{d}\mu(x)}{\mathrm{d}x}$,\quad  $c_\mu = \inf_{x\in[0,1]} \frac{\mathrm{d}\mu(x)}{\mathrm{d}x}>0$.   \\
    \For{$t=1, \ldots, T$}{
        Choose $a_t = \argmax_{a\in\calA}\left( \mu(a^\top \hattheta_t) +  2\beta\|a\|_{\Lambda_t^{-1}} \right)$ 
        \\
        where
        \begin{align*}
            \beta =  \frac{4k_\mu}{c_\mu}\left(\sqrt{d\log(c_\mu T/(\lambda\delta))} + c_\mu\sqrt{ \lambda}\right), \qquad \Lambda_t = \lambda I + \sum_{\tau=1}^{t-1}a_\tau  a_\tau^\top,
        \end{align*}\\
        and $\hattheta_t$ is the unique solution of the following set of equations (define $g_t(x)\triangleq \lambda c_\mu x + \sum_{\tau=1}^{t-1} \mu(a_\tau^\top x)a_\tau$): 
        \begin{align*}
            g_t(\theta_t') =  \sum_{\tau=1}^{t-1}R_\tau a_\tau, \qquad 
            \hattheta_t = \argmin_{\theta: \|\theta\|_2\leq 1} \norm{g_t(\theta_t') - g_t(\theta)} _{\Lambda_t^{-1}}. 
        \end{align*}
        \\
        Receive $R_t$ with $\E[R_t] = \mu(a_t^\top \theta_t)$. 
    }
\end{exampl}

Generalized linear bandit is proposed by \cite{filippi2010parametric} and extended to the non-stationary case by \cite{cheung2019learning, pmlr-v108-zhao20a, russac2020algorithms, faury2021regret}. We refer the readers to these papers for the introduction of the setting. Again, we consider the special case where the action set is fixed over time, and for simplicity, we assume that the action set $\calA$ is a subset of $\{a\in\mathbb{R}^d:~\|a\|_2\leq 1\}$ and the hidden parameter $\theta_t$ satisfies $\|\theta_t\|_2\leq 1$. The generalized linear bandit problem is accompanied with an increasing link function $\mu: \mathbb{R}\rightarrow \mathbb{R}$. It fits in our framework with $\Pi=\calA$ and $f_t(a) = \mu(a^\top \theta_t)$.

The standard GLM-UCB is presented in 
\pref{ex: glm-ucb}. 
%Similar to the linear bandit case, the action set $\calA$ corresponds to the policy set $\Pi$ and the expected reward $\mu(a^\top \theta_t)$ corresponds to $f_t(a)$. 
Below we show that GLM-UCB satisfies Assumption 1' with the following definitions: 
\begin{align*}
    \dev(t) = \frac{k_\mu^2d}{c_\mu} \sqrt{\log (T/\delta)}\|\theta_t - \theta_{t+1}\|_2, \;\; \tildef_t = \max_{a\in\calA} \left(\mu(a^\top \hattheta_t) + 2\beta \|a\|_{\Lambda_{t}^{-1}}\right), \;\; 
     \avgreg(t) = \beta\sqrt{\frac{d\log (T/\delta)}{t}},
\end{align*}
where $c_\mu$ and $k_\mu$ are the infimum and supremum of the derivative of $\mu$ (defined in \pref{ex: glm-ucb}). 
%Similar to \citep{faury2021regret}, we define $\bartheta_t$ to be the unique solution of the following equation: 
%\begin{align*}
%    g_t(\bartheta_t) = \sum_{\tau=1}^{t-1} \mu(a_\tau^\top \theta_s)a_\tau + \lambda c_\mu \theta_{t}. 
%\end{align*}
Define $G_t\triangleq \sum_{\tau=1}^{t-1}\left[\int_{v=0}^1 \dot{\mu}\left(\inner{a_\tau, (1-v)\hattheta_t + v\theta_t}  \right) \mathrm{d}v\right]a_\tau a_\tau^\top + \lambda c_\tau I \succeq c_\mu \Lambda_t$. 
Under the assumption that $\dev(t)\leq \avgreg(t)$, for all $a\in\calA$, 
\begin{align}
    \left|\mu(a^\top \theta_t) - \mu(a^\top \hattheta_t) \right| 
    &\leq k_u \left|a^\top (\theta_t - \hattheta_t)\right| 
    \leq k_\mu \left| a^\top G_t^{-1} (g_t(\theta_t) - g_t(\hattheta_t)) \right| \nonumber \\
    &\leq k_\mu \|a\|_{G_t^{-1}} \norm{g_t(\theta_t) - g_t(\hattheta_t)}_{G_t^{-1}} \nonumber \\
    &\leq \frac{k_\mu}{c_\mu} \|a\|_{\Lambda_t^{-1}} \norm{g_t(\theta_t) - g_t(\hattheta_t)}_{\Lambda_t^{-1}} \nonumber \\
    &\leq \frac{k_\mu}{c_\mu} \|a\|_{\Lambda_t^{-1}} \norm{g_t(\theta_t) - g_t(\theta_t')}_{\Lambda_t^{-1}} \nonumber \\
    &= \frac{k_\mu}{c_\mu} \|a\|_{\Lambda_t^{-1}} \norm{ \sum_{\tau=1}^{t-1} \left(\mu(a_\tau^\top \theta_t) - \mu(a_\tau^\top \theta_\tau)\right)a_\tau + \sum_{\tau=1}^{t-1} \left(\mu(a_\tau^\top \theta_\tau) - R_\tau)\right)a_\tau + \lambda c_\mu \theta_t    }_{\Lambda_t^{-1}} \nonumber \\
    &\leq \frac{k_\mu}{c_\mu} \|a\|_{\Lambda_t^{-1}} \left(k_\mu\max_{\tau\leq t}\norm{\theta_t - \theta_\tau}_2 \sum_{\tau=1}^{t-1} \norm{ a_\tau }_{\Lambda_t^{-1}} + \norm{\sum_{\tau=1}^{t-1}\eta_\tau a_\tau}_{\Lambda_t^{-1}} + \sqrt{\lambda}c_\mu\right)  \tag{define $\eta_\tau=\mu(a_\tau^\top \theta_\tau) - R_\tau$}\\
    &\leq \frac{k_\mu}{c_\mu} \|a\|_{\Lambda_t^{-1}} \left(k_\mu\frac{c_\mu\dev_{[1,t]}}{k_\mu^2d\sqrt{\log(T/\delta)}} \sqrt{dt} + \sqrt{d\log(c_\mu T/\delta)} + \sqrt{\lambda}c_\mu\right) \nonumber \\
    &\leq \frac{k_\mu}{c_\mu} \|a\|_{\Lambda_t^{-1}} \left(\frac{c_\mu\avgreg(t)}{k_\mu} \sqrt{\frac{t}{d\log(T/\delta)}} + \sqrt{d\log(c_\mu T/\delta)} + \sqrt{\lambda}c_\mu\right) \tag{by the assumption $\dev_{[1,t]}\leq \avgreg(t)$} \\
    &\leq 2 \beta \|a\|_{\Lambda_t^{-1}}. \label{eq: glm optimistic}
\end{align}
Thus, 
\begin{align*}
    \sum_{\tau=1}^t \left(\tildef_\tau - R_\tau\right) 
    &= \sum_{\tau=1}^t \left(\tildef_\tau - \mu(a_\tau^\top \theta_\tau)\right) + \sum_{\tau=1}^t \left(\mu(a_\tau^\top \theta_\tau) - R_\tau\right) \\
    &\leq \sum_{\tau=1}^t \left(\mu(a_\tau^\top \hattheta_\tau) - \mu(a_\tau^\top \theta_\tau)\right) + 2\beta \sum_{\tau=1}^t \|a_\tau\|_{\Lambda_t^{-1}} + \order\left(\sqrt{t\log(T/\delta)}\right) \\
    &\leq \order\left(\beta \sum_{\tau=1}^t \|a_\tau\|_{\Lambda_t^{-1}}\right) = \order\left(\beta \sqrt{dt\log(T/\delta)}\right)\\
    &= \order\left(t\avgreg(t)\right) = \order\left(t\avgreg(t) + t\dev_{[1,t]}\right). 
\end{align*} 
This verifies \pref{eq: general condition 4}. Furthermore, by \pref{eq: glm optimistic}, 
\begin{align*}
    \tildef_t = \max_{a\in\calA} \left(\mu(a^\top \hattheta_t) + 2\beta \|a\|_{\Lambda_{t}^{-1}}\right) \geq \max_{a\in\calA} \mu(a^\top \theta_t) = f_t^\star \geq \min_{\tau\in[1,t]}f_\tau^\star - \dev_{[1,t]}. 
\end{align*}
This verifies \pref{eq: general condition 3}.

\subsection{Q-UCB for Finite-horizon Tabular MDPs}
\begin{exampl}
     \caption{Q-UCB for finite-horizon tabular MDPs}
     \label{ex: qucb}
     \textbf{input}: $S$ (number of states), $A$ (number of actions), $H$, $T, \delta$. \\
     $Q_h(s,a)\leftarrow H, \quad  N_h(s,a)\leftarrow 0$ for all $h, s, a$. \\
     \For{$t=1, \ldots, T$}{
         \For{$h=1, \ldots, H$}{
             Choose $a^t_h \leftarrow \argmax_a Q_h(s^t_h, a)$. \\
             $\tau = N_h(s^t_h, a^t_h)\leftarrow N_h(s^t_h, a^t_h) +1$, \qquad $b_\tau\leftarrow c\sqrt{H^3\log(SAT/\delta)/\tau}$. \myComment{$c$ is a universal constant}\\
             $Q_h(s^t_h, a^t_h)\leftarrow (1-\alpha_\tau)Q_h(s^t_h, a^t_h) + \alpha_\tau \left[r_h^t(s_h^t, a_h^t) + V_{h+1}(s_{h+1}^t) + b_\tau\right]$. \myComment{$\alpha_\tau\triangleq \frac{H+1}{H+\tau}$} \\
             $V_h(s^t_h)\leftarrow \min\left\{H, \max_a Q_h(s^t_h, a)\right\}$. 
         }
     }
\end{exampl}
The finite-horizon tabular MDP problem fits in our framework with $\Pi$ being the set of deterministic polices on the MDP, and $f_t(\pi)$ being the expected reward of policy $\pi$ in episode $t$. Q-UCB (Hoeffding-style) is a model-free algorithm for finite-horizon tabular MDPs proposed by \cite{jin2018q}, whose pseudocode is in \pref{ex: qucb}. Let $H$ denote the horizon length, $s^t_h, a^t_h$ denote the state and actions visited at step $h$ of episode $t$, and $r^t_h$, $p^t_h$ denote the reward and transition functions at step $h$ of episode $t$. Without loss of generality, we assume that $s^t_1=s_1$ for all $t$ (i.e., the initial state is fixed). 

It has been shown in the proof of \citep[Theorem 1]{mao2020nearoptimal} that Q-UCB satisfy Assumption 1' with the following choices: 
\begin{align*}
    \dev(t)&=H\sum_{h=1}^H \max_{s,a}|r_h^{t}(s,a)-r_{h}^{t+1}(s,a)| + H^2\sum_{h=1}^H \max_{s,a}\|p_h^{t}(\cdot|s,a)-p_{h}^{t+1}(\cdot|s,a)\|_1, \\
    \tildef_t &= V^t_h(s_1), \tag{$V^t_h$ is the $V_h$ in \pref{ex: qucb} at the beginning of episode $t$}  \\
    \avgreg(t)&=\otil\left(\sqrt{\frac{H^5SA}{t}} + \frac{H^3SA}{t}\right). 
\end{align*}
The proof details are omitted here. 

\subsection{LSVI-UCB for Finite-horizon Linear MDPs}
\begin{exampl}
    \caption{LSVI-UCB for finite-horizon linear MDP}
    \label{ex: linear mdp ucb}
    \textbf{input}: $\calS$ (state space), $\calA$ (action space), $\phi(\cdot,\cdot): \calS\times \calA\rightarrow \mathbb{R}^d, H, T, \delta$. \\
    \For{$t=1, \ldots, T$}{
        \For{$h=H, \ldots, 1$}{
            $\Lambda_h \leftarrow \sum_{\tau=1}^{t-1}\phi(s^\tau_h, a^\tau_h)\phi(s^\tau_h, a^\tau_h)^\top + I$. \\
            $w_h \leftarrow \Lambda_h^{-1} \sum_{\tau=1}^{t-1}\phi(s^\tau_h, a^\tau_h)\left[r_h^\tau(x_h^\tau, a_h^\tau)+ \max_{a\in\calA} Q_{h+1}(x^\tau_{h+1},a)\right]$. \\
            $Q_h(\cdot, \cdot)\leftarrow \min\left\{w_h^\top \phi(\cdot,\cdot) + 2\beta \left(\phi(\cdot,\cdot)\Lambda_h^{-1}\phi(\cdot,\cdot)\right)^{1/2}, H\right\}$.  \\
            \ \myComment{$\beta = cdH\sqrt{\log(T/\delta)}$ for some universal constant $c$} \label{line: linear mdp time t}
        }
        \For{$h=1, \ldots, H$}{
            Take action $a^t_h\leftarrow \argmax_{a\in\calA} Q_h(s^t_h, a)$. 
        }
    }
\end{exampl}

See \citep{zhou2020nonstationary, touati2020efficient} for the non-stationary finite-horizon linear MDP setting. We assume that the reward function and the transition function at step $h$ of episode $t$ are $r^t_h(s,a)=\phi(s,a)^\top \theta^t_h$ and $p^t_h(s'|s,a)=\phi(s,a)^\top \mu^t_h(s')$ where $\phi(\cdot,\cdot)$ is the feature function that maps a state-action pair to a $d$-dimensional feature vector. 
The problem fits in our framework with $\Pi$ being the set of deterministic policies, and $f_t(\pi)$ being the expected reward of policy $\pi$ in episode $t$. The LSVI-UCB algorithm is an optimism-based algorithm proposed by \cite{jin2020provably}, whose pseudocode is shown in \pref{ex: linear mdp ucb}. We define $Q^t_h, w^t_h, \Lambda^t_h$ to be the $Q_h, w_h, \Lambda_h$ at \pref{line: linear mdp time t} of round $t$. Furthermore, define $V^t_h(s) = \max_{a\in\calA} Q^t_h(s,a)$. Again, without loss of generality, we assume $s^t_1=s_1$ (the initial state is fixed).

%chooses $a^t_h = \argmax_a Q^t_h(s^t_h,a)$, where
%\begin{align*}
%    Q^t_h(s,a) &= \phi(s,a)^\top w^t_h + 2\beta \left\|\phi(s,a)\right\|_{\left(\Lambda^t_h\right)^{-1}}, \\
%    V^t_h(s) &= \max_a Q^t_h(s,a), \\
%    w^t_h &= \left(\Lambda^t_h\right)^{-1}\left(  \sum_{\tau=1}^{t-1} \phi(s_h^\tau, a_h^\tau)\left( r(s^\tau_h, a^\tau_h) + V^t_{h+1}(s_{h+1}^t) \right)\right), \\
%    \Lambda^{t}_h &= I + \sum_{\tau=1}^{t-1}\phi(s_h^\tau, a_h^\tau)\phi(s_h^\tau, a_h^\tau)^\top, \\
%    \beta &= cdH\sqrt{\log(T/\delta)}.   \tag{for some universal constant $c$}
%\end{align*}
We define 
\begin{align*}
    \dev(t) &= dH\sqrt{\log(T/\delta)}\left(\sum_{h=1}^H \|\theta^t_{h}-\theta^{t+1}_{h}\|_2 + H\sum_{h=1}^H \|\mu^t_{h}-\mu^{t+1}_{h}\|_{F}\right), \\
    \tildef_t &= V^t_1(s_1), \\ 
    \avgreg(t) &= c\sqrt{\frac{d^3 H^4}{t}}\log (T/\delta) = \beta H\sqrt{\frac{d\log(T/\delta)}{t}}. \tag{$c$ and $\beta$ defined in \pref{ex: linear mdp ucb}} 
\end{align*}
Below, we verify that LSVI-UCB satisfies Assumption 1' with the $\Delta$, $\tildef$, and $\rho$ defined above. Assume that $\dev_{[1,t]}\leq \avgreg(t)$. By similar arguments as in the proof of \citep[Lemma 3]{zhou2020nonstationary}, we have 
\begin{align}
    &\left| \phi(s,a)^\top w^t_h - Q_{h}^\star(s,a) - \mathbb{P}^t_h(V^t_h - V^\star_h)(s,a)\right| 
    \leq \left(\beta + \sqrt{dt} B_{\theta, [1,t]} + H\sqrt{dt} B_{\mu,[1,t]} \right) \left\|\phi(s,a)\right\|_{\left(\Lambda^t_h\right)^{-1}}   \label{eq: error linear MDP bound}
\end{align}
where $B_{\theta,[1,t]} = \sum_{\tau=1}^{t-1}\sum_{h=1}^H \|\theta^\tau_{h}-\theta^{\tau+1}_{h}\|_2$ and $B_{\mu, [1,t]} = \sum_{\tau=1}^{t-1}\sum_{h=1}^H \|\mu^\tau_{h}-\mu^{\tau+1}_{h}\|_{F}$. By the definition of $\dev(t)$, the right-hand side of \pref{eq: error linear MDP bound} can be further upper bound by 
\begin{align}
    \left(\beta + \frac{1}{H}\sqrt{\frac{t}{d\log(T/\delta)}}\dev_{[1,t]}\right)\left\|\phi(s,a)\right\|_{\left(\Lambda^t_h\right)^{-1}} \leq 2\beta \left\|\phi(s,a)\right\|_{\left(\Lambda^t_h\right)^{-1}}, \label{eq: final bound linear MDP}
\end{align}
where the inequality is by the assumption that $\dev_{[1,t]}\leq \avgreg(t)$. Similar to the proof of \citep[Lemma 4]{zhou2020nonstationary}, we can then show that for any $t, h$, 
\begin{align*}
    Q_h^t(s,a) - Q_h^{\star}(s,a) 
    &=  \phi(s,a)^\top w^t_h - Q_h^\star(s,a)  + 2\beta\|\phi(s, a)\|_{\left(\Lambda^t_h\right)^{-1}} \\
    &\geq \max_{s'} \left(V^t_{h+1}(s') - V^\star_{h+1}(s')\right) \tag{by \pref{eq: error linear MDP bound} and \pref{eq: final bound linear MDP}}
\end{align*}
and further using induction to show that $V^t_1(s)\geq V^\star_1(s)$. Thus, $\tildef_t=V_1^t(s_1)\geq V_1^\star(s_1)$, which verifies \pref{eq: general condition 3}. One can also show that $\sum_{\tau=1}^t \left(\tildef_\tau - R_\tau\right) = \order(t\avgreg(t))$ using the standard analysis of LSVI-UCB (e.g., \citep[Theorem 3.1]{jin2020provably}, \citep[Theorem 5]{zhou2020nonstationary}). This verifies \pref{eq: general condition 4}.

\subsection{ILOVETOCONBANDITS for Contextual Bandits} 
\begin{exampl}[H]
     \caption{ILOVETOCONBANDITS for contextual bandits}
     \label{ex: ilove}
     \textbf{input}: $\Pi$ (policy set), $\calA$ (action set), $T$, $\delta$. \\
     \For{$t=1, \ldots, T$}{
         Calculate $Q_t\in\Delta_{\Pi}$ that satisfies the following constraints with some universal constant $c'>0$:  
         \begin{align*}
              \sum_{\pi}Q(\pi)\hatReg_{[1,t-1]}(\pi) &\leq 2c'A\mu_t \\
              \forall \pi\in\Pi, \qquad \frac{1}{t-1}\sum_{\tau=1}^{t-1} \frac{1}{Q^{\mu_t}(\pi(x_\tau)|x_{\tau})} &\leq 2A + \frac{\hatReg_{[1,t-1]}(\pi)}{c'\mu_t}
         \end{align*}
         where $\mu_t\triangleq \sqrt{\frac{\log(|\Pi|T/\delta)}{At}}$, $Q^{\mu}(a|x)\triangleq (1-A\mu)\sum_{\pi\in\Pi}Q(\pi)\one[\pi(x)=a] + \mu$, and 
         \begin{align*}
             \hatReg_{\calI}(\pi) \triangleq \frac{1}{|\calI|}\max_{\pi'}\sum_{\tau\in\calI}  \left(\hatr_\tau(\pi'(x_\tau)) - \hatr_\tau(\pi(x_\tau))\right), \qquad \hatr_\tau(a) \triangleq \frac{R_\tau\one[a_\tau=a]}{p_{\tau}(a)}.
         \end{align*}
         Let $p_t(\cdot) = Q^{\mu_t}(\cdot|x_t)$ and sample $a_t\sim p_t$. 
     }
     
\end{exampl}

In the contextual bandit problem, in each round, the learner first sees a context $x_t\in\calX$, and then chooses an action $a_t\in[A]$ based on it. The learner then receives the reward $r_t(a_t)\in\mathbb{R}$. We assume that $(x_t, r_t)$ is sampled from the distribution $\calD_t$. The goal of the learner is to be comparable to the best mapping $\pi: \calX\rightarrow [A]$ within a given set of mappings $\Pi$ (which are called \emph{policies}), i.e., the learner wants to minimize $\sum_t(r_t(\pi_t^*(x_t)) - r_t(a_t)$ where $\pi_t^*\triangleq \max_{\pi'\in\Pi} \E_{(x,r)\sim\calD_t}[r(\pi'(x))]$.  See \citep{agarwal2014taming} for more detailed description of the problem. This problem fits in our framework with the same $\Pi$ and $f_t(\pi) = \E_{(x,r)\sim \calD_t}[r(\pi(x))]$. 

The algorithm ILOVETOCONBANDITS (\pref{ex: ilove}) by \cite{agarwal2014taming} achieves the optimal regret bound in the i.i.d. case. 
The analysis for ILOVETOCONBANDITS is more involved. Fortunately, \cite{chen2019new} already has helpful lemmas for ILOVETOCONBANDITS in the non-stationary case, and we can simply reuse them. 
%As indicated by Lemma 1 of \citep{luo2017lecture}, FTRL with log-barrier as the regularizer is a special case of ILOVETOCONBANDITS when applied to multi-armed bandit, so our analysis also applies to it. 
We show a more general result that \pref{assum:assump2} is satisfied no matter how large $\dev_{[1,t]}$ is. 

Let $\calR_{\calI}(\pi)=\frac{1}{|\calI|}\sum_{\tau\in\calI}\E_{(x,r)\sim \calD_\tau}\left[r(\pi(x))\right]$ be the expected of policy $\pi$ in the interval $\calI$, $\hatcalR_{\calI}(\pi)= \frac{1}{|\calI|}\sum_{\tau\in\calI} \hatr_\tau(\pi(x_\tau))$ be an unbiased estimator of $\calR_{\calI}(\pi)$, with $\hatr_\tau$ an unbiased estimator for the action reward constructed with inverse propensity weighting at time $\tau$. Let $\Reg_{\calI}(\pi)=\max_{\pi'}\calR_{\calI}(\pi') - \calR_{\calI}(\pi)$ and $\hatReg_{\calI}(\pi)=\max_{\pi'}\hatcalR_{\calI}(\pi') - \hatcalR_{\calI}(\pi)$. Below, we will show that ILOVETOCONBANDITS satisfies Assumption 1' with the following definitions: 
\begin{align*}
    \dev(t)&\triangleq \|\calD_t-\calD_{t+1}\|_{\text{TV}}=\int_{r}\int_x |\calD_t(x,r) - \calD_{t+1}(x,r)| \mathrm{d}x\mathrm{d}r, \\
    \tildef_t &\triangleq  \max_{\pi}\hatcalR_{[1,t-1]}(\pi) + \const_2 A\mu_{t-1} \tag{for some universal constant $c_2>0$} \\
    \avgreg(t)&\triangleq \sqrt{\frac{A\log(|\Pi|T/\delta)}{t}}.
\end{align*}

Note that $\dev(t)$ upper bounds $|\E_{(x,r)\sim \calD_{t}}[r(\pi(x))] - \E_{(x,r)\sim \calD_{t+1}}[r(\pi(x))]|$. %The algorithm finds a policy distribution $Q_t$ that satisfies the following two constraints at time $t$: 
%\begin{align*}
%     \sum_{\pi}Q_t(\pi)\hatReg_{[1,t-1]}(\pi) \leq C_1A\mu_t, \qquad \quad
%     V_t(Q_t, \pi)\leq C_2A + \frac{C_3\hatReg_{[1,t-1]}(\pi)}{\mu_t}   \qquad \forall \pi
%\end{align*}
%where $V_t(Q_t, \pi)$ is an estimator for the variance of $\hatr(\pi(x))$ supposed that the learner samples policy from $Q_t$ and the (context, reward) pair is uniformly drawn from $\{(x_\tau, r_\tau)\}_{\tau=1}^{t-1}$. 

Combining the proofs of Lemma 14 and Lemma 16 in \citep{chen2019new}, we get the following guarantee with probability at least $1-\delta$ for any policy $\pi$: 
\begin{align}
      \left|\hatcalR_{[1,t]}(\pi) - \calR_{[1,t]}(\pi)\right| \leq \const_1 \hatReg_{[1,t]}(\pi) + \const_2A\mu_t + \const_3 \dev_{[1,t]}, \label{eq: CB R deviaion}
\end{align}
where $\mu_t=\sqrt{\frac{\log(|\Pi|T/\delta)}{At}}$ and $\const_1, \const_2, \const_3$ are universal constants. To see how to get \pref{eq: CB R deviaion}, notice that Lemma 14 of  \citep{chen2019new} gives $\left|\hatcalR_{[1,t]}(\pi)-\calR_{[1,t]}(\pi)\right| \leq \order\left(\frac{\mu_t}{t}\sum_{\tau=1}^t U_\tau +  \frac{\log(|\Pi| T/\delta)}{t\mu_t}\right)$, and they further upper bound $U_\tau$ by $\order\left( \frac{\Reg_{[1,t]}}{\mu_t} + A + \frac{\dev_{[1,t]}}{\mu_t} \right)$ in the second-to-last line in their proof of Lemma 16. Combining them yields \pref{eq: CB R deviaion}. Notice that they have an additional $\log T$ factor which we do not suffer. 
%In the non-stationary environment, with the same analysis as the Lemma 16 of \citep{chen2019new}, we have for all $\pi$, with probability at least $1-\delta$ and some universal constant $\const_3, \const_4$, 
%\begin{align}
%     \Reg_{[1,t]}(\pi) &\leq 2\hatReg_{[1,t]}(\pi) + \const_3\mu_t + \const_4\dev_{[1,t]}'  \label{eq: CB3} \\
%     \hatReg_{[1,t]}(\pi) &\leq 2\Reg_{[1,t]}(\pi) + \const_3\mu_t + \const_4\dev_{[1,t]}' \label{eq: CB4}
%\end{align}
%Combining them and upper bounding $\hatdev_{[1,\tau-1]}'$ by $\hatdev_{[1,t]}'$, we get 
%\begin{align}
%     \left|\hatcalR_{[1,t]}(\pi) - \calR_{[1,t]}(\pi)\right| 
%     &\leq \frac{\const_1}{t} \sum_{\tau=1}^t \left(2\Reg_{[1,\tau-1]}(\pi) + \const_3\mu_{\tau-1} + \const_4\hatdev_{[1,t]}'\right) + \const_2\mu_t   \nonumber \\
%     &\leq \frac{\const_1}{t} \sum_{\tau=1}^t \left(2\Reg_{[1,t]}(\pi) + \const_3\mu_{\tau-1} + (\const_4+4)\hatdev_{[1,t]}'\right) + \const_2\mu_t    \nonumber \\
%     &\leq 2\const_1\Reg_{[1,t]}(\pi) + \const_5\hatdev_{[1,t]}' + \const_6\mu_t \label{eq: CB5}
%\end{align}
%for some universal constants $\const_5, \const_6$. 

Below, let $\barpi_t=\argmax_{\pi}\calR_{[1,t]}(\pi)$. Then we have
\begin{align}
     \max_{\pi}\hatcalR_{[1,t]}(\pi) 
     &\geq \hatcalR_{[1,t]}(\barpi_t)
     %&\geq \calR_{[1,t]}(\barpi_t) - \left|\hatcalR_{[1,t]}(\barpi_t) - \calR_{[1,t]}(\barpi_t)\right| \\
     %&\geq \calR_{[1,t]}(\pi^\star) - C_1\mu_{t}\times \frac{1}{t\ln T}\sum_{\tau=1}^{t}U_\tau(\pi^\star) - \frac{C_2\log(NT/\delta)}{\mu_{t}t} \\
     %&\geq \calR_{[1,t]}(\pi^\star) - C_1\mu_t \times \frac{1}{t} \sum_{\tau=1}^t \left(\frac{\hatReg_\tau(\pi^\star)}{\mu_\tau}+A\right) - \frac{C_2\log(NT/\delta)}{\mu_{t}t}\\
     %&\geq \calR_{[1,t]}(\barpi_t) - \frac{C_1}{t} \sum_{\tau=1}^t \hatReg_{[1,\tau-1]}(\barpi_t) - C_2\mu_t \\
     %&\geq \calR_{[1,t]}(\barpi_t) - \frac{C_1}{t} \sum_{\tau=1}^t \left(2\Reg_{[1,\tau-1]}(\barpi_t) + C_3\mu_\tau + C_4\hatdev_{[1,t]}'\right) - C_2\mu_t\\
     %&\geq \calR_{[1,t]}(\barpi_t) - \frac{C_1}{t} \sum_{\tau=1}^t \left(2\Reg_{[1,t]}(\barpi_t) + C_3\mu_\tau + (C_4+4)\hatdev_{[1,t]}'\right) - C_2\mu_t \\
     \geq \calR_{[1,t]}(\barpi_t) - \const_3\dev_{[1,t]} - \const_2 A\mu_t \nonumber
     \\
     &=\max_{\pi}\calR_{[1,t]}(\pi) - \const_3\dev_{[1,t]} - \const_2 A\mu_t, \label{eq: CB6}
\end{align}
where in the second inequality we use \pref{eq: CB R deviaion} with the fact that $\Reg_{[1,t]}(\barpi_t)=0$. 
Therefore, if we choose $\tildef_t = \max_{\pi}\hatcalR_{[1,t-1]}(\pi) + \const_2 A\mu_{t-1}$, then
\begin{align}
     \tildef_t 
     &\geq \max_\pi \calR_{[1,t-1]}(\pi) - \const_3\dev_{[1,t-1]}    \tag{using \pref{eq: CB6} and the definition of $\tildef_t$} \\
     &\geq \max_{\pi}\max_{\tau\in[1,t]} \calR_\tau(\pi) - (\const_3+1)\dev_{[1,t]}   \label{eq: ILOVE first condition}
\end{align}
which proves \pref{eq: general condition 3}.  
Next, we show \pref{eq: general condition 4}: 
\begin{align}
     \tildef_t - \E_t[R_t]
     &\leq \sum_\pi Q_t(\pi)\left(\tildef_t - \calR_t(\pi)\right) + \order(A\mu_t)    \tag{by the algorithm, which uses $\order(A\mu_t)$ probability to explore actions}\\
     &= \sum_\pi Q_t(\pi)\left(\max_{\pi'}\hatcalR_{[1,t-1]}(\pi') - \calR_t(\pi)\right) + \order(A\mu_t)   \nonumber \\
     &\leq \sum_\pi Q_t(\pi)\left(\max_{\pi'}\hatcalR_{[1,t-1]}(\pi') - \calR_{[1,t-1]}(\pi)\right) + \order\left(A\mu_t+ \dev_{[1,t]} \right)    \nonumber \\
     &= \sum_\pi Q_t(\pi)\left( \hatReg_{[1,t-1]}(\pi) + \hatcalR_{[1,t-1]}(\pi) - \calR_{[1,t-1]}(\pi)\right) + \order\left(A\mu_t+ \dev_{[1,t]} \right)   \nonumber \\
     &\leq \sum_\pi Q_t(\pi)\left( \hatReg_{[1,t-1]}(\pi) + \const_1\Reg_{[1,t-1]}(\pi)\right)  + \order\left(A\mu_t+\dev_{[1,t]}\right)   \tag{by \pref{eq: CB R deviaion}}   \nonumber \\
     &\leq (1+2\const_1)\sum_{\pi}Q_t(\pi)\hatReg_{[1,t-1]}(\pi) + \order\left(A\mu_t + \dev_{[1,t]}\right)   \label{eq: ILOVE condition 2}
\end{align}
where the last inequality is by Lemma 16 of \citep{chen2019new}, which bounds $\Reg_{[1,t-1]}(\pi)$ by $2\hatReg_{[1,t-1]}(\pi) + \order\left(A\mu_t + \dev_{[1,t]}\right)$. 
By the algorithm, $\sum_{\pi}Q_t(\pi)\hatReg_{[1,t-1]}(\pi)$ is of order $\order\left(A\mu_t\right)$. Therefore, the last expression can further be upper bounded by $\order\left(A\mu_t + \dev_{[1,t]}\right)$. 
Finally, with the above calculation and Azuma's inequality, we get 
\begin{align*}
     \sum_{\tau=1}^t \left(\tildef_\tau - R_\tau\right)\leq \sum_{\tau=1}^t \left(\tildef_\tau - \E_\tau[R_\tau]\right) + \sum_{\tau=1}^t \left(\E_\tau[R_\tau] - R_\tau\right) \leq \order\left(\sqrt{At\log(|\Pi|T/\delta)} + \dev_{[1,t]}\right)
\end{align*}
Since we choose $\avgreg(t)=\sqrt{\frac{A\log(|\Pi|T/\delta)}{t}}$, \pref{eq: general condition 4} is also satisfied.

\subsection{FALCON for Contextual Bandits} 

\begin{exampl}
    \caption{FALCON for realizable contextual bandits} 
    \label{ex: falcon}
    \textbf{input}: $\Phi$ (reward function class), $\calA$ (action sets), $T, \delta$. \\
    \For{$t=1, \ldots, T$}{
        Let $\gamma_t = \sqrt{At/\log(|\Phi|T/\delta)}$. \\
        Compute $\hatphi_t = \argmin_{\phi\in\Phi} \sum_{\tau=1}^{t-1}(\phi(x_\tau, a_\tau) - r_t(a_t))^2$ \\
        Observe context $x_t$. \\
        Let $\widehat{a}_t = \argmax_{a\in\calA} \hatphi(x_t,a)$. Define 
        \begin{align*}
            p_t(a)\triangleq 
            \begin{cases}
                \frac{1}{A+\gamma_t \left(\hatphi_t(x_t,\widehat{a}_t) - \hatphi_t(x_t, a)\right)}, \qquad &\text{for\ } a\neq \widehat{a}_t, \\
                1- \sum_{a'\neq \widehat{a}_t}  p_t(a'), &\text{for\ } a=\widehat{a}_t.
            \end{cases}
        \end{align*}
        Sample $a_t\sim p_t$ and observe reward $R_t$. 
    }
\end{exampl}

FALCON is an algorithm for stationary contextual bandits. It relies on the assumption that the expected reward of action $a$ under context $x$ is given by an unknown function $\phi^\star(x,a): \calX\times \calA\rightarrow [0,1]$. The learner is given the function class $\Phi$ that contains $\phi^\star$. For each $\phi\in\Phi$, one can derive a policy $\pi_\phi: \calX\rightarrow \calA$ such that $\pi_\phi(x)=\argmax_{a\in\calA} \phi(x,a)$. It is straightforward to see that the optimal policy is $\pi_{\phi^\star}$, and the learner's goal is to be competitive with it. This problem falls into our framework with $\Pi = \{\pi_\phi: \phi\in\Phi\}$ and $f_t(\pi) = \E_{x\in\calD_t}\left[\phi^*(x, \pi(x))\right]$ where $\calD_t$ is the distribution of context at time $t$. The algorithm FALCON is shown in \pref{ex: falcon}.

%The original work of FALCON \citep{simchi2020bypassing} only has analysis for the stationary case where the context distribution and the expected reward of any context-action pair is fixed. 
Below, we show that it also satisfies Assumption~1'. %Unlike the above analysis for UCB-based algorithms and ILOVETOCONBANDITS, we only show Assumption 1' in the low-drifting regime for FALCON (i.e., $\dev_{[1,t]}\leq \order(\avgreg(t))$), but this already suffices. %Basically, in the low-drifting regime, we only need to show that with such small drift amount, the regret bound is almost same as that in the stationary case. 

%To avoid confusion between the notations in our paper and in \citep{simchi2020bypassing}, we use $\phi(\cdot,\cdot)\in\Phi$ to denote the regression function in their paper. 
At time $t$, the context $x_t$ is sampled from $\calD_t$, and the reward is generated by $\E[r_t(x_t,a_t)]=\phi^\star_t(x_t, a_t)$. We slightly modify their algorithm so that at every round $t$, the algorithm call the regression oracle once and obtain $\hatphi_t = \argmin_{\phi\in\Phi} \sum_{\tau=1}^{t-1}(\phi(x_\tau, a_\tau)-R_\tau)^2$ (the original algorithm does this only when the time index doubles), and then construct a mapping from context to action distribution $p_t(\cdot|\cdot)$ as specified in their algorithm. 
Analogous to their definitions, we define 
\begin{align*}
     \calR_{[1,t-1]}(\pi) &= \frac{1}{t-1} \sum_{\tau=1}^{t-1} \E_{x\sim \calD_\tau}\left[ \phi^\star_\tau(x, \pi(x))\right], \\
     \hatcalR_{[1,t-1]}(\pi) &=  \frac{1}{t-1}\sum_{\tau=1}^{t-1}  \E_{x\sim \calD_\tau}\left[\hatphi_{t}(x, \pi(x))\right], \\
     %\Reg_{[1,t-1]}(\pi) &= \max_{\phi}\calR_{[1,t-1]}(\pi_\phi) - \calR_{[1,t-1]}(\pi)  \\
     \hatReg_{[1,t-1]}(\pi) &= \hatcalR_{[1,t-1]}(\pi_{\hatphi_t}) - \hatcalR_{[1,t-1]}(\pi), \\
     \Reg_{[1,t-1]}(\pi)  &= \max_{\phi\in\Phi} \calR_{[1,t-1]}(\pi_\phi)  - \calR_{[1,t-1]}(\pi),   \\
     V_t(p, \pi) &= \E_{x\sim \calD_t}\left[\frac{1}{p(\pi(x)|x)}\right], \\
     \calV_t(\pi) &= \max_{\tau\in[1,t]} V_\tau(p_\tau, \pi).
\end{align*}
We will show that FALCON satisfies Assumption~1' with the following definitions: 
\begin{align*}
    \dev(t)&=\sqrt{A}\max_{x,a} |\phi_t^\star(x,a) - \phi_{t+1}^\star(x,a)| + \int_x |\calD_t(x) - \calD_{t+1}(x)|\mathrm{d}x, \\ 
    \avgreg(t)&=\sqrt{\frac{A\log(|\Phi|T/\delta)}{t}}
\end{align*}

By the same calculation as in Lemma 7 of \citep{simchi2020bypassing}, for any $\pi$, 
\begin{align*}
     &(t-1)\left|\hatcalR_{[1,t-1]}(\pi) - \calR_{[1,t-1]}(\pi)\right|^2 \\
     &= \frac{1}{t-1}\left(\sum_{\tau=1}^{t-1} \E_{x\sim \calD_\tau}\left[\hatphi_t(x, \pi(x)) - \phi^\star_\tau(x,\pi(x))\right]\right)^2 \\
     &\leq \sum_{\tau=1}^{t-1} \left(\E_{x\sim \calD_\tau}\left[\hatphi_t(x, \pi(x)) - \phi^\star_\tau(x,\pi(x))\right]\right)^2 \\
     %&\leq \sum_{\tau=1}^{t-1} \left(\sqrt{\frac{1}{p_t(\pi(x)|x)}p_t(\pi(x)|x)\left(\E_{x\sim \calD_\tau}\left[\hatphi_t(x, \pi(x)) - \phi^\star_\tau(x,\pi(x))\right]\right)^2}\right)^2 \\
     &\leq \sum_{\tau=1}^{t-1} \left(\E_{x\sim \calD_\tau}\left[\sqrt{\frac{1}{p_\tau(\pi(x)|x)}p_\tau(\pi(x)|x)\left(\hatphi_t(x, \pi(x)) - \phi^\star_\tau(x,\pi(x))\right)^2}\right]\right)^2\\
     &\leq \sum_{\tau=1}^{t-1} \left(\E_{x\sim \calD_\tau}\left[\sqrt{\frac{1}{p_\tau(\pi(x)|x)}\E_{a\sim p_\tau(\cdot|x)}\left(\hatphi_t(x, a) - \phi^\star_\tau(x,a)\right)^2}\right]\right)^2 \\
     &\leq \sum_{\tau=1}^{t-1} \E_{x\sim \calD_\tau}\left[\frac{1}{p_\tau(\pi(x)|x)}\right] \E_{x\sim \calD_\tau}\E_{a\sim p_\tau(\cdot|x)}\left[\left(\hatphi_t(x, a) - \phi^\star_\tau(x,a)\right)^2\right] \\
     &\leq \sum_{\tau=1}^{t-1} V_\tau(p_\tau, \pi) \E_{a\sim p_\tau(\cdot|x)}\left[\left(\hatphi_t(x, a) - \phi^\star_\tau(x,a)\right)^2\right] \\
     &\leq \calV_{t-1}(\pi) \sum_{\tau=1}^{t-1} \E_{a\sim p_\tau(\cdot|x)}\left[\left(\hatphi_t(x, a) - \phi^\star_\tau(x,a)\right)^2\right].
\end{align*}
Using \pref{lemma: falcon lemma 1} and \pref{lemma: FALCON bound max V} below, when $\hatdev_{[1,t]}\leq \order\left(\avgreg(t)\right)$, we have 
\begin{align*}
    \sum_{\tau=1}^{t-1} \E_{x\sim\calD_\tau, a\sim p_\tau(\cdot|x)}\left[\left(\hatphi_t(x,a) - \phi^\star_\tau(x,a)\right)^2\right]\leq \order\left( \log(T|\Phi|/\delta) \right)
\end{align*}
and $\calV_{t-1}(\pi) \leq \order(A) + \max_{\tau\in[1,t-2]}\gamma_{\tau}\hatReg_{[1,\tau]}(\pi) $, where $\gamma_t=\Theta\left(\sqrt{\frac{At}{\log(|\Phi|T/\delta)}}\right)$.  Note that they are actually of the same order as in the Lemma 7 of \citep{simchi2020bypassing} since the additional terms contributed by $\dev_{[1,t]}$ are dominated by other terms. Thus, the bound we get for $\left|\calR_{[1,t-1]} - \hatcalR_{[1,t-1]}\right|$ is of the same order as their Lemma 7, which is
\begin{align}
     \left|\calR_{[1,t-1]}(\pi) - \hatcalR_{[1,t-1]}(\pi)\right| 
     &\leq \order\left(\sqrt{\frac{\log(T|\Phi|/\delta)}{t}\left(A+\max_{\tau\in[1,t-2]}\gamma_{\tau} \hatReg_{[1,\tau]}(\pi)\right)}\right) \\
     &\leq \frac{1}{16}\max_{\tau\in[1,t-2]}\hatReg_{[1,\tau]}(\pi) + \order\left(\avgreg(t)\right)   \tag{by AM-GM}.  \\
     \label{eq: R deviation}
\end{align}
 Then one can follow the derivation in their Lemma 8 using \pref{eq: R deviation}, and get 
 \begin{align*}
     \Reg_{[1,t]}(\pi) - \hatReg_{[1,t]}(\pi) 
     %&\leq \max_{\phi\in\Phi} \calR_{[1,t]}(\pi_\phi) - \calR_{[1,t]}(\pi_\phi) - \hatcalR_{[1,t]}(\pi_{\hatphi_{t+1}}) + \hatcalR_{[1,t]}(\pi) \\
     &\leq \frac{1}{8}\max_{\tau\in[1,t-1]}\hatReg_{[1,\tau]}(\pi) + \order(\avgreg(t)), \\
     \hatReg_{[1,t]}(\pi) - \Reg_{[1,t]}(\pi)
     %&\leq \hatcalR_{[1,t]}(\pi_{\hatphi_{t+1}}) - \hatcalR_{[1,t]}(\pi) - \max_{\phi\in\Phi} \calR_{[1,t]}(\pi_\phi) + \calR_{[1,t]}(\pi_\phi) \\
     &\leq \frac{1}{8}\max_{\tau\in[1,t-1]}\hatReg_{[1,\tau]}(\pi) + \order(\avgreg(t)).
 \end{align*}
Using these two inequalities, together with $\left| \Reg_{[1,\tau]}(\pi) - \Reg_{[1,t]}(\pi) \right| \leq \order(\dev_{[1,t]}) = \order\left( \avgreg(t) \right)$, 
we can also prove 
\begin{align}
     \Reg_{[1,t]}(\pi)\leq 2\hatReg_{[1,t]}(\pi) + \order(\avgreg(t)), \qquad  \hatReg_{[1,t]}(\pi)\leq 2\Reg_{[1,t]}(\pi) + \order(\avgreg(t))  \label{eq: FALCON closeness}
\end{align}
by induction as their Lemma 8. One can see that all bounds we obtain are of the same order as in the stationary case shown in \citep{simchi2020bypassing}, thanks to the condition $\dev_{[1,t]}=\order(\avgreg(t))$. 

Then following their Lemmas 9 and 10, we obtain regret bound $\max_\phi t\calR_{[1,t]}(\pi_\phi) - \sum_{\tau=1}^t  R_\tau = \order\left(\sqrt{At\log(T|\Phi|/\delta)}\right)$. 

Similar to the calculation in \pref{eq: ILOVE first condition}, by picking $\tildef_t =  \hatcalR_{[1,t-1]}(\pi_{\hatphi_t}) + c\sqrt{\frac{A\log(T|\Phi|/\delta)}{t}}$ with large enough $c$, we have 
\begin{align*}
    \tildef_t &\geq \max_\phi \calR_{[1,t-1]}(\pi_{\phi}) + c\sqrt{\frac{A\log(T|\Phi|/\delta)}{t}} -  \order(\avgreg(t))   \tag{by \pref{eq: R deviation}}\\
    &\geq \calR_{[1,t-1]}(\pi_{\phi_1^\star}) 
    \geq \calR_1(\pi_{\phi_1^\star}) - \order(\dev_{[1,t]}) \geq \min_{\tau\in[1,t]}\max_{\phi} \calR_\tau(\pi_\phi) -  \order(\dev_{[1,t]}), 
\end{align*}
which verifies \pref{eq: general condition 3}. To upper bound $\sum_{\tau=1}^t \left(\tildef_\tau - R_\tau\right)$, we follow a similar calculation as \pref{eq: ILOVE condition 2}, and use the condition $\dev_{[1,t]}=\order(\avgreg(t))$. This verifies \pref{eq: general condition 4}. 

\begin{lemma}
     \label{lemma: falcon lemma 1}
     If $\dev_{[1,t]}\leq \order\left(\avgreg(t)\right)$, then 
     \begin{align*}
          \sum_{\tau=1}^{t-1} \E_{x\sim\calD_\tau, a\sim p_\tau(\cdot|x)}\left[\left(\hatphi_t(x,a) - \phi^\star_\tau(x,a)\right)^2\right] = \order\left(\log(T|\Phi|/\delta)\right). 
     \end{align*}
\end{lemma}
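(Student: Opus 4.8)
The plan is to follow the standard realizable least-squares generalization argument (as used in the analysis of \citep{simchi2020bypassing}) while tracking the single additional term produced by non-stationarity and showing that the near-stationarity condition $\dev_{[1,t]}\le\order(\avgreg(t))$ makes it negligible. Throughout, write $\E_\tau[\cdot]$ for expectation conditioned on the history up to round $\tau$, so that $\E_\tau[g(x_\tau,a_\tau)]=\E_{x\sim\calD_\tau,a\sim p_\tau(\cdot|x)}[g(x,a)]$, and write $R_\tau=\phi^\star_\tau(x_\tau,a_\tau)+\epsilon_\tau$ where $\epsilon_\tau\in[-1,1]$ is a martingale-difference noise ($\E_\tau[\epsilon_\tau]=0$). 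Since $\hatphi_t=\argmin_{\phi\in\Phi}\sum_{\tau=1}^{t-1}(\phi(x_\tau,a_\tau)-R_\tau)^2$, comparing $\hatphi_t$ to the fixed reference $\phi^\star_1$ and expanding the squares gives the empirical inequality
\[
\sum_{\tau=1}^{t-1}\left(\hatphi_t(x_\tau,a_\tau)-\phi^\star_\tau(x_\tau,a_\tau)\right)^2 \le \sum_{\tau=1}^{t-1}\left(\phi^\star_1(x_\tau,a_\tau)-\phi^\star_\tau(x_\tau,a_\tau)\right)^2 + 2\sum_{\tau=1}^{t-1}\epsilon_\tau\left(\hatphi_t-\phi^\star_1\right)(x_\tau,a_\tau).
\]

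Next I would bound the first term on the right (the ``drift'' term), which is the only place non-stationarity enters. By the choice of $\dev$ for FALCON (see the subsection containing \pref{ex: falcon}), $\dev(\tau)\ge\sqrt{A}\max_{x,a}|\phi^\star_\tau(x,a)-\phi^\star_{\tau+1}(x,a)|$, so a telescoping bound gives $\max_{x,a}|\phi^\star_1(x,a)-\phi^\star_\tau(x,a)|\le \dev_{[1,t]}/\sqrt{A}$ for every $\tau\le t$. Hence the drift term is at most $(t-1)\dev_{[1,t]}^2/A$, and since $\avgreg(t)=\sqrt{A\log(|\Phi|T/\delta)/t}$, the assumption $\dev_{[1,t]}=\order(\avgreg(t))$ turns this into $\order(\log(|\Phi|T/\delta))$. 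The same pointwise estimate yields $\sum_{\tau=1}^{t-1}\E_\tau[(\phi^\star_1-\phi^\star_\tau)^2(x_\tau,a_\tau)]=\order(\log(|\Phi|T/\delta))$, which will be used below to pass between empirical and population quantities.

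Finally I would control the noise term and convert the empirical bound into the claimed population bound via a uniform self-bounding martingale concentration. Applying Freedman's inequality together with a union bound over the $|\Phi|$ functions in $\Phi$, with high probability simultaneously for all $\phi\in\Phi$: the conditional variance of $\sum_\tau\epsilon_\tau(\phi-\phi^\star_1)(x_\tau,a_\tau)$ is at most $\sum_\tau(\phi-\phi^\star_1)^2(x_\tau,a_\tau)$, and the conditional variance of $\sum_\tau(\phi-\phi^\star_\tau)^2(x_\tau,a_\tau)$ is at most $\sum_\tau\E_\tau[(\phi-\phi^\star_\tau)^2(x_\tau,a_\tau)]$ (the summands lie in $[0,1]$); combined with AM--GM, this bounds the noise term and the empirical/population gap each by a small constant multiple of the corresponding sum of squares plus $\order(\log(|\Phi|/\delta))$. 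Plugging $\phi=\hatphi_t$ into these bounds, using $(\hatphi_t-\phi^\star_1)^2\le 2(\hatphi_t-\phi^\star_\tau)^2+2(\phi^\star_\tau-\phi^\star_1)^2$ to relate the two variance proxies, invoking the drift estimates from the previous paragraph, and rearranging so that the $\sum_\tau\E_\tau[(\hatphi_t-\phi^\star_\tau)^2(x_\tau,a_\tau)]$ terms appearing on the right are absorbed into the left, yields $\sum_{\tau=1}^{t-1}\E_\tau[(\hatphi_t(x_\tau,a_\tau)-\phi^\star_\tau(x_\tau,a_\tau))^2]=\order(\log(|\Phi|T/\delta))$, which is exactly the claim. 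The main obstacle is bookkeeping rather than conceptual: one must set up the self-bounding concentration carefully enough that the noise, drift, and empirical-to-population terms can all be simultaneously absorbed. The genuinely non-stationary content is confined to the elementary drift bound of the second paragraph, and it works precisely because the guarantee is only required in the near-stationary regime $\dev_{[1,t]}\le\order(\avgreg(t))$.
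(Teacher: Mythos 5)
Your proof is correct and follows essentially the same route as the paper's: ERM optimality of $\hatphi_t$ compared against the fixed reference $\phi^\star_1$, Freedman's inequality with a union bound over $\Phi$ and self-bounding variance, and the drift term $(t-1)\dev_{[1,t]}^2/A$ rendered $\order(\log(T|\Phi|/\delta))$ by the near-stationarity condition. The only difference is bookkeeping: the paper packages the noise term and the empirical-to-population gap into a single sequence of loss differences $Y_\tau = (\phi(x_\tau,a_\tau)-R_\tau)^2 - (\phi^\star_\tau(x_\tau,a_\tau)-R_\tau)^2$ satisfying $\E[Y_\tau^2]\le 4\E[Y_\tau]$ and applies Freedman in both directions, whereas you split the reward into signal plus noise and handle the cross-term and the empirical-to-population conversion as two separate concentration steps absorbed via AM--GM.
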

\begin{proof}
    First, we consider a speicific $\phi$. Define $Y_\tau = \left( \phi(x_\tau, a_\tau) - R_\tau \right)^2 - \left(\phi^\star_{\tau}(x_\tau, a_\tau) - R_\tau\right)^2$. Then we have $\E[Y_\tau]=\E\left[(\phi(x_\tau, a_\tau) - \phi_\tau^\star(x_\tau, a_\tau))^2\right]$ and $\E\left[ Y_\tau^2 \right]\leq 4\E\left[(\phi(x_\tau, a_\tau)-\phi_\tau^\star(x_\tau, a_\tau))^2\right]=4\E[Y_\tau]$. By Freedman's inequality, 
    \begin{align*}
          \sum_{\tau=1}^{t-1} Y_\tau 
          &\geq \sum_{\tau=1}^{t-1} \E[Y_\tau] - \const_1\sqrt{\sum_{\tau=1}^{t-1}\E\left[Y_\tau^2\right] \log(T/\delta) } - \const_2 \log(T|\Phi|/\delta) \\
          &\geq \sum_{\tau=1}^{t-1} \E[Y_\tau] - 2\const_1\sqrt{\sum_{\tau=1}^{t-1}\E\left[Y_\tau\right] \log(T/\delta) } - \const_2 \log(T|\Phi|/\delta). 
    \end{align*}
    The above implies (by solving for $\sum_{\tau=1}^{t-1} \E[Y_\tau]$)
    \begin{align}
         \sum_{\tau=1}^{t-1} \E[Y_\tau] \leq 2\sum_{\tau=1}^{t-1}Y_\tau + 4(\const_1^2+\const_2)\log(T|\Phi|/\delta). \label{eq: CB freedman 1}
    \end{align}
    For the other direction, we also have 
    \begin{align}
         \sum_{\tau=1}^{t-1} Y_\tau 
          &\leq 2\sum_{\tau=1}^{t-1} \E[Y_\tau] + \left(\frac{\const_1^2}{4} + \const_2\right) \log(T|\Phi|/\delta).   \label{eq: CB freedman 2}
    \end{align}
    Then we can bound 
    \begin{align*}
         &\sum_{\tau=1}^{t-1} \E_{x\sim\calD_\tau, a\sim p_\tau(\cdot|x)}\left[\left(\hatphi_t(x,a) - \phi^\star_\tau(x,a)\right)^2\right]    \tag{using \pref{eq: CB freedman 1}}\\
         &\leq 2\sum_{\tau=1}^{t-1} \left(\hatphi_{t}(x_\tau, a_\tau) - R_\tau\right)^2 -  2\sum_{\tau=1}^{t-1} \left(\phi_{\tau}^\star(x_\tau, a_\tau) - R_\tau\right)^2 + 4(c_1^2+c_2)\log(T|\Phi|/\delta) \\
         &\leq 2\sum_{\tau=1}^{t-1} \left(\phi_1^\star(x_\tau, a_\tau) - R_\tau\right)^2 -  2\sum_{\tau=1}^{t-1} \left(\phi_{\tau}^\star(x_\tau, a_\tau) - R_\tau\right)^2 + 4(c_1^2+c_2)\log(T|\Phi|/\delta)   \tag{by the optimality of $\hatphi_t$} \\
         &\leq 4\sum_{\tau=1}^{t-1} \E\left[\left(\phi_1^\star(x_\tau, a_\tau) - \phi_\tau^\star(x_\tau, a_\tau)\right)^2\right] + \const_3 \log(T|\Phi|/\delta)   \tag{using \pref{eq: CB freedman 2}}\\
         &\leq \frac{4(t-1)}{A}\dev_{[1,t]}^2 + \const_3\log(T|\Phi|/\delta).    \tag{by the definition of $\dev_{[\cdot,\cdot]}$}
    \end{align*}
    By the condition on $\dev_{[1,t]}$, we have $\frac{4(t-1)}{A}\dev_{[1,t]}^2=\order\left(\log(T|\Phi|/\delta)\right)$, which proves the lemma. 
\end{proof}

\begin{lemma} \label{lemma: FALCON bound max V}
     If $\dev_{[1,t]}\leq \order\left(\avgreg(t)\right)$, then  
     \begin{align*}
           \calV_{t}(\pi) \leq \order(A) + \max_{\tau\in[1,t-1]}\gamma_{\tau}\hatReg_{[1,\tau]}(\pi) 
     \end{align*}
     where $\gamma_t=\Theta\left(\sqrt{\frac{At}{\log(|\Phi|T/\delta)}}\right)$. 
\end{lemma}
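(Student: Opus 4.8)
## Proof Proposal for Lemma \ref{lemma: FALCON bound max V}

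The plan is to bound $\calV_t(\pi) = \max_{\tau\in[1,t]} V_\tau(p_\tau,\pi)$ by controlling each $V_\tau(p_\tau,\pi) = \E_{x\sim\calD_\tau}[1/p_\tau(\pi(x)|x)]$ individually and then taking the maximum. The key structural fact I would use is the explicit form of the sampling distribution $p_\tau$ from \pref{ex: falcon}: for $a\neq\widehat a_\tau$ we have $p_\tau(a|x) = 1/(A + \gamma_\tau(\hatphi_\tau(x,\widehat a_\tau) - \hatphi_\tau(x,a)))$, and $p_\tau(\widehat a_\tau|x) \geq 1/A$ (in fact $\geq 1 - (A-1)/A \geq 1/A$ from the inverse-gap-weighting structure). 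Therefore for any policy $\pi$ and any context $x$, either $\pi(x) = \widehat a_\tau$, in which case $1/p_\tau(\pi(x)|x) \leq A$, or $\pi(x)\neq\widehat a_\tau$, in which case $1/p_\tau(\pi(x)|x) = A + \gamma_\tau(\hatphi_\tau(x,\widehat a_\tau) - \hatphi_\tau(x,\pi(x)))$. In both cases we get $1/p_\tau(\pi(x)|x) \leq A + \gamma_\tau(\hatphi_\tau(x,\pi_{\hatphi_\tau}(x)) - \hatphi_\tau(x,\pi(x)))$ since $\widehat a_\tau = \pi_{\hatphi_\tau}(x)$ maximizes $\hatphi_\tau(x,\cdot)$.

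Next I would take the expectation over $x\sim\calD_\tau$. This gives
\[
V_\tau(p_\tau,\pi) \leq A + \gamma_\tau\, \E_{x\sim\calD_\tau}\big[\hatphi_\tau(x,\pi_{\hatphi_\tau}(x)) - \hatphi_\tau(x,\pi(x))\big].
\]
The expectation on the right is \emph{not} quite $\hatReg_{[1,\tau-1]}(\pi)$ as defined in the FALCON section (which averages $\hatphi_\tau$ against the historical context distributions $\calD_1,\dots,\calD_{\tau-1}$), so there is a discrepancy to handle: the single-round quantity $\E_{x\sim\calD_\tau}[\hatphi_\tau(x,\cdot)]$ versus the averaged quantity $\hatcalR_{[1,\tau-1]}(\cdot)$. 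I would bridge this gap using the distribution-drift bound: $|\E_{x\sim\calD_\tau}[\hatphi_\tau(x,\pi(x))] - \E_{x\sim\calD_s}[\hatphi_\tau(x,\pi(x))]| \leq \int_x |\calD_\tau(x) - \calD_s(x)|\,dx$ for each $s\leq\tau-1$, which is bounded by $\dev_{[1,\tau]}/1$ (the context-drift component of $\dev$). Averaging over $s\in[1,\tau-1]$ shows $\E_{x\sim\calD_\tau}[\hatphi_\tau(x,\pi_{\hatphi_\tau}(x)) - \hatphi_\tau(x,\pi(x))] \leq \hatReg_{[1,\tau-1]}(\pi) + \order(\dev_{[1,\tau]})$. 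Since $\gamma_\tau = \Theta(\sqrt{A\tau/\log(|\Phi|T/\delta)})$ and $\dev_{[1,\tau]} = \order(\avgreg(\tau)) = \order(\sqrt{A\log(|\Phi|T/\delta)/\tau})$ by hypothesis, we get $\gamma_\tau\dev_{[1,\tau]} = \order(A)$, so this extra term is absorbed into the $\order(A)$ slack. Taking the maximum over $\tau\in[1,t]$ (noting $\hatReg_{[1,\tau-1]}$ for $\tau=1$ is trivially handled, and relabeling indices so the range becomes $[1,t-1]$) yields the claimed bound $\calV_t(\pi)\leq \order(A) + \max_{\tau\in[1,t-1]}\gamma_\tau\hatReg_{[1,\tau]}(\pi)$.

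I expect the main obstacle to be the bookkeeping around the index mismatch between $\hatphi_\tau$ (the regressor fit at round $\tau$) and the interval $[1,\tau-1]$ over which $\hatReg$ and $\hatcalR$ are averaged, together with making sure the $\dev$-induced error terms are genuinely dominated rather than merely comparable. Getting the condition $\dev_{[1,t]}\leq\order(\avgreg(t))$ to kill the $\gamma_\tau\dev_{[1,\tau]}$ cross-term is the crux, and it mirrors exactly the role that the near-stationarity assumption plays throughout \pref{app: verify example}; the remaining steps are routine and closely parallel Lemma 7 of \citep{simchi2020bypassing}.
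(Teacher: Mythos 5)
Your proposal is correct and matches the paper's own proof essentially step for step: bound $1/p_\tau(\pi(x)|x)$ via the inverse-gap-weighting structure by $A + \gamma\bigl(\hatphi_\tau(x,\pi_{\hatphi_\tau}(x)) - \hatphi_\tau(x,\pi(x))\bigr)$, swap $\E_{x\sim\calD_\tau}$ for the historical average over $\calD_1,\dots,\calD_{\tau-1}$ at a cost of $\gamma\cdot\order(\dev)$, and absorb that cross-term into $\order(A)$ using $\dev_{[1,t]}\leq\order(\avgreg(t))$ and $\gamma_\tau\avgreg(\tau)=\Theta(A)$. The only differences are cosmetic index bookkeeping ($\gamma_\tau$ vs.\ the paper's $\gamma_{\tau-1}$, and bounding the drift by $\dev_{[1,\tau]}$ rather than $\dev_{[1,t]}$), which affect nothing beyond constant factors.
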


\begin{proof}
     Similar to Lemma 6 of FALCON, for $\tau\in[1,t]$, 
     \begin{align*}
          V_\tau(p_\tau, \pi) 
          &\leq A + \gamma_{\tau-1} \E_{x\sim \calD_\tau}\left[ \hatphi_\tau(x, \pi_{\hatphi_\tau}(x)) - \hatphi_\tau(x,\pi(x)) \right] \\
          &\leq A + \frac{\gamma_{\tau-1}}{\tau-1}\sum_{s=1}^{\tau-1}\E_{x\sim \calD_s}\left[ \hatphi_\tau(x, \pi_{\hatphi_\tau}(x)) - \hatphi_\tau(x,\pi(x)) \right] + \gamma_{\tau-1} \dev_{[1,t]} \\
          &\leq A + \gamma_{\tau-1}\hatReg_{[1,\tau-1]}(\pi) +  \gamma_{\tau-1} \dev_{[1,t]}. 
     \end{align*}
     By the condition $\hatdev_{[1,t]}\leq \order\left(\avgreg(t)\right)$, that last term $\gamma_{\tau-1} \dev_{[1,t]}$ is of order $\order\left( A \right)$. By the definition of $\calV_{t}(\pi)$, this finishes the proof. 
\end{proof}

\end{document}